\newcommand{\isPreprint}{0}
\theoremstyle{plain}
\newtheorem{theorem}{Theorem}[section]
\newtheorem{lemma}[theorem]{Lemma}
\newtheorem{hypothesis}[theorem]{Hypothesis}
\theoremstyle{definition}
\newtheorem{definition}[theorem]{Definition}
\theoremstyle{remark}
\newtheorem{remark}[theorem]{Remark}
\newtheorem{example}[theorem]{Example}
\NewCommandCopy{\proofqedsymbol}{\qedsymbol}
\newcommand{\exampleqedsymbol}{$\diamond$}
\renewcommand{\qedsymbol}{\exampleqedsymbol}%
\renewcommand{\qedsymbol}{\exampleqedsymbol}%
\title{CLEAR: Calibrated Learning for Epistemic and Aleatoric Risk}
\author{
  \textbf{Ilia Azizi$^{1,4*}$ \quad Juraj Bodik$^{1,2*}$ \quad Jakob Heiss$^{2*}$ \quad Bin Yu$^{2,3}$}\vspace{3pt} \\\\
  $^1$ Department of Operations, HEC, University of Lausanne, Switzerland\\
  $^2$ Department of Statistics, University of California, Berkeley, USA\\
  $^3$ Department of Electrical Engineering and Computer Science, University of California, Berkeley\\
  $^4$ BegooAI, Switzerland\\
  \textsuperscript{*} Equal contribution \\
  \texttt{\small first.last@unil.ch, jakob.heiss@berkeley.edu, binyu@berkeley.edu}
}
\begin{document}

\maketitle

\ICLRrestorecontents
\begin{abstract}
Accurate uncertainty quantification is critical for reliable predictive modeling. Existing methods typically address either aleatoric uncertainty due to measurement noise or epistemic uncertainty resulting from limited data, but not both in a balanced manner. We propose CLEAR, a calibration method with two distinct parameters, $\gamma_1$ and $\gamma_2$, to combine the two uncertainty components and improve the conditional coverage of predictive intervals for regression tasks. CLEAR is compatible with any pair of aleatoric and epistemic estimators; we show how it can be used with (i) quantile regression for aleatoric uncertainty and (ii) ensembles drawn from the Predictability–Computability–Stability (PCS) framework for epistemic uncertainty. Across 17 diverse real-world datasets, CLEAR achieves an average improvement of 28.3\% and 17.5\% in the interval width compared to the two individually calibrated baselines while maintaining nominal coverage. Similar improvements are observed when applying CLEAR to Deep Ensembles (epistemic) and Simultaneous Quantile Regression (aleatoric). The benefits are especially evident in scenarios dominated by high aleatoric or epistemic uncertainty. Project page: \url{https://unco3892.github.io/clear/}
\end{abstract}

\section{Introduction}\label{sec:Introduction}

Uncertainty quantification (UQ) is essential for building reliable machine learning systems \citep{ABDAR2021243, Gawlikowski2023}. Despite their impressive capabilities, modern machine learning methods can give a false sense of reliability; therefore, producing sharp valid prediction intervals remains an open problem. Calibration \citep{kuleshov_accurate_2018} and conformal methods \citep{vovk_2005_algorithmic_learning, vovk_2012_conditional_validity, angelopoulos2024theoreticalfoundationsconformalprediction}  adjust prediction intervals to obtain marginal coverage (that is, covering a certain percentage of the data on average). However, they may suffer from poor conditional coverage, meaning well-calibrated coverage at the individual or subgroup level \citep{gibbs2024conformalpredictionconditionalguarantees}. In particular, under distribution shift or model misspecification, conditional coverage can degrade substantially, especially in extrapolation regions. Most conformal methods, such as conformalized quantile regression (CQR) \citep{RomanoPattersonCandes2019}, only capture aleatoric uncertainty while ignoring epistemic uncertainty.

\begin{figure}[t]
\centering
\includegraphics[width=\textwidth, trim={0.49cm 0 0 1cm},clip]{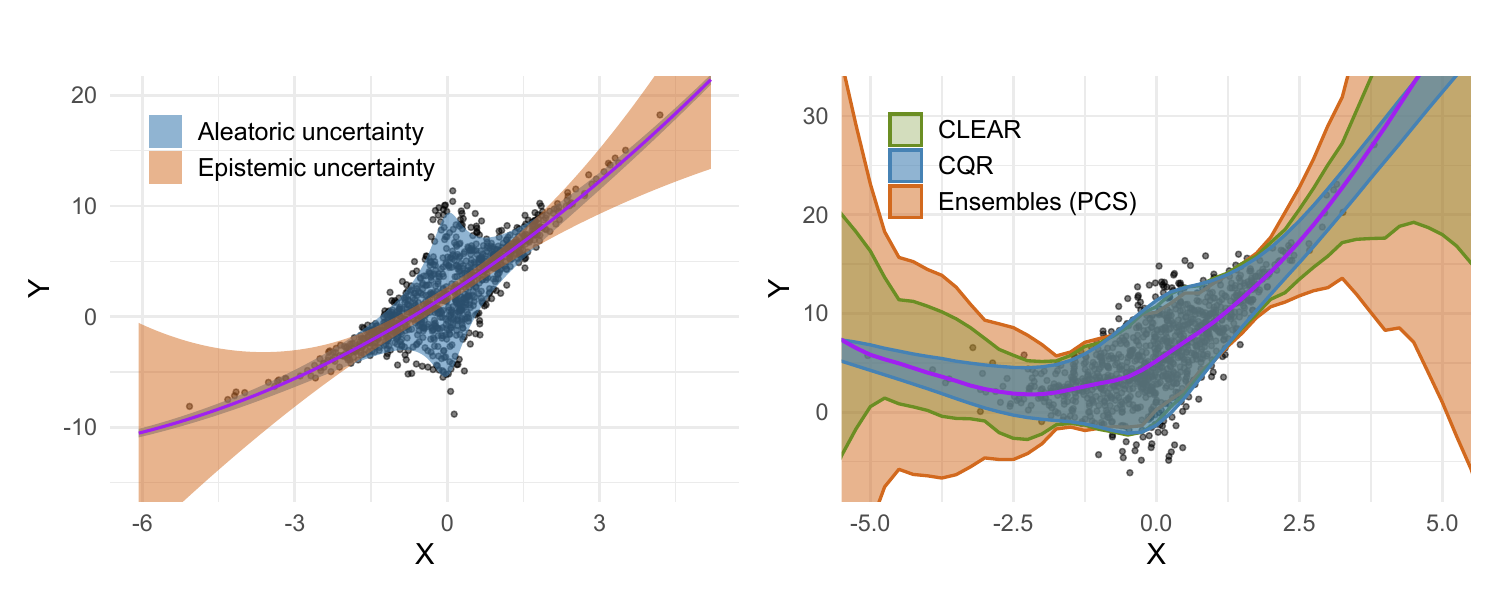}
\caption{\textbf{Left: }Blue represents aleatoric uncertainty, which reflects randomness inherent in the data such as measurement noise. Red represents epistemic uncertainty, which arises from limited sample size.
\textbf{Right: }Estimated prediction sets using the CLEAR method, which combines both sources of uncertainty in a data-driven manner.}

\label{fig:uncertainty-comparison}
\end{figure}

It is important to distinguish between the two main sources of uncertainty, namely epistemic and aleatoric. Epistemic uncertainty \citep{Hullermeier2021} arises from our limited understanding of the data generation process and the model, encompassing issues related to data collection, preprocessing,  transformation, and model specification. Notably, this uncertainty is typically large in extrapolation regions where training data is sparse. In contrast, aleatoric uncertainty \citep{kirchhof2025reexaminingthealeatoric} reflects the inherent variability within the data (stemming from measurement errors, missing covariates, randomness, or intrinsic noise) that cannot be reduced simply by gathering more observations or refining the model unless the data acquisition process itself is improved where more features are measured. Separating the two sources can be beneficial for various applications \citep{NEURIPS2019_73c03186, Laves_recalibration_2021}. For instance, in active learning, epistemic uncertainty helps in selecting which samples to label, while aleatoric uncertainty is less relevant \citep{Settles2012}. However, for prediction tasks, appropriately combining both epistemic and aleatoric parts is necessary to account for the overall uncertainty in the model's predictions \citep{marques2024quantifyingaleatoricepistemicdynamics}.

In this work, we contribute to the existing literature by combining aleatoric and epistemic uncertainties in a data-driven manner. We consider prediction intervals $C(x)$ of the form
\begin{equation}\label{CQR_PCS_equation2} \begin{split} C(x)=\left[\hat{f}(x) \pm \big(\gamma_1 \times \text{aleatoric}_{{\pm}}(x) + \gamma_2 \times \text{epistemic}_{{\pm}}(x)\big)\right], \quad \text{with } \gamma_2 = \lambda \gamma_1, \end{split} \end{equation}
where $\hat{f}$ is a point estimate, and $\gamma_1, \gamma_2 \in [0, \infty)$ are coefficients selected to 1) calibrate marginal coverage and 2) optimally balance the two types of uncertainty (optimality is defined in terms of a quantile loss metric introduced later). The parameter $\lambda$ controls the trade-off between the two uncertainty types: when $\lambda = 0$, the interval reflects only aleatoric uncertainty, while as $\gamma_1\to 0,\lambda=\frac{\gamma_2}{\gamma_1} \to \infty$, it reflects only epistemic uncertainty. By allowing an adaptively chosen $\lambda$, we ensure a more flexible and data-driven trade-off to the two components, leading to prediction intervals that are both well-calibrated and more informative  (\autoref{fig:uncertainty-comparison}). Moreover, estimating $\lambda$ can help practitioners better understand which source of uncertainty is the dominant contributor to the overall uncertainty.

The combination of epistemic and aleatoric uncertainty is not new: Bayesian methods have long incorporated both components \citep{kendall2017uncertaintiesneedbayesiandeep, Bayes_decomposition}, and several recent approaches have also explored this decomposition in the context of conformal prediction \citep{rossellini2024, Hofman_Sale_Hullermeier_2024, cabezas2025epistemicuncertaintyconformalscores}. However, to the best of our knowledge, existing methods either do not explicitly distinguish between the two types of uncertainty or implicitly fix the combination ratio, for instance, setting $\lambda = 1$ \citep{DeepEnsemblesNIPS2017_9ef2ed4b,kendall2017uncertaintiesneedbayesiandeep, Bayes_decomposition}, or fixing $\gamma_1 = 1$ \citep{rossellini2024}. 
This fixed choice may be suboptimal, as the relative importance of each uncertainty type varies with the data distribution and prediction task (see~\autoref{appendix:RelatedLiterature} for more details on related work).

\subsection{Contributions}
\begin{enumerate}
    \item We are the first to introduce two calibration parameters $\gamma_1$ and $\gamma_2$, to balance the scales of aleatoric and epistemic uncertainty on the validation dataset.
    \item We demonstrate that fitting quantiles on the residuals provides much more sensible estimators of aleatoric uncertainty than fitting the quantiles directly on the targets.
    \item We are the first to combine the ensemble perturbation intervals of the PCS framework \citep{Yu_2020_Vertidical_Data_Science} 
    with the CQR aleatoric uncertainty estimator, and empirically show the strengths of this combination. 
    \item  We conduct large-scale UQ benchmarking for several models on 17 regression datasets.
\end{enumerate}


\section{Method}
\label{sec:method}

\subsection{Problem scenario}\label{sec:Problem scenario}
Consider a classical setting, where an i.i.d.\ sample $(X_i, Y_i), i=1, \dots, n$ is drawn from distribution $P_{X}\times P_{Y\mid X}$. The goal of conformal inference is to construct a prediction set $C(X_{n+1})\subseteq \supp(Y)$ for a new data-point $(X_{n+1}, Y_{n+1})$ satisfying marginal coverage 
\begin{equation}\label{marg_coverage}
    \mathbb{P}\big(Y_{n+1} \in C(X_{n+1})\big) \geq 1 - \alpha, 
\end{equation}
where $\alpha\in(0,1)$ is for instance $\alpha=0.05$. In order to construct $C$, data $\mathcal{D} = \{(X_i, Y_i), i=1, \dots, n\}$ can be split into train and calibration subsets $\Dtrain, \Dcal$. On the training data, a first estimate of $C$ can be constructed, and then we can use data from $\Dcal$ to calibrate $C$ such that \eqref{marg_coverage} is satisfied. 

In case of CQR, we first estimate conditional quantiles $ \hat{q}_{\alpha/2}(x) , \hat{q}_{1-\alpha/2}(x)$ using $\Dtrain$, and then construct $C(x) = [ \hat{q}_{\alpha/2}(x)-\gamma ,  \hat{q}_{1-\alpha/2}(x) + \gamma]$, 
where the calibration parameter $\gamma$ is chosen as the smallest value such that the prediction interval $C(X_i)$ contains $Y_i$ for at least \(\lceil (1-\alpha)(|\Dcal| + 1) \rceil\) points in the calibration set $\Dcal$.

 While this procedure guarantees finite-sample distribution-free marginal coverage \citep{angelopoulos2024theoreticalfoundationsconformalprediction}, conditional coverage  \begin{equation*}
    \mathbb{P}\big(Y_{n+1} \in C( X_{n+1}) \mid X_{n+1}=x\big) \geq 1 - \alpha 
\end{equation*}
does not need to hold. As pointed out in \cite{LeiWasserman2014,barber2020limitsdistributionfreeconditionalpredictive}, any algorithm with finite-sample distribution-free conditional coverage guarantees for all $x$ must be trivial $C(x) = (-\infty, \infty)$. However, we aim to design estimators such that conditional coverage holds approximately under reasonable real-world scenarios, even if exact finite-sample guarantees are impossible in general.  

\subsection{Epistemic uncertainty}
\label{epistemic_section}

The traditional machine learning approach trains a predictive algorithm on a single version of the cleaned/preprocessed dataset and uses the best-performing algorithm (compared using the validation set) for future predictions. While theoretically sound in the infinite-sample limit, this approach ignores the uncertainty stemming from finite sample size and model choice (epistemic uncertainty). Various methods have been proposed to estimate this uncertainty, including Deep Ensembles \citep{DeepEnsemblesNIPS2017_9ef2ed4b}, MC dropout in NN \citep{Dropout_bayesian_Gal}, Orthonormal Certificates \citep{NEURIPS2019_73c03186}, NOMU \citep{NOMUpmlr-v162-heiss22a}, BNNs \citep{10.1162/neco.1992.4.3.448} and Laplace Approximation \citep{ritter2018scalable}, among others.

Estimating epistemic uncertainty via PCS: In practice, additional sources of uncertainty arise from subjective choices in data cleaning, imputation, and dataset construction, which we also consider as extended epistemic uncertainty.
The Predictability, Computability, and Stability (PCS) framework \citep{Yu_2020_Vertidical_Data_Science} offers a holistic point of view on the data-science-life-cycle, without explicitly modeling aleatoric uncertainty.
One can obtain an ensemble of $m$ estimators $\hat{f}_1, \dots, \hat{f}_m$ as follows:

\begin{enumerate}
    \item Split the data into $\Dtrain, \Dval$.
    \item\label{itm:modelSelection} Create $N_1$ differently preprocessed versions of the data and define $N_2$ different models (e.g., linear regression, random forest, neural networks). Then, train models for all $N_1\times N_2$ combinations on $\Dtrain$ and pick the top-$k$ based on their performance on $\Dval$.
    \item Refit each of the top-$k$ models on $b$ bootstrap samples $\Dtrain^1,\dots,\Dtrain^b$ of $\Dtrain$ to obtain an ensemble of $m=k\times b$ estimators $\hat{f}_1, \dots, \hat{f}_m$.
\end{enumerate}
Taking the point-wise median of $\hat{f}_1, \dots, \hat{f}_m$ yields the final PCS estimate $\hat{f}$, and the point-wise $\alpha/2$ and $1-\alpha/2$ quantiles (denoted $\hat{f}_{\alpha/2}$ and $\hat{f}_{1-\alpha/2}$, respectively) define the uncalibrated uncertainty band. The widths of this interval, denoted as $\hat{q}^{\text{epi}}_{1-\alpha/2}(x) := \hat{f}_{1-\alpha/2}(x) - \hat{f}(x)$ and $\hat{q}^{\text{epi}}_{\alpha/2}(x) :=  \hat{f}(x) - \hat{f}_{\alpha/2}(x)$, quantify the uncalibrated epistemic uncertainty. \cite{agarwal2025pcsuq} extends this by using the combined data set $\Dtrain\cup\Dval$ for training and calibrating uncertainty based on out-of-bag data.
Our main experiments in \Cref{sec:simulation-results,sec:real-world-results} focus only on the modeling step of the PCS framework. However, in \autoref{section_case_study}, we show through a case study that CLEAR can also be applied successfully in the $N_1 > 1$ setting, incorporating uncertainty from data cleaning and pre-processing.

\subsection{Aleatoric uncertainty}
\label{aleatoric_section}
Aleatoric uncertainty can be estimated by modeling the conditional distribution of the outcome given the inputs. Common approaches include direct conditional quantile regression \citep{KoenkerBassett1978QuantileLoss}, either parametric or nonparametric, such as smooth quantile regression \citep{Fasiolo2020Fast} and quantile random forests (QRF) \citep{QRF}; heteroskedastic models that estimate input-dependent noise levels $\sigma(x)$ under Gaussian assumptions \citep{374138}; and distributional regression techniques such as simultaneous quantile regression  \citep{NEURIPS2019_73c03186}. More flexible alternatives include conditional density estimation and deep generative models such as conditional generative adversarial networks \citep{oberdiek2022uqgan}, conditional variational autoencoders \citep{han2020evidential}, and diffusion models \citep{chang2023designfundamentalsdiffusionmodels}.

In this work, we estimate aleatoric uncertainty using quantile regression models $\hat{r}_{\alpha/2},\hat{r}_{0.5},\hat{r}_{1-\alpha/2}$ (selected in Line~\ref{itm:modelSelection} from PCS) trained on the residuals $Y_i - \hat{f}(X_i)$. This approach offers improved stability: underfitting quantile regression directly on the $y$-values can severely distort aleatoric uncertainty estimates. In contrast, extreme underfitting on residuals, at worst, corresponds to assuming homoskedastic noise, which can be an acceptable bias. This improves the stability with respect to hyperparameters. To further improve the stability, we apply a PCS-inspired bagging strategy by taking the empirical median
\[\hat{q}^{\text{ale}}_{1-\alpha/2}(x) := \text{Median}\left[\left(\hat{r}_{1-\alpha/2}(x) - \hat{r}_{0.5}(x)\right)^+\right] \text{ and } \hat{q}^{\text{ale}}_{\alpha/2}(x) :=  \text{Median}\left[\left(\hat{r}_{0.5}(x) - \hat{r}_{\alpha/2}(x)\right)^+\right]\]  over the ensemble members.
Overall, it is computationally efficient and straightforward to implement.

\subsection{CLEAR: combining aleatoric \& epistemic uncertainty}
\label{clear_section}

To combine both aleatoric and epistemic uncertainties, we use a weighted scheme as in Equation~\eqref{CQR_PCS_equation2}. Specifically, using a  PCS-type estimator \(\hat{q}_{\alpha}^{\text{epi}}\) and a quantile regression estimator \(\hat{q}_{\alpha}^{\text{ale}}\) trained on the residuals $Y_i-\hat{f}(X_i)$, we define the prediction interval:
\begin{equation}\label{CQR_PCS_equation3}
    \begin{split}
    C = \left[
    \hat{f} - \gamma_1 \hat{q}_{\alpha/2}^{\text{ale}} - \gamma_2 \hat{q}_{\alpha/2}^{\text{epi}}, \quad 
    \hat{f} + \gamma_1 \hat{q}_{1 - \alpha/2}^{\text{ale}}  + \gamma_2\hat{q}_{1 - \alpha/2}^{\text{epi}}
    \right].
    \end{split}
\end{equation}
Given a fixed ratio \(\lambda = \frac{\gamma_2}{\gamma_1}\), we compute \(\gamma_1\) on a held-out calibration set using the standard split conformal prediction procedure. While the natural choice \(\gamma_1 = \gamma_2\) may seem appealing, it is often suboptimal. The relative contribution of aleatoric and epistemic uncertainty can vary across datasets, and the corresponding estimators may differ substantially in scale and precision when \(\gamma_1 = \gamma_2\). Additionally, we adopt this global linear form for simplicity (minimizing overfitting via only two parameters), interpretability ($\lambda$ captures the epistemic/aleatoric ratio), and stability, consistent with standard conformal scaling \citep{angelopoulos2024theoreticalfoundationsconformalprediction}.

To choose \(\lambda\) from data, we evaluate a grid of positive values $\Lambda$. For each candidate \(\lambda\in\Lambda\), we construct the (calibrated) interval \(C_\lambda\). To ensure the best trade-off between uncertainty sources, we select \(\lambda^\star\) such that \(C_{\lambda^\star}\) performs best under the chosen metric on $\Dval$. We have chosen quantile loss \citep{KoenkerBassett1978QuantileLoss} (defined in \Cref{Alg_CLEAR}, which is equivalent to the pinball loss or interval score loss, see \autoref{sec:IntervalScoreVsQuantileLoss}) as a simple metric to balance both coverage and width. However, any other metric can also be used. As a proper scoring rule, quantile loss incentivizes truthfulness from a theoretical perspective (see \autoref{sec:PropertiesQuantileLoss}). This procedure is summarized in Algorithm~\ref{Alg_CLEAR}.

Parameter $\lambda^\star$ balances aleatoric and epistemic uncertainties: if one estimator fails, $\lambda^\star$ compensates by re-weighting the other. When both estimators $\hat{q}^{\text{epi}}$ and $\hat{q}^{\text{ale}}$ are reliable (up to scaling), $\lambda^\star$ is interpretable. A large ratio
\[
\lambda^\star \tfrac{\hat{q}_{1-\alpha/2}^{\text{epi}}(x)+\hat{q}_{\alpha/2}^{\text{epi}}(x)}{\hat{q}_{1-\alpha/2}^{\text{ale}}(x)+\hat{q}_{\alpha/2}^{\text{ale}}(x)} \gg 1
\]
indicates that epistemic uncertainty dominates at $x$ (reducible with more training observations or stronger assumptions), while a small ratio $\ll 1$ indicates aleatoric uncertainty dominates (not reducible by adding more training observations, though sometimes reducible by adding covariates).

\begin{algorithm}[h!]
\caption{CLEAR: Calibrated Learning for Epistemic and Aleatoric Risk}
\begin{algorithmic}[1]
\State \textbf{Input:} Data \((X_i, Y_i)\) for \(i = 1, \dots, n\), split into training \(\Dtrain\), calibration \(\Dcal\), and validation \(\Dval\) (we consider \(\Dcal = \Dval\)); grid of \(\lambda\) values \(\Lambda\); significance level \(\alpha\).
\vspace{0.5em}

\State \textbf{Step 1: Estimate epistemic uncertainty on  \(\Dtrain\).}
\Statex \quad Example: Estimate stable point predictor \(\hat{f}\) and epistemic quantiles \(\hat{q}_{\alpha/2}^{\text{epi}}, \hat{q}_{1 - \alpha/2}^{\text{epi}}\) using PCS ensembles across data perturbations.

\State \textbf{Step 2: Estimate aleatoric uncertainty on  \(\Dtrain\).}
\Statex \quad Example: train a quantile regression model on the residuals $Y_i - \hat{f}(X_i)$ to estimate conditional quantiles \(\hat{q}_{\alpha/2}^{\text{ale}}, \hat{q}_{1 - \alpha/2}^{\text{ale}}\).

\State \textbf{Step 3: Define prediction intervals for each \(\lambda \in \Lambda\).}
\Statex \quad 
 Define $C_\lambda$ by selecting the smallest value $\gamma_1$ such that the prediction set
\begin{equation*}
C_\lambda = \left[
\hat{f} - \gamma_1\hat{q}_{\alpha/2}^{\text{ale}} - \lambda \gamma_1  \hat{q}_{\alpha/2}^{\text{epi}}, \,
\hat{f} + \gamma_1\hat{q}_{1 - \alpha/2}^{\text{ale}}  + \lambda \gamma_1 \hat{q}_{1 - \alpha/2}^{\text{epi}} 
\right]
\end{equation*}
contains at least \(\lceil (1 - \alpha)(|\Dcal| + 1) \rceil\) of points in \(\Dcal\). See~\autoref{appendix_alg2} for implementation.\label{line:gamma1}

\State \textbf{Step 4: Select \(\lambda^\star\) by minimizing the quantile loss on \(\Dval\).}
\Statex \quad Evaluate the quantile loss of \(C_\lambda\) on \(\Dval\) and set
\[
\lambda^\star = \arg\min_{\lambda \in \Lambda} \text{QuantileLoss}(\Dval, C_\lambda),
\]
where \(\text{QuantileLoss}(\Dval, C_\lambda) := \frac{1}{2|\Dval|}\sum_{i \in \Dval} \left[ QL_{\alpha/2}\big(Y_i, l(X_i)\big) + QL_{1 - \alpha/2}\big(Y_i, u(X_i)\big) \right]\), with
 \(l(x), u(x)\) denoting the bounds of \(C_\lambda(x)\), and \(QL_\tau(y, q) = (y - q)\big(\tau - \mathds{1}_{(-\infty,q]}(y)\big)\).\label{line:Optimizelambda}

\State \textbf{Output:} $\lambda^\star$ and calibrated prediction interval \(C_{\lambda^\star}(x)\).
\end{algorithmic}
\label{Alg_CLEAR}
\end{algorithm}

\begin{lemma}\label{le:AsymptoticConditionalCoverageCLEAR}
Let \(\Lambda\) be compact. Suppose that at least \(k\) of the base models used in the PCS ensemble are consistent for the true function \(f(x)\), and the quantile regression estimators \(\hat{q}_\tau^{\text{ale}}\) are consistent for both \(\tau \in \{\alpha/2,  1 - \alpha/2\}\). Then we obtain \textbf{asymptotic conditional validity}:
for any fixed \(x \in \mathcal{X}\), it holds that
\[
\liminf_{|\Dtrain|, |\Dval|, |\Dcal| \to \infty} \mathbb{P}\big(Y_{n+1} \in C(X_{n+1}) \mid X_{n+1} = x\big) \geq 1 - \alpha.
\]
\end{lemma}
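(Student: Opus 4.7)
The core intuition is that, under consistency, the epistemic component of the interval vanishes in the limit, so the interval is asymptotically driven entirely by the aleatoric quantile estimator, which recovers the true conditional quantiles of the residual distribution. Calibration on $\Dcal$ then forces the multiplicative constant to approach $1$, and the interval converges to a conditional oracle interval of the form $[f(x)+q^*_{\alpha/2}(x),\, f(x)+q^*_{1-\alpha/2}(x)]$, where $q^*_\tau(x)$ denotes the true conditional $\tau$-quantile of $Y-f(X)$ given $X=x$. Because this oracle interval has conditional coverage exactly $1-\alpha$ (under mild continuity of $P_{Y\mid X}$), the claimed asymptotic conditional validity follows for every $x$. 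The compactness of $\Lambda$ is used to turn this pointwise-in-$\lambda$ statement into a uniform one, so that the data-driven $\lambda^\star$ does not spoil the argument.

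\textbf{Step-by-step outline.} First I would show the following three consistency facts in probability as $|\Dtrain|\to\infty$, pointwise in $x$: (i) $\hat f(x)\to f(x)$, obtained because at least $k$ base models are consistent and the point-wise median of a collection of pointwise-consistent estimators is itself pointwise consistent; (ii) $\hat q^{\text{epi}}_\tau(x)\to 0$ for $\tau\in\{\alpha/2,1-\alpha/2\}$, since the point-wise quantiles across ensemble members concentrate around their common limit $f(x)$; and (iii) $\hat q^{\text{ale}}_\tau(x)\to q^*_\tau(x)$, by the stated consistency of the residual quantile regressors (applied at the three levels $\alpha/2,\,1/2,\,1-\alpha/2$, using that residuals $Y_i-\hat f(X_i)$ are themselves consistent for $Y_i-f(X_i)$). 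Next, for an arbitrary fixed $\lambda\in\Lambda$, I would analyze the split-conformal calibration that selects $\gamma_1=\gamma_1(\lambda)$: combining the consistency of the score $\hat q^{\text{ale}}_\tau+\lambda\hat q^{\text{epi}}_\tau$ with the Glivenko--Cantelli-type behaviour of the empirical coverage on $\Dcal$ (and a mild non-atomicity assumption on $P_{Y\mid X=x}$), one gets $\gamma_1(\lambda)\to 1$ in probability as $|\Dtrain|,|\Dcal|\to\infty$. Plugging these limits into \eqref{CQR_PCS_equation3} shows that $C_\lambda(x)$ converges in probability to the oracle interval $[f(x)+q^*_{\alpha/2}(x),\,f(x)+q^*_{1-\alpha/2}(x)]$, whose conditional coverage is $1-\alpha$ by construction.

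\textbf{Handling the data-driven $\lambda^\star$.} The subtlety is that $\lambda^\star$ is chosen by minimizing quantile loss on $\Dval$, so Step~3 must hold jointly over the whole grid $\Lambda$. I would upgrade the pointwise-in-$\lambda$ convergence to a uniform statement: because $\lambda\mapsto \gamma_1(\lambda)\,\hat q^{\text{epi}}_\tau(x)\cdot\lambda$ is continuous and $\Lambda$ is compact, and because $\hat q^{\text{epi}}_\tau(x)\to 0$, the whole family $\{C_\lambda(x):\lambda\in\Lambda\}$ collapses to the same oracle interval uniformly in $\lambda$. Consequently, whichever $\lambda^\star$ is selected by Step~4, the resulting interval $C_{\lambda^\star}(x)$ shares the same asymptotic behaviour, and taking $\liminf$ on both sides of the coverage probability gives the stated $\geq 1-\alpha$ bound.

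\textbf{Main obstacle.} The delicate part is \emph{not} the limit of the estimators but the convergence $\gamma_1(\lambda)\to 1$ of the split-conformal scaling factor: to make this rigorous one needs (a) that the empirical $(1-\alpha)$-quantile of the calibration scores converges to its population counterpart, (b) continuity of $P_{Y\mid X}$ so that $q^*_\tau$ is well-defined and the oracle interval has coverage \emph{exactly} $1-\alpha$ rather than strictly greater, and (c) a uniform-in-$\lambda$ version of this convergence over compact $\Lambda$. Once these technical conditions are in place, the remainder of the argument is a straightforward continuous-mapping calculation; the consistency of the PCS median and of the residual quantile regressors plugs in directly, and compactness of $\Lambda$ disposes of the data-dependent selection of $\lambda^\star$.
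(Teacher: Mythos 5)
Your proposal is correct and follows essentially the same route as the paper's proof: consistency makes the epistemic term $\hat q^{\text{epi}}_\tau(x)$ vanish, the aleatoric quantile estimator converges to the true conditional quantiles, the conformal scaling satisfies $\gamma_1\to 1$, and compactness of $\Lambda$ renders the epistemic-vanishing uniform in $\lambda$ so the data-driven $\lambda^\star$ is harmless. Your write-up is more explicit than the paper's terse argument and usefully flags the implicit non-atomicity/continuity assumption on $P_{Y\mid X=x}$ and the Glivenko--Cantelli step needed to push $\gamma_1\to 1$, both of which the paper simply delegates to the standard CQR literature.
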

The proof is given in \autoref{sec:AsymptoticConditionalCoverageCLEAR}, building on \citep[Section~5]{angelopoulos2024theoreticalfoundationsconformalprediction}.

Note that these assumptions are satisfied by the finite grid $\Lambda$ used in our implementation and by many base models, including tree-based methods and neural networks. While the epistemic component vanishes asymptotically, it plays a crucial role in finite samples by preventing under-coverage in data-sparse regions (see \Cref{sec:simulation-results,sec:IntuitiveTheoreticalMotivationCLEAR}). With an infinitely large calibration set, joint calibration is no worse than single-parameter baselines (see \Cref{le:population_optimality}).
~\autoref{appendix:Theory} provides further theoretical discussion, including properties of the quantile loss, intuitive motivation for CLEAR, and conformal marginal-coverage guarantees for a modified variant of CLEAR (\Cref{lemmaG}).

\section{Experimental Setup}
 
\label{experiment-setup}

\subsection{Data}\label{sec:Simulations}
We conduct experiments using both simulations and real-world data. 
The synthetic experiments demonstrate the main intuition behind our approach. We sample \(X \sim \mathcal{N}(0_d, I_d)\) and compute the response \(Y = \mu(X) + \sigma(X) \cdot \varepsilon\), where \(\varepsilon \sim \mathcal{N}(0, 1)\). The mean function \(\mu(X)\) introduces non-linearity through transformations of input features (involving absolute values and fractional powers with random coefficients); its explicit form, alongside \(\sigma(X)\), is detailed in \autoref{appendix:simulations}. The sample size is fixed at \(n = 5000\) and divided into 70-30\% training and validation splits. In the univariate case, we generate 100 datasets for $d=1$, and in multivariate, 100 datasets randomly sampled for $d\in\{2, 3, 20\}$. We then assess the conditional performance as a function of distance from $\E[X]$, where test points are randomly generated on the surfaces of spheres with varying radii.

For the real-world scenarios, we use 17 regression datasets curated by \cite{agarwal2025pcsuq}, forming one of the largest benchmarks for UQ (see \autoref{appendix:data} for details). Categorical features are one-hot encoded, with no further preprocessing. To ensure robustness, each dataset is evaluated over 10 random train-validation-test splits (60\%-20\%-20\%). We conduct experiments in two configurations: \textbf{standard} (using $\Dval$ also as $\Dcal$) and \textbf{conformalized} (splitting the 20\% validation into 10\% $\Dval$ + 10\% $\Dcal$ for stronger theoretical guarantees). Note that while the body of the paper only focuses on the standard experiments, the conformalized results are provided in \autoref{appendix:results-conformalized}.

\subsection{Baselines}

We compare CLEAR against its core components, CQR and PCS ensemble (for details, including PCS implementation and further baselines that are relevant only to the appendices, see \autoref{appendix:experimental-settings}). Notably, the underlying uncertainty estimation models from CQR and PCS are reused within the CLEAR framework. Our CQR follows the classical implementation \citep{RomanoPattersonCandes2019} with the difference that in our experiments, we primarily utilize an enhanced variant, termed \textbf{ALEATORIC}, which uses bootstrapping ($b=100$) on $Y_i$ and the model selection from PCS. We further improve ALEATORIC in a new baseline called \textbf{ALEATORIC-R}, which models the residuals $Y_i - \hat{f}(X_i)$ instead, using \(\hat{f}\) from the corresponding PCS ensemble. 
The CLEAR method combines the uncalibrated aleatoric uncertainty from ALEATORIC-R and the uncalibrated epistemic uncertainty from PCS. 

We perform ablation studies by exploring three different variants of base models for PCS (epistemic) and CQR (aleatoric). Our main approach, which we refer to as \textbf{variant (a)}, uses a \textbf{quantile PCS} for estimating epistemic uncertainty. We employ a diverse set of models designed to estimate conditional quantiles, namely quantile random forests (QRF) \citep{QRF}, quantile XGBoost (QXGB) \citep{Chen2016}, and Expectile GAM \citep{Serven2018}. Then, the top-performing model ($k=1$), in terms of the RMSE of $\hat{f}$ on $\Dval$, is selected and bootstrapped ($b=100$) to generate the epistemic uncertainty estimate \(\hat{q}^{\text{epi}}\) and the median \(\hat{f}\). CLEAR then combines this bootstrapped \(\hat{q}^{\text{epi}}\) with the bootstrapped aleatoric estimate from ALEATORIC-R, ensuring ALEATORIC-R also uses the same selected quantile model type. The other two variants are explained in \autoref{appendix:pcs-details}, both using the same $b$ and $k$ as above and only modifying the models. \textbf{Variant (b)} restricts quantile PCS as well as CQR baselines to \textbf{only QXGB} to remove any impact on CLEAR's evaluation due to the choice of the base models. In contrast, \textbf{variant (c)} uses the standard PCS models 
to estimate the \textbf{conditional mean}. In all cases, PCS intervals are calibrated using the multiplicative method.

To further validate the generalizability of CLEAR, we also evaluate its application to other uncertainty estimators. This is a separate setup, where for epistemic uncertainty, we employ \textbf{Deep Ensembles (DE)} \citep{DeepEnsemblesNIPS2017_9ef2ed4b}—an ensemble of neural networks trained with diverse initializations—and for aleatoric uncertainty, we use \textbf{Simultaneous Quantile Regression (SQR)} \citep{NEURIPS2019_73c03186}, which directly models multiple conditional quantiles (more detail in \autoref{appendix:de-sqr-details}). These represent state-of-the-art deep learning approaches for uncertainty quantification, complementing our primary PCS and CQR baselines.

In all cases, CLEAR parameters ($\lambda, \gamma_1$) are optimized via quantile loss on the validation set. $\lambda$ is chosen from a dense grid~$\Lambda$ combining linearly spaced values from 0 to 0.09 and logarithmically spaced values from 0.1 to 100 (totaling over 4000 points), and $\gamma_1$ is determined via conformal calibration for the chosen $\lambda$. All intervals are evaluated at 95\% nominal coverage. 
In our benchmarks, we skip the data pre-processing steps of PCS (i.e., set $N_1=1$), as in \cite{agarwal2025pcsuq}, addressing only the model perturbation step of PCS.
However, we show an example in \autoref{section_case_study} of how CLEAR can be applied in the $N_1>1$ setting from \citet[Chapter~13]{Yu_2024_Vertidical_Data_Science_Book}.

\subsection{Metrics}
\label{subsection_metrics}

To evaluate the quality of our prediction intervals, we employ interval coverage (PICP), normalized interval width (NIW), average interval score loss (AISL)
and the quantile loss that are common in the interval prediction literature \citep{pearce2018high,v-yugin2019, azizi25}. We also evaluate our work on Normalized Calibrated Interval Width (NCIW), defined as:
\begin{equation*}
\text{NCIW}(\hat{f},l,u)=\text{NIW}\left(\hat{f}-c_{\textrm{\tiny test-cal}}l,\hat{f}+c_{\textrm{\tiny test-cal}}u\right),
\end{equation*}
with a calibration constant
\[c_{\textrm{\tiny test-cal}}:=\argmin_{c\ge0}\left\{\text{PICP}\left(\hat{f}-cl,\hat{f}+cu\right) \geq 1-\alpha\right\}.\]
In the standard configuration, all methods are calibrated on the validation set; in the conformalized configuration, calibration uses a separate 10\% calibration split. In both cases, the methods are already calibrated before evaluation on the test set, resulting in $c_{\textrm{\tiny test-cal}}\approx 1$. We primarily discuss NCIW and quantile loss, but provide the results for all the available metrics.

\section{Results}
\label{sec:results}

\subsection{Simulations}
\label{sec:simulation-results}

\autoref{fig:simulations-homo} shows conditional coverage and interval width for the univariate homoskedastic case. ALEATORIC-R achieves good coverage in high-density regions but undercovers in low-density or extrapolation regions. Conversely, calibrated PCS intervals slightly undercover in data-rich regions but widen too much in extrapolation regions. Only CLEAR maintains approximate conditional coverage throughout, adapting interval width as needed. Additional results for heteroskedastic and multivariate settings in \autoref{appendix:hetero-sim} show similar findings. 

\begin{figure}[!htbp]
\centering
\includegraphics[width=\textwidth, trim={0 0 0 2cm},clip]{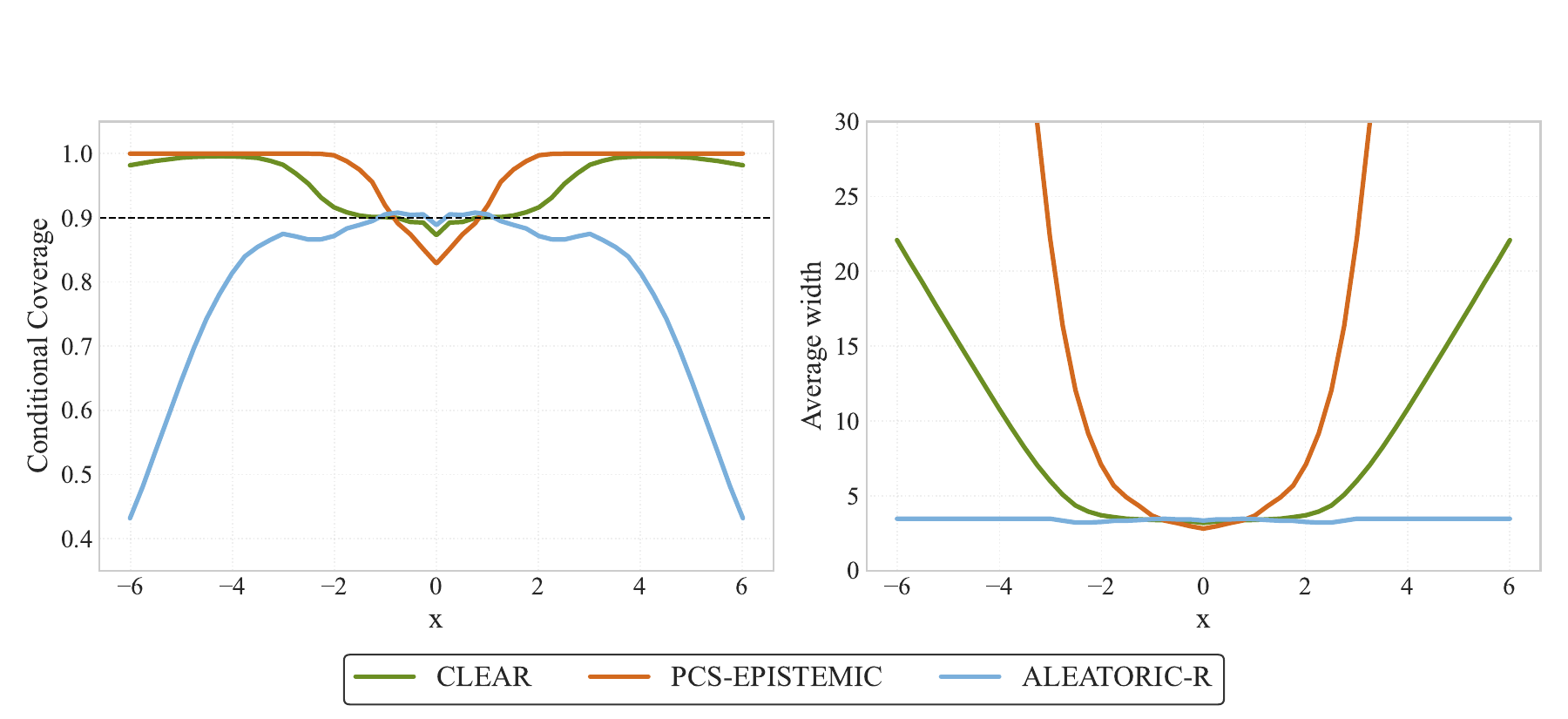}
\caption{Results for univariate homoskedastic case averaged over 100 simulations: On the left, conditional coverage, and on the right, mean width, for \( X=x \). It compares CLEAR, PCS, and ALEATORIC-R (bootstrapped CQR trained on residuals $Y_i - \hat{f}(X_i)$). The dashed horizontal line is the target coverage level of 0.9. CLEAR adapts to maintain target coverage across the input space.
}

\label{fig:simulations-homo}
\end{figure}

\subsection{Real-world data}
\label{sec:real-world-results}

\autoref{fig:var-a-nciw-quantileloss} shows the Normalized Calibrated Interval Width (NCIW) and Quantile Loss for 95\% prediction intervals across all datasets for our main approach, CLEAR (variant a), compared to several baselines. CLEAR consistently demonstrates superior performance, achieving better or comparable interval width and loss metrics while consistently maintaining nominal coverage. The inset boxplots show that CLEAR (a) compared to PCS, ALEATORIC, and ALEATORIC-R has an improved quantile loss of 15.8\%, 34.4\%, and 9.4\%, respectively. Similar relative increases are observed for NCIW, with PCS, ALEATORIC, and ALEATORIC-R exhibiting increases of 17.5\%, 28.3\%, and 3\%. Moreover, CLEAR (a) was, in fact, the top-performing method on 15 of the 17 datasets, while remaining the most stable compared to the baselines. These trends hold across our other model variants as well (\autoref{appendix:full-results} for standard and \autoref{appendix:results-conformalized} for conformalized results): both CLEAR (b) and CLEAR (c) exhibit very similar relative improvements (\cref{fig:variant-b-clear,fig:variant-c-clear} in \cref{appendix:results-variant-b,appendix:results-variant-c}), with variant (a) remaining the strongest and most robust configuration.

\begin{figure}[!ht]
    \centering
    \includegraphics[width=1.0\linewidth]{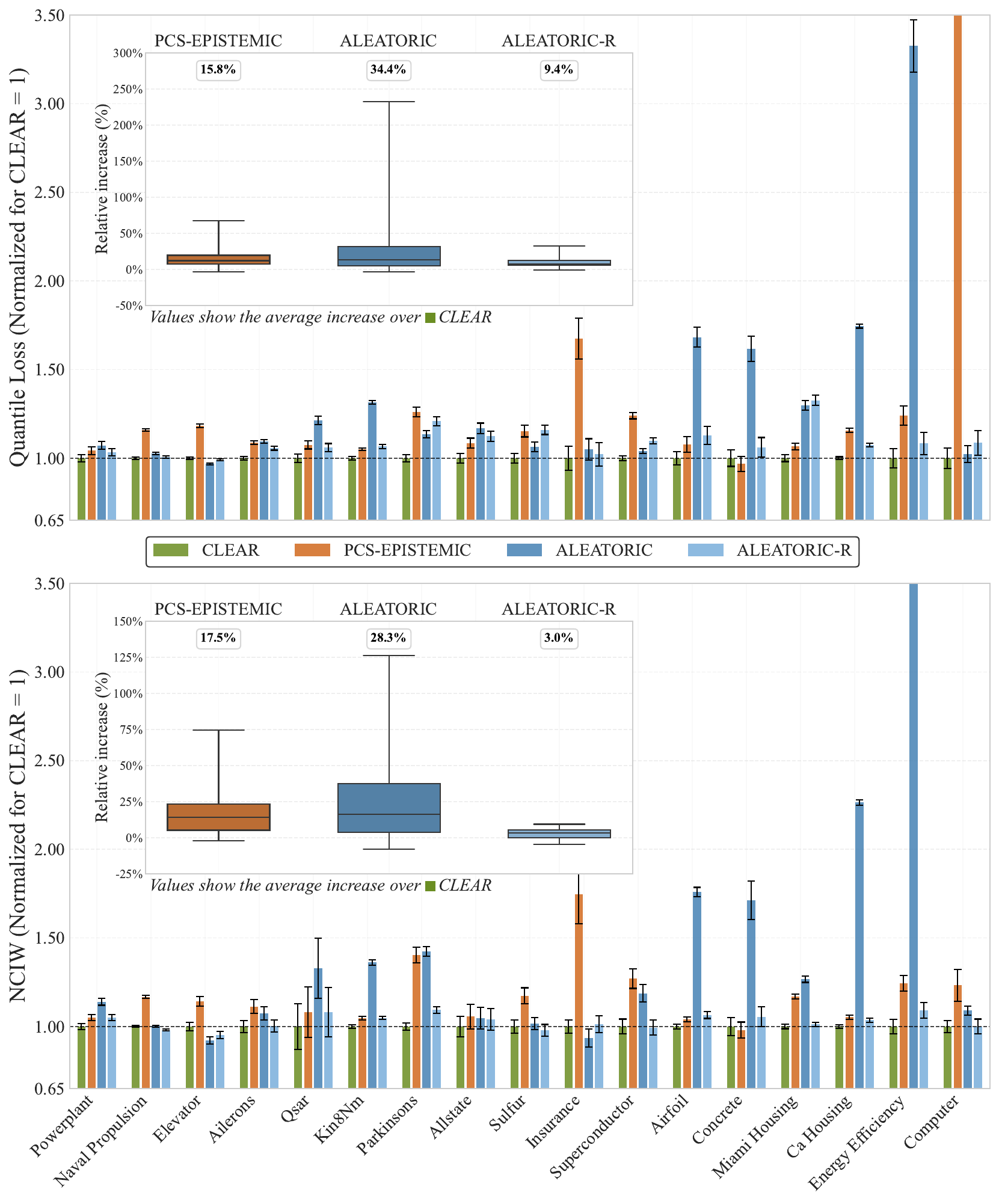}
    \caption{Results for real-world data: Quantile loss and NCIW performance of different methods over 10 seeds normalized relative to CLEAR (baseline = 1.0) with error bars are $\pm 1$ $\sigma$. Lower values are better. The inset boxplot shows the average (\%) relative increase of the metric over CLEAR. EPISTEMIC is PCS-UQ, ALEATORIC is bootstrapped CQR, and ALEATORIC-R uses residuals.
    }
    \label{fig:var-a-nciw-quantileloss}
\end{figure}

While we observed that setting $\lambda=1$ or $\gamma_1=1$ could marginally improve results if one had prior knowledge of uncertainty components—an unlikely scenario in practice—fully optimizing both parameters offers greater robustness. A limited size of the validation dataset can lead to overfitting of the two parameters, and incorporating some prior on them could further improve the results. Importantly, the absolute value of $\lambda$ is relative to the pre-calibrated scales of the uncertainty estimators and dataset noise; its interpretation is best contextualized by observing its behavior when systematically varying dataset characteristics, such as the number of features or observations. While fixed parameters ($\lambda=1$ or $\gamma_1=1$) could marginally improve results with prior knowledge, fully optimizing both parameters offers greater robustness.
CLEAR's dual-parameter calibration enhances stability by adaptively re-weighting potentially unreliable uncertainty components, as evidenced in datasets like \texttt{energy\_efficiency}, where baselines show markedly larger NCIW. The dataset-dependent variability in optimal $\lambda$ underscores the need for adaptive selection over fixed heuristics. The calibration runtime (grid-search) is also extremely negligible in practice (\autoref{appendix:runtime}).

\paragraph{Empirical comparison with UACQR \citep{rossellini2024}.} The comparison is summarized in \autoref{tab:clear_vs_uacqr_detailed_standard}, showing the percentage improvement of (standard) CLEAR over both UACQR variants across all 17 datasets and three metrics (detailed metric-specific tables for the conformalized version are provided in \Cref{tab:combined_picp_95_final_conformalized_variant_c,tab:combined_niw_95_final_conformalized_variant_c,tab:combined_quantileloss_95_final_conformalized_variant_c,tab:combined_nciw_95_final_conformalized_variant_c,tab:combined_intervalscoreloss_95_final_conformalized_variant_c}). The performance of CLEAR is much more reliable across the considered datasets and metrics. For example, on the \texttt{airfoil}, \texttt{energy\_efficiency}, and \texttt{naval\_propulsion} datasets, CLEAR is significantly better (40--70\%) in metrics such as NCIW, AISL, and average width, while still exceeding 94.5\% coverage. In contrast, UACQR outperforms CLEAR across these metrics on only one dataset (\texttt{insurance}). In some instances, UACQR-P can output infinitely wide predictive intervals \cite[p.~5]{rossellini2024}, which we observed for \texttt{energy\_efficiency} in our experiments. CLEAR conclusively outperforms both versions of UACQR on 14 out of 17 datasets. These large differences in performance can partially be explained by our approach for fitting aleatoric uncertainty to the residuals. We hypothesize that CLEAR is more stable and robust because it can more easily compensate for the shortcomings of the base models. If aleatoric uncertainty is over- or underestimated, CLEAR can correct its scale by adjusting $\gamma_1$.

\begin{table}[!htbp]
\centering
\caption{Improvement (\%) of standard CLEAR variant (c) over UACQR-S and UACQR-P at 95\% coverage across 17 datasets. \textbf{Bold values with +} indicate CLEAR outperforms the baseline. $+\infty^{1/10}_{v}$ denotes that UACQR-P produced infinitely wide intervals for $1$ out of the $10$ seeds, and the mean improvement on the remaining $9$ seeds was $v$.}
\label{tab:clear_vs_uacqr_detailed_standard}
\resizebox{0.8\textwidth}{!}{%
\begin{tabular}{lcccccc}
\toprule
Dataset & \multicolumn{3}{c}{UACQR-S} & \multicolumn{3}{c}{UACQR-P} \\
\cmidrule(lr){2-4} \cmidrule(lr){5-7}
 & NCIW & AISL & Width & NCIW & AISL & Width \\
\midrule
ailerons & -1.5\% & \textbf{+3.2\%} & -0.4\% & -9.3\% & \textbf{+2.7\%} & -18.3\% \\
airfoil & \textbf{+40.3\%} & \textbf{+44.5\%} & \textbf{+49.4\%} & \textbf{+40.7\%} & \textbf{+43.3\%} & \textbf{+45.7\%} \\
allstate & \textbf{+6.8\%} & \textbf{+1.2\%} & \textbf{+8.8\%} & \textbf{+0.4\%} & -3.2\% & \textbf{+1.9\%} \\
ca\_housing & \textbf{+23.4\%} & \textbf{+8.8\%} & \textbf{+22.4\%} & \textbf{+15.0\%} & \textbf{+5.7\%} & \textbf{+13.9\%} \\
computer & \textbf{+20.4\%} & \textbf{+9.5\%} & \textbf{+21.1\%} & \textbf{+6.7\%} & \textbf{+6.6\%} & \textbf{+3.4\%} \\
concrete & \textbf{+21.3\%} & \textbf{+21.0\%} & \textbf{+31.4\%} & \textbf{+22.0\%} & \textbf{+18.1\%} & \textbf{+26.4\%} \\
elevator & \textbf{+36.5\%} & \textbf{+29.1\%} & \textbf{+36.1\%} & \textbf{+24.5\%} & \textbf{+27.6\%} & \textbf{+19.2\%} \\
energy\_efficiency & \textbf{+69.6\%} & \textbf{+62.4\%} & \textbf{+72.4\%} & $+\infty^{\text{\tiny 1/10}}_{\text{\tiny \textbf{+70.9\%}}}$ & $+\infty^{\text{\tiny 1/10}}_{\text{\tiny \textbf{+63.1\%}}}$ & $+\infty^{\text{\tiny 1/10}}_{\text{\tiny \textbf{+72.8\%}}}$ \\
insurance & -15.5\% & -19.5\% & -29.1\% & -18.6\% & -23.5\% & -32.5\% \\
kin8nm & \textbf{+27.1\%} & \textbf{+26.7\%} & \textbf{+28.2\%} & \textbf{+25.8\%} & \textbf{+24.1\%} & \textbf{+26.8\%} \\
miami\_housing & \textbf{+19.6\%} & \textbf{+20.0\%} & \textbf{+23.0\%} & \textbf{+13.4\%} & \textbf{+15.2\%} & \textbf{+15.5\%} \\
naval\_propulsion & \textbf{+55.8\%} & \textbf{+52.0\%} & \textbf{+55.4\%} & \textbf{+7.4\%} & \textbf{+59.2\%} & \textbf{+6.4\%} \\
parkinsons & \textbf{+20.5\%} & \textbf{+5.1\%} & \textbf{+20.6\%} & \textbf{+8.8\%} & -4.6\% & \textbf{+7.7\%} \\
powerplant & \textbf{+19.9\%} & \textbf{+13.5\%} & \textbf{+22.7\%} & \textbf{+15.6\%} & \textbf{+10.1\%} & \textbf{+18.8\%} \\
qsar & \textbf{+21.5\%} & \textbf{+10.3\%} & \textbf{+23.1\%} & \textbf{+15.0\%} & \textbf{+5.5\%} & \textbf{+16.1\%} \\
sulfur & \textbf{+13.5\%} & \textbf{+11.3\%} & \textbf{+11.8\%} & \textbf{+5.0\%} & \textbf{+7.0\%} & \textbf{+3.2\%} \\
superconductor & \textbf{+17.7\%} & \textbf{+6.4\%} & \textbf{+16.6\%} & \textbf{+12.6\%} & \textbf{+3.8\%} & \textbf{+11.1\%} \\
\bottomrule
\end{tabular}%
}
\end{table}

\paragraph{Empirical comparison with DE and SQR.} Beyond PCS and CQR, CLEAR demonstrates substantial improvements when applied to DE and SQR (\autoref{tab:clear_percentage_improvement}). When using DE for epistemic uncertainty and SQR for aleatoric uncertainty at 95\% nominal coverage, CLEAR achieves average width reductions (NCIW) of 28.6\% and 13.4\%, respectively, with similar improvements in quantile loss (24.0\% and 13.7\%). The gains persist even after conformal calibration of the baselines, particularly relevant to the aleatoric component (SQR). The results underscore CLEAR's ability to balance the two uncertainty sources well, regardless of the underlying estimators. This consistency across both PCS and neural approaches validates the generality of CLEAR (full results in \autoref{appendix:results-de-sqr}).

\begin{table}[!htbp]
\centering
\caption{Mean (\%) improvement of CLEAR over DE \& SQR across all datasets at 95\% coverage (higher is better). \textbf{Bold} values indicate CLEAR outperforms the baseline.}
\label{tab:clear_percentage_improvement}
\small
\begin{tabular}{lcccc}
\toprule
Metric & DE & SQR & DE-conformal & SQR-conformal \\
\midrule
PICP & \textbf{+0.05\%} & -0.66\% & -0.09\% & -0.15\% \\
NIW & \textbf{+28.81\%} & \textbf{+17.38\%} & \textbf{+29.55\%} & \textbf{+14.07\%} \\
NCIW & \textbf{+28.57\%} & \textbf{+13.36\%} & \textbf{+27.90\%} & \textbf{+13.23\%} \\
QuantileLoss & \textbf{+23.98\%} & \textbf{+13.66\%} & \textbf{+24.08\%} & \textbf{+10.12\%} \\
\bottomrule
\end{tabular}
\end{table}

\subsection{Case study: accounting for data uncertainty}
\label{section_case_study}

We consider a case study on the Ames Housing dataset, detailed in \autoref{appendix:example-ames-housing}, where we demonstrate the full PCS pipeline and vary both the number of predictor variables and training sample size to explore how aleatoric and epistemic uncertainty respond to different data characteristics. We induce changes in aleatoric uncertainty by restricting features (80 vs.\ 2) and epistemic uncertainty by subsampling training data (100\%, 50\%, 20\%). \Cref{tab:ames-housing-by-variable,tab:clear-calibration-params} show the results. CLEAR correctly identifies the dominant uncertainty source: in the low-information regime (2 features), it prioritizes aleatoric uncertainty ($\lambda=0.64$), whereas in the high-feature but low-data regime (20\% samples), it drastically shifts weight to epistemic uncertainty ($\lambda=100$, calibrated E/A ratio $\approx 250$). This adaptive weighting allows CLEAR to outperform baselines or perform comparably to them.

\begin{table}[!htbp]
\centering
\caption{Ames Housing results with varying the number of predictors (90\% coverage target).}
\setlength{\tabcolsep}{4pt}
\small
\begin{tabular}{llcccc}
\toprule
Experiment & Method & NCIW & Quantile Loss & Average Width (\$) & Coverage \\
\midrule
\multirow{3}{*}{2 features} & PCS & 0.214 & 3{,}818 & 107{,}880 & 0.87 \\
 & CQR & 0.186 & 3{,}448 & 104{,}741 & \textbf{0.90} \\
 & CLEAR & \textbf{0.171} & \textbf{3{,}131} & \textbf{95{,}177} & 0.89 \\
\midrule
\multirow{3}{*}{All features} & PCS & 0.105 & \textbf{1{,}922} & 57{,}594 & \textbf{0.89} \\
 & CQR & 0.117 & 2{,}194 & 62{,}398 & 0.88 \\
 & CLEAR & \textbf{0.103} & 1{,}923 & \textbf{55{,}910} & 0.88 \\
\bottomrule
\end{tabular}
\vskip -0.1in
\label{tab:ames-housing-by-variable}
\end{table}

\begin{table}[!htbp]
\centering
\caption{CLEAR calibration parameters across experimental scenarios (90\% coverage target).}
\small
\begin{tabular}{lccc}
\toprule
Experiment & $\lambda$ & $\gamma_1$ & Epistemic/Aleatoric Ratio \\
\midrule
2 features & 0.64 & 0.99 & 0.03 \\
All features & 14.45 & 0.13 & 7.72 \\
All features, 50\% data & 17.97 & 0.09 & 14.09 \\
All features, 20\% data & 100 & 0.01 & 250.5 \\
\bottomrule
\end{tabular}
\label{tab:clear-calibration-params}
\end{table}

\section{Conclusion, limitations \& future work}
\label{section_conclusion}

This paper introduces CLEAR, a novel framework for constructing prediction intervals by adaptively balancing epistemic and aleatoric uncertainty. Through a calibration process involving two distinct parameters, $\gamma_1$ and $\gamma_2$, CLEAR offers an improvement over classical methods that often address these uncertainty types in isolation or rely on their fixed, non-adaptive combination. Our evaluations using CQR (and SQR) for aleatoric uncertainty and PCS (and DE) for epistemic uncertainty show that interval width and quantile loss improve on both simulated and the 17 real-world datasets. CLEAR consistently achieves improved conditional coverage, notably by adapting interval widths appropriately in extrapolation regions, while yielding narrower interval widths.

Limitations remain despite CLEAR's significant advantages, and these warrant further discussion. When scaling CLEAR to significantly larger datasets and models, specific hyperparameters (e.g., number of bootstraps) can be adjusted to save computational costs. The accuracy of CLEAR is intrinsically linked to the quality of the base estimators for epistemic and aleatoric uncertainty. All our calibration datasets contained at least 150 data points, but for smaller calibration datasets, overfitting $\gamma_1$ and $\lambda$ could be a problem. Epistemic uncertainty, as demonstrated in the case study, must be addressed through careful judgment calls, in line with the principles of the PCS framework. The additive combination of scaled uncertainties, while powerful, is also a specific structural choice.

Future work could explore extension to classification tasks \citep{heiss2026jucaljointlycalibratingaleatoric}, alternative $\lambda$ selection techniques, integration with active learning, and more scalable epistemic UQ approaches \citep{tagasovska2023}. Extending CLEAR to time series settings could also be valuable, particularly for capturing the dynamics of temporal uncertainty. Finally, a deeper study of the interpretability of learned parameters could provide insights into the dominance of the two uncertainty sources.

\section*{Acknowledgments}
We are grateful to Abhineet Agarwal and Michael Xiao for their valuable feedback, helpful discussions, and for sharing early previews and detailed explanations of their code. We also thank Natasa Tagasovska, Jo\~ao Vitor Romano, Erez Buchweitz, Omer Ronen, Jakob Weissteiner, Hanna Wutte, Florian Krach, Markus Chardonnet, Josef Teichmann, Bertrand Charpentier, Janis Postels, Alexander Immer, James-A. Goulet, and Hans Bühler for insightful and stimulating discussions. We are also grateful to Valérie Chavez-Demoulin and Marc-Olivier Boldi for their continuous support.

Ilia Azizi acknowledges support from the HEC Research Fund for research that enabled and facilitated computations on DCSR clusters at the University of Lausanne.

Juraj Bodik acknowledges support from the Hasler Foundation [grant number 24009]. Part of this work was conducted during his research visit to the UC Berkeley, supported by the Hasler Foundation. He thanks the Department of Statistics at UC Berkeley for their kind hospitality.

Jakob Heiss gratefully acknowledges support from the Swiss National Science Foundation (SNSF) Postdoc.Mobility fellowship [grant number P500PT\_225356] and the Department of Statistics at UC Berkeley. He also wishes to thank the Department of Mathematics at ETH Zürich where the project's early ideas originated.

Bin Yu gratefully acknowledges partial support from NSF grant DMS-2413265, NSF grant DMS 2209975, NSF grant 2023505 on Collaborative Research: Foundations of Data Science Institute (FODSI), the NSF and the Simons Foundation for the Collaboration on the Theoretical Foundations of Deep Learning through awards DMS-2031883 and 814639, NSF grant MC2378 to the Institute for Artificial CyberThreat Intelligence and OperatioN (ACTION), and NIH (DMS/NIGMS) grant R01GM152718.

\section*{Reproducibility statement and usage of large language models}
All code and datasets used in this work are provided 
at \url{https://github.com/Unco3892/clear} to ensure full
reproducibility of our results. We declare that we used a large language model for grammar and
language polishing, as well as for limited coding assistance (e.g., boilerplate code and debugging).
All conceptual and theoretical contributions, experimental designs, and conclusions are our own.

\bibliography{main}

\startcontents[appendices]
\appendix

\begin{table}[!htbp]
\centering
\caption{List of notable abbreviations used in the main paper.}
\label{tab:abbreviations}
\small  
\begin{tabular}{p{3.2cm}p{3.8cm}p{6.5cm}}  
\toprule
\textbf{Abbreviation} & \textbf{Full Term} & \textbf{Description / Citation} \\
\midrule
PCS & Predictability, Computability, and Stability & Framework for veridical data science \citep{Yu_2020_Vertidical_Data_Science, Yu_2024_Vertidical_Data_Science_Book}. \\CQR & Conformalized Quantile Regression & Distribution-free prediction intervals \citep{RomanoPattersonCandes2019} . \\
ALEATORIC & Bootstrapped CQR & CQR variant, where we compute CQR on bootstrapped data and take the median as final estimate. Follows from PCS framework \citep{Yu_2024_Vertidical_Data_Science_Book} \\
ALEATORIC-R & Residual-based Bootstrapped CQR & ALEATORIC applied on residuals $Y_i - \hat{f}(X_i)$, where $\hat{f}$ is obtained from PCS. \\
UACQR & Uncertainty aware CQR & Method from \citep{rossellini2024} with two variants UACQR-S and UACQR-P \\
QRF & Quantile Random Forests & Tree-based model for quantile estimation \citep{QRF}. \\
QXGB & Quantile XGBoost & Gradient boosting for quantile estimation. \\
GAM & Generalized Additive Model & Used for expectile or quantile regression \citep{Serven2018}. \\
DE & Deep Ensembles & Ensemble of neural networks for epistemic uncertainty \citep{DeepEnsemblesNIPS2017_9ef2ed4b}. \\
SQR & Simultaneous Quantile Regression & Neural network approach for multiple quantile estimation \citep{NEURIPS2019_73c03186}. \\
AISL & Average Interval Score Loss & Computed from \citep{rossellini2024} and as defined in \autoref{sec:IntervalScoreVsQuantileLoss}. \\
PICP & Prediction Interval Coverage Probability & Fraction of targets within predicted intervals. \\
NIW & Normalized Interval Width & Average interval width normalized by target range. \\
NCIW & Normalized Calibrated Interval Width & NIW after test-time calibration. \\
RMSE & Root Mean Squared Error & Square root of the average squared prediction errors. \\
MAE & Mean Absolute Error & Average of absolute prediction errors. \\
$\mathcal{N}(0, 1)$ & Standard Normal Distribution & Gaussian distribution with mean 0 and variance 1. \\
$\supp$ & support & Support of a distribution or function. \\
\bottomrule
\end{tabular}
\end{table}

\clearpage






\setcounter{tocdepth}{2}
\startcontents[appendices]

\subsection*{List of appendices}

\printcontents[appendices]{}{1}{}
\setcounter{tocdepth}{1}

\clearpage

\section{Related literature}\label{appendix:RelatedLiterature}
In the main paper, \autoref{sec:Introduction} provides a high-level overview of uncertainty quantification in ML. \autoref{epistemic_section} contains an overview of epistemic uncertainty, and \autoref{aleatoric_section} discusses the aleatoric uncertainty. In this appendix, we discuss these concepts in more depth. For an introduction to epistemic and aleatoric uncertainty, see \cite{Hullermeier2021}, which provides a comprehensive overview.

\subsection{Literature that implicitly assumes \texorpdfstring{$\gamma_1=1$}{gamma=1}}\label{sec:LiteratureAssumeingGammaOne}
UACQR, introduced by \cite{rossellini2024}, is conceptually closest to CLEAR.
Conceptually on a high level, this method corresponds to a variant of CLEAR where $\gamma_1=1$ is fixed, and only $\lambda$ is tuned. This is justified when aleatoric uncertainty is well-calibrated, which may often hold asymptotically. When the validation dataset is very small, setting $\gamma_1=1$ can even bring small advantages compared to tuning $\gamma_1$. In practice, however, aleatoric uncertainty is often miscalibrated due to over- or under-regularization. Tuning $\gamma_1$ helps correct its scale. Furthermore, it can happen in practice that the estimator of the aleatoric uncertainty has a much lower or much higher quality than the epistemic uncertainty. In this case, optimizing both parameters $\gamma_1$ and $\lambda$ allows us to compensate for the failure of one of the two uncertainties to some extent by putting more weight on the other type of uncertainty without changing the empirical marginal coverage. This results in a higher robustness and stability of CLEAR. 

\subsection{Literature that implicitly assumes \texorpdfstring{$\lambda=1$}{λ=1}}\label{sec:LiteratureAssumeingLambdaOne}
Most of the literature that combines epistemic and aleatoric uncertainty implicitly assumes that $\lambda=1$, when they simply combine epistemic and aleatoric uncertainty in the ratio 1:1. On top of the resulting uncertainty, one can use (conformal) calibration, which corresponds to CLEAR's calibration of $\gamma_1$. However, in contrast to CLEAR, they do not rebalance the ratio of epistemic and aleatoric uncertainty with $\lambda$.

In practice, it is possible to underestimate the aleatoric uncertainty and overestimate epistemic uncertainty. This results in a one-dimensional calibration via $\gamma_1$, and consequently, narrow intervals in regions of dominating aleatoric uncertainty or in wide intervals in regions of dominating epistemic uncertainty. $\gamma_1$ alone cannot solve this. CLEAR can deal well with such situations by compensating for such an imbalance via $\lambda$.

\subsubsection{Deep ensembles}\label{app:DeepEnsembles}
While it is quite common to refer to \emph{deep ensembles} (DE), whenever one uses an ensemble of neural networks (NNs) for uncertainty estimation, it is important to note that \cite{DeepEnsemblesNIPS2017_9ef2ed4b} introduced DE as a method that both estimates epistemic and aleatoric uncertainty. For regression, they train each NN with 2 outputs estimating $\mu$ and $\sigma$ via a Gaussian Maximum-Likelihood-loss (as in \cite{374138}), where $\sigma$ is responsible for the aleatoric uncertainty, which they refer to as \enquote{ambiguity in targets y for a given x}. Moreover, they use the ensemble diversity to estimate epistemic uncertainty, which they refer to as \enquote{model uncertainty}. Although DE is mainly known for its ensembling approach, \cite[Table 2 in Appendix A.1]{DeepEnsemblesNIPS2017_9ef2ed4b} clearly shows that both the aleatoric and epistemic parts are crucial in terms of empirical performance. In their paper, the authors do not apply any calibration on top, i.e., as it is implicitly assumed that $\gamma_2=\gamma_1=1=\lambda$. However, it is common to apply (conformal) calibration on top. The commonly used calibration techniques only calibrate $\gamma_1$, while keeping $\lambda=1$ fixed, in contrast to CLEAR.
In particular, for DE, both the diversity of the ensemble and the bias on aleatoric are very sensitive to various hyperparameters. One type of uncertainty may be strongly underestimated. In contrast, the other type is strongly overestimated, which motivates the need to explicitly calibrate $\lambda$ in a data-driven way.

\paragraph{Technical details on adversarial attacks for deep ensembles.} The original paper \cite{DeepEnsemblesNIPS2017_9ef2ed4b} is written as if applying adversarial attacks is an integral part of DEs and one of the paper's main contributions. However, to the best of our knowledge, many practitioners refer to \enquote{DEs} without implying adversarial attacks during training, and adversarial attacks during training are seen as an optional add-on to DEs, but not an important part of DEs. Furthermore, the regression results in \cite[Table 2 in Appendix A.1]{DeepEnsemblesNIPS2017_9ef2ed4b} do not suggest that adversarial attacks during training are particularly beneficial to the performance.

\paragraph{Technical details on measuring deep ensemble's diversity.}
Intuitively, one should estimate high epistemic uncertainty if there is a significant disagreement among the ensemble's predictions and small epistemic uncertainty if they agree. While \cite[Chapter 13]{Yu_2024_Vertidical_Data_Science_Book} and \cite{agarwal2025pcsuq} suggest using quantiles, \cite{DeepEnsemblesNIPS2017_9ef2ed4b} suggest using the empirical standard deviation to estimate DE's disagreement among ensemble members' predictions. It seems plausible that in \cite{DeepEnsemblesNIPS2017_9ef2ed4b}'s case of 5 ensemble members, using the standard deviation and some Gaussian assumptions can be more appropriate while in \cite[Chapter 13]{Yu_2024_Vertidical_Data_Science_Book}'s and \cite{agarwal2025pcsuq}'s case of 100 or more ensemble members, quantiles can be more accurate.

\paragraph{Variations of deep ensembles.}
There exist several modifications of DE that, for example, promote the ensemble's diversity on the function space via an additional loss term during training \citep{wang2019function}, ensemble over multiple different hyperparameters \citep{wenzel2020hyperparameter}, or reduce the computational training cost \citep{kendall2017uncertaintiesneedbayesiandeep,Dropout_bayesian_Gal,wen2020batchensemble,havasi2021training,rossellini2024,Chan_Molina_Metzler_2024,agarwal2025pcsuq}.

\subsubsection{Monte Carlo dropout}
\cite{Dropout_bayesian_Gal}  originally studied Monte Carlo Dropout (MC Dropout) without explicitly modeling aleatoric uncertainty, which was then extended by \cite{kendall2017uncertaintiesneedbayesiandeep} to also explicitly model aleatoric uncertainty. While \cite{Dropout_bayesian_Gal} see MC Dropout as an approximation of a Bayesian neural network, one can also see it as another ensemble method.
The main difference to DE is that MC Dropout only trains one NN with dropout and obtains an ensemble after training by randomly setting weights of the model to zero at inference time.
Analogously to DE, MC Dropout also adds epistemic and aleatoric uncertainty in the ratio 1:1 with the same disadvantages as described before in \Cref{app:DeepEnsembles}.

\subsubsection{\enquote{A deeper look into aleatoric and epistemic uncertainty disentanglement}}
\cite{valdenegrotoro2022deeperlookaleatoricepistemic} empirically concludes that ensembles have the best uncertainty and disentangling behavior of epistemic and aleatoric uncertainty.
In their paper, the authors do not use any form of calibration. This would correspond to $\gamma_2=\gamma_1=1=\lambda$ in our notation. Their paper suggests a different loss function, which they call $\beta$-NLL, for training to mitigate the underestimation of aleatoric uncertainty to some extent in their experimental setting without comparing this approach to calibration. In \cite[Figure~6]{valdenegrotoro2022deeperlookaleatoricepistemic}, one can observe that even without the $\beta$-NLL, both epistemic and aleatoric uncertainty already have a good shape (that is, good relative uncertainty, see \autoref{appendix:ConditionalVsMarginal}) for deep ensembles (DE). The main problem of DE in \cite[Figure~6]{valdenegrotoro2022deeperlookaleatoricepistemic} is that the epistemic uncertainty is too small by a very large factor (that is, poor absolute scale of uncertainty, see \autoref{appendix:ConditionalVsMarginal}), while aleatoric uncertainty has already almost the correct scaling. From our perspective, applying CLEAR on their DE would probably largely fix their problem of DE if CLEAR chooses $\gamma_1\approx 1$ and $\lambda \gg 1$. However, they show that for this specific experiment, $\beta$-NLL also fixes the problem.
In general, where one suspects that the aleatoric or the epistemic uncertainty might be too small or too large across the domain, we strongly recommend simply applying CLEAR on top of the already trained ensemble instead of retraining all the models with a new training pipeline. The concept of CLEAR can be implemented in a few minutes and calibrates an already trained ensemble in a few seconds. We considered an interesting open problem to study if $\beta$-NLL can improve the relative epistemic and aleatoric uncertainty (see \autoref{appendix:ConditionalVsMarginal}).

\subsubsection{\enquote{Recalibration of aleatoric and epistemic regression uncertainty in medical imaging}}
\cite{Laves_recalibration_2021} also combines epistemic and aleatoric uncertainty in the ratio 1:1 and applies a single constant (corresponding to $\gamma_1$ in our notation) to scale the total predictive uncertainty, which corresponds to fixing $\lambda=1$, resulting in the same potential for problems as mentioned before.

\subsection{Bayesian models}
If one had access to a perfect prior, perfectly computed Bayesian inference, it would provide well-calibrated epistemic and aleatoric uncertainty, at least in theory. However, in practice, the ratio of estimated epistemic and aleatoric uncertainty can be very wrong, and the uncertainty can be miscalibrated.

\subsubsection{Gaussian process regression (GPR)}
Gaussian Process regression \citep{rasmussen2003gaussian} provides a closed form for exact posterior epistemic uncertainty for a given Gaussian process as a prior and for a given known Gaussian noise distribution.
If the (scale of the) prior or the scale (of the noise) is misspecified, the ratio of estimated epistemic and aleatoric uncertainty can be arbitrarily bad, and the uncertainty can be miscalibrated. Therefore, it is common to optimize hyperparameters like the noise scale and the prior scale (typically in a non-Bayesian way). The main differences to CLEAR are that 1) for GPR, one has to refit the model for every considered possibility of hyperparameters, while CLEAR optimizes $\gamma_1$ and $\lambda$ after fitting the model, resulting in much lower computational costs; 2) CLEAR can be applied to other base models as well such as tree-based models which are more popular in many applications; 3) standard-implementations of GPR fit the hyperparameter on the training data rather than on the validation data.

\subsubsection{Bayesian neural networks (BNNs)}
Bayesian neural networks \cite{10.1162/neco.1992.4.3.448,neal1996bayesian} offer a principled Bayesian framework for quantifying both epistemic and aleatoric uncertainty through the placement of a prior distribution on network weights. As for GPR, the ratio of estimated epistemic and aleatoric uncertainty in BNNs is highly sensitive to the choice of prior. Consequently, we advocate applying CLEAR to an already trained BNN, calibrating both uncertainty types via scaling factors $\gamma_1$ and $\gamma_2$ with negligible additional computational overhead.
While exact Bayesian inference in large BNNs is computationally intractable, numerous approximation techniques have been proposed \citep{graves2011practical,blundell2015weight,hernandez2015probabilistic,Dropout_bayesian_Gal,DeepEnsemblesNIPS2017_9ef2ed4b,ritter2018scalable,NEURIPS2021_a7c95857,NOMUpmlr-v162-heiss22a,wenzel2020good,cutagi2022,JMLR:v23:21-0758,cong2024variationallowrankadaptationusing,pmlr-v235-shen24b}. Interestingly, theoretical \citep{HeissPart3MultiTask,HeissInductiveBias2024} and empirical \citep{wenzel2020good} studies suggest that some of these approximations can actually provide superior estimates compared to their exact counterparts, due to poor choices of priors, such as i.i.d.\ Gaussian priors, in certain settings.

\subsubsection{EPICSCORE}
The recent work by \cite{cabezas2025epistemicuncertaintyconformalscores} introduces EPICSCORE. Similar to our work, this method addresses the limitations of standard conformal prediction in capturing epistemic uncertainty. EPICSCORE focuses on enhancing existing conformal scores with Bayesian epistemic uncertainty. They also compute the average interval score for 95\% predictive intervals on the datasets \texttt{airfoil} (where their best method out of 6 variants performs more than 2 times worse than the worst of the 3 variants of CLEAR), \texttt{concrete} (where their best method performs more than 1.5 times worse than the worst variant of CLEAR) and \texttt{Superconductivity} (where their best method performs approximately 1.3 times worse than the worst variant of CLEAR).

\subsubsection{TabPFN}
\emph{TabPFN} \citep{muller2021transformers, hollmann2025tabpfn} is a transformer trained to emulate Bayesian inference over a diverse prior of realistic tabular problems, achieving strong predictive uncertainty estimates with a single forward pass. Its prior spans diverse noise structures and function classes, yielding uncertainty estimates that inherently mix aleatoric and epistemic components. Although recent variants such as TabPFN-TS \citep{hoo2025tabularfoundationmodeltabpfn} extend its applicability to time series, the method is limited to tabular datasets of limited size and does not disentangle uncertainty types. In contrast, our approach explicitly separates and calibrates epistemic and aleatoric uncertainty and scales to arbitrary modalities and data sizes.

\subsection{Conformal literature that does not account for epistemic and aleatoric uncertainty}\label{sec:ConformalLiterature}

The classical conformal prediction literature \citep{vovk_2005_algorithmic_learning, vovk_2009_online_predictive, vovk_2012_conditional_validity,vovk_2013_transductive_conformal, LeiWasserman2014, RomanoPattersonCandes2019, Kivaranovic_2020_Conformal_Prediction, angelopoulos2024theoreticalfoundationsconformalprediction} primarily focuses on achieving marginal coverage, often with attention to asymptotic conditional coverage. However, it often overlooks epistemic uncertainty, which is especially critical in finite-sample settings. While \cite{barber2020limitsdistributionfreeconditionalpredictive} established the impossibility of achieving exact distribution-free conditional coverage in finite samples, several recent works attempt to improve coverage guarantees in more restricted settings under certain assumptions. For instance, \cite{gibbs2024conformalpredictionconditionalguarantees} propose coverage guarantees over a subclass of distribution shifts, effectively interpolating between marginal and conditional coverage. Others, such as \cite{Guan2022} and \cite{Dwivedi_2020_Stable_Discovery}, provide guarantees over a finite set of prespecified subgroups. We argue that to obtain reliable conditional coverage, one must model the uncertainty arising from each stage of the data science life cycle \citep{Yu_2024_Vertidical_Data_Science_Book}, and appropriately integrate these uncertainties to achieve meaningful coverage guarantees.

\subsection{Other calibration methods}
Predictive uncertainty can be expressed either as a predictive set for a given level $\alpha$, as a predictive distribution, or as a numerical value quantifying the level of uncertainty (e.g., the entropy of the predictive distribution).
Our implementation of CLEAR yields predictive intervals, making it compatible with both base models that output intervals (such as QR) and with base models that output predictive distributions from which one can easily obtain predictive intervals. Conceptually, CLEAR could also be extended to predictive distributions or numerical values.

In the applied literature on \textbf{predictive intervals}, it is quite common to calibrate a constant factor to rescale the width of predictive intervals \citep{Laves_recalibration_2021,NOMUpmlr-v162-heiss22a,Yu_2024_Vertidical_Data_Science_Book,agarwal2025pcsuq}. Independently, the conformal community came up with almost identical methods based on theoretical considerations, as already discussed in \Cref{sec:ConformalLiterature}.

In the literature on \textbf{predictive distributions}, early works suggested more sophisticated non-linear transformations for the predictive distributions \citep{kuleshov_accurate_2018,pmlr-v97-song19aDistributionCalibration,pmlr-v162-kuleshov22aCalibratedSharpDensity}. However, \cite{LeviScalarCalibrations22155540} demonstrated that simply calibrating a constant scaling factor performs on par with these more sophisticated calibration methods.

To summarize, there are multiple (slightly) different methods to calibrate uncertainty with similar empirical performance. These methods only monotonically transform the uncertainty without changing the ranking of the uncertainties (i.e., if one was more uncertain on $x$ than on $\tilde{x}$ before the calibration step, one will also be more uncertain on $x$ than on $\tilde{x}$ afterwards; see \autoref{appendix:ConditionalVsMarginal}). In contrast, CLEAR and UACQR can also change the ranking of the uncertainties (e.g., in the situation of \autoref{fig:uncertainty-comparison}, a small value of $\lambda$ would assign more uncertainty to the center around $x=0$ compared to the region around $x=-6$, whereas a large value of $\lambda$ would assign more uncertainty to $x=-6$ than to $x=0$).  

\subsection{Methods that mainly focus on distributional aleatoric uncertainty}\label{sec:Methods that Mainly Focus on Distributional Aleatoric Uncertainty}

In this section, we provide more details on the literature overview on aleatoric uncertainty from \autoref{aleatoric_section}.

One can directly estimate \textbf{conditional quantiles} for a given level $\alpha$ via \emph{quantile regression} (QR) \citep{KoenkerBassett1978QuantileLoss}, either parametric or nonparametric, such as smooth quantile regression \citep{Fasiolo2020Fast} and quantile random forests (QRF) \citep{QRF}.
This is computationally cheap if you a priori know which level $\alpha$ is of interest for your predictive intervals.

Alternatively, one can also try to estimate the \textbf{conditional distribution}. This can initially be computationally more expensive, but once the conditional distributions are estimated, one can easily obtain conditional quantiles for multiple different levels $\alpha$. One of the computationally cheapest ways to estimate conditional distributions is to estimate parameters of a specific distribution (e.g., $\mu(x)$ and $\sigma(x)$ of a Gaussian distribution) \citep{374138}. There are semi-parametric extensions of this with universal approximation properties \citep{kratsios2023universalConditionalDistributions}. There are many similar approaches to estimate conditional densities \citep{rothfuss2019conditionaldensityestimationneural}.
Another way to obtain a conditional distribution is to estimate the conditional quantiles for all levels $\alpha\in[0,1]$ simultaneously, as in \emph{simultaneous quantile regression} (SQR) \citep{NEURIPS2019_73c03186} and its extension MAQR \citep{Chung_Neiswanger_Char_Schneider_2021}.
Especially for applications where the model output is high-dimensional (e.g., models that output images), more modern deep generative models have become popular, e.g., conditional generative adversarial networks \citep{oberdiek2022uqgan}, conditional variational autoencoders \citep{han2020evidential}, and diffusion models \citep{chang2023designfundamentalsdiffusionmodels,Chan_Molina_Metzler_2024}.

\paragraph{Limitations and empirical comparison.} The majority of these works fail to properly account for epistemic uncertainty\footnote{\cite{Chan_Molina_Metzler_2024} include epistemic uncertainty in the ratio 1:1; \cite{Chung_Neiswanger_Char_Schneider_2021} conduct limited experiments on including epistemic uncertainty in their appendix; \cite{NEURIPS2019_73c03186} proposes \emph{Orthonormal Certificates} for estimating epistemic uncertainty, but explicitly leave the combination for future work.}, and don't include a post-hoc calibration step.
Each of these methods could be incorporated into CLEAR as an alternative to our recommended ALEATORIC-R module. For example, in some preliminary experiments combining SQR \citep{NEURIPS2019_73c03186} with ensemble-diversity \citep{DeepEnsemblesNIPS2017_9ef2ed4b} via CLEAR's calibration step results in significant improvements over the (calibrated versions) of the individual models. This shows the applicability of CLEAR's calibration step on pure deep-learning pipelines.
Furthermore, when we compare the results of our entire CLEAR pipeline reported in our paper with the results of the entire SQR-pipeline and MAQR-pipeline reported in \citep{NEURIPS2019_73c03186,Chung_Neiswanger_Char_Schneider_2021}, we can see that CLEAR massively outperforms their pipelines. This shows that our PCS-based approach and our novel variant of QR are already highly recommendable choices, resulting in state-of-the-art performance of the CLEAR-pipeline directly out of the box.

\subsection{Further related literature}

\subsubsection{\enquote{Beyond pinball loss: quantile methods for calibrated uncertainty quantification}}
\cite{Chung_Neiswanger_Char_Schneider_2021} explores the limitations of the pinball loss (which we call Quantile Loss), criticizing that its direct minimization does not guarantee correct marginal coverage. In CLEAR, we address this by framing the optimization of $\gamma_1$ and $\lambda$ as a constrained optimization problem (see \Cref{eq:constraintMinQuantileLoss} in \Cref{sec:PropertiesQuantileLoss}): we minimize the pinball loss on a calibration dataset $\Dcal$ (see \cref{line:Optimizelambda} in \Cref{Alg_CLEAR}) while constraining the marginal coverage on $\Dcal$ (see \cref{line:gamma1} in \Cref{Alg_CLEAR}). This constrained approach directly mitigates the criticism of \cite{Chung_Neiswanger_Char_Schneider_2021}.

\subsubsection{Uncertainty quantification for conditional image generation}
\cite{Chan_Molina_Metzler_2024} suggests combining a diffusion model for aleatoric uncertainty with a hyper-network-generated ensemble for epistemic uncertainty to quantify the uncertainty in conditional image generation.
Diffusion models are particularly well-suited for conditional image generation, and using a hyper-network can be a computationally more efficient way to obtain an ensemble. 
However, they combine these two sources of uncertainty simply in the ratio 1:1 without any form of calibration (corresponding to hard-coding $\lambda=\gamma_1=\gamma_2=1$ in our notation). It would be interesting future work to apply calibration in the spirit of CLEAR on top of \cite{Chan_Molina_Metzler_2024} to extend the idea of CLEAR to conditional image generation.

\subsubsection{Uncertainty quantification for pretrained models}
\cite{Wang_Ji_2024} estimates the epistemic uncertainty for pre-trained classification models, while CLEAR focuses on both epistemic and aleatoric components for regression.
It would be interesting future work to extend CLEAR to also be applicable to already pre-trained models using techniques presented in \cite{Wang_Ji_2024}.

\subsubsection{Uncertainty quantification as a binary classification problem}
\cite{Altieri_Romanelli_Pichler_Alberge_Piantanida_2024} do not provide predictive intervals or distributions. Instead, they partition test data into \enquote{good} (more certain) and \enquote{bad} (less certain) points. As future work, one could derive a binary classifier from CLEAR by thresholding the predictive interval width and then evaluating it under their proposed metrics.

\subsubsection{Credal sets}
\cite{Hofman_Sale_Hullermeier_2024} describe epistemic uncertainty via credal sets on the space of distributions. This approach is quite natural for classification, where the space of distributions over $K$ classes is $(K-1)$-dimensional. However, they do not provide any concrete algorithms for regression, where the space of all distributions over a continuous set is infinite-dimensional. \citet{Hofman_Sale_Hullermeier_2024} suggest multiple ideas on how one can translate credal sets into two real-valued numbers describing the magnitude of the epistemic and the aleatoric uncertainty.
\citet{Javanmardi2025} proposes a method that can provide conformal predictive sets for classification with guaranteed input-conditional coverage under the assumption that one already has access to credal sets that guaranteeably cover the true input/conditional distribution. They also briefly discuss that for real-world applications, this assumption is not satisfied.

\subsubsection{In-sample calibration yields conformal calibration guarantees}
For the distributional regression, \citet{Allen_Gavrilopoulos_Henzi_Kleger_Ziegel_2025} suggest using conformal binning or conformal isotonic distributional regression and prove theoretical guarantees under the exchangeability assumption.

\subsubsection{Engression and extrapolation}
\citet{Shen_Meinshausen_2024} uses generative methods for uncertainty estimation that, in particular, take advantage of pre-additive noise (i.e., noise that is directly added to $x$). \cite{bodik_extrapolation} uses extreme value theory for dimension reduction in the extrapolating region in a causal setting. 

\subsubsection{Uncertainty quantification for time series}
Adapting CLEAR to temporal settings offers a promising avenue for capturing complex uncertainty dynamics. Existing approaches tackle temporal uncertainty from various angles: \cite{ConformalEnpbi} apply conformal methods, while \cite{bodik2024grangercausalityextremes} study uncertainty in time series stemming from extreme events from a causal perspective. For irregularly observed data, \cite{herrera2021neural, krach2022optimal, NJODE3, krach2024, heiss2024nonparametricfilteringestimationclassification, ThesisKrach20.500.11850/720717, crowell2025neuraljumpodesgenerative} utilize neural jump ordinary differential equations (NJODEs) to forecast trajectories and estimate the conditional variance, providing natural estimators of aleatoric uncertainty. However, existing NJODE literature does not combine this aleatoric component with epistemic uncertainty estimation. An extension of CLEAR could address this gap by using NJODE-based variance estimates as the aleatoric component and combining them with PCS-inspired ensemble-based epistemic uncertainty estimates via CLEAR's dual-parameter calibration. For example, \cite{adamov2025revealingtemporaldynamicsantibiotic} uses NJODE's uncertainty estimation to detect outliers---a setting where disentangling epistemic and aleatoric sources could further improve detection.
Similarly, recent works leverage foundation models for time series forecasting \citep{hoo2025tabularfoundationmodeltabpfn, moroshan2026tempopfnsyntheticpretraininglinear}. None of these works explicitly decompose the uncertainty into its epistemic and aleatoric components, suggesting that CLEAR's calibration framework could complement them as well.

\clearpage

\section{Theory: coverage guarantees and theoretical justifications}\label{appendix:Theory}
\subsection{Finite-sample marginal coverage for CLEAR}\label{sec:finiteSampleMarginalCoverage}
While conformal methods typically offer finite-sample marginal coverage guarantees, our implementation of CLEAR does not strictly adhere to these guarantees for two reasons:
\begin{enumerate}
    \item We reuse the validation dataset $\Dval$ as the calibration dataset, i.e., $\Dcal = \Dval$. 
    \item We optimize two parameters $\gamma_1$ and $\lambda$ on the calibration dataset $\Dcal$.
\end{enumerate}
However, our empirical results in \Cref{tab:combined_picp_95_final_standard_variant_a,tab:combined_picp_95_final_standard_variant_b,tab:combined_picp_95_final_standard_variant_c} show that for reasonably sized datasets, this theoretical discrepancy does not visibly impact our practical marginal coverage. From a theoretical perspective, CLEAR still achieves asymptotic marginal coverage (as a consequence of asymptotic conditional coverage, see \Cref{le:AsymptoticConditionalCoverageCLEAR}).

By slightly modifying the CLEAR procedure, we can obtain a theoretical finite-sample guarantee for marginal coverage under the standard exchangeability assumption, as in conformal inference, without sacrificing asymptotic conditional coverage. The key idea is to optimize the parameter $\lambda$ using only a validation dataset $\Dval$, and then calibrate $\gamma_1$ using a separate, previously unseen calibration dataset $\Dcal$.

\begin{definition}[Conformalized CLEAR]
Let the available data be split into a training set $\Dtrain$ and an old validation set $\Dvalold$. We first split $\Dvalold$ into two disjoint sets: a new validation set $\Dval$ and a calibration set $\Dcal$. The procedure is as follows:
\begin{enumerate}
    \item \textbf{Train and Optimize $\lambda$ by running \Cref{Alg_CLEAR} on $\Dtrain$ and $\Dval$}: The model $\hat{f}$ and the uncertainty estimators are trained on $\Dtrain$ (Step 2 and 3 of \Cref{Alg_CLEAR}). The optimal value $\lambda^*$ is found by optimizing the QuantileLoss on $\Dval$, without using any data from $\Dcal$ (Step 4 and 5 of \Cref{Alg_CLEAR}).

    \item \textbf{Compute Conformity Scores:} For each data point $(X_i, Y_i) \in \Dcal$, compute the conformity score $S_i^{\lambda^*}$:
    \[
        S_i^{\lambda^*} = \max\left\{ \frac{\hat{f}(X_i) - Y_i}{\hat{f}(X_i) - \tilde{l}_{\lambda^*}(X_i)}, \frac{Y_i - \hat{f}(X_i)}{\tilde{u}_{\lambda^*}(X_i) - \hat{f}(X_i)} \right\}
    \]
    where $\tilde{l}_{\lambda^*} := \hat{f} - \qale_{\alpha/2} - \lambda^* \qepi_{\alpha/2}$ and $\tilde{u}_{\lambda^*} := \hat{f} + \qale_{1-\alpha/2} + \lambda^* \qepi_{1-\alpha/2}$.

    \item \textbf{Calibrate the Prediction Interval:} Set the calibration parameter $\gamma_1^*$ to be the $\lceil (1 - \alpha)(|\Dcal| + 1) \rceil$-th smallest value among the conformity scores $S_i^{\lambda^*}$ from $\Dcal$.
    If $\lceil (1 - \alpha)(|\Dcal| + 1) \rceil > |\Dcal|$, set $\gamma_1^* = \infty$.

    \item \textbf{Form the Final Prediction Interval:} For a new test point $x_{\text{new}}$, the final $(1-\alpha)$-prediction interval is given by:
    \[
        C(x_{\text{new}}) = \left[ \hat{f}(x_{\text{new}}) - \gamma_1^* \left( \qale_{\alpha/2} + \lambda^* \qepi_{\alpha/2} \right), \quad \hat{f}(x_{\text{new}}) + \gamma_1^* \left( \qale_{1-\alpha/2} + \lambda^* \qepi_{1-\alpha/2} \right) \right]
    \]
\end{enumerate}
\end{definition}

This modified version of CLEAR, which we call Conformalized CLEAR, satisfies the standard finite-sample marginal coverage guarantee of conformal prediction.

\begin{lemma}\label{lemmaG}
Under the assumption that the data points in the calibration set $\Dcal$ and the test point $(X_{\text{new}}, Y_{\text{new}})$ are exchangeable, the prediction interval $C(X_{\text{new}})$ generated by the Conformalized CLEAR procedure satisfies:
\[
    \mathbb{P}(Y_{\text{new}} \in C(X_{\text{new}})) \ge 1 - \alpha.
\]
\end{lemma}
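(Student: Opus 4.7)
The plan is to recognize that Conformalized CLEAR is, by construction, split conformal prediction applied with a specific (but calibration-set-independent) score function, and then to invoke the standard rank argument. The strategy has three natural steps.

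First, I would establish measurability: the score function $S^{\lambda^*}$ depends on the data only through $\Dtrain \cup \Dval$. This follows directly from the algorithm specification, since $\hat{f}$, $\hat{q}^{\text{ale}}_\tau$, $\hat{q}^{\text{epi}}_\tau$ are fit on $\Dtrain$ alone, and $\lambda^*$ is chosen by minimizing the QuantileLoss on $\Dval$ alone. Since $\Dcal$ is disjoint from both, the function $(x,y) \mapsto S^{\lambda^*}(x,y)$ is a deterministic (non-random) function once we condition on $\sigma(\Dtrain \cup \Dval)$. Second, I would transfer exchangeability from raw data to scores: conditional on $\Dtrain \cup \Dval$, the calibration points $\{(X_i, Y_i)\}_{i \in \Dcal}$ together with $(X_{\text{new}}, Y_{\text{new}})$ are exchangeable by assumption, and applying the same deterministic function $S^{\lambda^*}$ to exchangeable inputs produces exchangeable scores.

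Third, I would apply the usual quantile-rank argument. By construction, $Y_{\text{new}} \in C(X_{\text{new}})$ is equivalent to $S_{\text{new}}^{\lambda^*} \le \gamma_1^*$, where $\gamma_1^*$ is the $k$-th order statistic of $\{S_i^{\lambda^*}\}_{i \in \Dcal}$ with $k = \lceil (1-\alpha)(|\Dcal|+1)\rceil$. Exchangeability of the combined scores implies that the rank of $S_{\text{new}}^{\lambda^*}$ among $\{S_i^{\lambda^*}\}_{i \in \Dcal} \cup \{S_{\text{new}}^{\lambda^*}\}$ is stochastically dominated by the uniform distribution on $\{1, \dots, |\Dcal|+1\}$, yielding
\[
\mathbb{P}\bigl(S_{\text{new}}^{\lambda^*} \le \gamma_1^* \,\big|\, \Dtrain \cup \Dval\bigr) \;\ge\; \frac{\lceil (1-\alpha)(|\Dcal|+1)\rceil}{|\Dcal|+1} \;\ge\; 1-\alpha.
\]
Marginalizing over $\Dtrain \cup \Dval$ gives the claim. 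The degenerate case $k > |\Dcal|$ sets $\gamma_1^* = \infty$, so $C(X_{\text{new}}) = \mathbb{R}$ and the bound is trivial.

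The main obstacle, though essentially a bookkeeping matter, is to verify the symmetry between calibration and test scores, i.e., that the score function does not implicitly depend on $\Dcal$ through some shared preprocessing, model-selection, or tuning step. The design of Conformalized CLEAR is precisely tailored to break this dependence by partitioning $\Dvalold$ into $\Dval$ (used to select $\lambda^*$) and a fresh $\Dcal$ (used only for calibration). I would state this disjointness and the resulting $\sigma(\Dtrain \cup \Dval)$-measurability of $S^{\lambda^*}$ explicitly as a preliminary observation before invoking the conformal rank lemma, after which the remainder of the proof is routine.
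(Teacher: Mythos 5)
Your proposal is correct and follows essentially the same route as the paper: you identify that $\lambda^*$ is $\sigma(\Dtrain\cup\Dval)$-measurable and hence fixed relative to $\Dcal$, so the conformity scores on $\Dcal\cup\{(X_{\text{new}},Y_{\text{new}})\}$ are exchangeable, and the standard split-conformal rank argument gives the coverage bound. The paper simply cites this last step to a textbook theorem, whereas you spell out the rank/order-statistic argument and the degenerate case $\gamma_1^*=\infty$ explicitly, but the underlying argument is the same.
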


\begin{proof}
The proof is a direct application of the standard theoretical guarantees for split conformal prediction. Since $\lambda^*$ is determined using only data from $\Dval$, it is fixed with respect to the calibration set $\Dcal$. The conformity scores $S_i^{\lambda^*}$ are therefore exchangeable for all $(X_i, Y_i) \in \Dcal \cup \{(X_{\text{new}}, Y_{\text{new}})\}$. The choice of $\gamma_1^*$ as the empirical $(1-\alpha)(1+1/|\Dcal|)$-quantile of the calibration scores ensures that the resulting prediction interval achieves at least $1-\alpha$ marginal coverage, as established by \cite[Theorem 1.4]{angelopoulos2024theoreticalfoundationsconformalprediction}.
\end{proof}

However, in practice, we recommend using our default implementation of CLEAR, where the validation data is reused for calibration ($\Dcal = \Dval$). This improves data efficiency while still achieving strong approximate marginal coverage in our experiments (see \Cref{tab:combined_picp_95_final_standard_variant_a,tab:combined_picp_95_final_standard_variant_b,tab:combined_picp_95_final_standard_variant_c}). Avoiding a split allows more data for both validation and calibration, which improves model selection and stabilizes marginal and conditional coverage. 
Even when marginal coverage \(\mathbb{P}(Y_{\text{new}} \in C(X_{\text{new}})) \ge 1 - \alpha\) 
is satisfied, the actual coverage \(\mathbb{P}(Y_{\text{new}} \in C(X_{\text{new}})|\Dtrain,\Dval,\Dcal)\) can be significantly lower than the target coverage~$1 - \alpha$ for a fixed calibration dataset~$\Dcal$, due to the inherent variation of $\Dcal$. Only with sufficiently large calibration datasets~$\Dcal$ can we expect \(\mathbb{P}(Y_{\text{new}} \in C(X_{\text{new}})|\Dtrain,\Dval,\Dcal)\) to be reliably close to \(\mathbb{P}(Y_{\text{new}} \in C(X_{\text{new}}))\).\footnote{Conformal theory provides theoretical guarantees for \(\mathbb{P}(Y_{\text{new}} \in C(X_{\text{new}})) \ge 1 - \alpha\) and \(\mathbb{P}(Y_{\text{new}} \in C(X_{\text{new}})|\Dtrain,\Dval)\ge 1 - \alpha\) under the standard exchangeability assumption
. However, it does \emph{not} provide finite-sample guarantees for  \(\mathbb{P}(Y_{\text{new}} \in C(X_{\text{new}})|\Dtrain,\Dval,\Dcal)\) 
and \(\mathbb{P}(Y_{\text{new}} \in C(X_{\text{new}})|\Dcal)\) for a fixed calibration dataset, even under the standard exchangeability assumption.}

\subsubsection{Limitations of conformal marginal coverage guarantees}
The conformal theory heavily relies on the assumptions of exchangeability. Exchangeability means that the joint distribution of calibration and test observations is invariant to permutations (e.g., iid observations satisfy this assumption).
While exchangeability is theoretically convenient, it is unrealistic in many real-world settings. Models are typically trained on past data and deployed in the future, where the distribution of $X_{\text{new}}$ usually shifts, i.e., $\mathbb{P}[X_{\text{new}}]\neq\mathbb{P}[X]$. Even if $\mathbb{P}[Y_{\text{new}}|X_{\text{new}}]=\mathbb{P}[Y|X]$ remains fixed, such marginal shifts in $X_{\text{new}}$ can cause conformal methods to catastrophically fail to provide valid marginal coverage. In \Cref{sec:simulation-results,appendix:simulations} (\Cref{fig:simulations-homo,fig_sigma2,fig_sigma3,fig_multivariate}), CLEAR empirically remains robust, while CQR and Naive-Conformal fail under distribution shifts in $X_{\text{new}}$. E.g., \Cref{fig_sigma2}, suggests $\mathbb{P}[Y_{\text{new}}\in C_{\text{Naive}}(X_{\text{new}})\mid |X_{\text{new}}|\geq 2]\leq70\%\ll 90\%=1-\alpha$, thus a marginal distribution shift of $X_{\text{new}}$ that strongly increases the probability of $|X_{\text{new}}|>2$, would lead to a large drop of marginal coverage for $(X_{\text{new}},Y_{\text{new}})$.  CLEAR likewise lacks formal guarantees under extreme shifts, but consistently performs more reliably across our experiments. In the case study (\Cref{section_case_study}), a realistic temporal split (see \cite[Section~8.4.3]{Yu_2024_Vertidical_Data_Science_Book}) also violates exchangeability, and CLEAR outperforms conformal baselines in marginal coverage.
Caution is required when trusting conformal guarantees, as the assumption of exchangeability is often not met in practice, and some conformal methods catastrophically fail for slight deviations from the exchangeability assumption. 

Even under the assumption of exchangeability, conformal guarantees have further weaknesses:

\begin{enumerate}
    \item 
The conformal marginal coverage guarantee
\[\mathbb{P}[Y_{\text{new}} \in C(X_{\text{new}})] = \mathbb{E}_{\Dtrain, \Dcal}[\mathbb{P}[Y_{\text{new}} \in C(X_{\text{new}}) | \Dtrain, \Dcal]] \geq 1 - \alpha\]
does not imply that $\mathbb{P}[Y_{\text{new}} \in C(X_{\text{new}}) | \Dtrain, \Dcal] \geq 1 - \alpha$ for a fixed realization of the calibration set $\Dcal$, as already discussed before. If the calibration residuals are small by chance, conformal intervals may be too narrow, especially with tiny calibration datasets. Guaranteed reliable calibration is generally unattainable with tiny calibration datasets: Even if the exchangeability assumption is satisfied, even methods with conformal guarantees often strongly undercover, i.e., $\mathbb{P}_{\Dtrain, \Dcal}\left\lbrack\mathbb{P}[Y_{\text{new}} \in C_{\text{conformal}}(X_{\text{new}}) | \Dtrain, \Dcal] \ll 1 - \alpha\right\rbrack\gg0$. Therefore, caution is required when communicating conformal guarantees to practitioners. \cite{Kivaranovic_2020_Conformal_Prediction} also discuss this problem and suggest a method to mitigate this via Probably Approximately Valid prediction intervals \cite[Def.~2]{Kivaranovic_2020_Conformal_Prediction}.

\item
Beyond marginal coverage, CLEAR is designed to improve \emph{conditional calibration}:  
$\mathbb{P}[Y_{\text{new}} \in C(X_{\text{new}})|X_{\text{new}}] \approx 1 - \alpha$.  
This is crucial in human-in-the-loop settings, where interventions are prioritized based on an accurate \emph{ranking} of predictive uncertainty across data points (see \Cref{appendix:ConditionalVsMarginal}). Marginal coverage guarantees offer no guarantees for such rankings nor for conditional coverage. A method could have perfect marginal coverage but rank uncertainties arbitrarily. In other words, marginal coverage guarantees only address one specific performance metric (marginal coverage), while ignoring many other metrics that are often more important in practice.
\end{enumerate}

To summarize, conformal marginal coverage guarantees (such as \Cref{lemmaG}) say very little about the overall quality of an uncertainty quantification method. Conformal marginal coverage guarantees only shed light on a very specific aspect of uncertainty quantification and only under the quite unrealistic assumption of exchangeability.

\subsection{Asymptotic conditional coverage for CLEAR:  \texorpdfstring{\Cref{le:AsymptoticConditionalCoverageCLEAR}}{Lemma~\ref{le:AsymptoticConditionalCoverageCLEAR}} Proof \& Assumptions}\label{sec:AsymptoticConditionalCoverageCLEAR}
\begin{proof}[Proof of \Cref{le:AsymptoticConditionalCoverageCLEAR}]
$|\Dtrain|, |\Dval| \to \infty$ ensures that model selection on
$\Dval$ identifies $k$ consistent base models among the candidates, since the validation loss converges to the population loss. The consistency of the predictors implies \(\hat{q}_\tau^{\text{epi}}(x)\to 0\), and the consistency of the quantile estimators implies \(\hat{q}_\tau^{\text{ale}}(x) \to q_\tau^{\text{ale}}(x)\) point-wise. Hence, the estimated pre-calibrated intervals converge to their population analogues. Since  \(\sup_{\lambda\in\Lambda}\lambda\hat{q}_\tau^{\text{epi}}(x)\to 0\) converges to $0$ uniformly over $\lambda\in\Lambda$ (due to compactness of \(\Lambda\)), therefore $C(x)$ is asymptotically equal to true conditional quantiles  $[{q}_{\alpha/2}(x), {q}_{1-\alpha/2}(x)]$;  and necessarily $\lim_{|\Dcal|\to\infty} \gamma_1=1$, as is the case in classical CQR (see e.g.\ \citep[Section~5]{angelopoulos2024theoreticalfoundationsconformalprediction}). 
\end{proof}
The epistemic component vanishes asymptotically in this proof. However, for finite samples, the epistemic uncertainty plays a crucial role in preventing under-coverage in out-of-sample regions as we intuitively explain in \Cref{sec:IntuitiveTheoreticalMotivationCLEAR} and empirically demonstrate in \Cref{sec:simulation-results,appendix:simulations}. While these synthetic datasets provide the most direct and intuitive empirical evidence of the importance of combining epistemic and aleatoric uncertainty, all our experiments (\Cref{experiment-setup,sec:results,appendix:simulations,appendix:experimental-settings,appendix:full-results,appendix:results-de-sqr,appendix:results-conformalized,appendix:example-ames-housing}) support this hypothesis.

The assumptions underlying \Cref{le:AsymptoticConditionalCoverageCLEAR} are generally mild and often satisfied in practice. The assumption that $\Lambda$ is compact is particularly mild. In our experiments, we use a finite grid; for example, a combination of linearly spaced values from 0 to 0.09 and logarithmically spaced values from 0.1 to 100, resulting in approximately 4010 points. We conjecture that even if $\Lambda$ were unbounded, the result of \Cref{le:AsymptoticConditionalCoverageCLEAR} would still hold, since QuantileLoss is a strictly proper scoring rule and thus would not lead to excessively large values of $\lambda^*$ in the limit.

\Cref{le:AsymptoticConditionalCoverageCLEAR} also assumes consistency of the estimators. Many estimators satisfy this condition for both regression and quantile regression tasks:
\begin{enumerate}
\item Quantile Random Forests have been shown to consistently estimate quantiles under reasonable conditions, such as by regularizing the minimum number of samples per leaf \citep{QRF}. Similarly, classical Random Forests are known to be consistent under standard assumptions \citep{ConsistencyRandomForests}. Both QRF and Random Forests are available in our implementation of CLEAR.
\item Boosting methods trained on general loss functions can be made consistent under certain conditions, especially when regularized through early stopping \citep{BoostingConsistency10.1214/009053605000000255}. This suggests that XGBoost and its quantile version, QXGB, can be consistent for suitable choices of hyperparameters. Both are included in our implementation of CLEAR.
\item More generally, regularized minimization of QuantileLoss over a sufficiently expressive function class on $\Dtrain$ yields consistent quantile estimators under broad assumptions \citep{PinballLossConvergence}.
\end{enumerate}
In our experiments, we set $k=1$, so \Cref{le:AsymptoticConditionalCoverageCLEAR} requires only one of the base models to be consistent.

Finally, note that \Cref{le:AsymptoticConditionalCoverageCLEAR} implicitly assumes i.i.d.\ data, as described in \Cref{sec:Problem scenario}. For real-world datasets, this i.i.d.\ assumption is often the most difficult to justify in practice and may pose a greater challenge than the other assumptions in the lemma.

\subsection{Properties of the quantile loss}\label{sec:PropertiesQuantileLoss}
We define the \(\text{QuantileLoss}(\mathcal{D}, C) := \frac{1}{\left|\mathcal{D}\right|}\sum_{(x,y) \in \mathcal{D}} \left[ QL_{\alpha/2}\big(y, l(x)\big) + QL_{1 - \alpha/2}\big(y, u(x)\big) \right]/2\) (also denoted as \enquote{pinball loss}, \enquote{quantile loss}, \enquote{asymmetric piecewise linear loss}, \enquote{linlin loss}, \enquote{hinge loss}, \enquote{tick loss}, or \enquote{newsvendor loss} \citep{GNEITING2011197}), with
 \(l(x), u(x)\) denoting the bounds of \(C(x)\), and \(QL_\tau(y, q) = (y - q)\big(\tau - \mathds{1}_{(-\infty,q]}(y)\big)\). Note that the majority of the literature defines the quantile loss as $QL_\tau$, whereas our QuantileLoss already aggregates $QL_\tau$ over both the upper and the lower bound of the intervals $C$ and over the data points in $\mathcal{D}$, similarly to the interval score loss (see \Cref{sec:IntervalScoreVsQuantileLoss}).

We use the QuantileLoss for three different purposes in this paper:
\begin{enumerate}
    \item Some QR-methods in Step 2 of \Cref{Alg_CLEAR} use the QuantileLoss to train their models.
    \item In Step 4 of \Cref{Alg_CLEAR} we minimize the QuantileLoss to determine $\lambda^*$. In other words, Step 3 and 4 of \Cref{Alg_CLEAR} together (approximately) solve
    \begin{equation}
(\gamma_1^\star,\lambda^\star) = \!\argmin_{
\substack{(\gamma_1,\lambda) \in (0,\infty)\times\Lambda\\
\text{s.t. } \left|\left\{(x,y)\in\Dcal: y\in C_{\gamma_1,\lambda}(x)\right\}\right| \geq \left\lceil (1 - \alpha)(|\Dcal| + 1) \right\rceil
}}\! \text{QuantileLoss}(\Dval, C_{\gamma_1,\lambda}),
    \label{eq:constraintMinQuantileLoss}\end{equation}
where \(
C_{\gamma_1,\lambda} = \left[
\hat{f} - \gamma_1\hat{q}_{\alpha/2}^{\text{ale}} - \lambda \gamma_1  \hat{q}_{\alpha/2}^{\text{epi}}, \,
\hat{f} + \gamma_1\hat{q}_{1 - \alpha/2}^{\text{ale}}  + \lambda \gamma_1 \hat{q}_{1 - \alpha/2}^{\text{epi}} 
\right]
\).
    \item We use the QuantileLoss as an evaluation metric on the test dataset $\Dtest$.
\end{enumerate}

In the following, we will discuss multiple favorable properties of the QuantileLoss (which equivalently also hold for the Interval Score Loss, see \Cref{sec:IntervalScoreVsQuantileLoss}).

\subsubsection{Intuition behind the quantile loss}

In many real-world applications, the severity of a prediction error often depends on the magnitude of the deviation from the predictive interval. For example, in financial portfolio management, a massive drop below the predicted lower bound can be significantly more damaging than a slight deviation. Similarly, for flood protection systems where dam heights are based on an upper predictive bound, a large amount of overflow causes substantially more damage than a small overflow.
Traditional metrics like PICP treat coverage as a binary outcome, distinguishing only between points that are covered and those that are not. The QuantileLoss, however, offers a more nuanced evaluation by penalizing the magnitude of a data point's distance from the predictive interval when it is not covered.

Intuitively, the QuantileLoss strongly penalizes the distance to the predictive intervals for data points outside the predictive interval, and gently penalizes the width of the predictive intervals that do not cover the data points. This incentivizes the predictive intervals to widen in regions of large uncertainty and to adaptively narrow down in regions of low uncertainty, incentivizing good relative uncertainty (see \Cref{appendix:ConditionalVsMarginal}).
Simultaneously, the QuantileLoss incentivizes good absolute uncertainty (i.e., marginal calibration, see \Cref{appendix:ConditionalVsMarginal}): If the proportion of data points below the upper bounds is less than $1-\frac{\alpha}{2}$, then increasing the upper bounds by a constant improves the QuantileLoss until the proportion reaches $1-\frac{\alpha}{2}$, i.e., for all $c>0$,
\begin{gather*}
\left|\left\{(x,y)\in\Dcal: y\leq u(x)+c\right\}\right| < (1 - \frac{\alpha}{2})|\Dcal| \\
\implies \text{QuantileLoss}(\mathcal{D},[l,u])>\text{QuantileLoss}(\mathcal{D},[l,u+c]),
\end{gather*}
and vice versa if the proportion of data points below the upper bounds is more than $1-\frac{\alpha}{2}$, i.e., for all $c>0$,
\begin{gather*}
\left|\left\{(x,y)\in\Dcal: y\leq u(x)\right\}\right| > (1 - \frac{\alpha}{2})|\Dcal| \\
\implies \text{QuantileLoss}(\mathcal{D},[l,u])<\text{QuantileLoss}(\mathcal{D},[l,u+c]),
\end{gather*}
and analogously, the QuantileLoss incentivizes the lower bound to be above $\frac{\alpha}{2}|\Dcal|$ data points and below $(1-\frac{\alpha}{2})|\Dcal|$ data points.
The QuantileLoss simultaneously evaluates multiple different properties of predictive intervals and cannot be easily tricked, in contrast to other metrics: PICP can be easily maximized by infinitely wide intervals, which are completely useless in practice; NIW can be minimized by zero-width intervals, which are useless in practice; NCIW only measures the relative uncertainty while completely ignoring the marginal coverage.

\subsubsection{The QuantileLoss measures input-conditional calibration}

In the following, we will mathematically argue why the QuantileLoss measures input-conditional calibration. Within this paper, we denote input conditional calibration $\mathbb{P}[Y_{\text{new}} \in C(X_{\text{new}})|X_{\text{new}}] = 1 - \alpha$ simply as conditional calibration.

The true input conditional quantiles minimize the expected QuantileLoss, i.e.,
\begin{equation}(q_{\alpha/2},q_{1-\alpha/2})\in\argmin_{(l,u)\in\mathcal{Y}^{\mathcal{X}}\times\mathcal{Y}^{\mathcal{X}}}\mathbb{E}_{(X_{\text{new}},Y_{\text{new}})}\left[\text{QuantileLoss}\big(\{(X_{\text{new}},Y_{\text{new}})\},[l,u]\big)\right].
\label{eq:minExpectedQuantileLoss}\end{equation}
Any minimizer of the QuantileLoss satisfies input-conditional coverage almost surely (a.s.)\footnote{A statement holds \emph{almost surely} if it holds with $100\%$ probability. E.g., a standard normally distributed random variable $X\sim\mathcal{N}(0,1)$ is a.s.\ not exactly equal to $\sqrt{2}$, i.e., $X\overset{\text{a.s.}}{\neq}\sqrt{2}$.}, i.e., any solution $(l^*,u^*)$ of the minimization problem~\eqref{eq:minExpectedQuantileLoss} satisfies that $l^*(X)$ is an input-conditional $\alpha/2$-quantile a.s.\ and $u^*(X)$ is an input-conditional $1-\alpha/2$-quantile a.s., thus, $\mathbb{P}[Y_{\text{new}} \in [l(X_{\text{new}}),u(X_{\text{new}})]|X_{\text{new}}] \overset{\text{a.s.}}{\geq} 1 - \alpha$. In other words, any deviation from the true quantiles gets penalized by the expected QuantileLoss, as it is a strictly proper scoring rule \citep{Koenker_2005_Quantile_Regression}.
If the true conditional CDF is a.s.\ continuous, then any solution $(l^*,u^*)$ of \eqref{eq:minExpectedQuantileLoss} satisfies input-conditional calibration $\mathbb{P}[Y_{\text{new}} \in [l(X_{\text{new}}),u(X_{\text{new}})]|X_{\text{new}}] \overset{\text{a.s.}}{=} 1 - \alpha$. In other words, any deviation from input-conditional coverage gets penalized.
Applying the QuantileLoss on a finite dataset $\mathcal{D}$ can be seen as a Monte-Carlo approximation of the expected QuantileLoss.
 
Another common evaluation method is to compare the average interval width NIW (or NCIW) among methods that approximately obtain the targeted marginal coverage. However, this evaluation method can be exploited: Even if you perfectly know the true distribution, reporting intervals that do not satisfy input-conditional coverage would be optimal. This evaluation method prefers intervals that over-cover in regions with low uncertainty and under-cover in regions of high uncertainty, as demonstrated in the following examples (\Cref{ex:Uniform09,ex:Uniform05} are easier to derive, but \Cref{ex:NormalEasyProblemNIW,ex:NormalExtremeProblemNIW} are slightly more insightful).

\begin{example}\label{ex:Uniform09}
Let $X \in \{1, 2\}$, with $\mathbb{P}[X=1]=0.5$ and $\mathbb{P}[X=2]=0.5$. Let $Y|X=1 \sim \mathcal{U}(-1, 1)$ and $Y|X=2 \sim \mathcal{U}(-2, 2)$.
For a target coverage of $1-\alpha=0.9$, the true conditional intervals are:
\begin{itemize}
    \item $[-0.9, 0.9]$ when $X=1$ (width=1.8, coverage=0.9)
    \item $[-1.8, 1.8]$ when $X=2$ (width=3.6, coverage=0.9)
\end{itemize}
The average width of the true intervals is $\mathbb{E}[\text{width}] = 0.5 \times 2\cdot 0.9 + 0.5 \times 2\cdot 1.8 = 2.7$. The marginal coverage is exactly 0.9.
Now, consider an alternative method that sacrifices conditional calibration to minimize average width. This method could report the following intervals:
\begin{itemize}
    \item $[-1, 1]$ when $X=1$ (width=2, coverage=1)
    \item $[-1.6, 1.6]$ when $X=2$ (width=3.2, coverage=0.8)
\end{itemize}
The marginal coverage of this method is $\mathbb{P}[\text{covered}] = 0.5 \times 1 + 0.5 \times 0.8 = 0.9$. It still achieves the target marginal coverage of $90\%$, but its average width is $\mathbb{E}[\text{width}] = 0.5 \times 2 + 0.5 \times 2\cdot 1.6 = 2.6$. Since $2.6 < 2.7$, this method would be preferred, demonstrating that NIW can incentivize deviations from input-conditional calibration.
\end{example}

\begin{example}\label{ex:Uniform05}
Under the setting of \Cref{ex:Uniform09}, the situation gets even more extreme if we change to $\alpha=0.5$:
For a target coverage of $1-\alpha=0.5$, the true conditional intervals are:
\begin{itemize}
    \item $[-0.5, 0.5]$ when $X=1$ (width=1, coverage=0.5)
    \item $[-1, 1]$ when $X=2$ (width=2, coverage=0.5)
\end{itemize}
The average width of the true intervals is $\mathbb{E}[\text{width}] = 0.5 \times 1 + 0.5 \times 2 = 1.5$. The marginal coverage is exactly $0.5$.
Now, consider an alternative method that sacrifices conditional calibration to minimize average width. This method could report the following intervals:
\begin{itemize}
    \item $[-1, 1]$ when $X=1$ (width=2, coverage=1)
    \item $[127, 127]$ when $X=2$ (width=0, coverage=0)
\end{itemize}
The marginal coverage of this untruthful method is $\mathbb{P}[\text{covered}] = 0.5 \times 1 + 0.5 \times 0 = 0.5$. It still achieves the target marginal coverage, but its average width is $\mathbb{E}[\text{width}] = 0.5 \times 2 + 0.5 \times 0 = 1$. Since $1 < 1.5$, this untruthful method would be preferred, demonstrating that NIW can incentivize strong deviations from input-conditional calibration. Here, the interval that minimizes NIW (and NCIW) under the constraint of maintaining marginal coverage, outputs a much wider interval for $X=1$ than for $X=2$, while there is obviously more uncertainty for $X=2$ than for $X=1$. 
\end{example}

\begin{example}\label{ex:NormalEasyProblemNIW}
Let $\mathbb{P}[X=1]=0.5=\mathbb{P}[X=2]$, and $Y|X=x \sim \mathcal{N}(\mu=0,\sigma=x)$. Then for $\alpha=5\%$, the true conditional quantiles would be $q_{0.975}(x)=\Phi^{-1}(0.975)x\approx 1.96x$ and $q_{0.025}(x)=\Phi^{-1}(0.025)x\approx -1.96x$, thus $[q_{0.025},q_{0.975}]$ exactly satisfy input-conditional calibration $\mathbb{P}\left[Y_{\text{new}} \in [q_{0.025}(X_{\text{new}}),q_{0.975}(X_{\text{new}})]\mid X_{\text{new}}\right] \overset{\text{a.s.}}{=} 95\%$. However, the on average narrowest intervals satisfying marginal coverage would be approximately
\begin{itemize}
    \item $[-2.16357,2.16357]$ when $X=1$ ($\text{width}=2\cdot2.16357$, $\text{coverage}\approx0.969502$)
    \item $[-1.81514\cdot 2,1.81514\cdot 2]$ when $X=2$ ($\text{width}=2\cdot1.81514\cdot 2$, $\text{coverage}\approx0.930498$).
\end{itemize}
These intervals fail input-conditional coverage while maintaining marginal coverage and result in a narrower average width $5.79385<5.88$ than the true intervals $[q_{0.025},q_{0.975}]$.
\end{example}

\begin{example}\label{ex:NormalExtremeProblemNIW}
Let $\mathbb{P}[X=1]=\frac{9}{19}=\mathbb{P}[X=2]$, $\mathbb{P}[X=11]=\frac{1}{19}$, and $Y|X=x \sim \mathcal{N}(\mu=0,\sigma=x)$. Then for $\alpha=10\%$, the true conditional quantiles would be $q_{0.95}(x)=\Phi^{-1}(0.95)x\approx 1.645x$ and $q_{0.05}(x)=\Phi^{-1}(0.05)x\approx -1.645x$, thus $[q_{0.05},q_{0.95}]$ exactly satisfy input-conditional calibration $\mathbb{P}\left[Y_{\text{new}} \in [q_{0.05}(X_{\text{new}}),q_{0.95}(X_{\text{new}})]\mid X_{\text{new}}\right] \overset{\text{a.s.}}{=} 90\%$. However, the on average narrowest intervals satisfying marginal coverage would be approximately
\begin{itemize}
    \item $[-2.16357,2.16357]$ when $X=1$ ($\text{width}=2\cdot2.16357$, $\text{coverage}\approx0.969502$)
    \item $[-1.81514\cdot 2,1.81514\cdot 2]$ when $X=2$ ($\text{width}=2\cdot1.81514\cdot 2$, $\text{coverage}\approx0.930498$).
    \item $[12345,12345]$ when $X=11$ ($\text{width}=0$, $\text{coverage}=0$).
\end{itemize}
These intervals fail input-conditional coverage while maintaining marginal coverage and result in a narrower average width $5.48891<6.579415$ than the true intervals $[q_{0.05},q_{0.95}]$.
\end{example}

\paragraph{Interpretation of \Cref{ex:Uniform09,ex:Uniform05,ex:NormalEasyProblemNIW,ex:NormalExtremeProblemNIW}.} While QuantileLoss is only minimized for intervals consisting of the true quantiles $[q_{\frac{\alpha}{2}},q_{1-\frac{\alpha}{2}}]$, the \Cref{ex:Uniform09,ex:Uniform05,ex:NormalEasyProblemNIW,ex:NormalExtremeProblemNIW} showed that metrics like PICP, NIW, NCIW, and their combinations can be misaligned with input-conditional coverage. This misalignment can be very severe in \Cref{ex:Uniform05,ex:NormalExtremeProblemNIW}, whereas it is less severe in \Cref{ex:Uniform09,ex:NormalEasyProblemNIW}. Intuitively, the misalignment can be particularly extreme for large values of $\alpha$ (as in \Cref{ex:Uniform05}) or for inputs that are much more uncertain than the average uncertainty (as for $x=11$ in \Cref{ex:NormalEasyProblemNIW}). In our experiments on real-world data in \Cref{sec:real-world-results,section_case_study,appendix:full-results}, compared to these extreme examples, the different metrics are more aligned, in the sense that CLEAR shows superior performance across all considered metrics. Reasons for this empirically observed alignment could be that 1\textsuperscript{st}, none of the compared methods actively tries to exploit the weaknesses of PICP, NIW, NCIW; 2\textsuperscript{nd}, in our experiments, we use only small values of $\alpha=5\%$ (and $\alpha=10\%$); 3\textsuperscript{rd}, there might not be many regions of extremely high uncertainty in these datasets.

Now we have concluded that under the QuantileLoss, it is optimal always to report the true quantiles if the true underlying distributions are known. In the next subsection, we will discuss how this translates to situations where the true distributions are not known.

\subsubsection{The QuantileLoss incentivizes truthfulness even under incomplete information (from a Bayesian point of view)}

Proper scoring rules are mathematically guaranteed to incentivize reporting the true distribution, but their application to uncertainty quantification requires care in interpreting what kind of uncertainty they incentivize.
\cite{buchweitz2025asymmetricpenaltiesunderlieproper} show that some proper scoring rules impose asymmetric penalties for over- versus under-estimation. While this may appear to induce bias, we argue that such asymmetry faithfully captures epistemic uncertainty rather than distorting the estimate of \emph{total} predictive uncertainty.

In our case, the relevant scoring rule is the QuantileLoss. For a given quantile, e.g., $q_{0.975}$, the loss penalizes underestimation more heavily than overestimation. This asymmetry is not a flaw; rather, it incentivizes appropriately wider predictive intervals in the presence of epistemic uncertainty. Specifically, this asymmetry encourages intervals to widen more in regions of high epistemic uncertainty than in regions of low epistemic uncertainty.

This phenomenon can be described more quantitatively from a Bayesian perspective. The posterior predictive distribution,
\[
\mathbb{P}[Y_{\text{new}}
\mid X_{\text{new}},\Dtrain,\pi]
=\mathbb{E}\left\lbrack\mathbb{P}[Y_{\text{new}}
\mid X_{\text{new}},\theta] \mid\Dtrain,\pi\vphantom{\big|}\right\rbrack
\]
optimally reflects both aleatoric uncertainty $\mathbb{P}[Y_{\text{new}}
\mid X_{\text{new}},\theta]$ (e.g., noise) and epistemic uncertainty $\mathbb{P}[\theta
\mid \Dtrain,\pi]$ (e.g., over parameters) given a prior $\pi$.\footnote{Mathematicians (with a background in measure-theory) should read \enquote{$\mathbb{P}[Y\mid X]$} as \enquote{$\mathbb{P}[Y\in(\cdot)\mid X]$}.} 
Marginalizing over the posterior naturally yields a wider distribution than relying on a single point estimate, with a greater widening effect in regions of high epistemic uncertainty.
Let $q_{\tau,P}$ denote the $\tau$-quantile of a distribution $P$. The posterior predictive quantiles $(q_{\alpha/2,\mathbb{P}[Y_{\text{new}}
\mid X_{\text{new}},\Dtrain,\pi]},q_{1-\alpha/2,\mathbb{P}[Y_{\text{new}}
\mid X_{\text{new}},\Dtrain,\pi]})$ minimize the expected quantile loss:
\begin{equation*}
\mathbb{E}\left[\text{QuantileLoss}\big(\{(X_{\text{new}},Y_{\text{new}})\},[l,u]\big) \mid  \Dtrain,\pi\right].
\end{equation*}
Since the posterior predictive distribution includes epistemic uncertainty, its quantiles also appropriately include this component. This stands in contrast to other estimators, such as $q_{1-\alpha/2,\mathbb{P}[Y_{\text{new}}
\mid X_{\text{new}},\hat{\theta}]}$, $\mathbb{E}\left[q_{1-\alpha/2,\mathbb{P}[Y_{\text{new}}
\mid X_{\text{new}},\theta]}\mid  \Dtrain,\pi\right]$, or $\text{Median}\left[q_{1-\alpha/2,\mathbb{P}[Y_{\text{new}}
\mid X_{\text{new}},\theta]}\mid  \Dtrain,\pi\right]$ (similar to how we estimate the pure aleatoric uncertainty in \Cref{aleatoric_section}), which ignore epistemic uncertainty.
\begin{example}\label{ex:BayesianQuantileLoss}
    Let $Y|X=x\sim\mathcal{N}(\theta(x),\sigma=1)$ with an unknown mean $\theta(x)\in\R$ and a known standard deviation of 1. Further, assume that the posterior distribution of $\theta(x)$ is $\mathcal{N}(0,\sigma=10)$ due to large epistemic uncertainty. Then, the predictive posterior distribution of $Y|X=x$ is $\mathcal{N}\left(0,\sigma=\sqrt{10^2+1}\right)$, which includes both aleatoric and epistemic uncertainty.
    \begin{itemize}
        \item $q_{0.975,\mathbb{P}[Y_{\text{new}}
\mid X_{\text{new}},\Dtrain,\pi]})
=q_{0.975,\mathcal{N}\left(0,\sigma=\sqrt{10^2+1}\right)}
\approx 1.96\cdot\sqrt{10^2+1} $ appropriately takes epistemic uncertainty into account.
        \item $q_{0.975,\mathbb{P}[Y_{\text{new}}
\mid X_{\text{new}},\hat{\theta}]}=q_{0.975,\mathcal{N}\left(\hat{\theta}(x),\sigma=1\right)}\approx \hat{\theta}(x)+1.96$ ignores the large epistemic uncertainty (with an interval width of $2\cdot 1.96$ as $q_{0.025,\mathbb{P}[Y_{\text{new}}
\mid X_{\text{new}},\hat{\theta}]}\approx \hat{\theta}(x)-1.96$).
        \item$\mathbb{E}\left[q_{0.975,\mathbb{P}[Y_{\text{new}}
\mid X_{\text{new}},\theta]}\mid  \Dtrain,\pi\right]
=\mathbb{E}\left[q_{0.975,\mathcal{N}\left(\theta(x),\sigma=1\right)}\mid  \Dtrain,\pi\right]
\approx \mathbb{E}\left[\theta(x)+1.96\mid  \Dtrain,\pi\right]
=1.96$ ignores the large epistemic uncertainty.
        \item$\text{Median}\left[q_{0.975,\mathbb{P}[Y_{\text{new}}
\mid X_{\text{new}},\theta]}\mid  \Dtrain,\pi\right]
=\text{Median}\left[q_{0.975,\mathcal{N}\left(\theta(x),\sigma=1\right)}\mid  \Dtrain,\pi\right]
\approx \text{Median}\left[\theta(x)+1.96\mid  \Dtrain,\pi\right]
=1.96$ ignores the large epistemic uncertainty.
    \end{itemize}
\end{example}

We fully agree with \cite{buchweitz2025asymmetricpenaltiesunderlieproper}, that minimizing the QuantileLoss leads to a biased estimator of \emph{aleatoric} uncertainty alone (as in \Cref{ex:BayesianQuantileLoss}, the aleatoric uncertainty corresponds to an interval width of $2\cdot 1.96$). However, minimizing the QuantileLoss yields a principled estimator for \emph{total predictive uncertainty}, as it is optimal from a Bayesian point of view, taking epistemic uncertainty into account.
An unbiased estimator for total predictive uncertainty must also incorporate the epistemic uncertainty and is therefore naturally biased for estimating aleatoric uncertainty.

Thus, the asymmetry of the QuantileLoss is precisely what makes it appropriate for evaluating models that estimate total predictive uncertainty.
This justifies its use both in Step 4 of our \Cref{Alg_CLEAR} and as an evaluation metric on the unseen test dataset~$\Dtest$.

\subsubsection{The interval score loss is equivalent to the quantile loss up to a multiplicative factor.}\label{sec:IntervalScoreVsQuantileLoss}
Our definition of the quantile loss
\begin{align*}\text{QuantileLoss}(\mathcal{D}, (l,u)) := \frac{1}{\left|\mathcal{D}\right|}\sum_{(x,y) \in \mathcal{D}} \left[ QL_{\alpha/2}\big(y, l(x)\big) + QL_{1 - \alpha/2}\big(y, u(x)\big) \right]/2
\end{align*}
only differs from the average interval score loss \citep{Dunsmore1968IntervalScore,Winkler1972IntervalScore,gneiting2007strictly,Brehmer_Gneiting_2021}
   \begin{align*}
\text{AISL}(\mathcal{D}, (l,u)) := \frac{1}{\left|\mathcal{D}\right|}\sum_{(x,y) \in \mathcal{D}} \bigg[ 
    &(u(x) - l(x)) 
    +\begin{cases}
        \frac{2}{\alpha} (l(x) - y) & \text{, if } y < l(x)\\
        \frac{2}{\alpha} (y - u(x)) & \text{, if } y > u(x)\\
        0 & \text{, else} 
    \end{cases}
\bigg],
\end{align*}
by the constant factor of $\alpha/4$, i.e., $\text{QuantileLoss}(\mathcal{D}, (l,u))=\frac{\alpha}{4}\text{AISL}(\mathcal{D}, (l,u))$.

Traditionally the terms \enquote{quantile loss}, \enquote{pinball loss}, \enquote{asymmetric piecewise linear loss}, \enquote{linlin loss}, \enquote{hinge loss}, \enquote{tick loss}, or \enquote{newsvendor loss} refer to the loss \(QL_\tau(y, q) = (y - q)\big(\tau - \mathds{1}_{(-\infty,q]}(y)\big)\) that only evaluates a single approximate quantile $q$ \citep{KoenkerBassett1978QuantileLoss,GNEITING2011197}, while the interval score always evaluates two approximate quantiles $(l,u)$ simultaneously \citep{Dunsmore1968IntervalScore,Winkler1972IntervalScore,gneiting2007strictly,Brehmer_Gneiting_2021}. However, within this work, we also denote the average $\text{QuantileLoss}(\mathcal{D}, (l,u))$ as quantile loss. This shouldn't create any confusion, as we never evaluate single quantiles within this work.

\subsection{Intuitive theoretical motivation of CLEAR}\label{sec:IntuitiveTheoreticalMotivationCLEAR}

PCS achieves empirically good results and nicely captures common sense, with limited theoretical guarantees, while CQR satisfies theoretical guarantees, but misses crucial common sense. In our work, we combine practical experience, theoretical insights, common sense, and empirical results, always with the goal in mind to obtain a useful, veridical, reliable, stable, high-performing method for practical real-world applications.

Note that the original version of PCS did not have any theoretical guarantees for uncertainty quantification \cite{Yu_2024_Vertidical_Data_Science_Book}, and \cite{agarwal2025pcsuq} mentioned a modified version of PCS-UQ that satisfies conformal marginal coverage guarantees. However, no version of PCS offered a theoretical guarantee for asymptotic input-conditional coverage.

Via CLEAR, we are the first to propose an extension of PCS-UQ that provably satisfies asymptotic input-conditional coverage guarantees (see \Cref{le:AsymptoticConditionalCoverageCLEAR}). All previous versions of PCS did not only lack a proof for input-conditional coverage guarantees, but actually do not satisfy input-conditional coverage guarantees. Pure PCS-UQ has the systematic bias of under-coverage in regions with many observations and large noise. This was a central motivation for CLEAR. While this theoretical bias is quite obvious from a theoretical common sense point of view, we conducted our large-scale experimental evaluation to see that this bias can actually have a significant impact on the performance, which can be substantially mitigated by CLEAR.

On the other hand, CQR (or QR) has major problems based on common sense. If one looks closely into the proof of the asymptotic input-conditional coverage of CQR (or QR), one can see that the main idea of the proof is that, asymptotically, you will have infinitely many training data points in close proximity to every point $x$, resulting in accurate quantile predictions. However, for finitely many data points, there will always be regions in the input space with too few data points. In these regions (C)QR cannot accurately estimate the quantiles. However, (C)QR will sometimes output very narrow intervals in these regions, which are fundamentally unreliable and will undercover substantially in these regions. We can see this phenomenon across all different synthetic data-generating processes that we considered (see \Cref{sec:Simulations,sec:simulation-results,appendix:simulations}). When we sample $X\sim\mathcal{N}(0,I_d)$ standard Gaussian, we can see that even for $n=5000$ training data points, we still heavily under-cover in low-density regions, i.e., $\mathbb{P}\left[Y\in C_{\text{CQR}}\mid\|X\|_2>4\right]\ll1-\alpha$. Based on common sense, it is strongly expected that for every sample size $n$, there exists a radius $r(n)$ beyond which $\mathbb{P}\left[Y\in C_{\text{CQR}}\mid\|X\|_2> r(n)\right]\ll1-\alpha$, since the intervals of (C)QR do not have any reason to become wider out-of-sample while their predictions become more unreliable the farther you move away from the training data. 
\begin{hypothesis}
    We hypothesize that under some fairly general and mild assumptions, for every \enquote{non-trivial} data-generating process,
\[\exists c>0:\forall n \in\mathbb{N}: \exists R(n)\in(0,\infty): \forall r>R(n) \mathbb{P}\left[Y\in C_{\text{CQR}}\mid\|X\|_2> r\right]<1-\alpha-c\]
    holds. We think that this claim strongly agrees with basic intuition, and all our empirical results strongly support this hypothesis. However, we leave the proof for future work. The main technical challenge is probably to define \enquote{non-trivial} appropriately.
\end{hypothesis}
This does not contradict \emph{asymptotic} input-conditional coverage guarantees as $\lim_{n\to\infty}r(n)=\infty$. However, for any finite sample size $n$, this is a serious problem both from a common-sense perspective and from what you can see from our empirical results. In other words, in regions where the ensemble members heavily disagree on the predictions (usually in regions with few training data), (C)QR would be substantially over-confident. CQR tries to compensate for this overconfidence in the calibration step, which results in CQR slightly over-covering in regions with many training data points while still considerably under-covering in regions with few training data points (see \Cref{fig:uncertainty-comparison,fig:simulations-homo,fig_sigma2,fig_sigma3,fig_multivariate}).

This is the reason why PCS-UQ and CQR form a particularly effective symbiosis. In regions with many training data points, the ensemble predictions coming from different bootstraps of the training data will be close to each other. PCS-UQ has a systematic bias to under-cover in regions with many training data points and to over-cover in regions with few training data points, whereas for CQR, it is exactly the other way around. If you apply any monotonic transformation of the uncertainty of PCS (multiplicative (conformal) calibration, additive (conformal) calibration, or the calibration methods suggested by \cite{kuleshov_accurate_2018,pmlr-v97-song19aDistributionCalibration,pmlr-v162-kuleshov22aCalibratedSharpDensity,LeviScalarCalibrations22155540}), you will always have the systematic bias that any monotonically calibrated version of PCS-UQ will under-cover in regions with many training data points and will over-cover in regions with few training data points, whereas it is exactly the other way around for any monotonically calibrated version of CQR. By combining the two of them, we can mitigate this bias (see \Cref{fig:uncertainty-comparison,fig:simulations-homo,fig_sigma2,fig_sigma3,fig_multivariate}). This aligns very well with classical statistical results, telling us that for predictive intervals, one needs to combine the estimated noise structure with epistemic uncertainty on the parameters (confidence intervals).

\subsection{CLEAR algorithm: details}
\label{appendix_alg2}

\begin{algorithm}[!h]
\caption{Fast Conformal Implementation of Step 3 from \Cref{Alg_CLEAR}}
\begin{algorithmic}[1]
\State \textbf{Input:} Data \((X_i, Y_i)\) for \(i = 1, \dots, n\), split into training \(\Dtrain\), calibration \(\Dcal\), and validation \(\Dval\) (we consider \(\Dcal = \Dval\)); grid of \(\lambda\) values \(\Lambda\); significance level \(\alpha\).
\vspace{0.5em}

\State \textbf{Step 3: Define prediction intervals for each \(\lambda \in \Lambda\).}
\Statex \quad First, define a preliminary (non-calibrated) interval:
\begin{equation*}
\tilde C_\lambda = \left[
\hat{f} - \qale_{\alpha/2} - \lambda  \qepi_{\alpha/2}, \,
\hat{f} + \qale_{1 - \alpha/2}  + \lambda \qepi_{1 - \alpha/2}
\right]
\end{equation*}
 Then, compute conformity scores 
 \(S_i^\lambda = \max\left\{\frac{\hat{f}(X_i) - Y_i}{\hat{f}(X_i)-\tilde l_{\lambda}(X_i)}, \frac{Y_i - \hat{f}(X_i)}{\tilde u_{\lambda}(X_i)-\hat{f}(X_i)}\right\}\), where \(\tilde l_{\lambda}(x), \tilde u_{\lambda}(x)\) are the lower and upper bounds of \(\tilde C_\lambda(x)\). Let \(\gamma_1\) be the \(\lceil (1 - \alpha)(|\Dcal| + 1) \rceil\)-th smallest score among \(\{S_i^\lambda\}\) (if \(\lceil (1 - \alpha)(|\Dcal| + 1) \rceil>|\Dcal| \), take the largest score). Define the calibrated interval:
\begin{equation*}
C_\lambda = \left[
\hat{f} - \gamma_1\qale_{\alpha/2} - \lambda \gamma_1  \qepi_{\alpha/2}, \,
\hat{f} + \gamma_1\qale_{1 - \alpha/2}  + \lambda \gamma_1 \qepi_{1 - \alpha/2} 
\right]
\end{equation*}

\State \textbf{Output:} calibrated prediction intervals \(C_{\lambda}\) for each $\lambda$ from the grid $\Lambda$.
\end{algorithmic}
\label{alg:CLEARFastStep3}
\end{algorithm}

\subsection{Asymptotic optimality of joint calibration relative to single-parameter baselines}
\label{app:asymptotic_optimality}

In this subsection, we formalize the intuition that CLEAR's joint calibration spans a strictly richer solution space than single-parameter baselines. We analyze the methods at the population level, which corresponds to the limit of infinitely large calibration and validation datasets. 

Let $(X, Y) \sim P$ be the true data distribution. For any non-negative calibration parameters $\gamma_1, \gamma_2 \ge 0$, define the prediction interval at input $x$ as:
\begin{equation}
    C_{\gamma_1, \gamma_2}(x) = \left[\hat{f}(x) - \gamma_1 \qale_{\alpha/2}(x) - \gamma_2 \qepi_{\alpha/2}(x), \; \hat{f}(x) + \gamma_1 \qale_{1-\alpha/2}(x) + \gamma_2 \qepi_{1-\alpha/2}(x)\right]
\end{equation}

Let $\text{Cov}(\gamma_1, \gamma_2) = \mathbb{P}(Y \in C_{\gamma_1, \gamma_2}(X))$ be the population marginal coverage. We assume $\text{Cov}(\gamma_1, \gamma_2)$ is continuous and strictly increasing in both arguments until it saturates at $1$. 

Let $\Lambda \subset [0, \infty)$ be the search space of ratios $\lambda = \gamma_2 / \gamma_1$ evaluated by CLEAR. For any $\lambda \in \Lambda$, let $\gamma_1(\lambda) = \inf\{\gamma > 0 : \text{Cov}(\gamma, \lambda\gamma) \ge 1-\alpha\}$ be the minimal population scaling factor required to achieve $1-\alpha$ marginal coverage. Let $L(\lambda) = \mathbb{E}[\text{QuantileLoss}(C_{\gamma_1(\lambda), \lambda\gamma_1(\lambda)}(X, Y))]$ denote the population expected quantile loss of the resulting interval.

We define the population-level intervals produced by the three methods as follows:
\begin{enumerate}
    \item \textbf{CLEAR ($C_{\text{CLEAR}}$):} Selects $\lambda^* = \arg\min_{\lambda \in \Lambda} L(\lambda)$ and outputs $C_{\gamma_1(\lambda^*), \lambda^*\gamma_1(\lambda^*)}$.
    \item \textbf{Fixed-$\lambda$ Baseline ($C_{\text{base-}\lambda}$):} Fixes a hyperparameter $\lambda_0 \in \Lambda$ and outputs $C_{\gamma_1(\lambda_0), \lambda_0\gamma_1(\lambda_0)}$. E.g., $\lambda_0=0$ for methods that do not explicitly model epistemic uncertainty (see \Cref{sec:Methods that Mainly Focus on Distributional Aleatoric Uncertainty}), or $\lambda_0=1$ for methods that model both but do not rebalance them (see \Cref{sec:LiteratureAssumeingLambdaOne}).
    \item \textbf{Fixed-$\gamma_1$ Baseline ($C_{\text{base-}\gamma_1}$):} Fixes a hyperparameter $\gamma_{1, 0} > 0$ (e.g., $\gamma_{1, 0} = 1$ as in \Cref{sec:LiteratureAssumeingGammaOne}) and calibrates $\gamma_2$ to achieve exact coverage, finding $\gamma_{2, \text{base}} = \inf\{\gamma > 0 : \text{Cov}(\gamma_{1, 0}, \gamma) \ge 1-\alpha\}$. It outputs $C_{\gamma_{1, 0}, \gamma_{2, \text{base}}}$. We assume its implicit ratio $\lambda' = \frac{\gamma_{2, \text{base}}}{\gamma_{1, 0}}$ is contained in $\Lambda$.
\end{enumerate}

\begin{lemma}[Population Optimality of CLEAR]
\label{le:population_optimality}
Assume that the population marginal coverage function $\text{Cov}(\gamma_1, \gamma_2)$ is continuous and strictly increasing in both arguments. Furthermore, assume that the predefined search grid $\Lambda$ contains both the hyperparameter $\lambda_0$ of the Fixed-$\lambda$ baseline, and the implicit optimal ratio $\lambda' = \frac{\gamma_{2, \text{base}}}{\gamma_{1, 0}}$ of the Fixed-$\gamma_1$ baseline. 
Then, the expected quantile loss of CLEAR is upper-bounded by the expected quantile loss of both single-parameter baselines:
\begin{align}
    \mathbb{E}[\text{QuantileLoss}(C_{\text{CLEAR}})] &\le \mathbb{E}[\text{QuantileLoss}(C_{\text{base-}\lambda})] \\
    \mathbb{E}[\text{QuantileLoss}(C_{\text{CLEAR}})] &\le \mathbb{E}[\text{QuantileLoss}(C_{\text{base-}\gamma_1})]
\end{align}
\end{lemma}

\begin{proof}
By definition, CLEAR outputs the interval corresponding to $\lambda^*$, which minimizes the population quantile loss $L(\lambda)$ over the set $\Lambda$. Thus, $\mathbb{E}[\text{QuantileLoss}(C_{\text{CLEAR}})] = L(\lambda^*)$.

\paragraph{Part 1: fixed-$\lambda$ baseline.} The Fixed-$\lambda$ baseline evaluates the interval corresponding to a pre-specified $\lambda_0$. Since $\lambda_0 \in \Lambda$, it trivially holds that the minimum over $\Lambda$ is bounded by the value at $\lambda_0$:
\begin{equation}
    L(\lambda^*) = \min_{\lambda \in \Lambda} L(\lambda) \le L(\lambda_0) = \mathbb{E}[\text{QuantileLoss}(C_{\text{base-}\lambda})]
\end{equation}

\paragraph{Part 2: fixed-$\gamma_1$ baseline.} The Fixed-$\gamma_1$ baseline enforces $\gamma_1 = \gamma_{1, 0}$ and sets $\gamma_2 = \gamma_{2, \text{base}}$ such that exact marginal coverage is achieved: $\text{Cov}(\gamma_{1, 0}, \gamma_{2, \text{base}}) = 1-\alpha$. Let the implied ratio be $\lambda' = \frac{\gamma_{2, \text{base}}}{\gamma_{1, 0}}$.

By definition, evaluating CLEAR's population subroutine at $\lambda'$ requires finding the minimal $\gamma$ such that $\text{Cov}(\gamma, \lambda'\gamma) \ge 1-\alpha$. Because $\text{Cov}$ is strictly increasing, substituting $\gamma = \gamma_{1, 0}$ yields exactly the required coverage:
\begin{equation}
    \text{Cov}(\gamma_{1, 0}, \lambda'\gamma_{1, 0}) = \text{Cov}\left(\gamma_{1, 0}, \frac{\gamma_{2, \text{base}}}{\gamma_{1, 0}}\gamma_{1, 0}\right) = \text{Cov}(\gamma_{1, 0}, \gamma_{2, \text{base}}) = 1-\alpha
\end{equation}
Because $\gamma_{1,0}$ achieves the $1-\alpha$ coverage constraint exactly, the minimal scaling factor $\gamma_1(\lambda')$ evaluated by CLEAR at $\lambda'$ is exactly $\gamma_{1, 0}$.

Since $\lambda' \in \Lambda$ by assumption, it follows that:
\begin{equation}
    L(\lambda^*) = \min_{\lambda \in \Lambda} L(\lambda) \le L(\lambda') = \mathbb{E}[\text{QuantileLoss}(C_{\text{base-}\gamma_1})]
\end{equation}
This concludes the proof.
\end{proof}

\begin{remark}[Finite Sample Convergence]
\Cref{le:population_optimality} establishes that CLEAR is optimal among this class of methods at the population level. In practice, \Cref{Alg_CLEAR} operates on finite calibration and validation sets $\Dcal$ and $\Dval$. Because the quantile loss is Lipschitz continuous and the bounds of the intervals are linear combinations of fixed heuristic functions, the interval class has bounded Rademacher complexity. Standard results in statistical learning theory guarantee that the empirical loss $\hat{L}_{|\Dval|}(\lambda)$ converges uniformly to the population loss $L(\lambda)$ almost surely as $|\Dcal|, |\Dval| \to \infty$. Continuous mapping theorems for M-estimators therefore ensure that the limits of the empirical minimum and empirical minimizer converge to their true population counterparts, translating \Cref{le:population_optimality}'s guarantees to the asymptotic limit of the finite-sample algorithm.
\end{remark}

\begin{remark}[Reuse of the Validation Dataset]
In finite-sample regimes, reusing the same dataset $\Dcal=\Dval$ can lead to a violation of the theoretical marginal coverage guarantees, see \Cref{sec:finiteSampleMarginalCoverage}. 
However, in the limit of an infinitely large dataset ($|\Dcal|=|\Dval| \to \infty$), the empirical distribution perfectly converges to the true population distribution $P$. Consequently, empirical coverage and empirical expected quantile loss evaluated on this dataset become exactly identical to their population counterparts $\text{Cov}(\gamma_1, \gamma_2)$ and $L(\lambda)$. Because the finite-sample noise vanishes entirely, "overfitting" is mathematically impossible in this asymptotic regime. Thus, whether CLEAR uses independent infinite datasets for validation and calibration, or reuses the exact same infinite dataset ($\Dcal = \Dval$), the resulting parameters $(\gamma_1, \lambda)$ converge to the identical optimal population parameters.
\end{remark}

\clearpage

\section{Simulations: details}
\label{appendix:simulations}
For each simulation run, the coefficients \(\beta_1, \dots, \beta_d\) are drawn independently from a Gaussian  distribution \(\mathcal{N}(1, 0.5^2)\). The mean function \(\mu(X)\) is then defined as
\(
\mu(X) = 5.0 + \sum_{i=1}^{d} (-1)^{i+1} \beta_i |X_i|^{e_i} 
\),
where the exponent \(e_i = 1.5\) if \(i\) is odd, and \(e_i = 1.25\) if \(i\) is even.

\subsection{Heteroskedastic case}
In ~\autoref{sec:Simulations}, we focused on the univariate homoskedastic setting ($d=1$ and $\sigma(x) = 1$). Here, we briefly report results for heteroskedastic data, which show similar patterns. Figures~\ref{fig_sigma2} and~\ref{fig_sigma3} illustrate the conditional coverage and interval width of the algorithms under two heteroskedastic noise structures: $\sigma_2(x) = 1 + |x|$ and $\sigma_3(x) = 1 + \frac{1}{1 + x^2}$. As mentioned, the results are analogous to the homoskedastic setting in ~\autoref{sec:Simulations}.

\subsection{Multivariate case}
\subsubsection{Test point generation}

Following setup in \citep{bodik2025structuralrestrictionslocalcausal}, let $r_1<r_2< \dots< r_K$ denote a set of predetermined distances. For each radius \(r_k\), we sample points uniformly from the surface of the unit \(d\)-dimensional sphere as follows. First, we draw a vector \(\mathbf{v} \in \mathbb{R}^d\) whose entries are independent standard normal random variables. We then normalize \(\mathbf{v}\) to obtain a unit vector \(\mathbf{u} = \mathbf{v}/\|\mathbf{v}\|_2\). Finally, we scale \(\mathbf{u}\) by \(r_k\) to obtain the test point \(\mathbf{x} = r_k \cdot \mathbf{u}\). In the one-dimensional case, this procedure reduces to selecting \(x = r_k\) or \(x = -r_k\) with equal probability. This mechanism ensures that, for each \(r_k\), the generated points are uniformly distributed on the surface of the sphere of radius \(r_k\), thereby allowing a precise evaluation of prediction intervals as a function of distance from the origin. Finally, we generate $Y$ with the same data-generating mechanism as in the train set.

\subsubsection{Results}
\label{appendix:hetero-sim}

\autoref{fig_multivariate} illustrates the conditional coverage and interval width of the evaluated algorithms in the multivariate setting, where $X$ is drawn from an independent multivariate Gaussian distribution and $Y = \mu(X) + \varepsilon$, with $\varepsilon \sim \mathcal{N}(0, 1)$. The regression function \(\mu(X)\), as previously defined, involves a sum of randomly weighted power transformations of the absolute values of the input features.

Consistent with the univariate homoskedastic results in \autoref{sec:Simulations}, \autoref{fig_multivariate} shows that both CQR and naive conformal prediction achieve reliable coverage in high-density regions but tend to under-cover in low-density or extrapolation areas. In contrast, CLEAR maintains valid conditional coverage across the entire input space by appropriately adjusting interval widths.

\begin{figure}[!htbp]
\centering
\includegraphics[width=\textwidth, trim={0 0 0 2cm},clip]{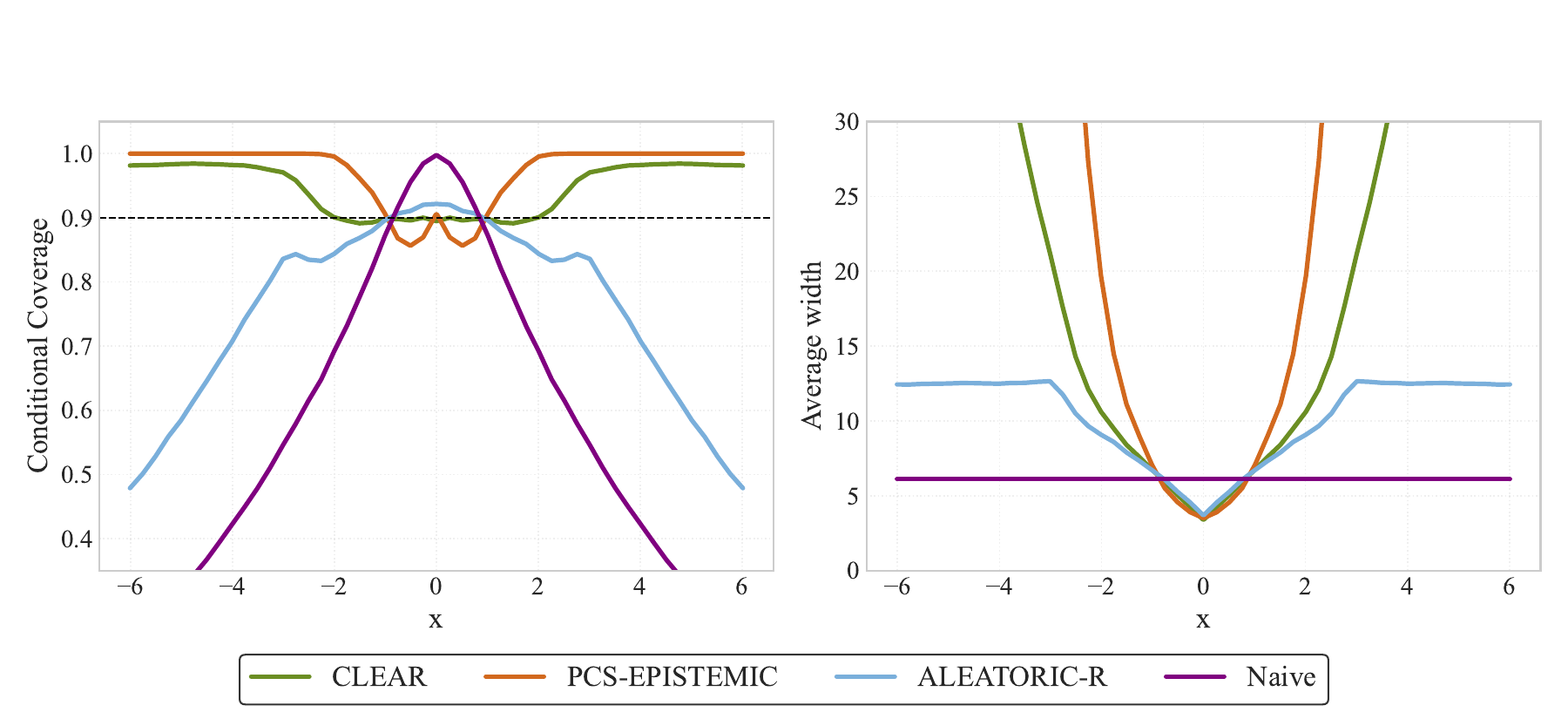}
\caption{Univariate conditional coverage and average width of the prediction intervals for a heteroskedastic case where $\sigma_2(x)=1+|x|$.}
\label{fig_sigma2}
\end{figure}

\begin{figure}[!htbp]
\centering
\includegraphics[width=\textwidth, trim={0 0 0 2cm},clip]{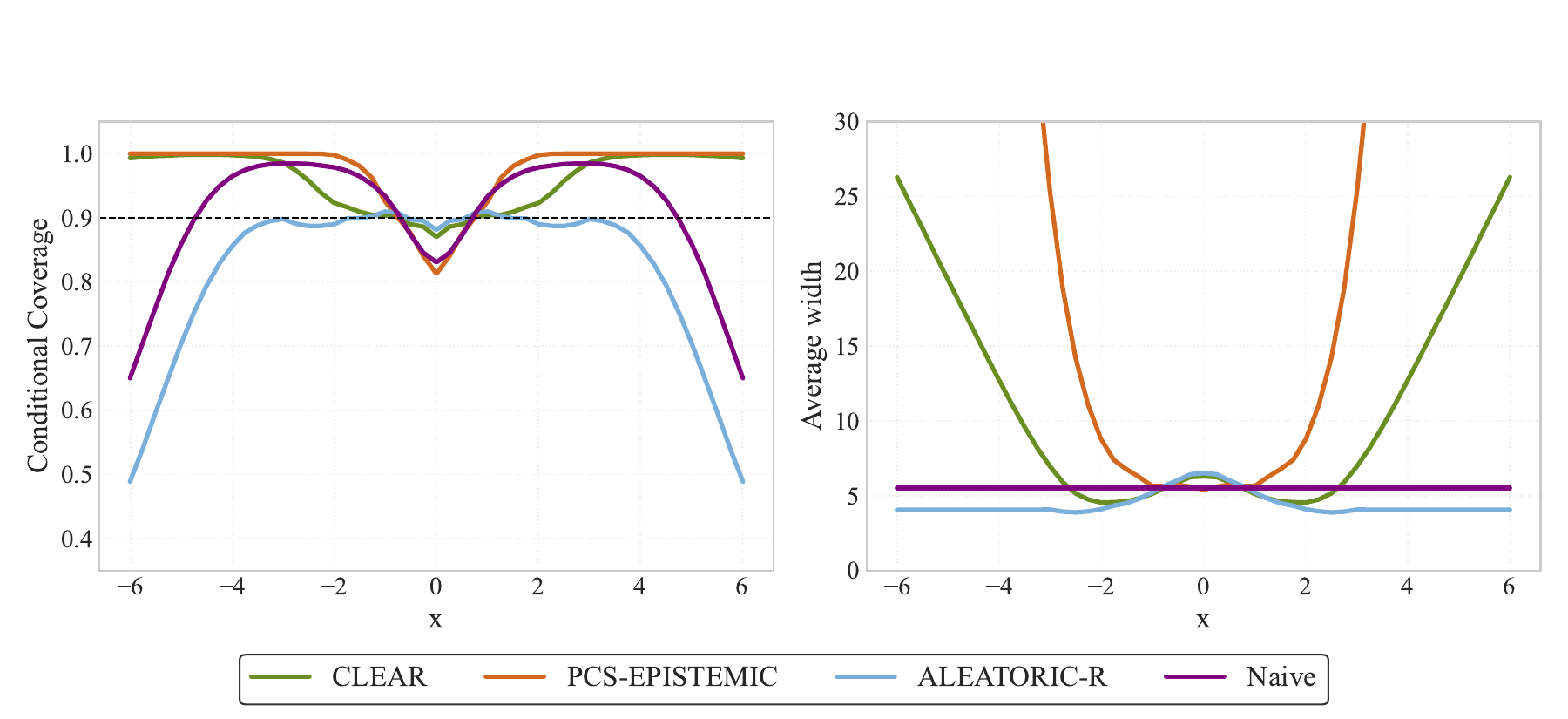}
\caption{Univariate conditional coverage and average width of the prediction intervals for a heteroskedastic case where $\sigma_3(x)=1+\frac{1}{1+x^2}$.}
\label{fig_sigma3}
\end{figure}

\begin{figure}[!htbp]
\centering
\includegraphics[width=\textwidth, trim={0 0 0 2cm},clip]{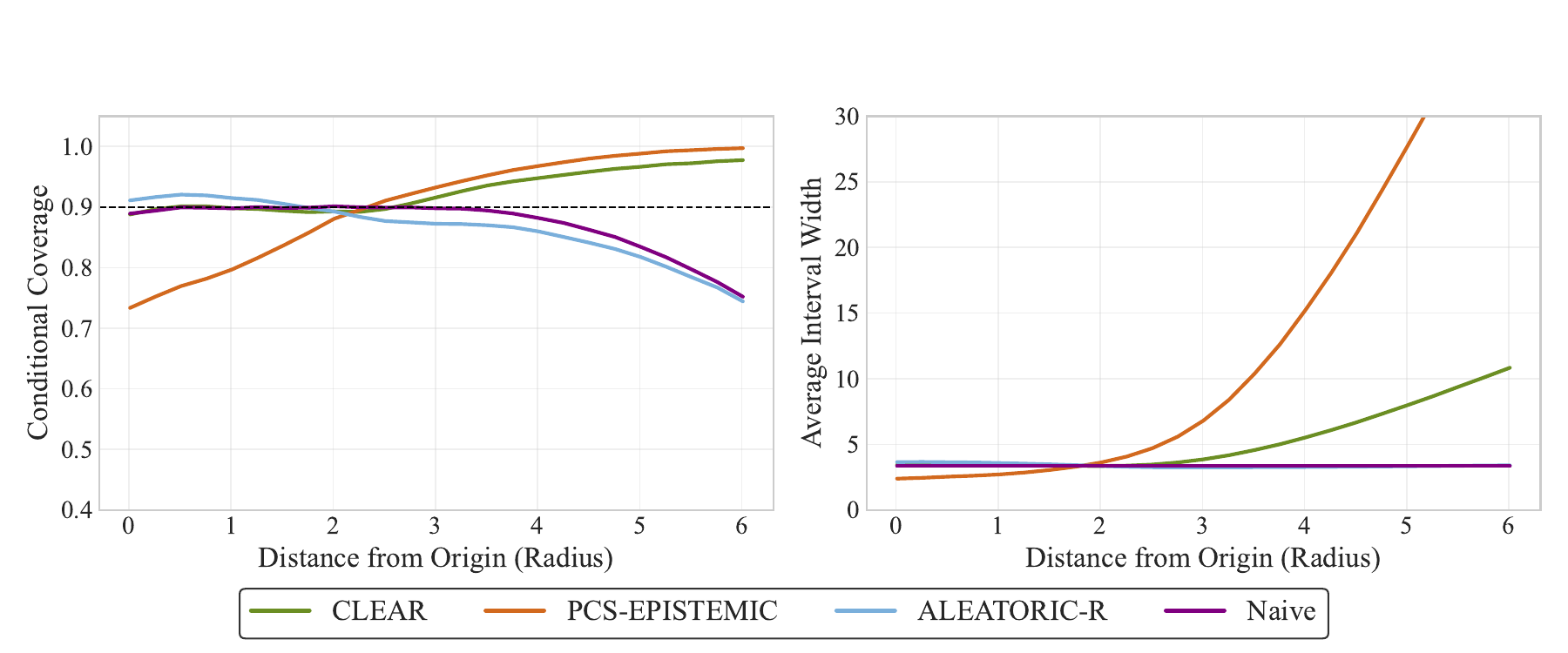}
\caption{Multivariate conditional coverage and average width (distance from the origin is shown) of the prediction intervals for a homoskedastic case where $d\in\{1, 2, 3, 20\}$ was randomly selected.}
\label{fig_multivariate}
\end{figure}

\section{Experimental setup: details}
\label{appendix:experimental-settings}

\subsection{Real-world data}
\label{appendix:data}
To evaluate our method in real-world scenarios, we apply it to 17 publicly available regression datasets curated by \cite{agarwal2025pcsuq} that includes various domains such as housing, energy, materials science, and healthcare \cite{kelleypace1997_california, kaggle2022miami, Tfekci2014, Tsanas2012, hamidieh2018, openml_ailerons, Tsanas2009, Coraddu2016,yeh1998,Cassotti2015,brooks1989airfoil, pycaret_insurance_dataset, openml_elevators,openml_computer, openml_allstate, grinsztajn2022tree, ghahramani1996kin8nm, openml_kin8nm, openml_sulfur}. Description of the datasets can be found in \autoref{tab:dataset_stats}. 

\subsection{Baselines for variants (a), (b), (c)}
\label{appendix:baselines}
The following baselines provide further context and comparisons, omitted from the main paper for brevity. All compared methods (except UACQR) use the same median $\hat{f}(x)=\hat{f}_{\text{PCS}}(x)$ obtained from PCS. The results from \Cref{appendix:results-variant-a} use PCS variant (a), \Cref{appendix:results-variant-b} uses PCS variant (b), and \Cref{appendix:results-variant-c} uses PCS variant (c). This allows us to isolate the effect of different UQ methods.

\begin{enumerate}    

    \item \textbf{Conformalized PCS median (Naive):} 
    For the PCS ensemble, we compute the median prediction $\hat{f}(x)=\hat{f}_{\text{PCS}}(x)$ for each $x$. On the calibration set, absolute residuals $a_i = |y_i - \hat{f}(x_i)|$ are computed. The $(1-\alpha)$-th quantile $\gamma_{\text{naive}}$ of these absolute residuals is then used to define the interval:
    \begin{equation*}
        C_{\text{Naive}}(x) = [\hat{f}(x) - \gamma_{\text{naive}},\; \hat{f}(x) + \gamma_{\text{naive}}]
    \end{equation*}
    This method applies a constant, symmetric width adjustment to the point predictions. The result of this baseline has been presented for the simulations. We omit it from the real-world data for brevity. However, all the baselines can be found in the supplementary code.

    \item \textbf{CLEAR with fixed $\lambda=1$:} This is a variant of the main CLEAR methodology where the ratio $\lambda = \gamma_2 / \gamma_1$ is fixed to 1. This implies $\gamma_1 = \gamma_2$. The prediction interval, based on Equation~\eqref{CQR_PCS_equation3}, becomes:
    \begin{equation*}
    C_{\lambda=1}(x) = \left[
    \hat{f}(x) - \gamma_1 \left(\hat{q}_{\alpha/2}^{\text{ale}}(x) + \hat{q}_{\alpha/2}^{\text{epi}}(x)\right), \quad 
    \hat{f}(x) + \gamma_1 \left(\hat{q}_{1 - \alpha/2}^{\text{ale}}(x) + \hat{q}_{1 - \alpha/2}^{\text{epi}}(x)\right)
    \right].
    \end{equation*}
    In this configuration, the single parameter $\gamma_1$ is calibrated using the standard split conformal procedure on the validation set. It effectively learns a single scaling factor for the sum of the pre-calibrated aleatoric and epistemic uncertainty widths, without adjusting their ratio.

    \item \textbf{CLEAR with fixed $\gamma_1=1$:} Another variant of CLEAR where $\gamma_1$ is fixed to 1. With $\gamma_1=1$, then $\gamma_2=\lambda$ and the prediction interval from Equation~\eqref{CQR_PCS_equation3} reads as:
    \begin{equation*}
    C_{\gamma_1=1}(x) = \left[
    \hat{f}(x) - \hat{q}_{\alpha/2}^{\text{ale}}(x) - \lambda \hat{q}_{\alpha/2}^{\text{epi}}(x), \quad 
    \hat{f}(x) + \hat{q}_{1 - \alpha/2}^{\text{ale}}(x) + \lambda \hat{q}_{1 - \alpha/2}^{\text{epi}}(x)
    \right].
    \end{equation*}
    Here, $\lambda$ (or equivalently $\gamma_2$) is the parameter calibrated on the calibration set through a coverage-based adjustment. This approach fixes the contribution of the (pre-calibrated) aleatoric uncertainty component and adaptively scales the epistemic uncertainty component.         

    \item \textbf{UACQR-S and UACQR-P} For variant (c), instead of computing $C_{\gamma_1=1}(x)$ and $C_{\lambda=1}(x)$ with CLEAR, we directly use the implementation from \cite{rossellini2024} to assess our performance against this alternative. Since this is only relevant for variant (c), we use the exact same configuration as the aleatoric part for QRF (see \autoref{appendix:pcs-details} and \autoref{tab:pcs_quantile_models}).
\end{enumerate}

\subsection{Deep ensembles and simultaneous quantile regression implementation}
\label{appendix:de-sqr-details}

To demonstrate CLEAR's versatility beyond tree-based methods, we implement state-of-the-art deep learning approaches for uncertainty quantification.

\subsubsection{Deep ensembles for epistemic uncertainty}

Our deep ensemble implementation follows \cite{DeepEnsemblesNIPS2017_9ef2ed4b} with several enhancements for improved diversity and calibration. Each ensemble consists of $M=5$ neural networks (adaptive based on dataset size), with the following specifications:

\begin{itemize}
    \item \textbf{Architecture}: Each network employs a fully-connected architecture with hidden layers of sizes (256, 128), ReLU activations, batch normalization, and dropout (rate=0.1). Skip connections are incorporated to stabilize training.
    
    \item \textbf{Diversity strategies}: To promote ensemble diversity, we employ: (i) different random initializations, (ii) bootstrap sampling where each member trains on different subsamples, (iii) varied learning rates, (iv) different learning rate schedules, and (v) small input noise augmentation.
    
    \item \textbf{Training}: Each network is trained for up to 1500 epochs using Adam optimizer with learning rate $10^{-3}$, early stopping (patience=50), weight decay ($10^{-5}$), and batch size 64.
    
    \item \textbf{Calibration}: Following training, we apply a multiplicative calibration factor computed on the validation set using a grid search over c-values to ensure the ensemble achieves target coverage.
\end{itemize}

The epistemic uncertainty is quantified through the empirical quantiles of the ensemble predictions: $\hat{q}^{\text{epi}}_{\alpha/2}(x) = \hat{f}(x) - Q_{\alpha/2}(\{\hat{f}_m(x)\}_{m=1}^M)$ and $\hat{q}^{\text{epi}}_{1-\alpha/2}(x) = Q_{1-\alpha/2}(\{\hat{f}_m(x)\}_{m=1}^M) - \hat{f}(x)$, where $\hat{f}(x)$ is the ensemble median and $Q_\tau$ denotes the empirical quantile function.

\subsubsection{Simultaneous quantile regression for aleatoric uncertainty}

For aleatoric uncertainty, we implement simultaneous quantile regression (SQR) following \cite{NEURIPS2019_73c03186}, which directly models multiple conditional quantiles through a single neural network with specialized architecture:

\begin{itemize}
    \item \textbf{Architecture}: The network uses hidden layers of sizes (256, 256, 128) with LeakyReLU activations (negative slope=0.01), layer normalization, and dropout (rate=0.2). The output layer produces three values corresponding to the $\alpha/2$, 0.5, and $1-\alpha/2$ quantiles.
    
    \item \textbf{Loss function}: We minimize a combined loss consisting of: (i) the pinball loss for each quantile, and (ii) a crossing penalty term that encourages monotonicity of quantiles: $\mathcal{L}_{\text{crossing}} = \text{ReLU}(q_{\alpha/2} - q_{0.5}) + \text{ReLU}(q_{0.5} - q_{1-\alpha/2})$.
    
    \item \textbf{Training}: The model is trained for up to 3000 epochs using Adam optimizer with learning rate $5 \times 10^{-4}$, cosine annealing schedule, and early stopping (patience=200). Gradient clipping (max norm=1.0) prevents training instabilities.
    
    \item \textbf{Ensemble averaging}: To improve stability, we train an ensemble of SQR models with different random seeds and average their quantile predictions.
\end{itemize}

The aleatoric uncertainty estimates are computed as $\hat{q}^{\text{ale}}_{\alpha/2}(x) = q_{0.5}(x) - q_{\alpha/2}(x)$ and $\hat{q}^{\text{ale}}_{1-\alpha/2}(x) = q_{1-\alpha/2}(x) - q_{0.5}(x)$, ensuring consistency with the median prediction.

\subsubsection{Integration with CLEAR}

The DE and SQR components are integrated into CLEAR using the same calibration procedure as our PCS-UQ (model selection and tree-based methods). The epistemic estimates from DE and aleatoric estimates from SQR are combined using the optimized parameters $\lambda$ and $\gamma_1$ according to Equation~\eqref{CQR_PCS_equation3}. The conformal calibration step ensures approximate marginal coverage while the adaptive $\lambda$ selection balances the relative contributions of epistemic and aleatoric uncertainties, accounting for potential scale differences between neural network and tree-based estimators.

\subsection{PCS implementation: details}
\label{appendix:pcs-details}

This section outlines the specific implementation details for generating the PCS ensembles, which provide the point predictor \(\hat{f}\) and the raw epistemic uncertainty estimates \(\hat{q}^{\text{epi}}\) used as input for the CLEAR method (\autoref{clear_section}), and also form the basis for the standalone calibrated PCS baseline intervals. We explicitly opted for experimenting with three variants of CLEAR to assess the framework's performance and robustness across different modeling choices for epistemic and aleatoric uncertainty components, as discussed in the main text. The core methodology, involving model selection and model perturbations via bootstrapping to capture epistemic uncertainty, follows the principles described in \autoref{sec:method}.

The process begins with data partitioning and bootstrapping. For each dataset and unique random seed, the data is divided into training (60\%), validation (20\%), and test (20\%) sets. Subsequently, \(n_{boot}=100\) bootstrap resamples are drawn from this designated training set to construct the PCS ensemble. Then, for our variants, two distinct pools of base models were developed to generate these PCS ensembles, catering to the different CLEAR variants. \autoref{tab:pcs_quantile_models} details the quantile estimators used for CLEAR variant (a). Variant (b) only uses QXGB from \autoref{tab:pcs_quantile_models}. \autoref{tab:pcs_mean_models} describes the mean estimators utilized for CLEAR variant (c).

\begin{table}[!htbp]
\centering
\caption{Base models and key hyperparameters for the quantile models used in CLEAR variants (a). Variant (b) uses only QXGB from this table. All models target the conditional median (\(\tau=0.5\)).}
\label{tab:pcs_quantile_models}
\small 
\begin{tabular}{p{0.25\linewidth} p{0.55\linewidth} p{0.1\linewidth}}
\toprule
\textbf{Model} & \textbf{Key Hyperparameters} & \textbf{Ref.} \\
\midrule
QRF & 100 trees, min. leaf size: 10 & \cite{QRF} \\
\midrule
QXGB & 100 trees, tree method: histogram, min. child weight: 10 & \cite{Chen2016} \\
\midrule
Expectile GAM & 10 P-splines (order 3), smoothing parameter optimized via CV (5 min. timeout)$^*$ & \cite{Serven2018} \\
\bottomrule
\multicolumn{3}{l}{\footnotesize $^*$Included only if hyperparameter optimization converged successfully, otherwise using the default values.}
\end{tabular}
\end{table}

\begin{table}[!htbp]
\centering
\caption{Base models and key hyperparameters for the mean estimators (used in CLEAR variant c). All models target the conditional mean and are from Scikit-learn \citep{sklearn_api}, except XGBoost, which is from \citep{Chen2016}. All unspecified hyperparameters use the Scikit-learn defaults.}
\label{tab:pcs_mean_models}
\small 
\begin{tabular}{p{0.18\linewidth} p{0.30\linewidth} p{0.47\linewidth}}
\toprule
\textbf{Category} & \textbf{Model} & \textbf{Key Hyperparameters} \\
\midrule
\multirow{4}{=}{Linear Models} & Ordinary Least Squares & Default \\
& Ridge & Default alphas (CV) \\
& Lasso & 3-fold CV \\
& ElasticNet & 3-fold CV \\
\midrule
\multirow{4}{=}{Tree Ensembles} & Random Forest & 100 trees, min. leaf size: 5, max. features: 0.33 \\
& Extra Trees & Same hyperparameters as random forest \\
& AdaBoost & Default \\
& XGBoost & Default \\
\midrule
Neural Network & MLP & Single hidden layer (64 neurons) \\
\bottomrule
\end{tabular}
\end{table}

\paragraph{Ensemble construction and model selection.}
For each of the \(n_{boot}=100\) bootstrap samples, all models within the relevant pool (median estimators for variants a/b, mean estimators for variant c) were trained. The single best-performing model type (\(k=1\)) was then identified based on the lowest RMSE achieved on the held-out validation set. Consequently, the final PCS ensemble for each random seed comprised 100 instances of this selected top-performing model type. Specifically, CLEAR variant (a) considered all models from \autoref{tab:pcs_quantile_models} for this selection process, variant (b) was restricted to selecting always QXGB from this pool, and variant (c) considered all the models from \autoref{tab:pcs_mean_models} instead. All models used default parameters from their respective libraries unless otherwise specified in the tables, and random states were fixed to ensure reproducibility.

\paragraph{Derivation of \(\hat{f}\) and \(\hat{q}^{\text{epi}}\) for CLEAR.}
The point predictor \(\hat{f}(x)\) and the raw epistemic uncertainty contributions \(\hat{q}^{\text{epi}}_{\alpha/2}(x)\) and \(\hat{q}^{\text{epi}}_{1-\alpha/2}(x)\) supplied to the CLEAR method are derived from this final ensemble of 100 model instances. As detailed in \autoref{epistemic_section}, \(\hat{f}(x)\) is the pointwise empirical median of the ensemble's predictions. The epistemic uncertainty terms represent the pointwise distances from this median to the ensemble's empirical \(\alpha/2\) and \(1-\alpha/2\) quantiles, respectively.

\paragraph{Calibration of the standalone PCS baseline.}
The standalone PCS baseline method involves a distinct calibration process. From the ensemble's raw pointwise \(\alpha/2\) and \(1-\alpha/2\) quantile predictions (\(\hat{f}_{\alpha/2}(x)\) and \(\hat{f}_{1-\alpha/2}(x)\)), a single, global multiplicative calibration factor, \(\gamma_{\text{PCS}}\), is computed. This \(\gamma_{\text{PCS}}\) is the smallest value ensuring that prediction intervals, formed by scaling the raw epistemic uncertainty around \(\hat{f}(x)\) (i.e., \([\hat{f}(x) - \gamma_{\text{PCS}}(\hat{f}(x) - \hat{f}_{\alpha/2}(x)), \, \hat{f}(x) + \gamma_{\text{PCS}}(\hat{f}_{1-\alpha/2}(x) - \hat{f}(x))]\)), achieve the target \(1-\alpha\) coverage on the validation set, incorporating the standard finite-sample correction. This \(\gamma_{\text{PCS}}\) is then applied to generate the PCS baseline intervals on the test set. {It is important to reiterate that this \(\gamma_{\text{PCS}}\) is separate and computed independently from \(\gamma_1\) and \(\lambda\) parameters optimized within the CLEAR framework.}

\subsection{Metrics}
\label{appendix:metrics}
This appendix provides detailed definitions for the evaluation metrics used in the main paper. We consider test data ${(X_i, Y_i)}_{i=1}^N$ and prediction intervals $[L_i, U_i]$.

\begin{itemize}
    \item \textbf{Prediction Interval Coverage Probability (PICP)}: Measures the proportion of true values falling within the predicted intervals, calculated as:
    \begin{equation*}
        \text{PICP}(L,U) = \frac{1}{N}\sum_{i=1}^{N}\mathds{1}_{[L_i, U_i]}(Y_i)
    \end{equation*}
    where $\mathds{1}_{[L_i, U_i]}(Y_i)$ is the indicator function; it equals 1 if $Y_i \in [L_i, U_i]$ and 0 otherwise.
    
    \item \textbf{Normalized Interval Width (NIW):} Quantifies the average width of prediction intervals normalized by the range of the target variable:
    \begin{equation*}
        \text{NIW}(L,U) = \frac{\frac{1}{N}\sum_{i=1}^{N}(U_i - L_i)}{\max(Y) - \min(Y)}
    \end{equation*}
    
    \item \textbf{Quantile Loss (also known as pinball loss)}:  Evaluates the accuracy of predicted quantiles by penalizing both under- and overestimation. It reflects a trade-off between coverage (PICP) and interval width (NIW), rewarding narrow intervals that still maintain proper coverage and penalizing data points that are far outside the intervals (see \autoref{sec:PropertiesQuantileLoss} for more details). 
    
    For a given quantile level $\tau$, the quantile loss function is:
    \begin{equation*}
        QL_\tau(y, q) = (y - q)\big(\tau - \mathds{1}_{(-\infty,q]}(y)\big),
    \end{equation*}
    where $q$ is the predicted $\tau$-quantile. For prediction intervals at level $1-\alpha$, we evaluate this at both $\tau=\alpha/2$ and $\tau=1-\alpha/2$ using
    \[\text{QuantileLoss}(L, U)=\sum_{i=1}^N \left[ QL_{\alpha/2}(Y_i, L(X_i)) + QL_{1 - \alpha/2}(Y_i, U(X_i)) \right]/2.\]

    \item \textbf{Average Interval Score Loss (AISL)} \citep{gneiting2007strictly}: This score balances interval width with coverage penalties (with the same properties as the quantile loss explained in \autoref{sec:PropertiesQuantileLoss}), defined as 
   \begin{align*}
\text{AISL}(L, U) = \frac{1}{N} \sum_{i=1}^{N} \bigg[ 
    &(U_i - L_i) 
    + \frac{2}{\alpha} (L_i - Y_i) \, \mathds{1}\{Y_i < L_i\} 
    + \frac{2}{\alpha} (Y_i - U_i) \, \mathds{1}\{Y_i > U_i\} 
\bigg],
\end{align*}
     where $\mathds{1}\{\cdot\}$ is the indicator function.

\end{itemize}

\clearpage

\section{CLEAR with DE and SQR: results on real-world data}
\label{appendix:results-de-sqr}

This section presents the comprehensive experimental results of CLEAR when combined with deep learning-based uncertainty estimators: deep ensembles (DE) for epistemic uncertainty and simultaneous quantile regression (SQR) for aleatoric uncertainty. The experiments further validate CLEAR's generality beyond the PCS and CQR methods that are presented in the body of our paper. The results demonstrate that CLEAR remains effective across different modeling paradigms. We evaluate both the neural baselines individually and their integration through CLEAR, comparing against conformalized versions to assess the added value of our dual-parameter calibration approach. Results are reported across all 17 datasets with 95\% nominal coverage, using up to 10 random seeds for robustness.

\begin{table}[!htbp]
\centering
\caption{DE-SQR PICP at 95\% prediction intervals, aggregated across 10 seeds. CLEAR consists of the deep ensemble (DE) and the simultaneous quantile regressor (SQR). DE-conformal is the conformalized (calibrated) deep ensemble and SQR-conformal is the conformalized simultaneous quantile regressor. }
\label{tab:de_sqr_picp_95}
\small
\resizebox{\columnwidth}{!}{%
\begin{tabular}{lccccc}
\toprule
Dataset & CLEAR & DE & SQR & DE\text{-}conformal & SQR\text{-}conformal \\
\midrule
ailerons & 0.95 $\pm$ 0.00 & 0.95 $\pm$ 0.00 & 0.94 $\pm$ 0.01 & 0.95 $\pm$ 0.00 & 0.95 $\pm$ 0.00 \\
airfoil & 0.96 $\pm$ 0.02 & 0.95 $\pm$ 0.02 & 0.98 $\pm$ 0.01 & 0.95 $\pm$ 0.02 & 0.96 $\pm$ 0.01 \\
allstate & 0.95 $\pm$ 0.01 & 0.95 $\pm$ 0.01 & 0.91 $\pm$ 0.01 & 0.95 $\pm$ 0.01 & 0.94 $\pm$ 0.01 \\
ca\_housing & 0.95 $\pm$ 0.01 & 0.95 $\pm$ 0.00 & 0.95 $\pm$ 0.00 & 0.95 $\pm$ 0.00 & 0.95 $\pm$ 0.00 \\
computer & 0.95 $\pm$ 0.01 & 0.95 $\pm$ 0.01 & 0.94 $\pm$ 0.01 & 0.95 $\pm$ 0.01 & 0.95 $\pm$ 0.01 \\
concrete & 0.94 $\pm$ 0.02 & 0.95 $\pm$ 0.02 & 0.98 $\pm$ 0.01 & 0.96 $\pm$ 0.02 & 0.95 $\pm$ 0.03 \\
elevator & 0.95 $\pm$ 0.00 & 0.95 $\pm$ 0.00 & 0.94 $\pm$ 0.01 & 0.95 $\pm$ 0.00 & 0.95 $\pm$ 0.00 \\
energy\_efficiency & 0.96 $\pm$ 0.01 & 0.95 $\pm$ 0.02 & 0.99 $\pm$ 0.01 & 0.95 $\pm$ 0.02 & 0.96 $\pm$ 0.03 \\
insurance & 0.96 $\pm$ 0.01 & 0.95 $\pm$ 0.01 & 0.96 $\pm$ 0.01 & 0.96 $\pm$ 0.01 & 0.96 $\pm$ 0.02 \\
kin8nm & 0.95 $\pm$ 0.01 & 0.95 $\pm$ 0.01 & 0.98 $\pm$ 0.00 & 0.95 $\pm$ 0.01 & 0.95 $\pm$ 0.01 \\
miami\_housing & 0.95 $\pm$ 0.00 & 0.95 $\pm$ 0.01 & 0.95 $\pm$ 0.01 & 0.95 $\pm$ 0.01 & 0.95 $\pm$ 0.00 \\
naval\_propulsion & 0.95 $\pm$ 0.01 & 0.95 $\pm$ 0.01 & 1.00 $\pm$ 0.00 & 0.95 $\pm$ 0.01 & 0.95 $\pm$ 0.01 \\
parkinsons & 0.95 $\pm$ 0.01 & 0.95 $\pm$ 0.01 & 0.96 $\pm$ 0.01 & 0.95 $\pm$ 0.01 & 0.95 $\pm$ 0.01 \\
powerplant & 0.95 $\pm$ 0.01 & 0.95 $\pm$ 0.00 & 0.95 $\pm$ 0.00 & 0.95 $\pm$ 0.00 & 0.95 $\pm$ 0.00 \\
qsar & 0.95 $\pm$ 0.01 & 0.96 $\pm$ 0.01 & 0.93 $\pm$ 0.01 & 0.96 $\pm$ 0.01 & 0.95 $\pm$ 0.01 \\
sulfur & 0.95 $\pm$ 0.01 & 0.95 $\pm$ 0.01 & 0.95 $\pm$ 0.00 & 0.95 $\pm$ 0.01 & 0.95 $\pm$ 0.01 \\
superconductor & 0.95 $\pm$ 0.00 & 0.95 $\pm$ 0.00 & 0.96 $\pm$ 0.00 & 0.95 $\pm$ 0.00 & 0.95 $\pm$ 0.00 \\
\bottomrule
\end{tabular}
}
\end{table}
\begin{table}[!htbp]
\centering
\caption{DE-SQR NIW at 95\% prediction intervals, aggregated across 10 seeds. CLEAR consists of the deep ensemble (DE) and the simultaneous quantile regressor (SQR). DE-conformal is the conformalized (calibrated) deep ensemble and SQR-conformal is the conformalized simultaneous quantile regressor. Values $\geq 100$ or $< 0.01$ are presented in scientific notation with 1 decimal place. \textbf{Bold} values are the minimum (best) for that dataset, while \underline{underlined} values indicate the second-best result.}
\label{tab:de_sqr_niw_95}
\small
\resizebox{\columnwidth}{!}{%
\begin{tabular}{lccccc}
\toprule
Dataset & CLEAR & DE & SQR & DE\text{-}conformal & SQR\text{-}conformal \\
\midrule
ailerons & \underline{0.201 $\pm$ 0.017} & 0.284 $\pm$ 0.025 & \textbf{0.196 $\pm$ 0.014} & 0.285 $\pm$ 0.025 & 0.203 $\pm$ 0.015 \\
airfoil & \textbf{0.185 $\pm$ 0.023} & \underline{0.223 $\pm$ 0.023} & 0.349 $\pm$ 0.025 & 0.226 $\pm$ 0.023 & 0.311 $\pm$ 0.025 \\
allstate & 0.333 $\pm$ 0.061 & 0.380 $\pm$ 0.073 & \textbf{0.283 $\pm$ 0.043} & 0.383 $\pm$ 0.075 & \underline{0.324 $\pm$ 0.058} \\
ca\_housing & \textbf{0.357 $\pm$ 9.1e-03} & 0.502 $\pm$ 0.020 & 0.378 $\pm$ 8.0e-03 & 0.502 $\pm$ 0.020 & \underline{0.374 $\pm$ 4.0e-03} \\
computer & \textbf{0.087 $\pm$ 3.2e-03} & 0.121 $\pm$ 6.3e-03 & \underline{0.104 $\pm$ 8.9e-03} & 0.122 $\pm$ 6.1e-03 & 0.106 $\pm$ 5.7e-03 \\
concrete & \textbf{0.264 $\pm$ 0.033} & \underline{0.365 $\pm$ 0.068} & 0.441 $\pm$ 0.041 & 0.369 $\pm$ 0.067 & 0.392 $\pm$ 0.069 \\
elevator & \underline{0.117 $\pm$ 6.6e-03} & 0.174 $\pm$ 0.011 & \textbf{0.114 $\pm$ 0.011} & 0.174 $\pm$ 0.011 & 0.117 $\pm$ 8.8e-03 \\
energy\_efficiency & \textbf{0.085 $\pm$ 0.012} & \underline{0.115 $\pm$ 0.020} & 0.480 $\pm$ 0.059 & 0.120 $\pm$ 0.020 & 0.368 $\pm$ 0.036 \\
insurance & \textbf{0.393 $\pm$ 0.044} & 0.523 $\pm$ 0.135 & \underline{0.397 $\pm$ 0.073} & 0.558 $\pm$ 0.141 & 0.406 $\pm$ 0.085 \\
kin8nm & \textbf{0.197 $\pm$ 8.2e-03} & 0.260 $\pm$ 9.5e-03 & 0.246 $\pm$ 8.7e-03 & 0.261 $\pm$ 9.6e-03 & \underline{0.208 $\pm$ 8.8e-03} \\
miami\_housing & \textbf{0.093 $\pm$ 2.9e-03} & 0.131 $\pm$ 5.6e-03 & 0.101 $\pm$ 2.8e-03 & 0.131 $\pm$ 5.7e-03 & \underline{0.100 $\pm$ 1.8e-03} \\
naval\_propulsion & \textbf{1.5e-03 $\pm$ 4.2e-04} & 5.4e-03 $\pm$ 1.1e-03 & 3.7e-03 $\pm$ 1.3e-04 & 5.4e-03 $\pm$ 1.1e-03 & \underline{1.7e-03 $\pm$ 1.5e-04} \\
parkinsons & \textbf{0.378 $\pm$ 0.014} & 0.476 $\pm$ 0.027 & 0.419 $\pm$ 0.021 & 0.479 $\pm$ 0.027 & \underline{0.403 $\pm$ 0.017} \\
powerplant & \textbf{0.186 $\pm$ 6.7e-03} & 0.266 $\pm$ 0.012 & 0.204 $\pm$ 6.8e-03 & 0.267 $\pm$ 0.013 & \underline{0.202 $\pm$ 7.3e-03} \\
qsar & \underline{0.410 $\pm$ 0.136} & 0.508 $\pm$ 0.178 & \textbf{0.402 $\pm$ 0.135} & 0.509 $\pm$ 0.179 & 0.456 $\pm$ 0.153 \\
sulfur & \textbf{0.105 $\pm$ 8.5e-03} & 0.199 $\pm$ 0.015 & 0.120 $\pm$ 7.4e-03 & 0.200 $\pm$ 0.015 & \underline{0.120 $\pm$ 8.3e-03} \\
superconductor & \textbf{0.208 $\pm$ 0.024} & 0.295 $\pm$ 0.038 & 0.247 $\pm$ 0.027 & 0.295 $\pm$ 0.038 & \underline{0.245 $\pm$ 0.028} \\
\bottomrule
\end{tabular}
}
\end{table}
\begin{table}[!htbp]
\centering
\caption{DE-SQR Quantile Loss at 95\% prediction intervals, aggregated across 10 seeds. CLEAR consists of the deep ensemble (DE) and the simultaneous quantile regressor (SQR). DE-conformal is the conformalized (calibrated) deep ensemble and SQR-conformal is the conformalized simultaneous quantile regressor. Values $\geq 100$ or $< 0.01$ are presented in scientific notation with 1 decimal place. \textbf{Bold} values are the minimum (best) for that dataset, while \underline{underlined} values indicate the second-best result.}
\label{tab:de_sqr_quantileloss_95}
\small
\resizebox{\columnwidth}{!}{%
\begin{tabular}{lccccc}
\toprule
Dataset & CLEAR & DE & SQR & DE\text{-}conformal & SQR\text{-}conformal \\
\midrule
ailerons & 9.6e-06 $\pm$ 3.1e-07 & 1.3e-05 $\pm$ 3.5e-07 & \underline{9.6e-06 $\pm$ 4.0e-07} & 1.3e-05 $\pm$ 3.5e-07 & \textbf{9.5e-06 $\pm$ 3.7e-07} \\
airfoil & \textbf{0.097 $\pm$ 6.8e-03} & \underline{0.117 $\pm$ 0.010} & 0.162 $\pm$ 0.017 & 0.117 $\pm$ 0.010 & 0.153 $\pm$ 0.018 \\
allstate & \textbf{1.4e+02 $\pm$ 9.329} & 1.5e+02 $\pm$ 8.112 & 1.5e+02 $\pm$ 9.210 & 1.5e+02 $\pm$ 8.106 & \underline{1.4e+02 $\pm$ 8.368} \\
ca\_housing & 3.0e+03 $\pm$ 63.719 & 3.9e+03 $\pm$ 1.2e+02 & \underline{3.0e+03 $\pm$ 54.590} & 3.9e+03 $\pm$ 1.2e+02 & \textbf{3.0e+03 $\pm$ 53.118} \\
computer & \textbf{0.146 $\pm$ 5.5e-03} & 0.183 $\pm$ 6.8e-03 & 0.162 $\pm$ 8.3e-03 & 0.183 $\pm$ 6.8e-03 & \underline{0.162 $\pm$ 7.9e-03} \\
concrete & \textbf{0.348 $\pm$ 0.049} & \underline{0.427 $\pm$ 0.056} & 0.467 $\pm$ 0.031 & 0.428 $\pm$ 0.057 & 0.458 $\pm$ 0.051 \\
elevator & 1.2e-04 $\pm$ 1.9e-06 & 1.6e-04 $\pm$ 3.5e-06 & \underline{1.1e-04 $\pm$ 2.5e-06} & 1.6e-04 $\pm$ 3.5e-06 & \textbf{1.1e-04 $\pm$ 2.4e-06} \\
energy\_efficiency & \textbf{0.052 $\pm$ 9.5e-03} & \underline{0.067 $\pm$ 0.012} & 0.221 $\pm$ 0.023 & 0.067 $\pm$ 0.012 & 0.188 $\pm$ 0.020 \\
insurance & 3.9e+02 $\pm$ 42.155 & 5.3e+02 $\pm$ 97.010 & \textbf{3.9e+02 $\pm$ 37.867} & 5.4e+02 $\pm$ 93.793 & \underline{3.9e+02 $\pm$ 45.372} \\
kin8nm & \textbf{4.1e-03 $\pm$ 4.4e-05} & 5.2e-03 $\pm$ 8.2e-05 & 4.5e-03 $\pm$ 5.7e-05 & 5.2e-03 $\pm$ 8.2e-05 & \underline{4.2e-03 $\pm$ 7.1e-05} \\
miami\_housing & 4.5e+03 $\pm$ 2.1e+02 & 5.4e+03 $\pm$ 1.7e+02 & \underline{4.4e+03 $\pm$ 1.2e+02} & 5.4e+03 $\pm$ 1.7e+02 & \textbf{4.4e+03 $\pm$ 1.2e+02} \\
naval\_propulsion & \textbf{3.6e-05 $\pm$ 9.4e-06} & 1.3e-04 $\pm$ 2.7e-05 & 8.2e-05 $\pm$ 2.9e-06 & 1.3e-04 $\pm$ 2.7e-05 & \underline{4.4e-05 $\pm$ 4.3e-06} \\
parkinsons & 0.298 $\pm$ 0.014 & 0.343 $\pm$ 9.8e-03 & \underline{0.291 $\pm$ 5.7e-03} & 0.343 $\pm$ 9.8e-03 & \textbf{0.289 $\pm$ 5.8e-03} \\
powerplant & \textbf{0.227 $\pm$ 0.013} & 0.301 $\pm$ 0.014 & 0.234 $\pm$ 0.012 & 0.301 $\pm$ 0.014 & \underline{0.234 $\pm$ 0.013} \\
qsar & \textbf{0.055 $\pm$ 3.2e-03} & 0.064 $\pm$ 3.5e-03 & 0.061 $\pm$ 3.7e-03 & 0.064 $\pm$ 3.5e-03 & \underline{0.060 $\pm$ 3.6e-03} \\
sulfur & \textbf{1.7e-03 $\pm$ 1.0e-04} & 2.7e-03 $\pm$ 1.9e-04 & \underline{1.8e-03 $\pm$ 1.3e-04} & 2.7e-03 $\pm$ 1.9e-04 & 1.8e-03 $\pm$ 1.3e-04 \\
superconductor & \textbf{0.545 $\pm$ 0.024} & 0.707 $\pm$ 0.024 & 0.560 $\pm$ 0.017 & 0.707 $\pm$ 0.024 & \underline{0.559 $\pm$ 0.018} \\
\bottomrule
\end{tabular}
}
\end{table}
\begin{table}[!htbp]
\centering
\caption{DE-SQR NCIW at 95\% prediction intervals, aggregated across 10 seeds. CLEAR consists of the deep ensemble (DE) and the simultaneous quantile regressor (SQR). DE-conformal is the conformalized (calibrated) deep ensemble and SQR-conformal is the conformalized simultaneous quantile regressor. Values $\geq 100$ or $< 0.01$ are presented in scientific notation with 1 decimal place. \textbf{Bold} values are the minimum (best) for that dataset, while \underline{underlined} values indicate the second-best result.}
\label{tab:de_sqr_nciw_95}
\small
\resizebox{\columnwidth}{!}{%
\begin{tabular}{lccccc}
\toprule
Dataset & CLEAR & DE & SQR & DE\text{-}conformal & SQR\text{-}conformal \\
\midrule
ailerons & 0.205 $\pm$ 0.017 & 0.292 $\pm$ 0.027 & \underline{0.205 $\pm$ 0.017} & 0.291 $\pm$ 0.026 & \textbf{0.204 $\pm$ 0.017} \\
airfoil & \textbf{0.173 $\pm$ 0.016} & 0.220 $\pm$ 0.025 & 0.290 $\pm$ 0.026 & \underline{0.217 $\pm$ 0.025} & 0.293 $\pm$ 0.026 \\
allstate & \textbf{0.328 $\pm$ 0.048} & 0.384 $\pm$ 0.062 & 0.342 $\pm$ 0.051 & 0.381 $\pm$ 0.060 & \underline{0.330 $\pm$ 0.045} \\
ca\_housing & \textbf{0.354 $\pm$ 8.1e-03} & 0.501 $\pm$ 0.021 & \underline{0.372 $\pm$ 7.1e-03} & 0.501 $\pm$ 0.021 & 0.373 $\pm$ 7.8e-03 \\
computer & \textbf{0.088 $\pm$ 1.7e-03} & 0.120 $\pm$ 5.0e-03 & 0.106 $\pm$ 6.5e-03 & 0.120 $\pm$ 5.1e-03 & \underline{0.106 $\pm$ 7.1e-03} \\
concrete & \textbf{0.274 $\pm$ 0.026} & 0.345 $\pm$ 0.053 & 0.371 $\pm$ 0.036 & \underline{0.339 $\pm$ 0.051} & 0.376 $\pm$ 0.040 \\
elevator & 0.117 $\pm$ 7.0e-03 & 0.172 $\pm$ 0.010 & \underline{0.117 $\pm$ 9.0e-03} & 0.172 $\pm$ 0.010 & \textbf{0.116 $\pm$ 9.4e-03} \\
energy\_efficiency & \textbf{0.079 $\pm$ 7.4e-03} & 0.114 $\pm$ 0.017 & 0.351 $\pm$ 0.049 & \underline{0.106 $\pm$ 0.014} & 0.346 $\pm$ 0.046 \\
insurance & 0.364 $\pm$ 0.060 & 0.465 $\pm$ 0.125 & \underline{0.351 $\pm$ 0.064} & 0.460 $\pm$ 0.122 & \textbf{0.350 $\pm$ 0.061} \\
kin8nm & \textbf{0.194 $\pm$ 6.6e-03} & 0.255 $\pm$ 5.6e-03 & \underline{0.205 $\pm$ 6.3e-03} & 0.254 $\pm$ 5.3e-03 & 0.208 $\pm$ 6.3e-03 \\
miami\_housing & \textbf{0.092 $\pm$ 3.5e-03} & 0.129 $\pm$ 7.8e-03 & \underline{0.100 $\pm$ 2.6e-03} & 0.129 $\pm$ 7.7e-03 & 0.100 $\pm$ 2.9e-03 \\
naval\_propulsion & \textbf{1.5e-03 $\pm$ 4.2e-04} & 5.4e-03 $\pm$ 1.2e-03 & 1.7e-03 $\pm$ 2.0e-04 & 5.4e-03 $\pm$ 1.2e-03 & \underline{1.6e-03 $\pm$ 1.7e-04} \\
parkinsons & \textbf{0.382 $\pm$ 0.015} & 0.473 $\pm$ 0.025 & \underline{0.398 $\pm$ 0.013} & 0.470 $\pm$ 0.022 & 0.398 $\pm$ 0.013 \\
powerplant & \textbf{0.190 $\pm$ 6.0e-03} & 0.272 $\pm$ 9.5e-03 & 0.202 $\pm$ 8.1e-03 & 0.272 $\pm$ 9.5e-03 & \underline{0.202 $\pm$ 8.1e-03} \\
qsar & \textbf{0.407 $\pm$ 0.134} & 0.477 $\pm$ 0.150 & 0.450 $\pm$ 0.151 & 0.476 $\pm$ 0.150 & \underline{0.441 $\pm$ 0.146} \\
sulfur & \textbf{0.104 $\pm$ 6.2e-03} & 0.198 $\pm$ 0.016 & \underline{0.120 $\pm$ 7.1e-03} & 0.197 $\pm$ 0.015 & 0.120 $\pm$ 7.2e-03 \\
superconductor & \textbf{0.209 $\pm$ 0.024} & 0.303 $\pm$ 0.036 & \underline{0.242 $\pm$ 0.027} & 0.302 $\pm$ 0.036 & 0.245 $\pm$ 0.027 \\
\bottomrule
\end{tabular}
}
\end{table}
\begin{table}[!htbp]
\centering
\caption{DE-SQR Interval Score Loss at 95\% prediction intervals, aggregated across 10 seeds. CLEAR consists of the deep ensemble (DE) and the simultaneous quantile regressor (SQR). DE-conformal is the conformalized (calibrated) deep ensemble and SQR-conformal is the conformalized simultaneous quantile regressor. Values $\geq 100$ or $< 0.01$ are presented in scientific notation with 1 decimal place. \textbf{Bold} values are the minimum (best) for that dataset, while \underline{underlined} values indicate the second-best result.}
\label{tab:de_sqr_intervalscoreloss_95}
\small
\resizebox{\columnwidth}{!}{%
\begin{tabular}{lccccc}
\toprule
Dataset & CLEAR & DE & SQR & DE\text{-}conformal & SQR\text{-}conformal \\
\midrule
ailerons & 7.7e-04 $\pm$ 2.5e-05 & 1.0e-03 $\pm$ 2.8e-05 & \underline{7.7e-04 $\pm$ 3.2e-05} & 1.0e-03 $\pm$ 2.8e-05 & \textbf{7.6e-04 $\pm$ 2.9e-05} \\
airfoil & \textbf{7.761 $\pm$ 0.546} & \underline{9.368 $\pm$ 0.826} & 12.937 $\pm$ 1.386 & 9.370 $\pm$ 0.813 & 12.279 $\pm$ 1.418 \\
allstate & \textbf{1.1e+04 $\pm$ 7.5e+02} & 1.2e+04 $\pm$ 6.5e+02 & 1.2e+04 $\pm$ 7.4e+02 & 1.2e+04 $\pm$ 6.5e+02 & \underline{1.2e+04 $\pm$ 6.7e+02} \\
ca\_housing & 2.4e+05 $\pm$ 5.1e+03 & 3.1e+05 $\pm$ 9.2e+03 & \underline{2.4e+05 $\pm$ 4.4e+03} & 3.1e+05 $\pm$ 9.2e+03 & \textbf{2.4e+05 $\pm$ 4.2e+03} \\
computer & \textbf{11.669 $\pm$ 0.436} & 14.633 $\pm$ 0.543 & 12.938 $\pm$ 0.662 & 14.634 $\pm$ 0.545 & \underline{12.926 $\pm$ 0.634} \\
concrete & \textbf{27.850 $\pm$ 3.946} & \underline{34.179 $\pm$ 4.507} & 37.343 $\pm$ 2.464 & 34.203 $\pm$ 4.564 & 36.678 $\pm$ 4.051 \\
elevator & 9.2e-03 $\pm$ 1.5e-04 & 0.013 $\pm$ 2.8e-04 & \underline{9.0e-03 $\pm$ 2.0e-04} & 0.013 $\pm$ 2.8e-04 & \textbf{9.0e-03 $\pm$ 1.9e-04} \\
energy\_efficiency & \textbf{4.154 $\pm$ 0.758} & \underline{5.369 $\pm$ 0.965} & 17.719 $\pm$ 1.871 & 5.387 $\pm$ 0.974 & 15.020 $\pm$ 1.595 \\
insurance & 3.1e+04 $\pm$ 3.4e+03 & 4.2e+04 $\pm$ 7.8e+03 & \textbf{3.1e+04 $\pm$ 3.0e+03} & 4.3e+04 $\pm$ 7.5e+03 & \underline{3.1e+04 $\pm$ 3.6e+03} \\
kin8nm & \textbf{0.327 $\pm$ 3.5e-03} & 0.417 $\pm$ 6.5e-03 & 0.358 $\pm$ 4.6e-03 & 0.417 $\pm$ 6.6e-03 & \underline{0.340 $\pm$ 5.7e-03} \\
miami\_housing & 3.6e+05 $\pm$ 1.7e+04 & 4.3e+05 $\pm$ 1.4e+04 & \underline{3.5e+05 $\pm$ 9.5e+03} & 4.3e+05 $\pm$ 1.4e+04 & \textbf{3.5e+05 $\pm$ 9.5e+03} \\
naval\_propulsion & \textbf{2.9e-03 $\pm$ 7.5e-04} & 0.011 $\pm$ 2.2e-03 & 6.5e-03 $\pm$ 2.3e-04 & 0.011 $\pm$ 2.2e-03 & \underline{3.5e-03 $\pm$ 3.5e-04} \\
parkinsons & 23.860 $\pm$ 1.124 & 27.437 $\pm$ 0.787 & \underline{23.297 $\pm$ 0.456} & 27.449 $\pm$ 0.784 & \textbf{23.159 $\pm$ 0.461} \\
powerplant & \textbf{18.145 $\pm$ 1.005} & 24.110 $\pm$ 1.141 & 18.714 $\pm$ 0.990 & 24.106 $\pm$ 1.140 & \underline{18.706 $\pm$ 1.002} \\
qsar & \textbf{4.419 $\pm$ 0.255} & 5.152 $\pm$ 0.278 & 4.875 $\pm$ 0.294 & 5.154 $\pm$ 0.278 & \underline{4.800 $\pm$ 0.286} \\
sulfur & \textbf{0.134 $\pm$ 8.1e-03} & 0.213 $\pm$ 0.015 & \underline{0.144 $\pm$ 0.010} & 0.213 $\pm$ 0.015 & 0.144 $\pm$ 0.010 \\
superconductor & \textbf{43.610 $\pm$ 1.937} & 56.594 $\pm$ 1.943 & 44.763 $\pm$ 1.388 & 56.591 $\pm$ 1.943 & \underline{44.731 $\pm$ 1.406} \\
\bottomrule
\end{tabular}
}
\end{table}

\clearpage

\section{CLEAR with PCS and CQR: results on real-world data}
\label{appendix:full-results}

This appendix presents the detailed quantitative results from the benchmark experiments conducted on the 17 real-world datasets, complementing the summary findings in the main paper (\autoref{sec:real-world-results}). As a reminder, we evaluate three distinct configurations (variants) of our proposed CLEAR method alongside the baseline approaches (PCS, ALEATORIC, ALEATORIC-R). The differences among these 3 variants are considered base models. For each variant, CLEAR is applied on the already trained PCS and uncalibrated ALEATORIC-R. To recap these variants:
\begin{itemize}
    \item \textbf{Variant (a):} Employs a quantile PCS approach using a diverse pool of quantile estimators (QRF, QXGB, Expectile GAM\footnote{Strictly speaking, Expectile GAM estimates expectiles rather than quantiles. While it is not a consistent estimator for quantiles in theory, mixing expectiles and quantiles may still be practical in applications.}) for estimating the conditional medians based on the bootstraps. The top-performing model type ($k=1$) on the validation set is selected. 1) Based on the selected model type, we compute the epistemic component \(\hat{q}^{\text{epi}}\) (via emprical quantiles over the estimated medians from the $b=m$ bootstraps) and the median \(\hat{f}\) (as emprical median over the estimated medians). 2) Using the same selected quantile model type, the aleatoric component \(\hat{q}^{\text{ale}}\) is derived from a bootstrapped ALEATORIC-R model.
    \item \textbf{Variant (b):} Similar to (a), but restricts the model pool for both PCS and the paired ALEATORIC-R exclusively to QXGB.
    \item \textbf{Variant (c):} Uses a standard PCS approach with mean-based regression models (e.g., Random Forest, XGBoost) for the epistemic component (via empirical quantiles over the estimated medians) and median prediction \(\hat{f}\) (as empirical median over the estimated means), selecting the top model ($k=1$). The aleatoric component is derived from a bootstrapped ALEATORIC-R model using QRF.
\end{itemize}

The subsequent subsections present the comprehensive results for each variant (a, b, c). All experiments were run across 10 different random seeds, and the results on the test set are presented. The subsequent tables report the average performance metric across these 10 seeds, along with the standard deviation ($\pm \sigma$), to indicate the variability associated with data splitting. For each variant, we provide tables detailing the metrics presented in \autoref{appendix:metrics} for all methods across all datasets at a 95\% nominal coverage level. Lower values are preferable for all metrics except PICP, which should ideally be close to the target of 0.95. Additionally, summary plots (\Cref{fig:var-a-nciw-quantileloss,fig:variant-b-clear,fig:variant-c-clear}) visualize the relative NCIW and Quantile Loss performance normalized against the respective CLEAR variant baseline. Additionally, only for variant (c), we provide \autoref{tab:combined_gamma_lambda_95_final_standard_variant_c} with the values of $\lambda$ and $\gamma_1$ to provide some insights into the aleatoric/epistemic allocations of CLEAR. Finally, \Cref{fig:variant-all-clear} (\autoref{appendix:comparing-clear-variants}) provides a direct comparison of the NCIW and Quantile Loss between the three CLEAR variants themselves, using variant (a) as the reference baseline.

\autoref{appendix:comparing-clear-variants} shows that the primary configuration of CLEAR presented in the main text, variant (a), leverages a diverse set of quantile models for robust epistemic uncertainty estimation and generally delivers the most consistent and superior performance. As detailed in \Cref{fig:variant-all-clear} in the appendix, CLEAR in variant (a) tends to marginally outperform the model in variant (b), which is restricted to QXGB, and variant (c), which utilizes standard mean-based PCS models. For instance, in variant (a), QXGB was selected in approximately 64\% of cases, QRF in 24\%, and ExpectileGAM in the remainder of cases. In variant (c), XGBoost was chosen about 70\% of the time, with other models (excluding Ridge) sharing the rest. The importance of this dynamic model selection is evident in datasets like \texttt{naval\_propulsion}, where variants (b) and (c) can struggle due to their fixed model choices (QXGB and QRF for the aleatoric part, respectively), whereas variant (a) can adapt by selecting, for example, ExpectileGAM. This highlights the stability advantage of CLEAR (a) and the benefit of PCS's dynamic model choice, especially when the aleatoric component can also adapt. Nevertheless, the CLEAR framework overall demonstrates considerable robustness even with these alternative model choices. For instance, variant (b) still provides NCIW reductions of approximately 17.5\% and 4.9\% over its corresponding PCS and ALEATORIC-R baselines, respectively (\autoref{fig:variant-b-clear}). A similar advantage is observed for variant (c) (\autoref{fig:variant-c-clear}), which reduces NCIW by about 7.9\% against ALEATORIC-R, the best-performing variant of CQR that uses our novel residual-based technique.

\paragraph{Detailed motivation behind variant (a).}
Overall, variant (a) achieves the best results (see \autoref{fig:variant-all-clear}). Therefore, we recommend to use CLEAR variant (a). The experiments in variant (a) are fair, since each of the competing methods uses the same base model, which is selected per data set based on the RMSE on the validation data set. See \autoref{appendix:results-variant-a} for the results.

\paragraph{Detailed motivation behind variant (b).}
Variant (b) is the simplest to understand, easiest to implement, and computationally cheapest variant and provides maximal fairness, making it particularly scientifically sound. Each method simply uses QXGB as base model without any model selection step. 
Strictly speaking, variant (b) is the only variant where \texttt{ALEATORIC} and \texttt{ALEATORIC-R} are fully conformal, since variant (b) uses the calibration data set only for calibration, while variant (a) and (c) reuse the validation set used for model selection as calibration data set. However, in practice, we observe this re-usage does not hurt calibration in any significant way (\autoref{tab:combined_picp_95_final_standard_variant_a} and \autoref{tab:combined_picp_95_final_standard_variant_c}). See \autoref{appendix:results-variant-b} for the results of variant (b).

\paragraph{Detailed motivation behind variant (c).}
Variant (c) uses the same set of base models as suggested by the authors of PCS uncertainty quantification as \citet{agarwal2025pcsuq}. They conducted extensive experiments showing that PCS uncertainty quantification substantially outperforms popular conformal baselines such as split conformal regression \citep{lei2018distribution}, Studentized conformal regression \citep{lei2018distribution}, and Majority Vote \citep{gasparin2024merginguncertaintysetsmajority} (by more than 20\% on average in terms of interval width).
 Our experiments showing that CLEAR (a) outperforms CLEAR (c) (\autoref{fig:variant-all-clear}) and CLEAR (c) outperforms PCS uncertainty quantification (c) (\autoref{appendix:results-variant-c}), together with the experiments by \cite{agarwal2025pcsuq}, strongly suggests that CLEAR clearly outperforms these popular conformal baselines. See \autoref{appendix:results-variant-c} for the results of variant (c).

 Overall, CLEAR shows the strongest performance across all variants (a), (b) and (c), demonstrating CLEAR's stability across different settings, data sets, and metrics.

\begin{table}[!htbp]
\centering
\caption{Dataset statistics where $d$ represents the number of variables, $n$ represents the number of observations, followed by the minimum, maximum, and range values for $y$.}
\label{tab:dataset_stats}
\small
\begin{tabular}{lrrrrr}
\toprule
Dataset & $n$ & $d$ & $y_{min}$ & $y_{max}$ & $y_{range}$ \\
\midrule
ailerons & 13,750 & 40 & -0.0036 & 0.00e+00 & 0.0036 \\
airfoil & 1,503 & 5 & 104.2040 & 140.1580 & 35.9540 \\
allstate & 5,000 & 1037 & 200.0000 & 3.31e+04 & 3.29e+04 \\
ca\textunderscore housing & 20,640 & 8 & 1.50e+04 & 5.00e+05 & 4.85e+05 \\
computer & 8,192 & 21 & 0.00e+00 & 99.0000 & 99.0000 \\
concrete & 1,030 & 8 & 2.3300 & 81.7500 & 79.4200 \\
elevator & 16,599 & 18 & 0.0120 & 0.0780 & 0.0660 \\
energy\textunderscore efficiency & 768 & 10 & 6.0100 & 43.1000 & 37.0900 \\
insurance & 1,338 & 8 & 1121.8739 & 6.26e+04 & 6.15e+04 \\
kin8nm & 8,192 & 8 & 0.0632 & 1.4585 & 1.3953 \\
miami\textunderscore housing & 13,932 & 28 & 7.20e+04 & 2.65e+06 & 2.58e+06 \\
naval\textunderscore propulsion & 11,934 & 24 & 0.0690 & 1.8320 & 1.7630 \\
parkinsons & 5,875 & 18 & 7.0000 & 54.9920 & 47.9920 \\
powerplant & 9,568 & 4 & 420.2600 & 495.7600 & 75.5000 \\
qsar & 5,742 & 500 & -6.2400 & 11.0000 & 17.2400 \\
sulfur & 10,081 & 5 & 0.00e+00 & 1.0000 & 1.0000 \\
superconductor & 21,263 & 79 & 3.25e-04 & 185.0000 & 184.9997 \\
\bottomrule
\end{tabular}
\end{table}

\clearpage

\subsection{Variant (a)}\label{appendix:results-variant-a}

\begin{table}[!htbp]
\centering
\caption{Variant (a) PICP at 95\% prediction intervals, aggregated across 10 seeds.}
\label{tab:combined_picp_95_final_standard_variant_a}
\small
\resizebox{\columnwidth}{!}{%
\begin{tabular}{lccccccc}
\toprule
Dataset & CLEAR & ALEATORIC & ALEATORIC-R & PCS-EPISTEMIC & Naive & $\gamma_1=1$ & $\lambda=1$ \\
\midrule
ailerons & 0.95 $\pm$ 0.00 & 0.95 $\pm$ 0.00 & 0.95 $\pm$ 0.00 & 0.95 $\pm$ 0.01 & 0.95 $\pm$ 0.01 & 0.95 $\pm$ 0.00 & 0.95 $\pm$ 0.00 \\
airfoil & 0.95 $\pm$ 0.01 & 0.95 $\pm$ 0.01 & 0.95 $\pm$ 0.01 & 0.95 $\pm$ 0.01 & 0.95 $\pm$ 0.02 & 0.95 $\pm$ 0.02 & 0.95 $\pm$ 0.01 \\
allstate & 0.95 $\pm$ 0.01 & 0.95 $\pm$ 0.01 & 0.95 $\pm$ 0.01 & 0.95 $\pm$ 0.01 & 0.95 $\pm$ 0.01 & 0.95 $\pm$ 0.01 & 0.95 $\pm$ 0.01 \\
ca\textunderscore housing & 0.95 $\pm$ 0.01 & 0.95 $\pm$ 0.01 & 0.95 $\pm$ 0.01 & 0.95 $\pm$ 0.01 & 0.95 $\pm$ 0.01 & 0.95 $\pm$ 0.01 & 0.95 $\pm$ 0.01 \\
computer & 0.95 $\pm$ 0.01 & 0.97 $\pm$ 0.01 & 0.95 $\pm$ 0.01 & 0.95 $\pm$ 0.01 & 0.96 $\pm$ 0.01 & 0.95 $\pm$ 0.01 & 0.96 $\pm$ 0.01 \\
concrete & 0.95 $\pm$ 0.02 & 0.96 $\pm$ 0.02 & 0.95 $\pm$ 0.02 & 0.95 $\pm$ 0.01 & 0.94 $\pm$ 0.03 & 0.95 $\pm$ 0.01 & 0.95 $\pm$ 0.01 \\
elevator & 0.95 $\pm$ 0.00 & 0.95 $\pm$ 0.01 & 0.95 $\pm$ 0.00 & 0.95 $\pm$ 0.00 & 0.95 $\pm$ 0.01 & 0.95 $\pm$ 0.00 & 0.95 $\pm$ 0.00 \\
energy\textunderscore efficiency & 0.95 $\pm$ 0.02 & 0.96 $\pm$ 0.02 & 0.95 $\pm$ 0.02 & 0.96 $\pm$ 0.01 & 0.95 $\pm$ 0.02 & 0.96 $\pm$ 0.01 & 0.96 $\pm$ 0.01 \\
insurance & 0.95 $\pm$ 0.02 & 0.96 $\pm$ 0.01 & 0.96 $\pm$ 0.02 & 0.95 $\pm$ 0.01 & 0.95 $\pm$ 0.02 & 0.96 $\pm$ 0.02 & 0.96 $\pm$ 0.01 \\
kin8nm & 0.95 $\pm$ 0.01 & 0.95 $\pm$ 0.01 & 0.95 $\pm$ 0.01 & 0.95 $\pm$ 0.01 & 0.95 $\pm$ 0.01 & 0.95 $\pm$ 0.01 & 0.95 $\pm$ 0.01 \\
miami\textunderscore housing & 0.95 $\pm$ 0.01 & 0.95 $\pm$ 0.00 & 0.95 $\pm$ 0.01 & 0.95 $\pm$ 0.01 & 0.95 $\pm$ 0.01 & 0.95 $\pm$ 0.00 & 0.95 $\pm$ 0.00 \\
naval\textunderscore propulsion & 0.95 $\pm$ 0.01 & 0.96 $\pm$ 0.01 & 0.95 $\pm$ 0.01 & 0.95 $\pm$ 0.01 & 0.95 $\pm$ 0.01 & 0.95 $\pm$ 0.01 & 0.95 $\pm$ 0.01 \\
parkinsons & 0.95 $\pm$ 0.01 & 0.96 $\pm$ 0.01 & 0.95 $\pm$ 0.01 & 0.95 $\pm$ 0.01 & 0.95 $\pm$ 0.01 & 0.95 $\pm$ 0.01 & 0.95 $\pm$ 0.01 \\
powerplant & 0.95 $\pm$ 0.01 & 0.95 $\pm$ 0.00 & 0.95 $\pm$ 0.01 & 0.95 $\pm$ 0.01 & 0.95 $\pm$ 0.00 & 0.95 $\pm$ 0.01 & 0.95 $\pm$ 0.01 \\
qsar & 0.95 $\pm$ 0.01 & 0.95 $\pm$ 0.01 & 0.95 $\pm$ 0.01 & 0.95 $\pm$ 0.01 & 0.95 $\pm$ 0.00 & 0.95 $\pm$ 0.01 & 0.95 $\pm$ 0.01 \\
sulfur & 0.95 $\pm$ 0.01 & 0.95 $\pm$ 0.01 & 0.95 $\pm$ 0.01 & 0.95 $\pm$ 0.00 & 0.95 $\pm$ 0.01 & 0.95 $\pm$ 0.01 & 0.95 $\pm$ 0.01 \\
superconductor & 0.95 $\pm$ 0.00 & 0.95 $\pm$ 0.00 & 0.95 $\pm$ 0.00 & 0.95 $\pm$ 0.00 & 0.95 $\pm$ 0.01 & 0.95 $\pm$ 0.00 & 0.95 $\pm$ 0.00 \\
\bottomrule
\end{tabular}
}
\end{table}
\begin{table}[!htbp]
\centering
\caption{Variant (a) NIW at 95\% prediction intervals, aggregated across 10 seeds.Values $\geq 100$ or $< 0.01$ are presented in scientific notation with 1 decimal place. \textbf{Bold} values (desirable) are the minimum for that dataset and metric, while the \underline{underlined} values indicate the second-best result. \textcolor{red}{Red} values are more than 33\% worse than the best result.}
\label{tab:combined_niw_95_final_standard_variant_a}
\small
\resizebox{\columnwidth}{!}{%
\begin{tabular}{lccccccc}
\toprule
Dataset & CLEAR & ALEATORIC & ALEATORIC-R & PCS-EPISTEMIC & Naive & $\gamma_1=1$ & $\lambda=1$ \\
\midrule
ailerons & 0.193 $\pm$ 0.016 & 0.208 $\pm$ 0.017 & 0.195 $\pm$ 0.017 & 0.215 $\pm$ 0.016 & 0.220 $\pm$ 0.021 & \underline{0.193 $\pm$ 0.016} & \textbf{0.193 $\pm$ 0.016} \\
airfoil & 0.207 $\pm$ 0.012 & \textcolor{red}{0.367 $\pm$ 0.018} & 0.215 $\pm$ 0.018 & 0.215 $\pm$ 0.012 & 0.246 $\pm$ 0.024 & \textbf{0.205 $\pm$ 0.012} & \underline{0.206 $\pm$ 0.012} \\
allstate & 0.260 $\pm$ 0.037 & 0.266 $\pm$ 0.045 & 0.259 $\pm$ 0.045 & 0.259 $\pm$ 0.044 & 0.327 $\pm$ 0.056 & \textbf{0.246 $\pm$ 0.042} & \underline{0.247 $\pm$ 0.042} \\
ca\textunderscore housing & 0.339 $\pm$ 0.008 & \textcolor{red}{0.760 $\pm$ 0.006} & 0.350 $\pm$ 0.006 & 0.354 $\pm$ 0.011 & 0.434 $\pm$ 0.020 & \underline{0.333 $\pm$ 0.006} & \textbf{0.331 $\pm$ 0.007} \\
computer & \underline{0.091 $\pm$ 0.008} & 0.099 $\pm$ 0.001 & 0.091 $\pm$ 0.010 & \textcolor{red}{18.269 $\pm$ 11.847} & 0.115 $\pm$ 0.007 & \textbf{0.091 $\pm$ 0.010} & 0.091 $\pm$ 0.007 \\
concrete & \textbf{0.249 $\pm$ 0.025} & \textcolor{red}{0.486 $\pm$ 0.028} & 0.265 $\pm$ 0.029 & \underline{0.249 $\pm$ 0.017} & 0.263 $\pm$ 0.028 & 0.251 $\pm$ 0.022 & 0.253 $\pm$ 0.023 \\
elevator & 0.156 $\pm$ 0.009 & \underline{0.144 $\pm$ 0.007} & 0.149 $\pm$ 0.007 & 0.178 $\pm$ 0.012 & \textbf{0.144 $\pm$ 0.008} & 0.156 $\pm$ 0.009 & 0.160 $\pm$ 0.008 \\
energy\textunderscore efficiency & \textbf{0.047 $\pm$ 0.005} & \textcolor{red}{0.212 $\pm$ 0.015} & \underline{0.050 $\pm$ 0.007} & 0.062 $\pm$ 0.006 & 0.062 $\pm$ 0.013 & 0.051 $\pm$ 0.004 & 0.053 $\pm$ 0.004 \\
insurance & \underline{0.309 $\pm$ 0.032} & 0.332 $\pm$ 0.049 & 0.329 $\pm$ 0.056 & \textcolor{red}{0.585 $\pm$ 0.167} & \textcolor{red}{0.478 $\pm$ 0.070} & 0.331 $\pm$ 0.077 & \textbf{0.295 $\pm$ 0.044} \\
kin8nm & 0.360 $\pm$ 0.012 & \textcolor{red}{0.490 $\pm$ 0.020} & 0.374 $\pm$ 0.014 & 0.373 $\pm$ 0.014 & 0.397 $\pm$ 0.017 & \textbf{0.359 $\pm$ 0.012} & \underline{0.360 $\pm$ 0.011} \\
miami\textunderscore housing & \underline{0.085 $\pm$ 0.002} & 0.105 $\pm$ 0.001 & 0.086 $\pm$ 0.004 & 0.098 $\pm$ 0.003 & \textcolor{red}{0.126 $\pm$ 0.006} & \textbf{0.084 $\pm$ 0.002} & 0.085 $\pm$ 0.002 \\
naval\textunderscore propulsion & 6.1e-04 $\pm$ 6.3e-06 & 6.2e-04 $\pm$ 6.5e-06 & \underline{6.0e-04 $\pm$ 6.4e-06} & 7.2e-04 $\pm$ 1.3e-05 & \textbf{6.0e-04 $\pm$ 7.0e-06} & 6.2e-04 $\pm$ 5.7e-06 & 6.1e-04 $\pm$ 5.6e-06 \\
parkinsons & \underline{0.227 $\pm$ 0.010} & \textcolor{red}{0.321 $\pm$ 0.013} & 0.249 $\pm$ 0.014 & \textcolor{red}{0.312 $\pm$ 0.021} & \textcolor{red}{0.303 $\pm$ 0.022} & 0.228 $\pm$ 0.008 & \textbf{0.225 $\pm$ 0.008} \\
powerplant & \textbf{0.170 $\pm$ 0.007} & 0.196 $\pm$ 0.009 & 0.180 $\pm$ 0.009 & 0.178 $\pm$ 0.007 & 0.183 $\pm$ 0.006 & 0.173 $\pm$ 0.007 & \underline{0.172 $\pm$ 0.007} \\
qsar & 0.363 $\pm$ 0.121 & \textcolor{red}{0.486 $\pm$ 0.160} & 0.386 $\pm$ 0.126 & 0.388 $\pm$ 0.131 & 0.420 $\pm$ 0.139 & \underline{0.362 $\pm$ 0.121} & \textbf{0.362 $\pm$ 0.121} \\
sulfur & 0.109 $\pm$ 0.010 & 0.112 $\pm$ 0.010 & \textbf{0.107 $\pm$ 0.010} & 0.131 $\pm$ 0.017 & 0.122 $\pm$ 0.015 & \underline{0.108 $\pm$ 0.011} & 0.108 $\pm$ 0.011 \\
superconductor & 0.196 $\pm$ 0.022 & 0.233 $\pm$ 0.025 & 0.196 $\pm$ 0.023 & 0.248 $\pm$ 0.028 & \textcolor{red}{0.329 $\pm$ 0.042} & \underline{0.194 $\pm$ 0.022} & \textbf{0.194 $\pm$ 0.021} \\
\bottomrule
\end{tabular}
}
\end{table}
\begin{table}[!htbp]
\centering
\caption{Variant (a) Quantile Loss at 95\% prediction intervals, aggregated across 10 seeds.Values $\geq 100$ or $< 0.01$ are presented in scientific notation with 1 decimal place. \textbf{Bold} values (desirable) are the minimum for that dataset and metric, while the \underline{underlined} values indicate the second-best result. \textcolor{red}{Red} values are more than 33\% worse than the best result.}
\label{tab:combined_quantileloss_95_final_standard_variant_a}
\small
\resizebox{\columnwidth}{!}{%
\begin{tabular}{lccccccc}
\toprule
Dataset & CLEAR & ALEATORIC & ALEATORIC-R & PCS-EPISTEMIC & Naive & $\gamma_1=1$ & $\lambda=1$ \\
\midrule
ailerons & 9.2e-06 $\pm$ 2.3e-07 & 1.0e-05 $\pm$ 2.8e-07 & 9.7e-06 $\pm$ 2.8e-07 & 1.0e-05 $\pm$ 2.2e-07 & 1.1e-05 $\pm$ 3.7e-07 & \underline{9.2e-06 $\pm$ 2.3e-07} & \textbf{9.2e-06 $\pm$ 2.3e-07} \\
airfoil & \textbf{0.109 $\pm$ 0.011} & \textcolor{red}{0.183 $\pm$ 0.013} & 0.123 $\pm$ 0.016 & 0.117 $\pm$ 0.012 & \textcolor{red}{0.147 $\pm$ 0.019} & \underline{0.109 $\pm$ 0.011} & 0.109 $\pm$ 0.011 \\
allstate & \textbf{1.1e+02 $\pm$ 7.835} & 1.3e+02 $\pm$ 8.197 & 1.3e+02 $\pm$ 7.925 & 1.2e+02 $\pm$ 7.770 & \textcolor{red}{1.7e+02 $\pm$ 13.696} & 1.1e+02 $\pm$ 7.563 & \underline{1.1e+02 $\pm$ 7.631} \\
ca\textunderscore housing & \textbf{2.8e+03 $\pm$ 59.700} & \textcolor{red}{4.8e+03 $\pm$ 25.753} & 3.0e+03 $\pm$ 73.283 & 3.2e+03 $\pm$ 93.396 & \textcolor{red}{4.0e+03 $\pm$ 1.2e+02} & \underline{2.8e+03 $\pm$ 66.414} & 2.8e+03 $\pm$ 70.545 \\
computer & \underline{0.147 $\pm$ 0.022} & 0.150 $\pm$ 0.012 & 0.160 $\pm$ 0.029 & \textcolor{red}{22.643 $\pm$ 14.662} & 0.192 $\pm$ 0.010 & \textbf{0.146 $\pm$ 0.020} & 0.147 $\pm$ 0.023 \\
concrete & 0.338 $\pm$ 0.040 & \textcolor{red}{0.546 $\pm$ 0.061} & 0.359 $\pm$ 0.053 & \textbf{0.327 $\pm$ 0.036} & 0.376 $\pm$ 0.058 & \underline{0.330 $\pm$ 0.037} & 0.331 $\pm$ 0.037 \\
elevator & 1.5e-04 $\pm$ 2.0e-06 & \textbf{1.4e-04 $\pm$ 2.9e-06} & \underline{1.5e-04 $\pm$ 2.4e-06} & 1.7e-04 $\pm$ 5.6e-06 & 1.6e-04 $\pm$ 5.1e-06 & 1.5e-04 $\pm$ 2.4e-06 & 1.5e-04 $\pm$ 2.0e-06 \\
energy\textunderscore efficiency & \textbf{0.031 $\pm$ 0.004} & \textcolor{red}{0.102 $\pm$ 0.008} & 0.033 $\pm$ 0.005 & 0.038 $\pm$ 0.003 & 0.041 $\pm$ 0.005 & \underline{0.033 $\pm$ 0.003} & 0.034 $\pm$ 0.003 \\
insurance & \textbf{3.5e+02 $\pm$ 61.616} & 3.6e+02 $\pm$ 39.928 & 3.6e+02 $\pm$ 54.512 & \textcolor{red}{5.8e+02 $\pm$ 1.0e+02} & 4.4e+02 $\pm$ 38.440 & 3.6e+02 $\pm$ 68.243 & \underline{3.5e+02 $\pm$ 62.851} \\
kin8nm & 7.2e-03 $\pm$ 1.7e-04 & 9.5e-03 $\pm$ 1.6e-04 & 7.7e-03 $\pm$ 2.7e-04 & 7.6e-03 $\pm$ 9.7e-05 & 8.3e-03 $\pm$ 2.5e-04 & \underline{7.2e-03 $\pm$ 1.6e-04} & \textbf{7.2e-03 $\pm$ 1.7e-04} \\
miami\textunderscore housing & 3.9e+03 $\pm$ 1.9e+02 & 5.1e+03 $\pm$ 3.1e+02 & \textcolor{red}{5.2e+03 $\pm$ 3.4e+02} & 4.2e+03 $\pm$ 1.6e+02 & \textcolor{red}{8.1e+03 $\pm$ 3.6e+02} & \textbf{3.9e+03 $\pm$ 1.9e+02} & \underline{3.9e+03 $\pm$ 1.8e+02} \\
naval\textunderscore propulsion & \underline{1.5e-05 $\pm$ 2.1e-07} & 1.5e-05 $\pm$ 2.6e-07 & 1.5e-05 $\pm$ 2.6e-07 & 1.7e-05 $\pm$ 2.7e-07 & 1.6e-05 $\pm$ 3.5e-07 & 1.5e-05 $\pm$ 2.1e-07 & \textbf{1.5e-05 $\pm$ 2.3e-07} \\
parkinsons & \textbf{0.178 $\pm$ 0.010} & 0.202 $\pm$ 0.007 & 0.215 $\pm$ 0.012 & 0.224 $\pm$ 0.012 & \textcolor{red}{0.251 $\pm$ 0.014} & 0.185 $\pm$ 0.011 & \underline{0.178 $\pm$ 0.010} \\
powerplant & \textbf{0.212 $\pm$ 0.012} & 0.228 $\pm$ 0.011 & 0.220 $\pm$ 0.011 & 0.221 $\pm$ 0.012 & 0.231 $\pm$ 0.010 & 0.213 $\pm$ 0.012 & \underline{0.213 $\pm$ 0.012} \\
qsar & \underline{0.049 $\pm$ 0.003} & 0.060 $\pm$ 0.003 & 0.052 $\pm$ 0.003 & 0.053 $\pm$ 0.003 & 0.057 $\pm$ 0.003 & \textbf{0.049 $\pm$ 0.003} & 0.049 $\pm$ 0.003 \\
sulfur & 2.0e-03 $\pm$ 1.4e-04 & 2.1e-03 $\pm$ 1.3e-04 & 2.3e-03 $\pm$ 1.1e-04 & 2.3e-03 $\pm$ 1.9e-04 & \textcolor{red}{3.5e-03 $\pm$ 2.0e-04} & \underline{2.0e-03 $\pm$ 1.4e-04} & \textbf{2.0e-03 $\pm$ 1.4e-04} \\
superconductor & \underline{0.490 $\pm$ 0.018} & 0.511 $\pm$ 0.014 & 0.538 $\pm$ 0.023 & 0.608 $\pm$ 0.023 & \textcolor{red}{0.861 $\pm$ 0.036} & \textbf{0.490 $\pm$ 0.018} & 0.491 $\pm$ 0.018 \\
\bottomrule
\end{tabular}
}
\end{table}
\begin{table}[!htbp]
\centering
\caption{Variant (a) NCIW at 95\% prediction intervals, aggregated across 10 seeds.Values $\geq 100$ or $< 0.01$ are presented in scientific notation with 1 decimal place. \textbf{Bold} values (desirable) are the minimum for that dataset and metric, while the \underline{underlined} values indicate the second-best result. \textcolor{red}{Red} values are more than 33\% worse than the best result.}
\label{tab:combined_nciw_95_final_standard_variant_a}
\small
\resizebox{\columnwidth}{!}{%
\begin{tabular}{lccccccc}
\toprule
Dataset & CLEAR & ALEATORIC & ALEATORIC-R & PCS-EPISTEMIC & Naive & $\gamma_1=1$ & $\lambda=1$ \\
\midrule
ailerons & 0.195 $\pm$ 0.017 & 0.209 $\pm$ 0.019 & 0.196 $\pm$ 0.017 & 0.217 $\pm$ 0.021 & 0.220 $\pm$ 0.020 & \underline{0.195 $\pm$ 0.017} & \textbf{0.195 $\pm$ 0.017} \\
airfoil & 0.203 $\pm$ 0.006 & \textcolor{red}{0.356 $\pm$ 0.016} & 0.216 $\pm$ 0.012 & 0.211 $\pm$ 0.007 & 0.246 $\pm$ 0.019 & \textbf{0.200 $\pm$ 0.007} & \underline{0.200 $\pm$ 0.006} \\
allstate & 0.252 $\pm$ 0.037 & 0.264 $\pm$ 0.042 & 0.262 $\pm$ 0.042 & 0.266 $\pm$ 0.051 & \textcolor{red}{0.328 $\pm$ 0.051} & \textbf{0.245 $\pm$ 0.042} & \underline{0.245 $\pm$ 0.041} \\
ca\textunderscore housing & 0.336 $\pm$ 0.008 & \textcolor{red}{0.760 $\pm$ 0.009} & 0.348 $\pm$ 0.010 & 0.354 $\pm$ 0.012 & \textcolor{red}{0.440 $\pm$ 0.016} & \underline{0.331 $\pm$ 0.007} & \textbf{0.328 $\pm$ 0.007} \\
computer & \textbf{0.091 $\pm$ 0.008} & 0.099 $\pm$ 0.001 & \underline{0.091 $\pm$ 0.011} & 0.112 $\pm$ 0.028 & 0.113 $\pm$ 0.008 & 0.092 $\pm$ 0.009 & 0.091 $\pm$ 0.007 \\
concrete & 0.246 $\pm$ 0.033 & \textcolor{red}{0.421 $\pm$ 0.079} & 0.259 $\pm$ 0.035 & 0.241 $\pm$ 0.025 & 0.264 $\pm$ 0.037 & \textbf{0.238 $\pm$ 0.027} & \underline{0.240 $\pm$ 0.026} \\
elevator & 0.155 $\pm$ 0.010 & \textbf{0.143 $\pm$ 0.008} & 0.148 $\pm$ 0.008 & 0.177 $\pm$ 0.011 & \underline{0.145 $\pm$ 0.010} & 0.155 $\pm$ 0.010 & 0.159 $\pm$ 0.010 \\
energy\textunderscore efficiency & \textbf{0.046 $\pm$ 0.005} & \textcolor{red}{0.196 $\pm$ 0.032} & \underline{0.050 $\pm$ 0.005} & 0.057 $\pm$ 0.004 & 0.060 $\pm$ 0.006 & 0.050 $\pm$ 0.005 & 0.051 $\pm$ 0.004 \\
insurance & 0.303 $\pm$ 0.029 & 0.284 $\pm$ 0.049 & 0.308 $\pm$ 0.043 & \textcolor{red}{0.530 $\pm$ 0.178} & \textcolor{red}{0.454 $\pm$ 0.102} & \underline{0.282 $\pm$ 0.049} & \textbf{0.271 $\pm$ 0.046} \\
kin8nm & \textbf{0.354 $\pm$ 0.008} & \textcolor{red}{0.482 $\pm$ 0.014} & 0.371 $\pm$ 0.009 & 0.371 $\pm$ 0.009 & 0.396 $\pm$ 0.013 & 0.355 $\pm$ 0.008 & \underline{0.355 $\pm$ 0.008} \\
miami\textunderscore housing & \underline{0.084 $\pm$ 0.003} & 0.106 $\pm$ 0.004 & 0.085 $\pm$ 0.003 & 0.098 $\pm$ 0.002 & \textcolor{red}{0.124 $\pm$ 0.006} & \textbf{0.083 $\pm$ 0.002} & 0.084 $\pm$ 0.002 \\
naval\textunderscore propulsion & 6.1e-04 $\pm$ 7.9e-06 & 6.1e-04 $\pm$ 7.4e-06 & \underline{6.0e-04 $\pm$ 5.5e-06} & 7.1e-04 $\pm$ 1.5e-05 & \textbf{5.9e-04 $\pm$ 6.1e-06} & 6.1e-04 $\pm$ 9.3e-06 & 6.1e-04 $\pm$ 6.4e-06 \\
parkinsons & \underline{0.225 $\pm$ 0.012} & \textcolor{red}{0.321 $\pm$ 0.013} & 0.247 $\pm$ 0.007 & \textcolor{red}{0.316 $\pm$ 0.032} & \textcolor{red}{0.302 $\pm$ 0.013} & 0.227 $\pm$ 0.007 & \textbf{0.224 $\pm$ 0.008} \\
powerplant & \textbf{0.173 $\pm$ 0.007} & 0.197 $\pm$ 0.009 & 0.182 $\pm$ 0.007 & 0.182 $\pm$ 0.007 & 0.187 $\pm$ 0.008 & 0.176 $\pm$ 0.007 & \underline{0.174 $\pm$ 0.007} \\
qsar & \textbf{0.360 $\pm$ 0.119} & 0.479 $\pm$ 0.156 & 0.389 $\pm$ 0.130 & 0.389 $\pm$ 0.134 & 0.421 $\pm$ 0.142 & 0.360 $\pm$ 0.120 & \underline{0.360 $\pm$ 0.120} \\
sulfur & 0.109 $\pm$ 0.010 & 0.111 $\pm$ 0.008 & \textbf{0.106 $\pm$ 0.009} & 0.128 $\pm$ 0.013 & 0.129 $\pm$ 0.011 & 0.108 $\pm$ 0.011 & \underline{0.107 $\pm$ 0.010} \\
superconductor & 0.195 $\pm$ 0.021 & 0.231 $\pm$ 0.025 & \textbf{0.194 $\pm$ 0.021} & 0.247 $\pm$ 0.028 & \textcolor{red}{0.314 $\pm$ 0.035} & \underline{0.194 $\pm$ 0.021} & 0.194 $\pm$ 0.021 \\
\bottomrule
\end{tabular}
}
\end{table}
\begin{table}[!htbp]
\centering
\caption{Variant (a) Interval Score Loss at 95\% prediction intervals, aggregated across 10 seeds.Values $\geq 100$ or $< 0.01$ are presented in scientific notation with 1 decimal place. \textbf{Bold} values (desirable) are the minimum for that dataset and metric, while the \underline{underlined} values indicate the second-best result. \textcolor{red}{Red} values are more than 33\% worse than the best result.}
\label{tab:combined_intervalscoreloss_95_final_standard_variant_a}
\small
\resizebox{\columnwidth}{!}{%
\begin{tabular}{lccccccc}
\toprule
Dataset & CLEAR & ALEATORIC & ALEATORIC-R & PCS-EPISTEMIC & Naive & $\gamma_1=1$ & $\lambda=1$ \\
\midrule
ailerons & 7.4e-04 $\pm$ 1.8e-05 & 8.1e-04 $\pm$ 2.2e-05 & 7.8e-04 $\pm$ 2.2e-05 & 8.0e-04 $\pm$ 1.8e-05 & 9.1e-04 $\pm$ 2.9e-05 & \underline{7.4e-04 $\pm$ 1.8e-05} & \textbf{7.4e-04 $\pm$ 1.8e-05} \\
airfoil & \textbf{8.717 $\pm$ 0.870} & \textcolor{red}{14.669 $\pm$ 1.007} & 9.843 $\pm$ 1.249 & 9.397 $\pm$ 0.977 & \textcolor{red}{11.761 $\pm$ 1.548} & \underline{8.727 $\pm$ 0.902} & 8.730 $\pm$ 0.892 \\
allstate & \textbf{9.1e+03 $\pm$ 6.3e+02} & 1.1e+04 $\pm$ 6.6e+02 & 1.0e+04 $\pm$ 6.3e+02 & 9.9e+03 $\pm$ 6.2e+02 & \textcolor{red}{1.3e+04 $\pm$ 1.1e+03} & 9.2e+03 $\pm$ 6.1e+02 & \underline{9.2e+03 $\pm$ 6.1e+02} \\
ca\textunderscore housing & \textbf{2.2e+05 $\pm$ 4.8e+03} & \textcolor{red}{3.9e+05 $\pm$ 2.1e+03} & 2.4e+05 $\pm$ 5.9e+03 & 2.6e+05 $\pm$ 7.5e+03 & \textcolor{red}{3.2e+05 $\pm$ 9.7e+03} & \underline{2.2e+05 $\pm$ 5.3e+03} & 2.2e+05 $\pm$ 5.6e+03 \\
computer & \underline{11.741 $\pm$ 1.723} & 12.023 $\pm$ 0.924 & 12.761 $\pm$ 2.285 & \textcolor{red}{1.8e+03 $\pm$ 1.2e+03} & 15.360 $\pm$ 0.785 & \textbf{11.667 $\pm$ 1.598} & 11.761 $\pm$ 1.841 \\
concrete & 27.020 $\pm$ 3.202 & \textcolor{red}{43.690 $\pm$ 4.842} & 28.684 $\pm$ 4.251 & \textbf{26.165 $\pm$ 2.857} & 30.054 $\pm$ 4.640 & \underline{26.365 $\pm$ 2.976} & 26.445 $\pm$ 2.996 \\
elevator & 0.012 $\pm$ 0.000 & \textbf{0.011 $\pm$ 0.000} & \underline{0.012 $\pm$ 0.000} & 0.014 $\pm$ 0.000 & 0.013 $\pm$ 0.000 & 0.012 $\pm$ 0.000 & 0.012 $\pm$ 0.000 \\
energy\textunderscore efficiency & \textbf{2.465 $\pm$ 0.347} & \textcolor{red}{8.199 $\pm$ 0.672} & 2.671 $\pm$ 0.416 & 3.058 $\pm$ 0.217 & 3.253 $\pm$ 0.388 & \underline{2.667 $\pm$ 0.280} & 2.721 $\pm$ 0.270 \\
insurance & \textbf{2.8e+04 $\pm$ 4.9e+03} & 2.9e+04 $\pm$ 3.2e+03 & 2.8e+04 $\pm$ 4.4e+03 & \textcolor{red}{4.7e+04 $\pm$ 8.1e+03} & 3.6e+04 $\pm$ 3.1e+03 & 2.9e+04 $\pm$ 5.5e+03 & \underline{2.8e+04 $\pm$ 5.0e+03} \\
kin8nm & 0.577 $\pm$ 0.014 & 0.760 $\pm$ 0.013 & 0.616 $\pm$ 0.021 & 0.607 $\pm$ 0.008 & 0.666 $\pm$ 0.020 & \underline{0.577 $\pm$ 0.013} & \textbf{0.577 $\pm$ 0.013} \\
miami\textunderscore housing & 3.1e+05 $\pm$ 1.5e+04 & 4.1e+05 $\pm$ 2.5e+04 & \textcolor{red}{4.2e+05 $\pm$ 2.7e+04} & 3.3e+05 $\pm$ 1.3e+04 & \textcolor{red}{6.5e+05 $\pm$ 2.9e+04} & \textbf{3.1e+05 $\pm$ 1.5e+04} & \underline{3.1e+05 $\pm$ 1.5e+04} \\
naval\textunderscore propulsion & \underline{1.2e-03 $\pm$ 1.7e-05} & 1.2e-03 $\pm$ 2.1e-05 & 1.2e-03 $\pm$ 2.1e-05 & 1.4e-03 $\pm$ 2.2e-05 & 1.3e-03 $\pm$ 2.8e-05 & 1.2e-03 $\pm$ 1.7e-05 & \textbf{1.2e-03 $\pm$ 1.8e-05} \\
parkinsons & \textbf{14.221 $\pm$ 0.792} & 16.142 $\pm$ 0.582 & 17.188 $\pm$ 0.985 & 17.947 $\pm$ 0.957 & \textcolor{red}{20.088 $\pm$ 1.130} & 14.762 $\pm$ 0.882 & \underline{14.259 $\pm$ 0.786} \\
powerplant & \textbf{16.993 $\pm$ 0.937} & 18.211 $\pm$ 0.897 & 17.577 $\pm$ 0.884 & 17.715 $\pm$ 0.965 & 18.501 $\pm$ 0.815 & 17.064 $\pm$ 0.951 & \underline{17.027 $\pm$ 0.954} \\
qsar & \underline{3.951 $\pm$ 0.230} & 4.795 $\pm$ 0.219 & 4.187 $\pm$ 0.231 & 4.249 $\pm$ 0.217 & 4.558 $\pm$ 0.216 & \textbf{3.950 $\pm$ 0.225} & 3.953 $\pm$ 0.224 \\
sulfur & 0.161 $\pm$ 0.011 & 0.172 $\pm$ 0.011 & 0.187 $\pm$ 0.008 & 0.186 $\pm$ 0.015 & \textcolor{red}{0.280 $\pm$ 0.016} & \underline{0.161 $\pm$ 0.011} & \textbf{0.161 $\pm$ 0.011} \\
superconductor & \underline{39.217 $\pm$ 1.419} & 40.851 $\pm$ 1.157 & 43.049 $\pm$ 1.848 & 48.630 $\pm$ 1.867 & \textcolor{red}{68.891 $\pm$ 2.882} & \textbf{39.182 $\pm$ 1.438} & 39.296 $\pm$ 1.444 \\
\bottomrule
\end{tabular}
}
\end{table}

\clearpage

\subsection{Variant (b)}
\label{appendix:results-variant-b}

\begin{figure}[!htbp]
    \centering
    \includegraphics[width=1.0\linewidth]{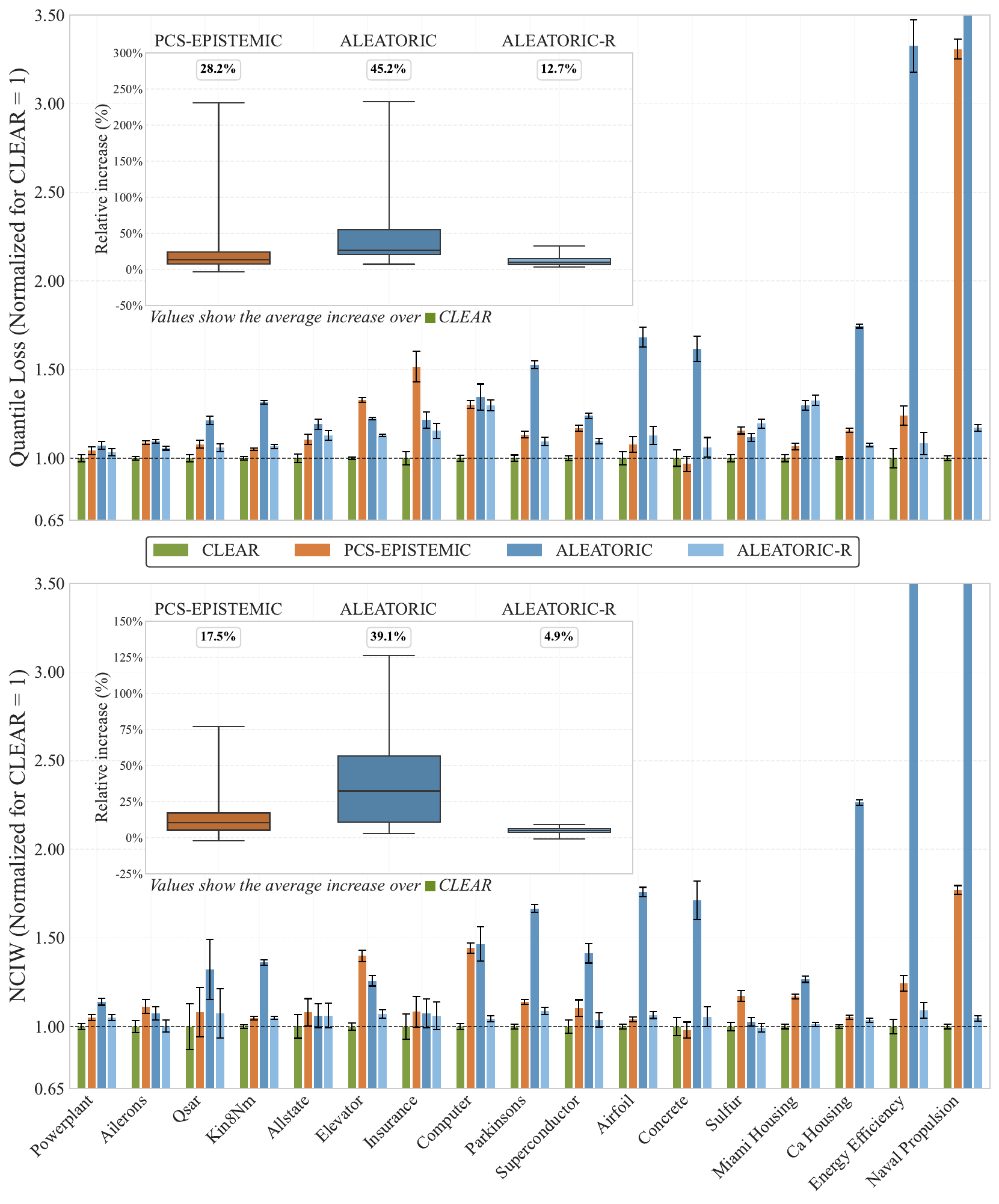}
    \caption{Quantile loss and NCIW performance of different methods (CLEAR, PCS, ALEATORIC, ALEATORIC-R) for variant (b), over 10 seeds normalized relative to CLEAR (baseline = 1.0). Lower values indicate better performance. The inset boxplot shows the \% improvement relative to the CLEAR baseline $\pm 1\sigma$. Values inside each subplot represent the mean improvement across all datasets.}
    \label{fig:variant-b-clear}
\end{figure}

\begin{table}[!htbp]
\centering
\caption{Variant (b) PICP at 95\% prediction intervals, aggregated across 10 seeds.}
\label{tab:combined_picp_95_final_standard_variant_b}
\small
\resizebox{\columnwidth}{!}{%
\begin{tabular}{lccccccc}
\toprule
Dataset & CLEAR & ALEATORIC & ALEATORIC-R & PCS-EPISTEMIC & Naive & $\gamma_1=1$ & $\lambda=1$ \\
\midrule
ailerons & 0.95 $\pm$ 0.00 & 0.95 $\pm$ 0.00 & 0.95 $\pm$ 0.00 & 0.95 $\pm$ 0.01 & 0.95 $\pm$ 0.01 & 0.95 $\pm$ 0.00 & 0.95 $\pm$ 0.00 \\
airfoil & 0.95 $\pm$ 0.01 & 0.95 $\pm$ 0.01 & 0.95 $\pm$ 0.01 & 0.95 $\pm$ 0.01 & 0.95 $\pm$ 0.02 & 0.95 $\pm$ 0.02 & 0.95 $\pm$ 0.01 \\
allstate & 0.95 $\pm$ 0.01 & 0.95 $\pm$ 0.01 & 0.95 $\pm$ 0.01 & 0.95 $\pm$ 0.01 & 0.95 $\pm$ 0.01 & 0.95 $\pm$ 0.01 & 0.95 $\pm$ 0.01 \\
ca\textunderscore housing & 0.95 $\pm$ 0.01 & 0.95 $\pm$ 0.01 & 0.95 $\pm$ 0.01 & 0.95 $\pm$ 0.01 & 0.95 $\pm$ 0.01 & 0.95 $\pm$ 0.01 & 0.95 $\pm$ 0.01 \\
computer & 0.95 $\pm$ 0.01 & 0.95 $\pm$ 0.01 & 0.95 $\pm$ 0.01 & 0.95 $\pm$ 0.01 & 0.95 $\pm$ 0.01 & 0.95 $\pm$ 0.01 & 0.95 $\pm$ 0.01 \\
concrete & 0.95 $\pm$ 0.02 & 0.96 $\pm$ 0.02 & 0.95 $\pm$ 0.02 & 0.95 $\pm$ 0.01 & 0.94 $\pm$ 0.03 & 0.95 $\pm$ 0.01 & 0.95 $\pm$ 0.01 \\
elevator & 0.95 $\pm$ 0.00 & 0.95 $\pm$ 0.00 & 0.95 $\pm$ 0.01 & 0.95 $\pm$ 0.00 & 0.95 $\pm$ 0.01 & 0.95 $\pm$ 0.00 & 0.95 $\pm$ 0.00 \\
energy\textunderscore efficiency & 0.95 $\pm$ 0.02 & 0.96 $\pm$ 0.02 & 0.95 $\pm$ 0.02 & 0.96 $\pm$ 0.01 & 0.95 $\pm$ 0.02 & 0.96 $\pm$ 0.01 & 0.96 $\pm$ 0.01 \\
insurance & 0.95 $\pm$ 0.01 & 0.96 $\pm$ 0.01 & 0.96 $\pm$ 0.02 & 0.95 $\pm$ 0.01 & 0.95 $\pm$ 0.02 & 0.96 $\pm$ 0.01 & 0.96 $\pm$ 0.01 \\
kin8nm & 0.95 $\pm$ 0.01 & 0.95 $\pm$ 0.01 & 0.95 $\pm$ 0.01 & 0.95 $\pm$ 0.01 & 0.95 $\pm$ 0.01 & 0.95 $\pm$ 0.01 & 0.95 $\pm$ 0.01 \\
miami\textunderscore housing & 0.95 $\pm$ 0.01 & 0.95 $\pm$ 0.00 & 0.95 $\pm$ 0.01 & 0.95 $\pm$ 0.01 & 0.95 $\pm$ 0.01 & 0.95 $\pm$ 0.00 & 0.95 $\pm$ 0.00 \\
naval\textunderscore propulsion & 0.95 $\pm$ 0.01 & 0.95 $\pm$ 0.00 & 0.95 $\pm$ 0.01 & 1.00 $\pm$ 0.00 & 0.95 $\pm$ 0.00 & 0.95 $\pm$ 0.01 & 0.95 $\pm$ 0.00 \\
parkinsons & 0.95 $\pm$ 0.01 & 0.95 $\pm$ 0.01 & 0.95 $\pm$ 0.01 & 0.95 $\pm$ 0.01 & 0.95 $\pm$ 0.01 & 0.95 $\pm$ 0.01 & 0.95 $\pm$ 0.01 \\
powerplant & 0.95 $\pm$ 0.01 & 0.95 $\pm$ 0.00 & 0.95 $\pm$ 0.01 & 0.95 $\pm$ 0.01 & 0.95 $\pm$ 0.00 & 0.95 $\pm$ 0.01 & 0.95 $\pm$ 0.01 \\
qsar & 0.95 $\pm$ 0.01 & 0.95 $\pm$ 0.01 & 0.95 $\pm$ 0.01 & 0.95 $\pm$ 0.01 & 0.95 $\pm$ 0.00 & 0.95 $\pm$ 0.01 & 0.95 $\pm$ 0.01 \\
sulfur & 0.95 $\pm$ 0.01 & 0.95 $\pm$ 0.01 & 0.95 $\pm$ 0.01 & 0.95 $\pm$ 0.00 & 0.95 $\pm$ 0.01 & 0.95 $\pm$ 0.01 & 0.95 $\pm$ 0.01 \\
superconductor & 0.95 $\pm$ 0.00 & 0.95 $\pm$ 0.01 & 0.95 $\pm$ 0.00 & 0.95 $\pm$ 0.00 & 0.95 $\pm$ 0.00 & 0.95 $\pm$ 0.00 & 0.95 $\pm$ 0.00 \\
\bottomrule
\end{tabular}
}
\end{table}
\begin{table}[!htbp]
\centering
\caption{Variant (b) NIW at 95\% prediction intervals, aggregated across 10 seeds.Values $\geq 100$ or $< 0.01$ are presented in scientific notation with 1 decimal place. \textbf{Bold} values (desirable) are the minimum for that dataset and metric, while the \underline{underlined} values indicate the second-best result. \textcolor{red}{Red} values are more than 33\% worse than the best result.}
\label{tab:combined_niw_95_final_standard_variant_b}
\small
\resizebox{\columnwidth}{!}{%
\begin{tabular}{lccccccc}
\toprule
Dataset & CLEAR & ALEATORIC & ALEATORIC-R & PCS-EPISTEMIC & Naive & $\gamma_1=1$ & $\lambda=1$ \\
\midrule
ailerons & 0.193 $\pm$ 0.016 & 0.208 $\pm$ 0.017 & 0.195 $\pm$ 0.017 & 0.215 $\pm$ 0.016 & 0.220 $\pm$ 0.021 & \underline{0.193 $\pm$ 0.016} & \textbf{0.193 $\pm$ 0.016} \\
airfoil & 0.207 $\pm$ 0.012 & \textcolor{red}{0.367 $\pm$ 0.018} & 0.215 $\pm$ 0.018 & 0.215 $\pm$ 0.012 & 0.246 $\pm$ 0.024 & \textbf{0.205 $\pm$ 0.012} & \underline{0.206 $\pm$ 0.012} \\
allstate & 0.256 $\pm$ 0.045 & 0.265 $\pm$ 0.047 & 0.260 $\pm$ 0.047 & 0.263 $\pm$ 0.053 & \textcolor{red}{0.325 $\pm$ 0.055} & \textbf{0.243 $\pm$ 0.044} & \underline{0.245 $\pm$ 0.046} \\
ca\textunderscore housing & 0.339 $\pm$ 0.008 & \textcolor{red}{0.760 $\pm$ 0.006} & 0.350 $\pm$ 0.006 & 0.354 $\pm$ 0.011 & 0.434 $\pm$ 0.020 & \underline{0.333 $\pm$ 0.006} & \textbf{0.331 $\pm$ 0.007} \\
computer & \textbf{0.099 $\pm$ 0.006} & \textcolor{red}{0.143 $\pm$ 0.029} & 0.104 $\pm$ 0.005 & \textcolor{red}{0.142 $\pm$ 0.009} & 0.113 $\pm$ 0.006 & \underline{0.101 $\pm$ 0.005} & 0.104 $\pm$ 0.005 \\
concrete & \textbf{0.249 $\pm$ 0.025} & \textcolor{red}{0.486 $\pm$ 0.028} & 0.265 $\pm$ 0.029 & \underline{0.249 $\pm$ 0.017} & 0.263 $\pm$ 0.028 & 0.251 $\pm$ 0.022 & 0.253 $\pm$ 0.023 \\
elevator & 0.137 $\pm$ 0.007 & 0.171 $\pm$ 0.010 & 0.146 $\pm$ 0.008 & \textcolor{red}{0.193 $\pm$ 0.013} & 0.169 $\pm$ 0.010 & \textbf{0.136 $\pm$ 0.008} & \underline{0.137 $\pm$ 0.008} \\
energy\textunderscore efficiency & \textbf{0.047 $\pm$ 0.005} & \textcolor{red}{0.212 $\pm$ 0.015} & \underline{0.050 $\pm$ 0.007} & 0.062 $\pm$ 0.006 & 0.062 $\pm$ 0.013 & 0.051 $\pm$ 0.004 & 0.053 $\pm$ 0.004 \\
insurance & \underline{0.329 $\pm$ 0.045} & \textcolor{red}{0.407 $\pm$ 0.064} & 0.381 $\pm$ 0.055 & \textcolor{red}{0.434 $\pm$ 0.179} & \textcolor{red}{0.478 $\pm$ 0.070} & 0.339 $\pm$ 0.104 & \textbf{0.306 $\pm$ 0.058} \\
kin8nm & 0.360 $\pm$ 0.012 & \textcolor{red}{0.490 $\pm$ 0.020} & 0.374 $\pm$ 0.014 & 0.373 $\pm$ 0.014 & 0.397 $\pm$ 0.017 & \textbf{0.359 $\pm$ 0.012} & \underline{0.360 $\pm$ 0.011} \\
miami\textunderscore housing & \underline{0.085 $\pm$ 0.002} & 0.105 $\pm$ 0.001 & 0.086 $\pm$ 0.004 & 0.098 $\pm$ 0.003 & \textcolor{red}{0.126 $\pm$ 0.006} & \textbf{0.084 $\pm$ 0.002} & 0.085 $\pm$ 0.002 \\
naval\textunderscore propulsion & \textbf{1.9e-03 $\pm$ 7.3e-05} & \textcolor{red}{0.222 $\pm$ 0.003} & \underline{1.9e-03 $\pm$ 9.7e-05} & \textcolor{red}{8.1e-03 $\pm$ 4.1e-04} & 2.4e-03 $\pm$ 1.4e-04 & 2.1e-03 $\pm$ 8.1e-05 & \textcolor{red}{2.8e-03 $\pm$ 1.3e-04} \\
parkinsons & \textbf{0.281 $\pm$ 0.009} & \textcolor{red}{0.483 $\pm$ 0.006} & 0.302 $\pm$ 0.012 & 0.325 $\pm$ 0.011 & 0.345 $\pm$ 0.011 & 0.282 $\pm$ 0.009 & \underline{0.281 $\pm$ 0.009} \\
powerplant & \textbf{0.170 $\pm$ 0.007} & 0.196 $\pm$ 0.009 & 0.180 $\pm$ 0.009 & 0.178 $\pm$ 0.007 & 0.183 $\pm$ 0.006 & 0.173 $\pm$ 0.007 & \underline{0.172 $\pm$ 0.007} \\
qsar & 0.366 $\pm$ 0.123 & \textcolor{red}{0.486 $\pm$ 0.160} & 0.387 $\pm$ 0.127 & 0.393 $\pm$ 0.134 & 0.421 $\pm$ 0.139 & \underline{0.362 $\pm$ 0.121} & \textbf{0.362 $\pm$ 0.121} \\
sulfur & 0.108 $\pm$ 0.009 & 0.110 $\pm$ 0.008 & 0.107 $\pm$ 0.011 & 0.129 $\pm$ 0.014 & 0.122 $\pm$ 0.015 & \textbf{0.106 $\pm$ 0.009} & \underline{0.106 $\pm$ 0.009} \\
superconductor & \underline{0.219 $\pm$ 0.023} & \textcolor{red}{0.309 $\pm$ 0.033} & 0.228 $\pm$ 0.026 & 0.243 $\pm$ 0.027 & \textcolor{red}{0.354 $\pm$ 0.042} & 0.223 $\pm$ 0.025 & \textbf{0.216 $\pm$ 0.024} \\
\bottomrule
\end{tabular}
}
\end{table}
\begin{table}[!htbp]
\centering
\caption{Variant (b) Quantile Loss at 95\% prediction intervals, aggregated across 10 seeds.Values $\geq 100$ or $< 0.01$ are presented in scientific notation with 1 decimal place. \textbf{Bold} values (desirable) are the minimum for that dataset and metric, while the \underline{underlined} values indicate the second-best result. \textcolor{red}{Red} values are more than 33\% worse than the best result.}
\label{tab:combined_quantileloss_95_final_standard_variant_b}
\small
\resizebox{\columnwidth}{!}{%
\begin{tabular}{lccccccc}
\toprule
Dataset & CLEAR & ALEATORIC & ALEATORIC-R & PCS-EPISTEMIC & Naive & $\gamma_1=1$ & $\lambda=1$ \\
\midrule
ailerons & 9.2e-06 $\pm$ 2.3e-07 & 1.0e-05 $\pm$ 2.8e-07 & 9.7e-06 $\pm$ 2.8e-07 & 1.0e-05 $\pm$ 2.2e-07 & 1.1e-05 $\pm$ 3.7e-07 & \underline{9.2e-06 $\pm$ 2.3e-07} & \textbf{9.2e-06 $\pm$ 2.3e-07} \\
airfoil & \textbf{0.109 $\pm$ 0.011} & \textcolor{red}{0.183 $\pm$ 0.013} & 0.123 $\pm$ 0.016 & 0.117 $\pm$ 0.012 & \textcolor{red}{0.147 $\pm$ 0.019} & \underline{0.109 $\pm$ 0.011} & 0.109 $\pm$ 0.011 \\
allstate & \textbf{1.1e+02 $\pm$ 7.318} & 1.4e+02 $\pm$ 8.321 & 1.3e+02 $\pm$ 8.046 & 1.3e+02 $\pm$ 8.557 & \textcolor{red}{1.7e+02 $\pm$ 10.782} & 1.2e+02 $\pm$ 7.533 & \underline{1.2e+02 $\pm$ 7.762} \\
ca\textunderscore housing & \textbf{2.8e+03 $\pm$ 59.700} & \textcolor{red}{4.8e+03 $\pm$ 25.753} & 3.0e+03 $\pm$ 73.283 & 3.2e+03 $\pm$ 93.396 & \textcolor{red}{4.0e+03 $\pm$ 1.2e+02} & \underline{2.8e+03 $\pm$ 66.414} & 2.8e+03 $\pm$ 70.545 \\
computer & \underline{0.159 $\pm$ 0.007} & \textcolor{red}{0.214 $\pm$ 0.041} & 0.206 $\pm$ 0.015 & 0.207 $\pm$ 0.009 & \textcolor{red}{0.234 $\pm$ 0.019} & \textbf{0.159 $\pm$ 0.007} & 0.161 $\pm$ 0.008 \\
concrete & 0.338 $\pm$ 0.040 & \textcolor{red}{0.546 $\pm$ 0.061} & 0.359 $\pm$ 0.053 & \textbf{0.327 $\pm$ 0.036} & 0.376 $\pm$ 0.058 & \underline{0.330 $\pm$ 0.037} & 0.331 $\pm$ 0.037 \\
elevator & \textbf{1.4e-04 $\pm$ 2.0e-06} & 1.8e-04 $\pm$ 3.0e-06 & 1.6e-04 $\pm$ 3.1e-06 & 1.9e-04 $\pm$ 6.2e-06 & \textcolor{red}{2.1e-04 $\pm$ 5.5e-06} & \underline{1.4e-04 $\pm$ 2.0e-06} & 1.4e-04 $\pm$ 2.1e-06 \\
energy\textunderscore efficiency & \textbf{0.031 $\pm$ 0.004} & \textcolor{red}{0.102 $\pm$ 0.008} & 0.033 $\pm$ 0.005 & 0.038 $\pm$ 0.003 & 0.041 $\pm$ 0.005 & \underline{0.033 $\pm$ 0.003} & 0.034 $\pm$ 0.003 \\
insurance & \textbf{3.2e+02 $\pm$ 31.593} & 3.9e+02 $\pm$ 36.904 & 3.7e+02 $\pm$ 35.107 & \textcolor{red}{4.9e+02 $\pm$ 88.647} & \textcolor{red}{4.4e+02 $\pm$ 34.159} & 3.7e+02 $\pm$ 62.380 & \underline{3.4e+02 $\pm$ 40.487} \\
kin8nm & 7.2e-03 $\pm$ 1.7e-04 & 9.5e-03 $\pm$ 1.6e-04 & 7.7e-03 $\pm$ 2.7e-04 & 7.6e-03 $\pm$ 9.7e-05 & 8.3e-03 $\pm$ 2.5e-04 & \underline{7.2e-03 $\pm$ 1.6e-04} & \textbf{7.2e-03 $\pm$ 1.7e-04} \\
miami\textunderscore housing & 3.9e+03 $\pm$ 1.9e+02 & 5.1e+03 $\pm$ 3.1e+02 & \textcolor{red}{5.2e+03 $\pm$ 3.4e+02} & 4.2e+03 $\pm$ 1.6e+02 & \textcolor{red}{8.1e+03 $\pm$ 3.6e+02} & \textbf{3.9e+03 $\pm$ 1.9e+02} & \underline{3.9e+03 $\pm$ 1.8e+02} \\
naval\textunderscore propulsion & \textbf{5.4e-05 $\pm$ 1.9e-06} & \textcolor{red}{4.9e-03 $\pm$ 7.0e-05} & 6.4e-05 $\pm$ 2.7e-06 & \textcolor{red}{1.8e-04 $\pm$ 9.1e-06} & \textcolor{red}{8.8e-05 $\pm$ 5.9e-06} & \underline{6.0e-05 $\pm$ 1.8e-06} & \textcolor{red}{7.5e-05 $\pm$ 2.7e-06} \\
parkinsons & 0.209 $\pm$ 0.010 & \textcolor{red}{0.319 $\pm$ 0.009} & 0.228 $\pm$ 0.014 & 0.237 $\pm$ 0.008 & 0.265 $\pm$ 0.013 & \underline{0.209 $\pm$ 0.010} & \textbf{0.209 $\pm$ 0.009} \\
powerplant & \textbf{0.212 $\pm$ 0.012} & 0.228 $\pm$ 0.011 & 0.220 $\pm$ 0.011 & 0.221 $\pm$ 0.012 & 0.231 $\pm$ 0.010 & 0.213 $\pm$ 0.012 & \underline{0.213 $\pm$ 0.012} \\
qsar & 0.049 $\pm$ 0.003 & 0.060 $\pm$ 0.003 & 0.052 $\pm$ 0.003 & 0.053 $\pm$ 0.003 & 0.057 $\pm$ 0.003 & \underline{0.049 $\pm$ 0.003} & \textbf{0.049 $\pm$ 0.003} \\
sulfur & \textbf{1.9e-03 $\pm$ 9.6e-05} & 2.2e-03 $\pm$ 1.1e-04 & 2.3e-03 $\pm$ 1.4e-04 & 2.2e-03 $\pm$ 8.6e-05 & \textcolor{red}{3.5e-03 $\pm$ 2.4e-04} & 1.9e-03 $\pm$ 8.2e-05 & \underline{1.9e-03 $\pm$ 8.6e-05} \\
superconductor & \underline{0.523 $\pm$ 0.018} & 0.648 $\pm$ 0.020 & 0.573 $\pm$ 0.024 & 0.611 $\pm$ 0.025 & \textcolor{red}{0.892 $\pm$ 0.035} & \textbf{0.522 $\pm$ 0.018} & 0.523 $\pm$ 0.020 \\
\bottomrule
\end{tabular}
}
\end{table}
\begin{table}[!htbp]
\centering
\caption{Variant (b) NCIW at 95\% prediction intervals, aggregated across 10 seeds.Values $\geq 100$ or $< 0.01$ are presented in scientific notation with 1 decimal place. \textbf{Bold} values (desirable) are the minimum for that dataset and metric, while the \underline{underlined} values indicate the second-best result. \textcolor{red}{Red} values are more than 33\% worse than the best result.}
\label{tab:combined_nciw_95_final_standard_variant_b}
\small
\resizebox{\columnwidth}{!}{%
\begin{tabular}{lccccccc}
\toprule
Dataset & CLEAR & ALEATORIC & ALEATORIC-R & PCS-EPISTEMIC & Naive & $\gamma_1=1$ & $\lambda=1$ \\
\midrule
ailerons & 0.195 $\pm$ 0.017 & 0.209 $\pm$ 0.019 & 0.196 $\pm$ 0.017 & 0.217 $\pm$ 0.021 & 0.220 $\pm$ 0.020 & \underline{0.195 $\pm$ 0.017} & \textbf{0.195 $\pm$ 0.017} \\
airfoil & 0.203 $\pm$ 0.006 & \textcolor{red}{0.356 $\pm$ 0.016} & 0.216 $\pm$ 0.012 & 0.211 $\pm$ 0.007 & 0.246 $\pm$ 0.019 & \textbf{0.200 $\pm$ 0.007} & \underline{0.200 $\pm$ 0.006} \\
allstate & 0.250 $\pm$ 0.043 & 0.265 $\pm$ 0.043 & 0.266 $\pm$ 0.044 & 0.270 $\pm$ 0.052 & \textcolor{red}{0.329 $\pm$ 0.053} & \underline{0.245 $\pm$ 0.045} & \textbf{0.244 $\pm$ 0.044} \\
ca\textunderscore housing & 0.336 $\pm$ 0.008 & \textcolor{red}{0.760 $\pm$ 0.009} & 0.348 $\pm$ 0.010 & 0.354 $\pm$ 0.012 & \textcolor{red}{0.440 $\pm$ 0.016} & \underline{0.331 $\pm$ 0.007} & \textbf{0.328 $\pm$ 0.007} \\
computer & \textbf{0.098 $\pm$ 0.005} & \textcolor{red}{0.144 $\pm$ 0.034} & 0.103 $\pm$ 0.003 & \textcolor{red}{0.142 $\pm$ 0.008} & 0.111 $\pm$ 0.005 & \underline{0.101 $\pm$ 0.004} & 0.105 $\pm$ 0.004 \\
concrete & 0.246 $\pm$ 0.033 & \textcolor{red}{0.421 $\pm$ 0.079} & 0.259 $\pm$ 0.035 & 0.241 $\pm$ 0.025 & 0.264 $\pm$ 0.037 & \textbf{0.238 $\pm$ 0.027} & \underline{0.240 $\pm$ 0.026} \\
elevator & 0.137 $\pm$ 0.008 & 0.173 $\pm$ 0.011 & 0.147 $\pm$ 0.009 & \textcolor{red}{0.192 $\pm$ 0.012} & 0.168 $\pm$ 0.012 & \underline{0.137 $\pm$ 0.007} & \textbf{0.137 $\pm$ 0.007} \\
energy\textunderscore efficiency & \textbf{0.046 $\pm$ 0.005} & \textcolor{red}{0.196 $\pm$ 0.032} & \underline{0.050 $\pm$ 0.005} & 0.057 $\pm$ 0.004 & 0.060 $\pm$ 0.006 & 0.050 $\pm$ 0.005 & 0.051 $\pm$ 0.004 \\
insurance & 0.320 $\pm$ 0.059 & 0.344 $\pm$ 0.069 & 0.339 $\pm$ 0.066 & 0.346 $\pm$ 0.076 & \textcolor{red}{0.455 $\pm$ 0.090} & \underline{0.292 $\pm$ 0.058} & \textbf{0.275 $\pm$ 0.054} \\
kin8nm & \textbf{0.354 $\pm$ 0.008} & \textcolor{red}{0.482 $\pm$ 0.014} & 0.371 $\pm$ 0.009 & 0.371 $\pm$ 0.009 & 0.396 $\pm$ 0.013 & 0.355 $\pm$ 0.008 & \underline{0.355 $\pm$ 0.008} \\
miami\textunderscore housing & \underline{0.084 $\pm$ 0.003} & 0.106 $\pm$ 0.004 & 0.085 $\pm$ 0.003 & 0.098 $\pm$ 0.002 & \textcolor{red}{0.124 $\pm$ 0.006} & \textbf{0.083 $\pm$ 0.002} & 0.084 $\pm$ 0.002 \\
naval\textunderscore propulsion & \textbf{1.8e-03 $\pm$ 5.6e-05} & \textcolor{red}{0.195 $\pm$ 0.011} & \underline{1.9e-03 $\pm$ 8.0e-05} & \textcolor{red}{3.2e-03 $\pm$ 1.2e-04} & \textcolor{red}{2.4e-03 $\pm$ 1.4e-04} & 2.1e-03 $\pm$ 7.2e-05 & \textcolor{red}{2.8e-03 $\pm$ 1.1e-04} \\
parkinsons & \textbf{0.284 $\pm$ 0.010} & \textcolor{red}{0.473 $\pm$ 0.016} & 0.309 $\pm$ 0.018 & 0.324 $\pm$ 0.008 & 0.349 $\pm$ 0.019 & \underline{0.285 $\pm$ 0.011} & 0.285 $\pm$ 0.011 \\
powerplant & \textbf{0.173 $\pm$ 0.007} & 0.197 $\pm$ 0.009 & 0.182 $\pm$ 0.007 & 0.182 $\pm$ 0.007 & 0.187 $\pm$ 0.008 & 0.176 $\pm$ 0.007 & \underline{0.174 $\pm$ 0.007} \\
qsar & 0.363 $\pm$ 0.121 & 0.480 $\pm$ 0.157 & 0.390 $\pm$ 0.130 & 0.393 $\pm$ 0.131 & 0.423 $\pm$ 0.142 & \underline{0.362 $\pm$ 0.120} & \textbf{0.362 $\pm$ 0.120} \\
sulfur & 0.106 $\pm$ 0.007 & 0.109 $\pm$ 0.006 & 0.105 $\pm$ 0.006 & 0.125 $\pm$ 0.009 & 0.127 $\pm$ 0.008 & \underline{0.105 $\pm$ 0.006} & \textbf{0.104 $\pm$ 0.006} \\
superconductor & \underline{0.218 $\pm$ 0.022} & \textcolor{red}{0.308 $\pm$ 0.032} & 0.226 $\pm$ 0.022 & 0.241 $\pm$ 0.028 & \textcolor{red}{0.347 $\pm$ 0.036} & 0.223 $\pm$ 0.024 & \textbf{0.216 $\pm$ 0.024} \\
\bottomrule
\end{tabular}
}
\end{table}
\begin{table}[!htbp]
\centering
\caption{Variant (b) Interval Score Loss at 95\% prediction intervals, aggregated across 10 seeds.Values $\geq 100$ or $< 0.01$ are presented in scientific notation with 1 decimal place. \textbf{Bold} values (desirable) are the minimum for that dataset and metric, while the \underline{underlined} values indicate the second-best result. \textcolor{red}{Red} values are more than 33\% worse than the best result.}
\label{tab:combined_intervalscoreloss_95_final_standard_variant_b}
\small
\resizebox{\columnwidth}{!}{%
\begin{tabular}{lccccccc}
\toprule
Dataset & CLEAR & ALEATORIC & ALEATORIC-R & PCS-EPISTEMIC & Naive & $\gamma_1=1$ & $\lambda=1$ \\
\midrule
ailerons & 7.4e-04 $\pm$ 1.8e-05 & 8.1e-04 $\pm$ 2.2e-05 & 7.8e-04 $\pm$ 2.2e-05 & 8.0e-04 $\pm$ 1.8e-05 & 9.1e-04 $\pm$ 2.9e-05 & \underline{7.4e-04 $\pm$ 1.8e-05} & \textbf{7.4e-04 $\pm$ 1.8e-05} \\
airfoil & \textbf{8.717 $\pm$ 0.870} & \textcolor{red}{14.669 $\pm$ 1.007} & 9.843 $\pm$ 1.249 & 9.397 $\pm$ 0.977 & \textcolor{red}{11.761 $\pm$ 1.548} & \underline{8.727 $\pm$ 0.902} & 8.730 $\pm$ 0.892 \\
allstate & \textbf{9.2e+03 $\pm$ 5.9e+02} & 1.1e+04 $\pm$ 6.7e+02 & 1.0e+04 $\pm$ 6.4e+02 & 1.0e+04 $\pm$ 6.8e+02 & \textcolor{red}{1.3e+04 $\pm$ 8.6e+02} & 9.3e+03 $\pm$ 6.0e+02 & \underline{9.3e+03 $\pm$ 6.2e+02} \\
ca\textunderscore housing & \textbf{2.2e+05 $\pm$ 4.8e+03} & \textcolor{red}{3.9e+05 $\pm$ 2.1e+03} & 2.4e+05 $\pm$ 5.9e+03 & 2.6e+05 $\pm$ 7.5e+03 & \textcolor{red}{3.2e+05 $\pm$ 9.7e+03} & \underline{2.2e+05 $\pm$ 5.3e+03} & 2.2e+05 $\pm$ 5.6e+03 \\
computer & \underline{12.707 $\pm$ 0.575} & \textcolor{red}{17.102 $\pm$ 3.298} & 16.503 $\pm$ 1.199 & 16.547 $\pm$ 0.692 & \textcolor{red}{18.684 $\pm$ 1.517} & \textbf{12.707 $\pm$ 0.591} & 12.897 $\pm$ 0.602 \\
concrete & 27.020 $\pm$ 3.202 & \textcolor{red}{43.690 $\pm$ 4.842} & 28.684 $\pm$ 4.251 & \textbf{26.165 $\pm$ 2.857} & 30.054 $\pm$ 4.640 & \underline{26.365 $\pm$ 2.976} & 26.445 $\pm$ 2.996 \\
elevator & \textbf{0.011 $\pm$ 0.000} & 0.014 $\pm$ 0.000 & 0.013 $\pm$ 0.000 & 0.015 $\pm$ 0.000 & \textcolor{red}{0.017 $\pm$ 0.000} & \underline{0.012 $\pm$ 0.000} & 0.012 $\pm$ 0.000 \\
energy\textunderscore efficiency & \textbf{2.465 $\pm$ 0.347} & \textcolor{red}{8.199 $\pm$ 0.672} & 2.671 $\pm$ 0.416 & 3.058 $\pm$ 0.217 & 3.253 $\pm$ 0.388 & \underline{2.667 $\pm$ 0.280} & 2.721 $\pm$ 0.270 \\
insurance & \textbf{2.6e+04 $\pm$ 2.5e+03} & 3.1e+04 $\pm$ 3.0e+03 & 3.0e+04 $\pm$ 2.8e+03 & \textcolor{red}{3.9e+04 $\pm$ 7.1e+03} & \textcolor{red}{3.5e+04 $\pm$ 2.7e+03} & 3.0e+04 $\pm$ 5.0e+03 & \underline{2.8e+04 $\pm$ 3.2e+03} \\
kin8nm & 0.577 $\pm$ 0.014 & 0.760 $\pm$ 0.013 & 0.616 $\pm$ 0.021 & 0.607 $\pm$ 0.008 & 0.666 $\pm$ 0.020 & \underline{0.577 $\pm$ 0.013} & \textbf{0.577 $\pm$ 0.013} \\
miami\textunderscore housing & 3.1e+05 $\pm$ 1.5e+04 & 4.1e+05 $\pm$ 2.5e+04 & \textcolor{red}{4.2e+05 $\pm$ 2.7e+04} & 3.3e+05 $\pm$ 1.3e+04 & \textcolor{red}{6.5e+05 $\pm$ 2.9e+04} & \textbf{3.1e+05 $\pm$ 1.5e+04} & \underline{3.1e+05 $\pm$ 1.5e+04} \\
naval\textunderscore propulsion & \textbf{4.4e-03 $\pm$ 1.5e-04} & \textcolor{red}{0.391 $\pm$ 0.006} & 5.1e-03 $\pm$ 2.2e-04 & \textcolor{red}{0.014 $\pm$ 0.001} & \textcolor{red}{7.1e-03 $\pm$ 4.7e-04} & \underline{4.8e-03 $\pm$ 1.4e-04} & \textcolor{red}{6.0e-03 $\pm$ 2.2e-04} \\
parkinsons & 16.697 $\pm$ 0.794 & \textcolor{red}{25.497 $\pm$ 0.692} & 18.270 $\pm$ 1.122 & 18.931 $\pm$ 0.600 & 21.220 $\pm$ 1.011 & \underline{16.688 $\pm$ 0.771} & \textbf{16.680 $\pm$ 0.750} \\
powerplant & \textbf{16.993 $\pm$ 0.937} & 18.211 $\pm$ 0.897 & 17.577 $\pm$ 0.884 & 17.715 $\pm$ 0.965 & 18.501 $\pm$ 0.815 & 17.064 $\pm$ 0.951 & \underline{17.027 $\pm$ 0.954} \\
qsar & 3.956 $\pm$ 0.218 & 4.802 $\pm$ 0.219 & 4.193 $\pm$ 0.227 & 4.270 $\pm$ 0.220 & 4.562 $\pm$ 0.217 & \underline{3.946 $\pm$ 0.222} & \textbf{3.946 $\pm$ 0.223} \\
sulfur & \textbf{0.155 $\pm$ 0.008} & 0.174 $\pm$ 0.009 & 0.186 $\pm$ 0.012 & 0.180 $\pm$ 0.007 & \textcolor{red}{0.277 $\pm$ 0.020} & 0.156 $\pm$ 0.007 & \underline{0.155 $\pm$ 0.007} \\
superconductor & \underline{41.801 $\pm$ 1.407} & 51.826 $\pm$ 1.568 & 45.860 $\pm$ 1.883 & 48.899 $\pm$ 2.037 & \textcolor{red}{71.323 $\pm$ 2.803} & \textbf{41.747 $\pm$ 1.460} & 41.821 $\pm$ 1.571 \\
\bottomrule
\end{tabular}
}
\end{table}

\clearpage

\subsection{Variant (c)}
\label{appendix:results-variant-c}

\begin{figure}[!ht]
    \centering
    \includegraphics[width=1.0\linewidth]{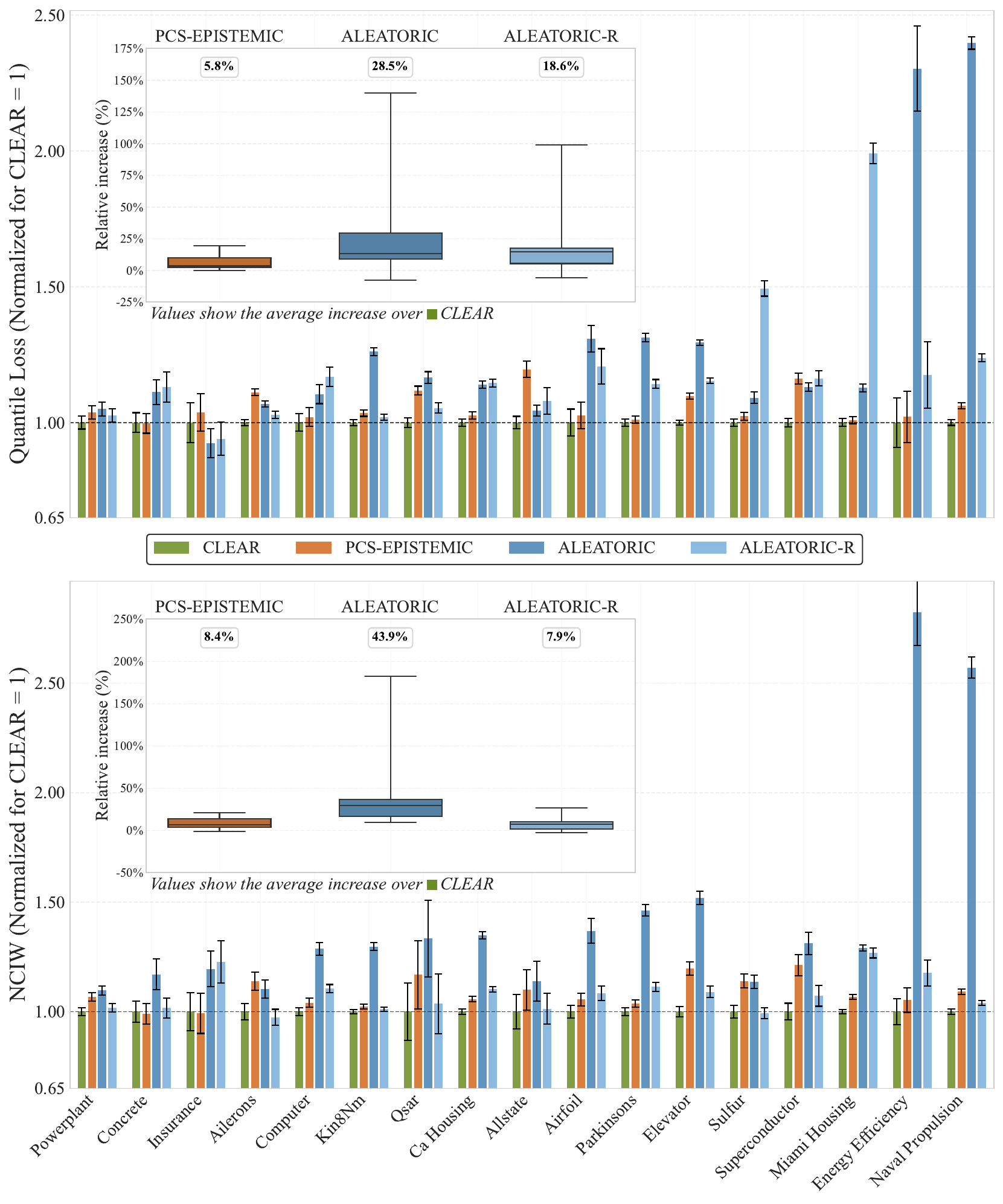}
    \caption{Quantile loss and NCIW performance of different methods (CLEAR, PCS, ALEATORIC, ALEATORIC-R) for variant (c), over 10 seeds normalized relative to CLEAR (baseline = 1.0). Lower values indicate better performance. The inset boxplot shows the \% improvement relative to the CLEAR baseline $\pm 1\sigma$. Values inside each subplot represent the mean improvement across all datasets.}
    \label{fig:variant-c-clear}
\end{figure}

\begin{table}[!htbp]
\centering
\caption{Variant (c) PICP at 95\% prediction intervals, aggregated across 10 seeds.}
\label{tab:combined_picp_95_final_standard_variant_c}
\small
\resizebox{\columnwidth}{!}{%
\begin{tabular}{lccccccc}
\toprule
Dataset & CLEAR & ALEATORIC & ALEATORIC-R & PCS-EPISTEMIC & Naive & $\gamma_1=1$ & $\lambda=1$ \\
\midrule
ailerons & 0.95 $\pm$ 0.01 & 0.95 $\pm$ 0.01 & 0.95 $\pm$ 0.01 & 0.95 $\pm$ 0.01 & 0.95 $\pm$ 0.01 & 0.95 $\pm$ 0.01 & 0.95 $\pm$ 0.01 \\
airfoil & 0.96 $\pm$ 0.01 & 0.95 $\pm$ 0.02 & 0.96 $\pm$ 0.01 & 0.95 $\pm$ 0.02 & 0.95 $\pm$ 0.02 & 0.96 $\pm$ 0.01 & 0.96 $\pm$ 0.01 \\
allstate & 0.95 $\pm$ 0.01 & 0.95 $\pm$ 0.01 & 0.95 $\pm$ 0.01 & 0.95 $\pm$ 0.01 & 0.95 $\pm$ 0.01 & 0.95 $\pm$ 0.01 & 0.95 $\pm$ 0.01 \\
ca\textunderscore housing & 0.95 $\pm$ 0.01 & 0.95 $\pm$ 0.00 & 0.95 $\pm$ 0.01 & 0.95 $\pm$ 0.01 & 0.95 $\pm$ 0.01 & 0.95 $\pm$ 0.01 & 0.95 $\pm$ 0.01 \\
computer & 0.95 $\pm$ 0.01 & 0.95 $\pm$ 0.01 & 0.95 $\pm$ 0.01 & 0.95 $\pm$ 0.01 & 0.95 $\pm$ 0.01 & 0.95 $\pm$ 0.01 & 0.95 $\pm$ 0.01 \\
concrete & 0.96 $\pm$ 0.01 & 0.96 $\pm$ 0.02 & 0.96 $\pm$ 0.02 & 0.96 $\pm$ 0.01 & 0.95 $\pm$ 0.02 & 0.96 $\pm$ 0.01 & 0.96 $\pm$ 0.01 \\
elevator & 0.95 $\pm$ 0.01 & 0.95 $\pm$ 0.00 & 0.95 $\pm$ 0.00 & 0.95 $\pm$ 0.00 & 0.95 $\pm$ 0.00 & 0.95 $\pm$ 0.01 & 0.95 $\pm$ 0.01 \\
energy\textunderscore efficiency & 0.95 $\pm$ 0.01 & 0.96 $\pm$ 0.01 & 0.96 $\pm$ 0.01 & 0.96 $\pm$ 0.02 & 0.95 $\pm$ 0.01 & 0.97 $\pm$ 0.01 & 0.96 $\pm$ 0.01 \\
insurance & 0.95 $\pm$ 0.01 & 0.95 $\pm$ 0.01 & 0.96 $\pm$ 0.01 & 0.96 $\pm$ 0.01 & 0.95 $\pm$ 0.01 & 0.96 $\pm$ 0.01 & 0.96 $\pm$ 0.01 \\
kin8nm & 0.95 $\pm$ 0.01 & 0.95 $\pm$ 0.01 & 0.95 $\pm$ 0.01 & 0.95 $\pm$ 0.01 & 0.95 $\pm$ 0.01 & 0.95 $\pm$ 0.01 & 0.95 $\pm$ 0.01 \\
miami\textunderscore housing & 0.95 $\pm$ 0.01 & 0.95 $\pm$ 0.01 & 0.95 $\pm$ 0.01 & 0.95 $\pm$ 0.00 & 0.95 $\pm$ 0.01 & 0.95 $\pm$ 0.01 & 0.95 $\pm$ 0.01 \\
naval\textunderscore propulsion & 0.95 $\pm$ 0.01 & 0.95 $\pm$ 0.01 & 0.95 $\pm$ 0.01 & 0.95 $\pm$ 0.01 & 0.95 $\pm$ 0.00 & 0.95 $\pm$ 0.01 & 0.95 $\pm$ 0.01 \\
parkinsons & 0.95 $\pm$ 0.01 & 0.95 $\pm$ 0.01 & 0.95 $\pm$ 0.01 & 0.95 $\pm$ 0.01 & 0.95 $\pm$ 0.01 & 0.95 $\pm$ 0.01 & 0.95 $\pm$ 0.01 \\
powerplant & 0.95 $\pm$ 0.01 & 0.95 $\pm$ 0.01 & 0.95 $\pm$ 0.01 & 0.95 $\pm$ 0.01 & 0.95 $\pm$ 0.01 & 0.95 $\pm$ 0.01 & 0.95 $\pm$ 0.01 \\
qsar & 0.95 $\pm$ 0.01 & 0.95 $\pm$ 0.01 & 0.95 $\pm$ 0.01 & 0.95 $\pm$ 0.01 & 0.95 $\pm$ 0.01 & 0.95 $\pm$ 0.01 & 0.95 $\pm$ 0.01 \\
sulfur & 0.95 $\pm$ 0.01 & 0.95 $\pm$ 0.01 & 0.95 $\pm$ 0.01 & 0.95 $\pm$ 0.01 & 0.95 $\pm$ 0.01 & 0.95 $\pm$ 0.01 & 0.95 $\pm$ 0.01 \\
superconductor & 0.95 $\pm$ 0.00 & 0.95 $\pm$ 0.00 & 0.95 $\pm$ 0.00 & 0.95 $\pm$ 0.00 & 0.95 $\pm$ 0.00 & 0.95 $\pm$ 0.00 & 0.95 $\pm$ 0.00 \\
\bottomrule
\end{tabular}
}
\end{table}
\begin{table}[!htbp]
\centering
\caption{Variant (c) NIW at 95\% prediction intervals, aggregated across 10 seeds.Values $\geq 100$ or $< 0.01$ are presented in scientific notation with 1 decimal place. \textbf{Bold} values (desirable) are the minimum for that dataset and metric, while the \underline{underlined} values indicate the second-best result. \textcolor{red}{Red} values are more than 33\% worse than the best result.}
\label{tab:combined_niw_95_final_standard_variant_c}
\small
\resizebox{\columnwidth}{!}{%
\begin{tabular}{lccccccc}
\toprule
Dataset & CLEAR & ALEATORIC & ALEATORIC-R & PCS-EPISTEMIC & Naive & $\gamma_1=1$ & $\lambda=1$ \\
\midrule
ailerons & \underline{0.208 $\pm$ 0.017} & 0.227 $\pm$ 0.017 & \textbf{0.202 $\pm$ 0.018} & 0.237 $\pm$ 0.024 & 0.229 $\pm$ 0.020 & 0.208 $\pm$ 0.016 & 0.208 $\pm$ 0.017 \\
airfoil & \textbf{0.199 $\pm$ 0.013} & \textcolor{red}{0.286 $\pm$ 0.016} & 0.223 $\pm$ 0.024 & 0.204 $\pm$ 0.019 & 0.214 $\pm$ 0.024 & 0.205 $\pm$ 0.018 & \underline{0.202 $\pm$ 0.014} \\
allstate & 0.279 $\pm$ 0.052 & 0.317 $\pm$ 0.065 & 0.277 $\pm$ 0.047 & 0.299 $\pm$ 0.065 & 0.315 $\pm$ 0.059 & \underline{0.274 $\pm$ 0.053} & \textbf{0.272 $\pm$ 0.051} \\
ca\textunderscore housing & 0.315 $\pm$ 0.010 & \textcolor{red}{0.425 $\pm$ 0.005} & 0.345 $\pm$ 0.013 & 0.332 $\pm$ 0.011 & 0.400 $\pm$ 0.016 & \underline{0.312 $\pm$ 0.008} & \textbf{0.311 $\pm$ 0.008} \\
computer & 0.080 $\pm$ 0.004 & 0.102 $\pm$ 0.001 & 0.089 $\pm$ 0.004 & 0.084 $\pm$ 0.004 & 0.091 $\pm$ 0.006 & \underline{0.079 $\pm$ 0.004} & \textbf{0.078 $\pm$ 0.004} \\
concrete & 0.278 $\pm$ 0.029 & 0.315 $\pm$ 0.019 & 0.284 $\pm$ 0.027 & \textbf{0.273 $\pm$ 0.031} & \underline{0.276 $\pm$ 0.028} & 0.279 $\pm$ 0.029 & 0.281 $\pm$ 0.029 \\
elevator & 0.127 $\pm$ 0.008 & \textcolor{red}{0.192 $\pm$ 0.010} & 0.137 $\pm$ 0.008 & 0.152 $\pm$ 0.008 & 0.144 $\pm$ 0.010 & \underline{0.127 $\pm$ 0.007} & \textbf{0.127 $\pm$ 0.007} \\
energy\textunderscore efficiency & \textbf{0.043 $\pm$ 0.007} & \textcolor{red}{0.138 $\pm$ 0.007} & 0.051 $\pm$ 0.005 & \underline{0.045 $\pm$ 0.007} & 0.049 $\pm$ 0.005 & 0.047 $\pm$ 0.007 & 0.046 $\pm$ 0.007 \\
insurance & \underline{0.397 $\pm$ 0.091} & 0.421 $\pm$ 0.049 & 0.459 $\pm$ 0.050 & \textbf{0.395 $\pm$ 0.096} & 0.423 $\pm$ 0.083 & 0.440 $\pm$ 0.092 & 0.475 $\pm$ 0.109 \\
kin8nm & \textbf{0.325 $\pm$ 0.012} & 0.415 $\pm$ 0.018 & 0.330 $\pm$ 0.014 & 0.329 $\pm$ 0.011 & 0.344 $\pm$ 0.012 & 0.328 $\pm$ 0.014 & \underline{0.326 $\pm$ 0.013} \\
miami\textunderscore housing & 0.088 $\pm$ 0.001 & \textcolor{red}{0.112 $\pm$ 0.002} & \textcolor{red}{0.113 $\pm$ 0.008} & 0.093 $\pm$ 0.002 & \textcolor{red}{0.123 $\pm$ 0.008} & \textbf{0.078 $\pm$ 0.002} & \underline{0.078 $\pm$ 0.002} \\
naval\textunderscore propulsion & \underline{1.6e-03 $\pm$ 3.2e-05} & \textcolor{red}{4.1e-03 $\pm$ 8.9e-05} & 1.6e-03 $\pm$ 4.3e-05 & 1.7e-03 $\pm$ 5.4e-05 & 1.8e-03 $\pm$ 3.9e-05 & \textbf{1.6e-03 $\pm$ 1.9e-05} & 1.6e-03 $\pm$ 3.0e-05 \\
parkinsons & \underline{0.250 $\pm$ 0.007} & \textcolor{red}{0.371 $\pm$ 0.011} & 0.281 $\pm$ 0.011 & 0.260 $\pm$ 0.008 & 0.285 $\pm$ 0.010 & 0.251 $\pm$ 0.008 & \textbf{0.248 $\pm$ 0.007} \\
powerplant & \textbf{0.157 $\pm$ 0.007} & 0.175 $\pm$ 0.008 & 0.161 $\pm$ 0.008 & 0.169 $\pm$ 0.006 & 0.164 $\pm$ 0.007 & 0.158 $\pm$ 0.007 & \underline{0.158 $\pm$ 0.007} \\
qsar & \textbf{0.344 $\pm$ 0.117} & \textcolor{red}{0.464 $\pm$ 0.153} & 0.353 $\pm$ 0.117 & 0.399 $\pm$ 0.138 & 0.386 $\pm$ 0.131 & 0.352 $\pm$ 0.120 & \underline{0.346 $\pm$ 0.118} \\
sulfur & 0.111 $\pm$ 0.009 & 0.124 $\pm$ 0.010 & 0.109 $\pm$ 0.014 & 0.124 $\pm$ 0.009 & 0.114 $\pm$ 0.013 & \underline{0.105 $\pm$ 0.009} & \textbf{0.104 $\pm$ 0.009} \\
superconductor & 0.195 $\pm$ 0.020 & 0.250 $\pm$ 0.025 & 0.210 $\pm$ 0.029 & 0.235 $\pm$ 0.027 & \textcolor{red}{0.308 $\pm$ 0.035} & \textbf{0.193 $\pm$ 0.020} & \underline{0.193 $\pm$ 0.020} \\
\bottomrule
\end{tabular}
}
\end{table}
\begin{table}[!htbp]
\centering
\caption{Variant (c) Quantile Loss at 95\% prediction intervals, aggregated across 10 seeds.Values $\geq 100$ or $< 0.01$ are presented in scientific notation with 1 decimal place. \textbf{Bold} values (desirable) are the minimum for that dataset and metric, while the \underline{underlined} values indicate the second-best result. \textcolor{red}{Red} values are more than 33\% worse than the best result.}
\label{tab:combined_quantileloss_95_final_standard_variant_c}
\small
\resizebox{\columnwidth}{!}{%
\begin{tabular}{lccccccc}
\toprule
Dataset & CLEAR & ALEATORIC & ALEATORIC-R & PCS-EPISTEMIC & Naive & $\gamma_1=1$ & $\lambda=1$ \\
\midrule
ailerons & \textbf{9.6e-06 $\pm$ 2.9e-07} & 1.0e-05 $\pm$ 2.7e-07 & 9.9e-06 $\pm$ 3.6e-07 & 1.1e-05 $\pm$ 2.7e-07 & 1.2e-05 $\pm$ 3.7e-07 & \underline{9.6e-06 $\pm$ 2.9e-07} & 9.6e-06 $\pm$ 2.9e-07 \\
airfoil & \textbf{0.107 $\pm$ 0.014} & 0.140 $\pm$ 0.007 & 0.129 $\pm$ 0.019 & 0.110 $\pm$ 0.013 & 0.130 $\pm$ 0.019 & 0.108 $\pm$ 0.014 & \underline{0.108 $\pm$ 0.014} \\
allstate & \textbf{1.2e+02 $\pm$ 6.874} & 1.2e+02 $\pm$ 4.960 & 1.2e+02 $\pm$ 19.401 & 1.4e+02 $\pm$ 9.737 & \textcolor{red}{1.6e+02 $\pm$ 10.679} & \underline{1.2e+02 $\pm$ 7.379} & 1.2e+02 $\pm$ 7.410 \\
ca\textunderscore housing & \textbf{2.8e+03 $\pm$ 92.560} & 3.2e+03 $\pm$ 71.387 & 3.2e+03 $\pm$ 1.0e+02 & 2.9e+03 $\pm$ 87.120 & 3.6e+03 $\pm$ 1.2e+02 & \underline{2.8e+03 $\pm$ 88.458} & 2.8e+03 $\pm$ 87.820 \\
computer & \textbf{0.137 $\pm$ 0.011} & 0.152 $\pm$ 0.013 & 0.160 $\pm$ 0.012 & 0.140 $\pm$ 0.012 & 0.164 $\pm$ 0.013 & \underline{0.137 $\pm$ 0.011} & 0.138 $\pm$ 0.011 \\
concrete & 0.337 $\pm$ 0.032 & 0.374 $\pm$ 0.044 & 0.381 $\pm$ 0.059 & \textbf{0.336 $\pm$ 0.032} & 0.384 $\pm$ 0.056 & \underline{0.336 $\pm$ 0.032} & 0.337 $\pm$ 0.032 \\
elevator & 1.3e-04 $\pm$ 3.2e-06 & 1.7e-04 $\pm$ 2.2e-06 & 1.5e-04 $\pm$ 3.3e-06 & 1.4e-04 $\pm$ 3.7e-06 & 1.6e-04 $\pm$ 3.1e-06 & \underline{1.3e-04 $\pm$ 2.9e-06} & \textbf{1.3e-04 $\pm$ 2.9e-06} \\
energy\textunderscore efficiency & \textbf{0.029 $\pm$ 0.007} & \textcolor{red}{0.068 $\pm$ 0.005} & 0.034 $\pm$ 0.010 & 0.030 $\pm$ 0.007 & 0.034 $\pm$ 0.010 & \underline{0.030 $\pm$ 0.007} & 0.030 $\pm$ 0.007 \\
insurance & 4.2e+02 $\pm$ 80.269 & \textbf{3.9e+02 $\pm$ 33.576} & \underline{4.0e+02 $\pm$ 53.283} & 4.4e+02 $\pm$ 65.348 & 4.1e+02 $\pm$ 41.712 & 4.3e+02 $\pm$ 77.643 & 4.5e+02 $\pm$ 67.994 \\
kin8nm & \textbf{6.7e-03 $\pm$ 1.8e-04} & 8.4e-03 $\pm$ 2.6e-04 & 6.8e-03 $\pm$ 1.9e-04 & 6.9e-03 $\pm$ 2.2e-04 & 7.4e-03 $\pm$ 2.8e-04 & 6.7e-03 $\pm$ 1.6e-04 & \underline{6.7e-03 $\pm$ 1.7e-04} \\
miami\textunderscore housing & \textbf{3.7e+03 $\pm$ 1.4e+02} & 4.2e+03 $\pm$ 1.2e+02 & \textcolor{red}{7.3e+03 $\pm$ 4.2e+02} & \underline{3.7e+03 $\pm$ 1.2e+02} & \textcolor{red}{7.7e+03 $\pm$ 4.2e+02} & 4.0e+03 $\pm$ 2.4e+02 & 4.0e+03 $\pm$ 2.3e+02 \\
naval\textunderscore propulsion & \textbf{4.0e-05 $\pm$ 1.0e-06} & \textcolor{red}{9.5e-05 $\pm$ 2.1e-06} & 4.9e-05 $\pm$ 1.6e-06 & 4.2e-05 $\pm$ 1.2e-06 & \textcolor{red}{5.7e-05 $\pm$ 1.2e-06} & \underline{4.1e-05 $\pm$ 1.2e-06} & 4.5e-05 $\pm$ 1.6e-06 \\
parkinsons & \textbf{0.189 $\pm$ 0.006} & 0.249 $\pm$ 0.007 & 0.216 $\pm$ 0.008 & 0.191 $\pm$ 0.006 & 0.220 $\pm$ 0.008 & \underline{0.189 $\pm$ 0.006} & 0.190 $\pm$ 0.007 \\
powerplant & 0.203 $\pm$ 0.012 & 0.213 $\pm$ 0.014 & 0.208 $\pm$ 0.012 & 0.211 $\pm$ 0.013 & 0.211 $\pm$ 0.012 & \textbf{0.203 $\pm$ 0.012} & \underline{0.203 $\pm$ 0.012} \\
qsar & \textbf{0.048 $\pm$ 0.002} & 0.057 $\pm$ 0.003 & 0.051 $\pm$ 0.002 & 0.054 $\pm$ 0.002 & 0.055 $\pm$ 0.003 & 0.049 $\pm$ 0.002 & \underline{0.049 $\pm$ 0.002} \\
sulfur & \textbf{1.8e-03 $\pm$ 6.3e-05} & 1.9e-03 $\pm$ 1.2e-04 & \textcolor{red}{2.7e-03 $\pm$ 1.6e-04} & 1.8e-03 $\pm$ 6.2e-05 & \textcolor{red}{2.9e-03 $\pm$ 1.7e-04} & \underline{1.8e-03 $\pm$ 6.8e-05} & 1.8e-03 $\pm$ 7.1e-05 \\
superconductor & \textbf{0.489 $\pm$ 0.020} & 0.553 $\pm$ 0.018 & 0.568 $\pm$ 0.042 & 0.568 $\pm$ 0.025 & \textcolor{red}{0.792 $\pm$ 0.035} & 0.490 $\pm$ 0.020 & \underline{0.489 $\pm$ 0.021} \\
\bottomrule
\end{tabular}
}
\end{table}
\begin{table}[!htbp]
\centering
\caption{Variant (c) NCIW at 95\% prediction intervals, aggregated across 10 seeds.Values $\geq 100$ or $< 0.01$ are presented in scientific notation with 1 decimal place. \textbf{Bold} values (desirable) are the minimum for that dataset and metric, while the \underline{underlined} values indicate the second-best result. \textcolor{red}{Red} values are more than 33\% worse than the best result.}
\label{tab:combined_nciw_95_final_standard_variant_c}
\small
\resizebox{\columnwidth}{!}{%
\begin{tabular}{lccccccc}
\toprule
Dataset & CLEAR & ALEATORIC & ALEATORIC-R & PCS-EPISTEMIC & Naive & $\gamma_1=1$ & $\lambda=1$ \\
\midrule
ailerons & 0.210 $\pm$ 0.020 & 0.231 $\pm$ 0.023 & \textbf{0.204 $\pm$ 0.021} & 0.239 $\pm$ 0.021 & 0.230 $\pm$ 0.021 & \underline{0.209 $\pm$ 0.020} & 0.210 $\pm$ 0.019 \\
airfoil & \underline{0.188 $\pm$ 0.014} & \textcolor{red}{0.258 $\pm$ 0.034} & 0.204 $\pm$ 0.017 & 0.199 $\pm$ 0.013 & 0.204 $\pm$ 0.014 & 0.190 $\pm$ 0.012 & \textbf{0.188 $\pm$ 0.010} \\
allstate & 0.277 $\pm$ 0.056 & 0.316 $\pm$ 0.065 & 0.281 $\pm$ 0.041 & 0.305 $\pm$ 0.070 & 0.319 $\pm$ 0.051 & \underline{0.271 $\pm$ 0.054} & \textbf{0.270 $\pm$ 0.053} \\
ca\textunderscore housing & 0.311 $\pm$ 0.010 & \textcolor{red}{0.419 $\pm$ 0.014} & 0.343 $\pm$ 0.010 & 0.329 $\pm$ 0.009 & 0.401 $\pm$ 0.018 & \textbf{0.309 $\pm$ 0.009} & \underline{0.309 $\pm$ 0.009} \\
computer & 0.081 $\pm$ 0.004 & 0.104 $\pm$ 0.007 & 0.090 $\pm$ 0.004 & 0.084 $\pm$ 0.005 & 0.092 $\pm$ 0.005 & \textbf{0.079 $\pm$ 0.004} & \underline{0.080 $\pm$ 0.004} \\
concrete & 0.259 $\pm$ 0.032 & 0.303 $\pm$ 0.054 & 0.263 $\pm$ 0.029 & \textbf{0.257 $\pm$ 0.031} & 0.265 $\pm$ 0.027 & \underline{0.258 $\pm$ 0.033} & 0.259 $\pm$ 0.032 \\
elevator & \textbf{0.126 $\pm$ 0.008} & \textcolor{red}{0.192 $\pm$ 0.008} & 0.138 $\pm$ 0.008 & 0.151 $\pm$ 0.011 & 0.144 $\pm$ 0.010 & \underline{0.127 $\pm$ 0.007} & 0.127 $\pm$ 0.007 \\
energy\textunderscore efficiency & \textbf{0.041 $\pm$ 0.006} & \textcolor{red}{0.116 $\pm$ 0.015} & 0.048 $\pm$ 0.005 & 0.043 $\pm$ 0.005 & 0.049 $\pm$ 0.005 & 0.042 $\pm$ 0.006 & \underline{0.042 $\pm$ 0.006} \\
insurance & \underline{0.345 $\pm$ 0.078} & 0.413 $\pm$ 0.043 & 0.424 $\pm$ 0.074 & \textbf{0.343 $\pm$ 0.087} & 0.385 $\pm$ 0.112 & 0.367 $\pm$ 0.100 & 0.407 $\pm$ 0.137 \\
kin8nm & \textbf{0.323 $\pm$ 0.008} & 0.420 $\pm$ 0.018 & 0.327 $\pm$ 0.008 & 0.331 $\pm$ 0.010 & 0.341 $\pm$ 0.012 & 0.325 $\pm$ 0.009 & \underline{0.324 $\pm$ 0.009} \\
miami\textunderscore housing & 0.088 $\pm$ 0.002 & \textcolor{red}{0.113 $\pm$ 0.004} & \textcolor{red}{0.111 $\pm$ 0.007} & 0.094 $\pm$ 0.003 & \textcolor{red}{0.122 $\pm$ 0.007} & \underline{0.077 $\pm$ 0.003} & \textbf{0.076 $\pm$ 0.003} \\
naval\textunderscore propulsion & \underline{1.6e-03 $\pm$ 5.0e-05} & \textcolor{red}{4.0e-03 $\pm$ 2.4e-04} & 1.6e-03 $\pm$ 4.1e-05 & 1.7e-03 $\pm$ 4.7e-05 & 1.8e-03 $\pm$ 5.8e-05 & \textbf{1.6e-03 $\pm$ 5.2e-05} & 1.6e-03 $\pm$ 6.2e-05 \\
parkinsons & \underline{0.250 $\pm$ 0.012} & \textcolor{red}{0.366 $\pm$ 0.017} & 0.279 $\pm$ 0.013 & 0.259 $\pm$ 0.009 & 0.284 $\pm$ 0.013 & 0.251 $\pm$ 0.011 & \textbf{0.250 $\pm$ 0.011} \\
powerplant & \textbf{0.161 $\pm$ 0.007} & 0.176 $\pm$ 0.009 & 0.163 $\pm$ 0.008 & 0.171 $\pm$ 0.008 & 0.166 $\pm$ 0.007 & 0.161 $\pm$ 0.007 & \underline{0.161 $\pm$ 0.007} \\
qsar & \textbf{0.348 $\pm$ 0.118} & \textcolor{red}{0.465 $\pm$ 0.156} & 0.361 $\pm$ 0.124 & 0.407 $\pm$ 0.142 & 0.395 $\pm$ 0.134 & 0.359 $\pm$ 0.124 & \underline{0.354 $\pm$ 0.122} \\
sulfur & 0.109 $\pm$ 0.008 & 0.124 $\pm$ 0.008 & 0.108 $\pm$ 0.006 & 0.124 $\pm$ 0.008 & 0.115 $\pm$ 0.007 & \underline{0.104 $\pm$ 0.007} & \textbf{0.102 $\pm$ 0.007} \\
superconductor & 0.192 $\pm$ 0.019 & 0.251 $\pm$ 0.025 & 0.206 $\pm$ 0.027 & 0.232 $\pm$ 0.025 & \textcolor{red}{0.297 $\pm$ 0.029} & \textbf{0.191 $\pm$ 0.019} & \underline{0.191 $\pm$ 0.019} \\
\bottomrule
\end{tabular}
}
\end{table}
\begin{table}[!htbp]
\centering
\caption{Variant (c) Interval Score Loss at 95\% prediction intervals, aggregated across 10 seeds.Values $\geq 100$ or $< 0.01$ are presented in scientific notation with 1 decimal place. \textbf{Bold} values (desirable) are the minimum for that dataset and metric, while the \underline{underlined} values indicate the second-best result. \textcolor{red}{Red} values are more than 33\% worse than the best result.}
\label{tab:combined_intervalscoreloss_95_final_standard_variant_c}
\small
\resizebox{\columnwidth}{!}{%
\begin{tabular}{lccccccc}
\toprule
Dataset & CLEAR & ALEATORIC & ALEATORIC-R & PCS-EPISTEMIC & Naive & $\gamma_1=1$ & $\lambda=1$ \\
\midrule
ailerons & \textbf{7.7e-04 $\pm$ 2.3e-05} & 8.2e-04 $\pm$ 2.2e-05 & 7.9e-04 $\pm$ 2.9e-05 & 8.5e-04 $\pm$ 2.2e-05 & 9.4e-04 $\pm$ 3.0e-05 & \underline{7.7e-04 $\pm$ 2.3e-05} & 7.7e-04 $\pm$ 2.3e-05 \\
airfoil & \textbf{8.569 $\pm$ 1.099} & 11.217 $\pm$ 0.549 & 10.341 $\pm$ 1.548 & 8.799 $\pm$ 1.034 & 10.361 $\pm$ 1.499 & 8.675 $\pm$ 1.116 & \underline{8.624 $\pm$ 1.105} \\
allstate & \textbf{9.2e+03 $\pm$ 5.5e+02} & 9.6e+03 $\pm$ 4.0e+02 & 1.0e+04 $\pm$ 1.6e+03 & 1.1e+04 $\pm$ 7.8e+02 & \textcolor{red}{1.3e+04 $\pm$ 8.5e+02} & \underline{9.2e+03 $\pm$ 5.9e+02} & 9.3e+03 $\pm$ 5.9e+02 \\
ca\textunderscore housing & \textbf{2.2e+05 $\pm$ 7.4e+03} & 2.5e+05 $\pm$ 5.7e+03 & 2.5e+05 $\pm$ 8.1e+03 & 2.3e+05 $\pm$ 7.0e+03 & 2.9e+05 $\pm$ 9.7e+03 & \underline{2.2e+05 $\pm$ 7.1e+03} & 2.2e+05 $\pm$ 7.0e+03 \\
computer & \textbf{10.978 $\pm$ 0.911} & 12.134 $\pm$ 1.001 & 12.827 $\pm$ 0.945 & 11.204 $\pm$ 0.996 & 13.130 $\pm$ 1.051 & \underline{10.986 $\pm$ 0.873} & 11.030 $\pm$ 0.866 \\
concrete & 26.928 $\pm$ 2.529 & 29.955 $\pm$ 3.544 & 30.449 $\pm$ 4.693 & \textbf{26.855 $\pm$ 2.535} & 30.720 $\pm$ 4.505 & \underline{26.882 $\pm$ 2.568} & 26.962 $\pm$ 2.581 \\
elevator & 0.010 $\pm$ 0.000 & 0.014 $\pm$ 0.000 & 0.012 $\pm$ 0.000 & 0.011 $\pm$ 0.000 & 0.013 $\pm$ 0.000 & \underline{0.010 $\pm$ 0.000} & \textbf{0.010 $\pm$ 0.000} \\
energy\textunderscore efficiency & \textbf{2.348 $\pm$ 0.554} & \textcolor{red}{5.407 $\pm$ 0.418} & 2.759 $\pm$ 0.822 & 2.397 $\pm$ 0.573 & 2.756 $\pm$ 0.822 & \underline{2.392 $\pm$ 0.564} & 2.395 $\pm$ 0.588 \\
insurance & 3.4e+04 $\pm$ 6.4e+03 & \textbf{3.1e+04 $\pm$ 2.7e+03} & \underline{3.2e+04 $\pm$ 4.3e+03} & 3.5e+04 $\pm$ 5.2e+03 & 3.3e+04 $\pm$ 3.3e+03 & 3.4e+04 $\pm$ 6.2e+03 & 3.6e+04 $\pm$ 5.4e+03 \\
kin8nm & \textbf{0.534 $\pm$ 0.014} & 0.673 $\pm$ 0.021 & 0.544 $\pm$ 0.015 & 0.552 $\pm$ 0.017 & 0.591 $\pm$ 0.022 & 0.536 $\pm$ 0.013 & \underline{0.535 $\pm$ 0.013} \\
miami\textunderscore housing & \textbf{2.9e+05 $\pm$ 1.1e+04} & 3.3e+05 $\pm$ 9.5e+03 & \textcolor{red}{5.9e+05 $\pm$ 3.4e+04} & \underline{3.0e+05 $\pm$ 9.6e+03} & \textcolor{red}{6.1e+05 $\pm$ 3.4e+04} & 3.2e+05 $\pm$ 1.9e+04 & 3.2e+05 $\pm$ 1.9e+04 \\
naval\textunderscore propulsion & \textbf{3.2e-03 $\pm$ 8.4e-05} & \textcolor{red}{7.6e-03 $\pm$ 1.7e-04} & 3.9e-03 $\pm$ 1.3e-04 & 3.4e-03 $\pm$ 9.6e-05 & \textcolor{red}{4.6e-03 $\pm$ 9.6e-05} & \underline{3.2e-03 $\pm$ 9.3e-05} & 3.6e-03 $\pm$ 1.3e-04 \\
parkinsons & \textbf{15.149 $\pm$ 0.518} & 19.899 $\pm$ 0.576 & 17.316 $\pm$ 0.634 & 15.312 $\pm$ 0.459 & 17.596 $\pm$ 0.621 & \underline{15.150 $\pm$ 0.499} & 15.161 $\pm$ 0.537 \\
powerplant & 16.243 $\pm$ 0.985 & 17.062 $\pm$ 1.128 & 16.673 $\pm$ 0.987 & 16.846 $\pm$ 1.029 & 16.874 $\pm$ 0.975 & \textbf{16.231 $\pm$ 0.984} & \underline{16.231 $\pm$ 0.983} \\
qsar & \textbf{3.877 $\pm$ 0.176} & 4.521 $\pm$ 0.225 & 4.086 $\pm$ 0.193 & 4.334 $\pm$ 0.125 & 4.390 $\pm$ 0.205 & 3.927 $\pm$ 0.146 & \underline{3.892 $\pm$ 0.151} \\
sulfur & \textbf{0.143 $\pm$ 0.005} & 0.156 $\pm$ 0.010 & \textcolor{red}{0.213 $\pm$ 0.013} & 0.146 $\pm$ 0.005 & \textcolor{red}{0.228 $\pm$ 0.013} & \underline{0.143 $\pm$ 0.005} & 0.145 $\pm$ 0.006 \\
superconductor & \textbf{39.104 $\pm$ 1.604} & 44.230 $\pm$ 1.428 & 45.479 $\pm$ 3.358 & 45.456 $\pm$ 2.039 & \textcolor{red}{63.347 $\pm$ 2.824} & 39.224 $\pm$ 1.613 & \underline{39.119 $\pm$ 1.642} \\
\bottomrule
\end{tabular}
}
\end{table}
\begin{table}[!htbp]
\centering
\caption{Variant (c) CLEAR calibration parameters $\lambda$ and $\gamma_1$ for 95\% prediction intervals across 10 seeds. Using all available variables. Showing median [min:max] values.}
\label{tab:combined_gamma_lambda_95_final_standard_variant_c}
\small
\begin{tabular}{lccc}
\toprule
Dataset & $\lambda$ & $\gamma_1$ \\
\midrule
ailerons & 0.89 [0.17:4.47] & 0.98 [0.87:1.58] \\
airfoil & 0.76 [0.23:5.25] & 1.70 [0.25:4.30] \\
allstate & 0.21 [0.00:14.06] & 1.14 [0.16:1.93] \\
ca\textunderscore housing & 1.90 [1.14:5.08] & 0.70 [0.33:0.93] \\
computer & 2.21 [1.63:8.87] & 0.74 [0.21:0.96] \\
concrete & 1.33 [0.06:100.00] & 1.65 [0.01:16.73] \\
elevator & 0.94 [0.67:1.43] & 1.04 [0.82:1.22] \\
energy\textunderscore efficiency & 0.12 [0.01:100.00] & 7.96 [0.02:24.25] \\
insurance & 5.07 [0.14:100.00] & 0.38 [0.02:27.45] \\
kin8nm & 1.87 [1.64:3.82] & 0.76 [0.58:0.83] \\
miami\textunderscore housing & 7.06 [4.84:19.45] & 0.23 [0.09:0.33] \\
naval\textunderscore propulsion & 16.16 [12.48:21.17] & 0.63 [0.53:0.76] \\
parkinsons & 1.16 [0.72:8.42] & 1.26 [0.22:1.84] \\
powerplant & 1.05 [0.73:3.26] & 1.05 [0.51:1.36] \\
qsar & 0.51 [0.32:1.70] & 1.45 [0.96:2.37] \\
sulfur & 2.41 [1.53:7.16] & 0.71 [0.29:0.91] \\
superconductor & 0.91 [0.34:1.39] & 1.07 [0.80:1.34] \\
\bottomrule
\end{tabular}
\end{table}

\clearpage

\subsection{Comparing variants of CLEAR}
\label{appendix:comparing-clear-variants}

\begin{figure}[!htbp]
    \centering
    \includegraphics[width=1.0\linewidth]{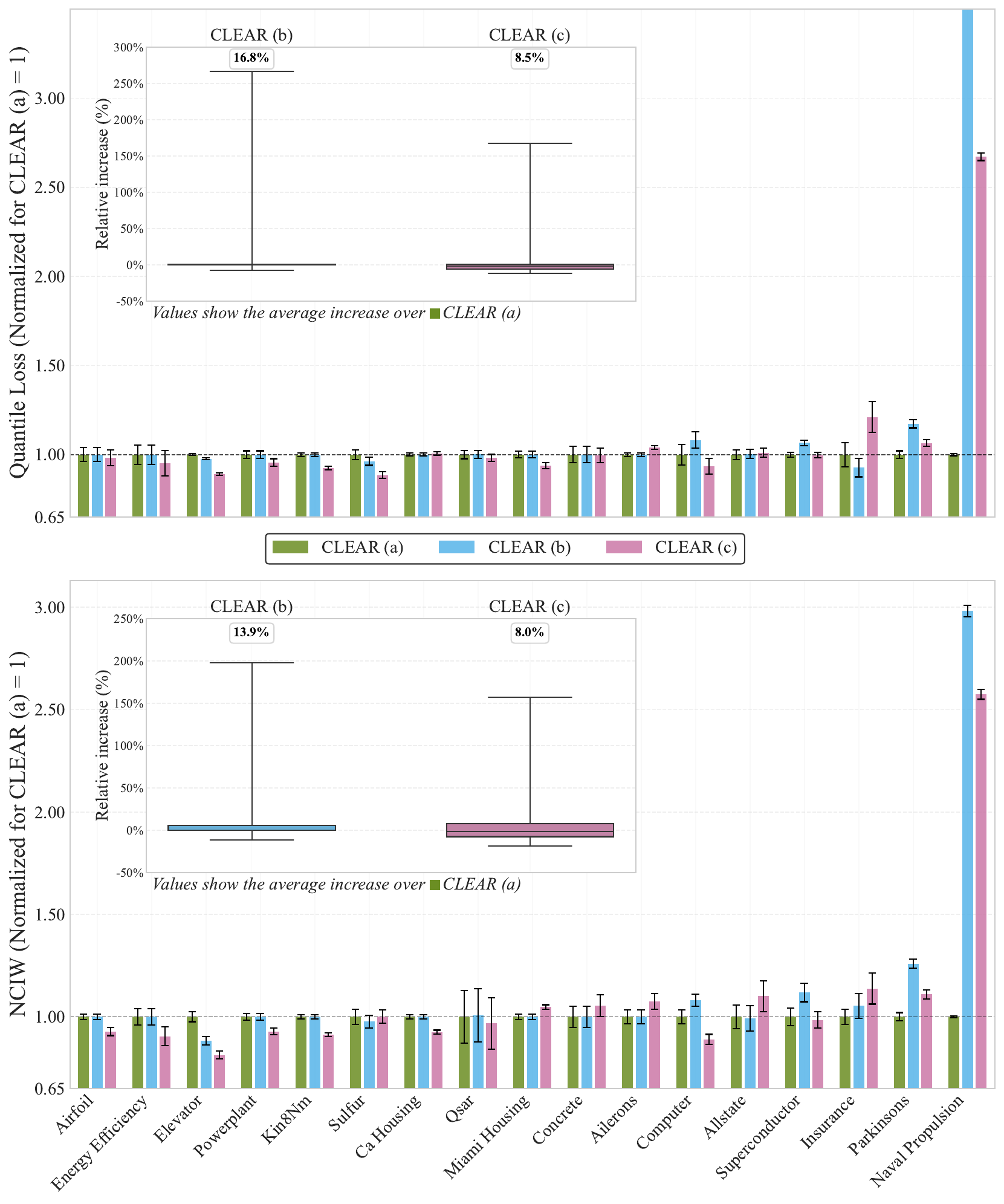}
    \caption{Quantile loss and NCIW performance of different variants of CLEAR (a, b and c) over 10 seeds normalized relative to CLEAR (a) (baseline = 1.0). Lower values indicate better performance. The inset boxplot shows the \% improvement relative to the CLEAR (a) baseline $\pm 1\sigma$. Values inside each subplot represent the mean improvement across all datasets.}
    \label{fig:variant-all-clear}
\end{figure}

\subsection{Runtime}
\label{appendix:runtime}

The grid search for finding the optimal parameter is extremely fast and negligible compared to the baselines, despite using a grid with 4000 points, which is finer and larger than necessary. In \Cref{tab:ComputationalCostsa,tab:ComputationalCostsb,tab:ComputationalCostsc}, we provide the average training time for each variant on a given real-world dataset, computed on a machine with an Intel\textsuperscript{\tiny\textregistered} Core\textsuperscript{\tiny\texttrademark} i9-13900KF CPU (with maximum 20+ threads used in parallel). For experiments involving SQR and DEs, we had access to an NVIDIA\textsuperscript{\tiny\textregistered} GeForce RTX\textsuperscript{\tiny\texttrademark} 4090 GPU. All results are computed using 32 GB of memory. The times are provided in seconds and have been averaged over 10 seeds. For CLEAR, the required computation is the calibration time (computation of $\lambda$ and $\gamma_1$, denoted as \emph{Grid Search} in the tables). 
Only the runtimes for the methods included in the paper are provided. The experiments were run exactly as described in the paper (particularly for the 100 bootstraps). 

As stated previously, CLEAR is highly modular, allowing the modules and their components (such as the choice of base models within the PCS module) to be replaced or modified to adhere to computational budget limitations. For example, if the dataset is very large, base models that scale well with the training data size can be used (such as deep learning models, trained via stochastic gradient descent). Alternatively, if training each model is expensive, one might want to use fewer than 100 different bootstraps per dataset, or maybe even use techniques (such as Monte-Carlo Dropout) to obtain an ensemble from a single trained model \citep{kendall2017uncertaintiesneedbayesiandeep,Dropout_bayesian_Gal,wen2020batchensemble,havasi2021training,rossellini2024,Chan_Molina_Metzler_2024,agarwal2025pcsuq}.

Note that ALEATORIC-R cannot be computed without first computing PCS, as it utilizes the residuals from PCS. In practice, the total computational costs of ALEATORIC-R would be equal to the costs of PCS plus the additional costs of the residual approach. Thus, the minimal total computational costs necessary to obtain CLEAR results are equal to the minimal total computational costs required to obtain ALEATORIC-R results (up to a few seconds of the grid search).

\begin{remark}[Parallelization and Scalability]
Every step in the entire CLEAR pipeline is parallelizable and distributable. The most expensive part is fitting multiple models to multiple bootstraps of the data for PCS. However, these models can be trained perfectly in parallel on different distributed nodes, as they do not have to communicate with each other. Similar parallelization (distributed) is also possible for ALEATORIC-R, where multiple models are fitted independently. The computational costs of CLEAR's calibration step (\textit{Grid-Search}), albeit negligible, could be further reduced by distributing the exploration of the grid across multiple servers. The computations necessary for each grid point are fully vectorized and are highly suited for GPUs (in the case of a large calibration dataset). In other words, regardless of whether the user has access to many weak CPU servers, a powerful GPU, or multiple powerful GPU servers, the calibration step can efficiently utilize all these different infrastructures (when tuning the implementation accordingly). For all these reasons, CLEAR is highly scalable.

\end{remark}

\begin{table}[!htbp]
    \centering
    \caption{Variant (a) average runtime (over 10 seeds) in seconds for base components and the CLEAR's grid search. The grid search includes any other overhead for CLEAR.}
    \begin{tabular}{llllll}
    \toprule
    \textbf{Dataset} & \textbf{PCS} & \textbf{ALEATORIC-R} & \textbf{Grid-Search} & \textbf{Total} \\ 
    \midrule
    ailerons & 13.46 & 6.80 & 0.27 & 20.53 \\ 
    airfoil & 3.10 & 1.92 & 0.17 & 5.19 \\ 
    allstate & 1124.47 & 22.93 & 0.19 & 1147.59 \\ 
    ca\_housing & 8.51 & 7.79 & 0.54 & 16.84 \\ 
    computer & 61.92 & 76.09 & 0.16 & 138.16 \\ 
    concrete & 1.10 & 2.16 & 0.17 & 3.43 \\ 
    elevator & 18.79 & 122.16 & 0.60 & 141.54 \\ 
    energy\_efficiency & 0.81 & 1.65 & 0.16 & 2.62 \\ 
    insurance & 3.48 & 17.23 & 0.11 & 20.82 \\ 
    kin8nm & 3.25 & 5.43 & 0.21 & 8.89 \\ 
    miami\_housing & 7.31 & 7.66 & 0.72 & 15.68 \\ 
    naval\_propulsion & 19.83 & 81.32 & 0.29 & 101.44 \\ 
    parkinsons & 39.53 & 60.99 & 0.14 & 100.66 \\ 
    powerplant & 3.08 & 4.92 & 0.21 & 8.22 \\ 
    qsar & 348.59 & 10.00 & 0.20 & 358.78 \\ 
    sulfur & 21.55 & 16.56 & 0.21 & 38.32 \\ 
    superconductor & 561.84 & 795.45 & 12.70 & 1369.99 \\
    \midrule
    \textbf{Total} & \textbf{2240.60} & \textbf{1241.06} & \textbf{17.06} & \textbf{3498.71} \\
    \bottomrule
    \end{tabular}\label{tab:ComputationalCostsa}
\end{table}

\begin{table}[!htbp]
 \centering
 \caption{Variant (b) average runtime (over 10 seeds) in seconds for base components and the CLEAR's grid search. The grid search includes any other overhead for CLEAR.}
 \begin{tabular}{llllll}
 \toprule
 \textbf{Dataset} & \textbf{PCS} & \textbf{ALEATORIC-R} & \textbf{Grid-Search} & \textbf{Total} \\ 
 \midrule
 ailerons & 4.51  & 6.89  & 0.26 & 11.66 \\
 airfoil & 0.78  & 1.64  & 0.16 & 2.59 \\
 allstate & 27.73 & 14.22 & 0.20 & 42.15 \\
 ca\_housing  & 5.42  & 7.69  & 0.39 & 13.50 \\
 computer & 3.46  & 5.97  & 0.21 & 9.65 \\
 concrete & 0.86  & 1.82  & 0.17 & 2.85 \\
 elevator & 4.43  & 6.71  & 0.79 & 11.93 \\
 energy\_efficiency & 0.68  & 1.40  & 0.17 & 2.25 \\
 insurance  & 0.80  & 1.65  & 0.17 & 2.62 \\
 kin8nm  & 2.61  & 4.96  & 0.21 & 7.78 \\
 miami\_housing  & 4.61  & 7.70  & 0.70 & 13.01 \\
 naval\_propulsion  & 3.72  & 6.38  & 0.55 & 10.64 \\
 parkinsons & 2.71  & 5.37  & 0.20 & 8.28 \\
 powerplant & 2.73  & 4.32  & 0.22 & 7.26 \\
 qsar & 5.00  & 9.67  & 0.20 & 14.86 \\
 sulfur  & 2.89  & 4.64  & 0.23 & 7.76 \\
 superconductor  & 13.81 & 22.74 & 1.00 & 37.55 \\
 \midrule
 \textbf{Total}  & \textbf{86.77}  & \textbf{113.76} & \textbf{5.81} & \textbf{206.34} \\
 \bottomrule
 \end{tabular}\label{tab:ComputationalCostsb}
\end{table}

\begin{table}[!htbp]
 \centering
 \caption{Variant (c) average runtime (over 10 seeds) in seconds for base components and the CLEAR's grid search. The grid search includes any other overhead for CLEAR.}
 \begin{tabular}{llllll}
 \toprule
 \textbf{Dataset} & \textbf{PCS} & \textbf{ALEATORIC-R} & \textbf{Grid-Search} & \textbf{Total} \\ 
 \midrule
 ailerons & 10.20 & 104.32  & 0.65 & 115.17  \\
 airfoil & 0.88  & 17.82 & 0.11 & 18.81 \\
 allstate & 651.02 & 125.77  & 0.14 & 776.93  \\
 ca\_housing  & 3.63  & 130.87  & 4.16 & 138.66  \\
 computer & 4.55  & 65.28 & 0.16 & 69.98 \\
 concrete & 1.16  & 17.49 & 0.11 & 18.76 \\
 elevator & 3.25  & 98.73 & 1.97 & 103.95  \\
 energy\_efficiency & 0.95  & 15.42 & 0.10 & 16.47 \\
 insurance  & 0.90  & 19.80 & 0.11 & 20.81 \\
 kin8nm  & 16.89 & 69.17 & 0.16 & 86.22 \\
 miami\_housing  & 5.34  & 123.32  & 0.92 & 129.58  \\
 naval\_propulsion  & 1.14  & 115.41  & 0.74 & 117.28  \\
 parkinsons & 3.71  & 53.43 & 0.14 & 57.27 \\
 powerplant & 2.37  & 60.99 & 0.17 & 63.52 \\
 qsar & 21.48 & 93.84 & 0.14 & 115.46  \\
 sulfur  & 2.43  & 67.41 & 0.17 & 70.01 \\
 superconductor  & 60.63 & 644.13  & 8.10 & 712.86  \\
 \midrule
 \textbf{Total}  & \textbf{790.52}  & \textbf{1823.18} & \textbf{18.05}  & \textbf{2631.75}  \\
 \bottomrule
\end{tabular}\label{tab:ComputationalCostsc}
\end{table}

\clearpage

\section{Conformalized CLEAR with PCS and CQR: results on real-world data}
\label{appendix:results-conformalized}

This section presents results from our conformalized experimental configuration, where we split the 20\% validation set into separate 10\% validation and 10\% calibration sets. This approach provides stronger finite-sample distribution-free marginal coverage guarantees following conformal prediction principles, as the calibration set remains completely unseen during model selection and hyperparameter optimization. While the conformalized approach may sacrifice some performance due to reduced data availability for validation, it offers theoretical rigor by ensuring the conformal calibration step operates on the held-out data. Similar to the standard results, CLEAR adapts to this more stringent experimental setting while maintaining its advantages over baseline methods across all three variants (a), (b), and (c).

\subsection{Variant (a) conformalized}
\label{appendix:results-variant-a-conformalized}

\begin{table}[!htbp]
\centering
\caption{Conformalized Variant (a) PICP at 95\% prediction intervals, aggregated across 10 seeds. Methods with suffix `-c' denote conformalized variants obtained using the validation set divided into two parts, one for validation and one for calibration.}
\label{tab:combined_picp_95_final_conformalized_variant_a}
\small
\resizebox{\columnwidth}{!}{%
\begin{tabular}{lcccc}
\toprule
Dataset & CLEAR-c & PCS-EPISTEMIC-c & ALEATORIC-R-c & CLEAR \\
\midrule
ailerons & 0.95 $\pm$ 0.01 & 0.95 $\pm$ 0.01 & 0.95 $\pm$ 0.01 & 0.95 $\pm$ 0.00 \\
airfoil & 0.96 $\pm$ 0.01 & 0.95 $\pm$ 0.02 & 0.95 $\pm$ 0.02 & 0.95 $\pm$ 0.01 \\
allstate & 0.96 $\pm$ 0.01 & 0.94 $\pm$ 0.01 & 0.95 $\pm$ 0.01 & 0.95 $\pm$ 0.01 \\
ca\textunderscore housing & 0.95 $\pm$ 0.01 & 0.95 $\pm$ 0.01 & 0.95 $\pm$ 0.01 & 0.95 $\pm$ 0.01 \\
computer & 0.95 $\pm$ 0.01 & 0.95 $\pm$ 0.01 & 0.96 $\pm$ 0.01 & 0.95 $\pm$ 0.01 \\
concrete & 0.95 $\pm$ 0.02 & 0.95 $\pm$ 0.02 & 0.94 $\pm$ 0.02 & 0.95 $\pm$ 0.02 \\
elevator & 0.95 $\pm$ 0.00 & 0.95 $\pm$ 0.01 & 0.95 $\pm$ 0.01 & 0.95 $\pm$ 0.00 \\
energy\textunderscore efficiency & 0.97 $\pm$ 0.02 & 0.95 $\pm$ 0.02 & 0.96 $\pm$ 0.02 & 0.95 $\pm$ 0.02 \\
insurance & 0.96 $\pm$ 0.02 & 0.95 $\pm$ 0.02 & 0.96 $\pm$ 0.02 & 0.95 $\pm$ 0.02 \\
kin8nm & 0.96 $\pm$ 0.01 & 0.95 $\pm$ 0.01 & 0.95 $\pm$ 0.01 & 0.95 $\pm$ 0.01 \\
miami\textunderscore housing & 0.95 $\pm$ 0.01 & 0.95 $\pm$ 0.01 & 0.95 $\pm$ 0.01 & 0.95 $\pm$ 0.01 \\
naval\textunderscore propulsion & 0.96 $\pm$ 0.01 & 0.95 $\pm$ 0.01 & 0.95 $\pm$ 0.01 & 0.95 $\pm$ 0.01 \\
parkinsons & 0.95 $\pm$ 0.01 & 0.95 $\pm$ 0.01 & 0.95 $\pm$ 0.01 & 0.95 $\pm$ 0.01 \\
powerplant & 0.95 $\pm$ 0.01 & 0.95 $\pm$ 0.01 & 0.95 $\pm$ 0.01 & 0.95 $\pm$ 0.01 \\
qsar & 0.95 $\pm$ 0.01 & 0.95 $\pm$ 0.01 & 0.95 $\pm$ 0.01 & 0.95 $\pm$ 0.01 \\
sulfur & 0.95 $\pm$ 0.01 & 0.96 $\pm$ 0.01 & 0.95 $\pm$ 0.01 & 0.95 $\pm$ 0.01 \\
superconductor & 0.95 $\pm$ 0.01 & 0.95 $\pm$ 0.01 & 0.95 $\pm$ 0.01 & 0.95 $\pm$ 0.00 \\
\bottomrule
\end{tabular}
}
\end{table}
\begin{table}[!htbp]
\centering
\caption{Conformalized Variant (a) NIW at 95\% prediction intervals, aggregated across 10 seeds. Methods with suffix `-c' denote conformalized variants obtained using the validation set divided into two parts, one for validation and one for calibration.Values $\geq 100$ or $< 0.01$ are presented in scientific notation with 1 decimal place. \textbf{Bold} values (desirable) are the minimum for that dataset and metric, while the \underline{underlined} values indicate the second-best result. \textcolor{red}{Red} values are more than 33\% worse than the best result.}
\label{tab:combined_niw_95_final_conformalized_variant_a}
\small
\resizebox{\columnwidth}{!}{%
\begin{tabular}{lcccc}
\toprule
Dataset & CLEAR-c & PCS-EPISTEMIC-c & ALEATORIC-R-c & CLEAR \\
\midrule
ailerons & \textbf{0.193 $\pm$ 0.016} & 0.217 $\pm$ 0.017 & 0.195 $\pm$ 0.018 & \underline{0.193 $\pm$ 0.016} \\
airfoil & \underline{0.215 $\pm$ 0.019} & 0.217 $\pm$ 0.022 & 0.227 $\pm$ 0.031 & \textbf{0.207 $\pm$ 0.012} \\
allstate & 0.267 $\pm$ 0.041 & 0.263 $\pm$ 0.060 & \textbf{0.259 $\pm$ 0.046} & \underline{0.260 $\pm$ 0.037} \\
ca\textunderscore housing & \underline{0.339 $\pm$ 0.006} & 0.355 $\pm$ 0.012 & 0.349 $\pm$ 0.008 & \textbf{0.339 $\pm$ 0.008} \\
computer & \textbf{0.091 $\pm$ 0.006} & \textcolor{red}{8.020 $\pm$ 11.958} & 0.094 $\pm$ 0.016 & \underline{0.091 $\pm$ 0.008} \\
concrete & 0.272 $\pm$ 0.030 & \underline{0.260 $\pm$ 0.050} & 0.266 $\pm$ 0.038 & \textbf{0.249 $\pm$ 0.025} \\
elevator & 0.157 $\pm$ 0.009 & 0.177 $\pm$ 0.014 & \textbf{0.149 $\pm$ 0.008} & \underline{0.156 $\pm$ 0.009} \\
energy\textunderscore efficiency & 0.053 $\pm$ 0.009 & 0.058 $\pm$ 0.009 & \underline{0.053 $\pm$ 0.009} & \textbf{0.047 $\pm$ 0.005} \\
insurance & \underline{0.338 $\pm$ 0.063} & \textcolor{red}{0.535 $\pm$ 0.315} & 0.379 $\pm$ 0.089 & \textbf{0.309 $\pm$ 0.032} \\
kin8nm & \underline{0.362 $\pm$ 0.014} & 0.373 $\pm$ 0.013 & 0.370 $\pm$ 0.011 & \textbf{0.360 $\pm$ 0.012} \\
miami\textunderscore housing & \underline{0.085 $\pm$ 0.003} & 0.097 $\pm$ 0.002 & 0.087 $\pm$ 0.005 & \textbf{0.085 $\pm$ 0.002} \\
naval\textunderscore propulsion & 6.2e-04 $\pm$ 5.9e-06 & 7.1e-04 $\pm$ 1.5e-05 & \textbf{6.0e-04 $\pm$ 9.6e-06} & \underline{6.1e-04 $\pm$ 6.3e-06} \\
parkinsons & \textbf{0.227 $\pm$ 0.017} & \textcolor{red}{0.315 $\pm$ 0.034} & 0.253 $\pm$ 0.020 & \underline{0.227 $\pm$ 0.010} \\
powerplant & \underline{0.171 $\pm$ 0.010} & 0.179 $\pm$ 0.008 & 0.183 $\pm$ 0.008 & \textbf{0.170 $\pm$ 0.007} \\
qsar & \underline{0.369 $\pm$ 0.123} & 0.388 $\pm$ 0.132 & 0.381 $\pm$ 0.125 & \textbf{0.363 $\pm$ 0.121} \\
sulfur & 0.112 $\pm$ 0.009 & 0.136 $\pm$ 0.014 & \textbf{0.108 $\pm$ 0.010} & \underline{0.109 $\pm$ 0.010} \\
superconductor & 0.197 $\pm$ 0.022 & 0.249 $\pm$ 0.032 & \textbf{0.195 $\pm$ 0.023} & \underline{0.196 $\pm$ 0.022} \\
\bottomrule
\end{tabular}
}
\end{table}
\begin{table}[!htbp]
\centering
\caption{Conformalized Variant (a) Quantile Loss at 95\% prediction intervals, aggregated across 10 seeds. Methods with suffix `-c' denote conformalized variants obtained using the validation set divided into two parts, one for validation and one for calibration.Values $\geq 100$ or $< 0.01$ are presented in scientific notation with 1 decimal place. \textbf{Bold} values (desirable) are the minimum for that dataset and metric, while the \underline{underlined} values indicate the second-best result. \textcolor{red}{Red} values are more than 33\% worse than the best result.}
\label{tab:combined_quantileloss_95_final_conformalized_variant_a}
\small
\resizebox{\columnwidth}{!}{%
\begin{tabular}{lcccc}
\toprule
Dataset & CLEAR-c & PCS-EPISTEMIC-c & ALEATORIC-R-c & CLEAR \\
\midrule
ailerons & \underline{9.2e-06 $\pm$ 2.3e-07} & 1.0e-05 $\pm$ 2.1e-07 & 9.7e-06 $\pm$ 2.7e-07 & \textbf{9.2e-06 $\pm$ 2.3e-07} \\
airfoil & \underline{0.110 $\pm$ 0.013} & 0.119 $\pm$ 0.012 & 0.126 $\pm$ 0.015 & \textbf{0.109 $\pm$ 0.011} \\
allstate & \underline{1.1e+02 $\pm$ 8.530} & 1.3e+02 $\pm$ 8.381 & 1.3e+02 $\pm$ 7.926 & \textbf{1.1e+02 $\pm$ 7.835} \\
ca\textunderscore housing & \textbf{2.8e+03 $\pm$ 56.920} & 3.2e+03 $\pm$ 91.466 & 3.0e+03 $\pm$ 73.272 & \underline{2.8e+03 $\pm$ 59.700} \\
computer & \textbf{0.142 $\pm$ 0.013} & \textcolor{red}{9.961 $\pm$ 14.797} & 0.157 $\pm$ 0.021 & \underline{0.147 $\pm$ 0.022} \\
concrete & 0.342 $\pm$ 0.039 & \textbf{0.337 $\pm$ 0.047} & 0.355 $\pm$ 0.047 & \underline{0.338 $\pm$ 0.040} \\
elevator & 1.5e-04 $\pm$ 2.0e-06 & 1.7e-04 $\pm$ 6.3e-06 & \textbf{1.5e-04 $\pm$ 2.5e-06} & \underline{1.5e-04 $\pm$ 2.0e-06} \\
energy\textunderscore efficiency & \underline{0.033 $\pm$ 0.005} & 0.038 $\pm$ 0.003 & 0.034 $\pm$ 0.005 & \textbf{0.031 $\pm$ 0.004} \\
insurance & \underline{3.5e+02 $\pm$ 55.800} & \textcolor{red}{5.8e+02 $\pm$ 1.7e+02} & 3.8e+02 $\pm$ 49.983 & \textbf{3.5e+02 $\pm$ 61.616} \\
kin8nm & \underline{7.2e-03 $\pm$ 1.7e-04} & 7.6e-03 $\pm$ 1.1e-04 & 7.7e-03 $\pm$ 2.7e-04 & \textbf{7.2e-03 $\pm$ 1.7e-04} \\
miami\textunderscore housing & \underline{3.9e+03 $\pm$ 2.1e+02} & 4.2e+03 $\pm$ 1.7e+02 & 5.2e+03 $\pm$ 3.4e+02 & \textbf{3.9e+03 $\pm$ 1.9e+02} \\
naval\textunderscore propulsion & \underline{1.5e-05 $\pm$ 1.9e-07} & 1.7e-05 $\pm$ 2.9e-07 & 1.5e-05 $\pm$ 2.7e-07 & \textbf{1.5e-05 $\pm$ 2.1e-07} \\
parkinsons & \underline{0.179 $\pm$ 0.008} & 0.226 $\pm$ 0.016 & 0.216 $\pm$ 0.012 & \textbf{0.178 $\pm$ 0.010} \\
powerplant & \underline{0.213 $\pm$ 0.011} & 0.222 $\pm$ 0.011 & 0.220 $\pm$ 0.011 & \textbf{0.212 $\pm$ 0.012} \\
qsar & \underline{0.050 $\pm$ 0.003} & 0.053 $\pm$ 0.003 & 0.052 $\pm$ 0.003 & \textbf{0.049 $\pm$ 0.003} \\
sulfur & \underline{2.0e-03 $\pm$ 1.7e-04} & 2.4e-03 $\pm$ 2.2e-04 & 2.3e-03 $\pm$ 1.6e-04 & \textbf{2.0e-03 $\pm$ 1.4e-04} \\
superconductor & \underline{0.490 $\pm$ 0.018} & 0.609 $\pm$ 0.021 & 0.538 $\pm$ 0.023 & \textbf{0.490 $\pm$ 0.018} \\
\bottomrule
\end{tabular}
}
\end{table}
\begin{table}[!htbp]
\centering
\caption{Conformalized Variant (a) NCIW at 95\% prediction intervals, aggregated across 10 seeds. Methods with suffix `-c' denote conformalized variants obtained using the validation set divided into two parts, one for validation and one for calibration.Values $\geq 100$ or $< 0.01$ are presented in scientific notation with 1 decimal place. \textbf{Bold} values (desirable) are the minimum for that dataset and metric, while the \underline{underlined} values indicate the second-best result. \textcolor{red}{Red} values are more than 33\% worse than the best result.}
\label{tab:combined_nciw_95_final_conformalized_variant_a}
\small
\resizebox{\columnwidth}{!}{%
\begin{tabular}{lcccc}
\toprule
Dataset & CLEAR-c & PCS-EPISTEMIC-c & ALEATORIC-R-c & CLEAR \\
\midrule
ailerons & \underline{0.195 $\pm$ 0.018} & 0.217 $\pm$ 0.021 & 0.196 $\pm$ 0.017 & \textbf{0.195 $\pm$ 0.017} \\
airfoil & \underline{0.203 $\pm$ 0.007} & 0.211 $\pm$ 0.007 & 0.216 $\pm$ 0.011 & \textbf{0.203 $\pm$ 0.006} \\
allstate & \underline{0.255 $\pm$ 0.040} & 0.272 $\pm$ 0.057 & 0.263 $\pm$ 0.043 & \textbf{0.252 $\pm$ 0.037} \\
ca\textunderscore housing & \underline{0.336 $\pm$ 0.006} & 0.354 $\pm$ 0.012 & 0.348 $\pm$ 0.011 & \textbf{0.336 $\pm$ 0.008} \\
computer & \underline{0.091 $\pm$ 0.008} & 0.108 $\pm$ 0.016 & \textbf{0.090 $\pm$ 0.008} & 0.091 $\pm$ 0.008 \\
concrete & 0.262 $\pm$ 0.038 & \underline{0.259 $\pm$ 0.047} & 0.264 $\pm$ 0.040 & \textbf{0.246 $\pm$ 0.033} \\
elevator & 0.155 $\pm$ 0.010 & 0.177 $\pm$ 0.011 & \textbf{0.148 $\pm$ 0.008} & \underline{0.155 $\pm$ 0.010} \\
energy\textunderscore efficiency & \underline{0.046 $\pm$ 0.006} & 0.057 $\pm$ 0.004 & 0.050 $\pm$ 0.005 & \textbf{0.046 $\pm$ 0.005} \\
insurance & \textbf{0.299 $\pm$ 0.037} & \textcolor{red}{0.480 $\pm$ 0.181} & 0.312 $\pm$ 0.036 & \underline{0.303 $\pm$ 0.029} \\
kin8nm & \underline{0.355 $\pm$ 0.008} & 0.371 $\pm$ 0.009 & 0.371 $\pm$ 0.009 & \textbf{0.354 $\pm$ 0.008} \\
miami\textunderscore housing & \underline{0.084 $\pm$ 0.004} & 0.098 $\pm$ 0.002 & 0.085 $\pm$ 0.003 & \textbf{0.084 $\pm$ 0.003} \\
naval\textunderscore propulsion & \underline{6.1e-04 $\pm$ 7.3e-06} & 7.1e-04 $\pm$ 1.5e-05 & \textbf{6.0e-04 $\pm$ 5.4e-06} & 6.1e-04 $\pm$ 7.9e-06 \\
parkinsons & \underline{0.226 $\pm$ 0.012} & \textcolor{red}{0.316 $\pm$ 0.032} & 0.248 $\pm$ 0.007 & \textbf{0.225 $\pm$ 0.012} \\
powerplant & \underline{0.173 $\pm$ 0.008} & 0.182 $\pm$ 0.007 & 0.182 $\pm$ 0.007 & \textbf{0.173 $\pm$ 0.007} \\
qsar & \underline{0.361 $\pm$ 0.120} & 0.389 $\pm$ 0.134 & 0.389 $\pm$ 0.131 & \textbf{0.360 $\pm$ 0.119} \\
sulfur & 0.110 $\pm$ 0.007 & 0.127 $\pm$ 0.008 & \textbf{0.105 $\pm$ 0.005} & \underline{0.109 $\pm$ 0.010} \\
superconductor & 0.195 $\pm$ 0.021 & 0.247 $\pm$ 0.028 & \textbf{0.194 $\pm$ 0.021} & \underline{0.195 $\pm$ 0.021} \\
\bottomrule
\end{tabular}
}
\end{table}
\begin{table}[!htbp]
\centering
\caption{Conformalized Variant (a) Interval Score Loss at 95\% prediction intervals, aggregated across 10 seeds. Methods with suffix `-c' denote conformalized variants obtained using the validation set divided into two parts, one for validation and one for calibration.Values $\geq 100$ or $< 0.01$ are presented in scientific notation with 1 decimal place. \textbf{Bold} values (desirable) are the minimum for that dataset and metric, while the \underline{underlined} values indicate the second-best result. \textcolor{red}{Red} values are more than 33\% worse than the best result.}
\label{tab:combined_intervalscoreloss_95_final_conformalized_variant_a}
\small
\resizebox{\columnwidth}{!}{%
\begin{tabular}{lcccc}
\toprule
Dataset & CLEAR-c & PCS-EPISTEMIC-c & ALEATORIC-R-c & CLEAR \\
\midrule
ailerons & \underline{7.4e-04 $\pm$ 1.8e-05} & 8.1e-04 $\pm$ 1.7e-05 & 7.8e-04 $\pm$ 2.2e-05 & \textbf{7.4e-04 $\pm$ 1.8e-05} \\
airfoil & \underline{8.794 $\pm$ 1.047} & 9.523 $\pm$ 0.959 & 10.075 $\pm$ 1.227 & \textbf{8.717 $\pm$ 0.870} \\
allstate & \underline{9.2e+03 $\pm$ 6.8e+02} & 1.0e+04 $\pm$ 6.7e+02 & 1.0e+04 $\pm$ 6.3e+02 & \textbf{9.1e+03 $\pm$ 6.3e+02} \\
ca\textunderscore housing & \textbf{2.2e+05 $\pm$ 4.6e+03} & 2.6e+05 $\pm$ 7.3e+03 & 2.4e+05 $\pm$ 5.9e+03 & \underline{2.2e+05 $\pm$ 4.8e+03} \\
computer & \textbf{11.388 $\pm$ 1.051} & \textcolor{red}{8.0e+02 $\pm$ 1.2e+03} & 12.535 $\pm$ 1.698 & \underline{11.741 $\pm$ 1.723} \\
concrete & 27.347 $\pm$ 3.094 & \textbf{26.963 $\pm$ 3.739} & 28.402 $\pm$ 3.793 & \underline{27.020 $\pm$ 3.202} \\
elevator & 0.012 $\pm$ 0.000 & 0.014 $\pm$ 0.001 & \textbf{0.012 $\pm$ 0.000} & \underline{0.012 $\pm$ 0.000} \\
energy\textunderscore efficiency & \underline{2.609 $\pm$ 0.417} & 3.019 $\pm$ 0.237 & 2.681 $\pm$ 0.431 & \textbf{2.465 $\pm$ 0.347} \\
insurance & \underline{2.8e+04 $\pm$ 4.5e+03} & \textcolor{red}{4.6e+04 $\pm$ 1.3e+04} & 3.0e+04 $\pm$ 4.0e+03 & \textbf{2.8e+04 $\pm$ 4.9e+03} \\
kin8nm & \underline{0.578 $\pm$ 0.014} & 0.607 $\pm$ 0.009 & 0.615 $\pm$ 0.022 & \textbf{0.577 $\pm$ 0.014} \\
miami\textunderscore housing & \underline{3.2e+05 $\pm$ 1.7e+04} & 3.3e+05 $\pm$ 1.4e+04 & 4.2e+05 $\pm$ 2.7e+04 & \textbf{3.1e+05 $\pm$ 1.5e+04} \\
naval\textunderscore propulsion & \underline{1.2e-03 $\pm$ 1.6e-05} & 1.4e-03 $\pm$ 2.3e-05 & 1.2e-03 $\pm$ 2.1e-05 & \textbf{1.2e-03 $\pm$ 1.7e-05} \\
parkinsons & \underline{14.332 $\pm$ 0.644} & 18.091 $\pm$ 1.263 & 17.266 $\pm$ 0.988 & \textbf{14.221 $\pm$ 0.792} \\
powerplant & \underline{17.066 $\pm$ 0.919} & 17.724 $\pm$ 0.919 & 17.567 $\pm$ 0.899 & \textbf{16.993 $\pm$ 0.937} \\
qsar & \underline{3.967 $\pm$ 0.222} & 4.251 $\pm$ 0.212 & 4.194 $\pm$ 0.234 & \textbf{3.951 $\pm$ 0.230} \\
sulfur & \underline{0.161 $\pm$ 0.013} & 0.189 $\pm$ 0.018 & 0.182 $\pm$ 0.013 & \textbf{0.161 $\pm$ 0.011} \\
superconductor & \underline{39.234 $\pm$ 1.418} & 48.702 $\pm$ 1.695 & 43.062 $\pm$ 1.835 & \textbf{39.217 $\pm$ 1.419} \\
\bottomrule
\end{tabular}
}
\end{table}

\clearpage

\subsection{Variant (b) conformalized}
\label{appendix:results-variant-b-conformalized}

\begin{table}[!htbp]
\centering
\caption{Conformalized Variant (b) PICP at 95\% prediction intervals, aggregated across 10 seeds. Methods with suffix `-c' denote conformalized variants obtained using the validation set divided into two parts, one for validation and one for calibration.}
\label{tab:combined_picp_95_final_conformalized_variant_b}
\small
\resizebox{\columnwidth}{!}{%
\begin{tabular}{lcccc}
\toprule
Dataset & CLEAR-c & PCS-EPISTEMIC-c & ALEATORIC-R-c & CLEAR \\
\midrule
ailerons & 0.95 $\pm$ 0.01 & 0.95 $\pm$ 0.01 & 0.95 $\pm$ 0.01 & 0.95 $\pm$ 0.00 \\
airfoil & 0.96 $\pm$ 0.01 & 0.95 $\pm$ 0.02 & 0.95 $\pm$ 0.02 & 0.95 $\pm$ 0.01 \\
allstate & 0.95 $\pm$ 0.01 & 0.94 $\pm$ 0.01 & 0.95 $\pm$ 0.01 & 0.95 $\pm$ 0.01 \\
ca\textunderscore housing & 0.95 $\pm$ 0.01 & 0.95 $\pm$ 0.01 & 0.95 $\pm$ 0.01 & 0.95 $\pm$ 0.01 \\
computer & 0.95 $\pm$ 0.01 & 0.95 $\pm$ 0.01 & 0.95 $\pm$ 0.01 & 0.95 $\pm$ 0.01 \\
concrete & 0.95 $\pm$ 0.02 & 0.95 $\pm$ 0.02 & 0.95 $\pm$ 0.02 & 0.95 $\pm$ 0.02 \\
elevator & 0.95 $\pm$ 0.01 & 0.95 $\pm$ 0.01 & 0.95 $\pm$ 0.01 & 0.95 $\pm$ 0.00 \\
energy\textunderscore efficiency & 0.97 $\pm$ 0.02 & 0.95 $\pm$ 0.02 & 0.96 $\pm$ 0.02 & 0.95 $\pm$ 0.02 \\
insurance & 0.96 $\pm$ 0.02 & 0.95 $\pm$ 0.02 & 0.96 $\pm$ 0.02 & 0.95 $\pm$ 0.01 \\
kin8nm & 0.96 $\pm$ 0.01 & 0.95 $\pm$ 0.01 & 0.95 $\pm$ 0.01 & 0.95 $\pm$ 0.01 \\
miami\textunderscore housing & 0.95 $\pm$ 0.01 & 0.95 $\pm$ 0.01 & 0.95 $\pm$ 0.01 & 0.95 $\pm$ 0.01 \\
naval\textunderscore propulsion & 0.95 $\pm$ 0.01 & 1.00 $\pm$ 0.00 & 0.95 $\pm$ 0.01 & 0.95 $\pm$ 0.01 \\
parkinsons & 0.95 $\pm$ 0.01 & 0.95 $\pm$ 0.01 & 0.95 $\pm$ 0.01 & 0.95 $\pm$ 0.01 \\
powerplant & 0.95 $\pm$ 0.01 & 0.95 $\pm$ 0.01 & 0.95 $\pm$ 0.01 & 0.95 $\pm$ 0.01 \\
qsar & 0.95 $\pm$ 0.01 & 0.95 $\pm$ 0.01 & 0.95 $\pm$ 0.01 & 0.95 $\pm$ 0.01 \\
sulfur & 0.95 $\pm$ 0.01 & 0.95 $\pm$ 0.01 & 0.95 $\pm$ 0.01 & 0.95 $\pm$ 0.01 \\
superconductor & 0.95 $\pm$ 0.01 & 0.95 $\pm$ 0.00 & 0.95 $\pm$ 0.00 & 0.95 $\pm$ 0.00 \\
\bottomrule
\end{tabular}
}
\end{table}
\begin{table}[!htbp]
\centering
\caption{Conformalized Variant (b) NIW at 95\% prediction intervals, aggregated across 10 seeds. Methods with suffix `-c' denote conformalized variants obtained using the validation set divided into two parts, one for validation and one for calibration.Values $\geq 100$ or $< 0.01$ are presented in scientific notation with 1 decimal place. \textbf{Bold} values (desirable) are the minimum for that dataset and metric, while the \underline{underlined} values indicate the second-best result. \textcolor{red}{Red} values are more than 33\% worse than the best result.}
\label{tab:combined_niw_95_final_conformalized_variant_b}
\small
\resizebox{\columnwidth}{!}{%
\begin{tabular}{lcccc}
\toprule
Dataset & CLEAR-c & PCS-EPISTEMIC-c & ALEATORIC-R-c & CLEAR \\
\midrule
ailerons & \textbf{0.193 $\pm$ 0.016} & 0.217 $\pm$ 0.017 & 0.195 $\pm$ 0.018 & \underline{0.193 $\pm$ 0.016} \\
airfoil & \underline{0.215 $\pm$ 0.019} & 0.217 $\pm$ 0.022 & 0.227 $\pm$ 0.031 & \textbf{0.207 $\pm$ 0.012} \\
allstate & \underline{0.257 $\pm$ 0.044} & 0.263 $\pm$ 0.057 & 0.258 $\pm$ 0.046 & \textbf{0.256 $\pm$ 0.045} \\
ca\textunderscore housing & \underline{0.339 $\pm$ 0.006} & 0.355 $\pm$ 0.012 & 0.349 $\pm$ 0.008 & \textbf{0.339 $\pm$ 0.008} \\
computer & \textbf{0.099 $\pm$ 0.006} & \textcolor{red}{0.142 $\pm$ 0.010} & 0.104 $\pm$ 0.005 & \underline{0.099 $\pm$ 0.006} \\
concrete & 0.268 $\pm$ 0.023 & \textbf{0.247 $\pm$ 0.040} & 0.271 $\pm$ 0.044 & \underline{0.249 $\pm$ 0.025} \\
elevator & \textbf{0.137 $\pm$ 0.009} & \textcolor{red}{0.194 $\pm$ 0.012} & 0.145 $\pm$ 0.008 & \underline{0.137 $\pm$ 0.007} \\
energy\textunderscore efficiency & 0.053 $\pm$ 0.009 & 0.058 $\pm$ 0.009 & \underline{0.053 $\pm$ 0.009} & \textbf{0.047 $\pm$ 0.005} \\
insurance & \underline{0.357 $\pm$ 0.070} & \textcolor{red}{0.461 $\pm$ 0.292} & 0.408 $\pm$ 0.102 & \textbf{0.329 $\pm$ 0.045} \\
kin8nm & \underline{0.362 $\pm$ 0.014} & 0.373 $\pm$ 0.013 & 0.370 $\pm$ 0.011 & \textbf{0.360 $\pm$ 0.012} \\
miami\textunderscore housing & \underline{0.085 $\pm$ 0.003} & 0.097 $\pm$ 0.002 & 0.087 $\pm$ 0.005 & \textbf{0.085 $\pm$ 0.002} \\
naval\textunderscore propulsion & \underline{1.9e-03 $\pm$ 9.1e-05} & \textcolor{red}{8.1e-03 $\pm$ 4.1e-04} & 1.9e-03 $\pm$ 1.0e-04 & \textbf{1.9e-03 $\pm$ 7.3e-05} \\
parkinsons & \textbf{0.279 $\pm$ 0.014} & 0.325 $\pm$ 0.016 & 0.305 $\pm$ 0.012 & \underline{0.281 $\pm$ 0.009} \\
powerplant & \underline{0.171 $\pm$ 0.010} & 0.179 $\pm$ 0.008 & 0.183 $\pm$ 0.008 & \textbf{0.170 $\pm$ 0.007} \\
qsar & \underline{0.371 $\pm$ 0.126} & 0.395 $\pm$ 0.134 & 0.386 $\pm$ 0.127 & \textbf{0.366 $\pm$ 0.123} \\
sulfur & 0.111 $\pm$ 0.010 & 0.129 $\pm$ 0.016 & \textbf{0.106 $\pm$ 0.011} & \underline{0.108 $\pm$ 0.009} \\
superconductor & \underline{0.220 $\pm$ 0.025} & 0.240 $\pm$ 0.027 & 0.227 $\pm$ 0.025 & \textbf{0.219 $\pm$ 0.023} \\
\bottomrule
\end{tabular}
}
\end{table}
\begin{table}[!htbp]
\centering
\caption{Conformalized Variant (b) Quantile Loss at 95\% prediction intervals, aggregated across 10 seeds. Methods with suffix `-c' denote conformalized variants obtained using the validation set divided into two parts, one for validation and one for calibration.Values $\geq 100$ or $< 0.01$ are presented in scientific notation with 1 decimal place. \textbf{Bold} values (desirable) are the minimum for that dataset and metric, while the \underline{underlined} values indicate the second-best result. \textcolor{red}{Red} values are more than 33\% worse than the best result.}
\label{tab:combined_quantileloss_95_final_conformalized_variant_b}
\small
\resizebox{\columnwidth}{!}{%
\begin{tabular}{lcccc}
\toprule
Dataset & CLEAR-c & PCS-EPISTEMIC-c & ALEATORIC-R-c & CLEAR \\
\midrule
ailerons & \underline{9.2e-06 $\pm$ 2.3e-07} & 1.0e-05 $\pm$ 2.1e-07 & 9.7e-06 $\pm$ 2.7e-07 & \textbf{9.2e-06 $\pm$ 2.3e-07} \\
airfoil & \underline{0.110 $\pm$ 0.013} & 0.119 $\pm$ 0.012 & 0.126 $\pm$ 0.015 & \textbf{0.109 $\pm$ 0.011} \\
allstate & \underline{1.2e+02 $\pm$ 7.662} & 1.3e+02 $\pm$ 8.290 & 1.3e+02 $\pm$ 7.774 & \textbf{1.1e+02 $\pm$ 7.318} \\
ca\textunderscore housing & \textbf{2.8e+03 $\pm$ 56.920} & 3.2e+03 $\pm$ 91.466 & 3.0e+03 $\pm$ 73.272 & \underline{2.8e+03 $\pm$ 59.700} \\
computer & \textbf{0.159 $\pm$ 0.007} & 0.207 $\pm$ 0.009 & 0.206 $\pm$ 0.015 & \underline{0.159 $\pm$ 0.007} \\
concrete & 0.340 $\pm$ 0.037 & \textbf{0.327 $\pm$ 0.039} & 0.356 $\pm$ 0.048 & \underline{0.338 $\pm$ 0.040} \\
elevator & \underline{1.4e-04 $\pm$ 2.4e-06} & \textcolor{red}{1.9e-04 $\pm$ 6.7e-06} & 1.6e-04 $\pm$ 3.0e-06 & \textbf{1.4e-04 $\pm$ 2.0e-06} \\
energy\textunderscore efficiency & \underline{0.033 $\pm$ 0.005} & 0.038 $\pm$ 0.003 & 0.034 $\pm$ 0.005 & \textbf{0.031 $\pm$ 0.004} \\
insurance & \underline{3.3e+02 $\pm$ 28.437} & \textcolor{red}{5.2e+02 $\pm$ 1.4e+02} & 3.8e+02 $\pm$ 43.883 & \textbf{3.2e+02 $\pm$ 31.593} \\
kin8nm & \underline{7.2e-03 $\pm$ 1.7e-04} & 7.6e-03 $\pm$ 1.1e-04 & 7.7e-03 $\pm$ 2.7e-04 & \textbf{7.2e-03 $\pm$ 1.7e-04} \\
miami\textunderscore housing & \underline{3.9e+03 $\pm$ 2.1e+02} & 4.2e+03 $\pm$ 1.7e+02 & 5.2e+03 $\pm$ 3.4e+02 & \textbf{3.9e+03 $\pm$ 1.9e+02} \\
naval\textunderscore propulsion & \underline{5.5e-05 $\pm$ 1.9e-06} & \textcolor{red}{1.8e-04 $\pm$ 9.1e-06} & 6.4e-05 $\pm$ 2.7e-06 & \textbf{5.4e-05 $\pm$ 1.9e-06} \\
parkinsons & \underline{0.209 $\pm$ 0.008} & 0.237 $\pm$ 0.008 & 0.228 $\pm$ 0.013 & \textbf{0.209 $\pm$ 0.010} \\
powerplant & \underline{0.213 $\pm$ 0.011} & 0.222 $\pm$ 0.011 & 0.220 $\pm$ 0.011 & \textbf{0.212 $\pm$ 0.012} \\
qsar & \underline{0.050 $\pm$ 0.003} & 0.053 $\pm$ 0.003 & 0.052 $\pm$ 0.003 & \textbf{0.049 $\pm$ 0.003} \\
sulfur & \underline{2.0e-03 $\pm$ 1.1e-04} & 2.3e-03 $\pm$ 8.5e-05 & 2.3e-03 $\pm$ 1.5e-04 & \textbf{1.9e-03 $\pm$ 9.6e-05} \\
superconductor & \underline{0.523 $\pm$ 0.018} & 0.611 $\pm$ 0.026 & 0.573 $\pm$ 0.024 & \textbf{0.523 $\pm$ 0.018} \\
\bottomrule
\end{tabular}
}
\end{table}
\begin{table}[!htbp]
\centering
\caption{Conformalized Variant (b) NCIW at 95\% prediction intervals, aggregated across 10 seeds. Methods with suffix `-c' denote conformalized variants obtained using the validation set divided into two parts, one for validation and one for calibration.Values $\geq 100$ or $< 0.01$ are presented in scientific notation with 1 decimal place. \textbf{Bold} values (desirable) are the minimum for that dataset and metric, while the \underline{underlined} values indicate the second-best result. \textcolor{red}{Red} values are more than 33\% worse than the best result.}
\label{tab:combined_nciw_95_final_conformalized_variant_b}
\small
\resizebox{\columnwidth}{!}{%
\begin{tabular}{lcccc}
\toprule
Dataset & CLEAR-c & PCS-EPISTEMIC-c & ALEATORIC-R-c & CLEAR \\
\midrule
ailerons & \underline{0.195 $\pm$ 0.018} & 0.217 $\pm$ 0.021 & 0.196 $\pm$ 0.017 & \textbf{0.195 $\pm$ 0.017} \\
airfoil & \underline{0.203 $\pm$ 0.007} & 0.211 $\pm$ 0.007 & 0.216 $\pm$ 0.011 & \textbf{0.203 $\pm$ 0.006} \\
allstate & \textbf{0.248 $\pm$ 0.041} & 0.270 $\pm$ 0.052 & 0.266 $\pm$ 0.044 & \underline{0.250 $\pm$ 0.043} \\
ca\textunderscore housing & \underline{0.336 $\pm$ 0.006} & 0.354 $\pm$ 0.012 & 0.348 $\pm$ 0.011 & \textbf{0.336 $\pm$ 0.008} \\
computer & \underline{0.098 $\pm$ 0.005} & \textcolor{red}{0.142 $\pm$ 0.008} & 0.103 $\pm$ 0.003 & \textbf{0.098 $\pm$ 0.005} \\
concrete & 0.248 $\pm$ 0.032 & \textbf{0.241 $\pm$ 0.025} & 0.261 $\pm$ 0.039 & \underline{0.246 $\pm$ 0.033} \\
elevator & \textbf{0.137 $\pm$ 0.008} & \textcolor{red}{0.192 $\pm$ 0.012} & 0.147 $\pm$ 0.009 & \underline{0.137 $\pm$ 0.008} \\
energy\textunderscore efficiency & \underline{0.046 $\pm$ 0.006} & 0.057 $\pm$ 0.004 & 0.050 $\pm$ 0.005 & \textbf{0.046 $\pm$ 0.005} \\
insurance & \textbf{0.314 $\pm$ 0.052} & 0.346 $\pm$ 0.076 & 0.335 $\pm$ 0.059 & \underline{0.320 $\pm$ 0.059} \\
kin8nm & \underline{0.355 $\pm$ 0.008} & 0.371 $\pm$ 0.009 & 0.371 $\pm$ 0.009 & \textbf{0.354 $\pm$ 0.008} \\
miami\textunderscore housing & \underline{0.084 $\pm$ 0.004} & 0.098 $\pm$ 0.002 & 0.085 $\pm$ 0.003 & \textbf{0.084 $\pm$ 0.003} \\
naval\textunderscore propulsion & \underline{1.8e-03 $\pm$ 4.7e-05} & \textcolor{red}{3.2e-03 $\pm$ 1.2e-04} & 1.9e-03 $\pm$ 8.4e-05 & \textbf{1.8e-03 $\pm$ 5.6e-05} \\
parkinsons & \textbf{0.284 $\pm$ 0.011} & 0.324 $\pm$ 0.008 & 0.309 $\pm$ 0.018 & \underline{0.284 $\pm$ 0.010} \\
powerplant & \underline{0.173 $\pm$ 0.008} & 0.182 $\pm$ 0.007 & 0.182 $\pm$ 0.007 & \textbf{0.173 $\pm$ 0.007} \\
qsar & \underline{0.363 $\pm$ 0.122} & 0.393 $\pm$ 0.131 & 0.390 $\pm$ 0.131 & \textbf{0.363 $\pm$ 0.121} \\
sulfur & 0.108 $\pm$ 0.008 & 0.125 $\pm$ 0.009 & \textbf{0.105 $\pm$ 0.006} & \underline{0.106 $\pm$ 0.007} \\
superconductor & \underline{0.218 $\pm$ 0.023} & 0.241 $\pm$ 0.028 & 0.226 $\pm$ 0.022 & \textbf{0.218 $\pm$ 0.022} \\
\bottomrule
\end{tabular}
}
\end{table}
\begin{table}[!htbp]
\centering
\caption{Conformalized Variant (b) Interval Score Loss at 95\% prediction intervals, aggregated across 10 seeds. Methods with suffix `-c' denote conformalized variants obtained using the validation set divided into two parts, one for validation and one for calibration.Values $\geq 100$ or $< 0.01$ are presented in scientific notation with 1 decimal place. \textbf{Bold} values (desirable) are the minimum for that dataset and metric, while the \underline{underlined} values indicate the second-best result. \textcolor{red}{Red} values are more than 33\% worse than the best result.}
\label{tab:combined_intervalscoreloss_95_final_conformalized_variant_b}
\small
\resizebox{\columnwidth}{!}{%
\begin{tabular}{lcccc}
\toprule
Dataset & CLEAR-c & PCS-EPISTEMIC-c & ALEATORIC-R-c & CLEAR \\
\midrule
ailerons & \underline{7.4e-04 $\pm$ 1.8e-05} & 8.1e-04 $\pm$ 1.7e-05 & 7.8e-04 $\pm$ 2.2e-05 & \textbf{7.4e-04 $\pm$ 1.8e-05} \\
airfoil & \underline{8.794 $\pm$ 1.047} & 9.523 $\pm$ 0.959 & 10.075 $\pm$ 1.227 & \textbf{8.717 $\pm$ 0.870} \\
allstate & \underline{9.2e+03 $\pm$ 6.1e+02} & 1.0e+04 $\pm$ 6.6e+02 & 1.0e+04 $\pm$ 6.2e+02 & \textbf{9.2e+03 $\pm$ 5.9e+02} \\
ca\textunderscore housing & \textbf{2.2e+05 $\pm$ 4.6e+03} & 2.6e+05 $\pm$ 7.3e+03 & 2.4e+05 $\pm$ 5.9e+03 & \underline{2.2e+05 $\pm$ 4.8e+03} \\
computer & \textbf{12.704 $\pm$ 0.571} & 16.543 $\pm$ 0.706 & 16.505 $\pm$ 1.190 & \underline{12.707 $\pm$ 0.575} \\
concrete & 27.168 $\pm$ 2.969 & \textbf{26.195 $\pm$ 3.142} & 28.509 $\pm$ 3.874 & \underline{27.020 $\pm$ 3.202} \\
elevator & \underline{0.011 $\pm$ 0.000} & \textcolor{red}{0.015 $\pm$ 0.001} & 0.013 $\pm$ 0.000 & \textbf{0.011 $\pm$ 0.000} \\
energy\textunderscore efficiency & \underline{2.609 $\pm$ 0.417} & 3.019 $\pm$ 0.237 & 2.681 $\pm$ 0.431 & \textbf{2.465 $\pm$ 0.347} \\
insurance & \underline{2.6e+04 $\pm$ 2.3e+03} & \textcolor{red}{4.2e+04 $\pm$ 1.1e+04} & 3.1e+04 $\pm$ 3.5e+03 & \textbf{2.6e+04 $\pm$ 2.5e+03} \\
kin8nm & \underline{0.578 $\pm$ 0.014} & 0.607 $\pm$ 0.009 & 0.615 $\pm$ 0.022 & \textbf{0.577 $\pm$ 0.014} \\
miami\textunderscore housing & \underline{3.2e+05 $\pm$ 1.7e+04} & 3.3e+05 $\pm$ 1.4e+04 & 4.2e+05 $\pm$ 2.7e+04 & \textbf{3.1e+05 $\pm$ 1.5e+04} \\
naval\textunderscore propulsion & \underline{4.4e-03 $\pm$ 1.5e-04} & \textcolor{red}{0.014 $\pm$ 0.001} & 5.1e-03 $\pm$ 2.2e-04 & \textbf{4.4e-03 $\pm$ 1.5e-04} \\
parkinsons & \underline{16.712 $\pm$ 0.669} & 18.967 $\pm$ 0.645 & 18.250 $\pm$ 1.030 & \textbf{16.697 $\pm$ 0.794} \\
powerplant & \underline{17.066 $\pm$ 0.919} & 17.724 $\pm$ 0.919 & 17.567 $\pm$ 0.899 & \textbf{16.993 $\pm$ 0.937} \\
qsar & \underline{3.960 $\pm$ 0.207} & 4.277 $\pm$ 0.215 & 4.195 $\pm$ 0.229 & \textbf{3.956 $\pm$ 0.218} \\
sulfur & \underline{0.157 $\pm$ 0.008} & 0.180 $\pm$ 0.007 & 0.186 $\pm$ 0.012 & \textbf{0.155 $\pm$ 0.008} \\
superconductor & \underline{41.818 $\pm$ 1.464} & 48.902 $\pm$ 2.063 & 45.854 $\pm$ 1.910 & \textbf{41.801 $\pm$ 1.407} \\
\bottomrule
\end{tabular}
}
\end{table}

\clearpage

\subsection{Variant (c) conformalized}
\label{appendix:results-variant-c-conformalized}

\begin{table}[!htbp]
\centering
\caption{Conformalized Variant (c) PICP at 95\% prediction intervals, aggregated across 10 seeds. Methods with suffix `-c' denote conformalized variants obtained using the validation set divided into two parts, one for validation and one for calibration.}
\label{tab:combined_picp_95_final_conformalized_variant_c}
\small
\resizebox{\columnwidth}{!}{%
\begin{tabular}{lcccccc}
\toprule
Dataset & CLEAR-c & PCS-EPISTEMIC-c & ALEATORIC-R-c & CLEAR & UACQR-P & UACQR-S \\
\midrule
ailerons & 0.95 $\pm$ 0.01 & 0.95 $\pm$ 0.01 & 0.95 $\pm$ 0.01 & 0.95 $\pm$ 0.01 & 0.95 $\pm$ 0.01 & 0.97 $\pm$ 0.00 \\
airfoil & 0.96 $\pm$ 0.02 & 0.96 $\pm$ 0.02 & 0.96 $\pm$ 0.02 & 0.96 $\pm$ 0.01 & 0.95 $\pm$ 0.01 & 0.96 $\pm$ 0.02 \\
allstate & 0.95 $\pm$ 0.01 & 0.94 $\pm$ 0.01 & 0.95 $\pm$ 0.01 & 0.95 $\pm$ 0.01 & 0.95 $\pm$ 0.01 & 0.96 $\pm$ 0.01 \\
ca\textunderscore housing & 0.95 $\pm$ 0.01 & 0.95 $\pm$ 0.01 & 0.95 $\pm$ 0.01 & 0.95 $\pm$ 0.01 & 0.95 $\pm$ 0.00 & 0.96 $\pm$ 0.00 \\
computer & 0.95 $\pm$ 0.01 & 0.95 $\pm$ 0.01 & 0.95 $\pm$ 0.01 & 0.95 $\pm$ 0.01 & 0.95 $\pm$ 0.01 & 0.97 $\pm$ 0.00 \\
concrete & 0.96 $\pm$ 0.02 & 0.96 $\pm$ 0.01 & 0.96 $\pm$ 0.02 & 0.96 $\pm$ 0.01 & 0.96 $\pm$ 0.02 & 0.97 $\pm$ 0.02 \\
elevator & 0.95 $\pm$ 0.01 & 0.95 $\pm$ 0.01 & 0.95 $\pm$ 0.01 & 0.95 $\pm$ 0.01 & 0.95 $\pm$ 0.01 & 0.97 $\pm$ 0.00 \\
energy\textunderscore efficiency & 0.97 $\pm$ 0.01 & 0.95 $\pm$ 0.03 & 0.96 $\pm$ 0.01 & 0.95 $\pm$ 0.01 & 0.97 $\pm$ 0.02 & 0.96 $\pm$ 0.01 \\
insurance & 0.96 $\pm$ 0.02 & 0.95 $\pm$ 0.02 & 0.96 $\pm$ 0.02 & 0.95 $\pm$ 0.01 & 0.96 $\pm$ 0.01 & 0.96 $\pm$ 0.01 \\
kin8nm & 0.95 $\pm$ 0.01 & 0.95 $\pm$ 0.01 & 0.95 $\pm$ 0.01 & 0.95 $\pm$ 0.01 & 0.95 $\pm$ 0.01 & 0.95 $\pm$ 0.01 \\
miami\textunderscore housing & 0.95 $\pm$ 0.01 & 0.95 $\pm$ 0.01 & 0.95 $\pm$ 0.01 & 0.95 $\pm$ 0.01 & 0.95 $\pm$ 0.01 & 0.95 $\pm$ 0.01 \\
naval\textunderscore propulsion & 0.95 $\pm$ 0.01 & 0.95 $\pm$ 0.01 & 0.95 $\pm$ 0.01 & 0.95 $\pm$ 0.01 & 0.95 $\pm$ 0.00 & 0.99 $\pm$ 0.00 \\
parkinsons & 0.95 $\pm$ 0.01 & 0.95 $\pm$ 0.01 & 0.95 $\pm$ 0.02 & 0.95 $\pm$ 0.01 & 0.95 $\pm$ 0.01 & 0.97 $\pm$ 0.01 \\
powerplant & 0.95 $\pm$ 0.01 & 0.95 $\pm$ 0.00 & 0.95 $\pm$ 0.01 & 0.95 $\pm$ 0.01 & 0.95 $\pm$ 0.01 & 0.95 $\pm$ 0.01 \\
qsar & 0.95 $\pm$ 0.01 & 0.94 $\pm$ 0.01 & 0.94 $\pm$ 0.01 & 0.95 $\pm$ 0.01 & 0.95 $\pm$ 0.01 & 0.96 $\pm$ 0.01 \\
sulfur & 0.95 $\pm$ 0.01 & 0.95 $\pm$ 0.01 & 0.95 $\pm$ 0.01 & 0.95 $\pm$ 0.01 & 0.95 $\pm$ 0.01 & 0.95 $\pm$ 0.01 \\
superconductor & 0.95 $\pm$ 0.01 & 0.95 $\pm$ 0.00 & 0.95 $\pm$ 0.00 & 0.95 $\pm$ 0.00 & 0.95 $\pm$ 0.00 & 0.95 $\pm$ 0.00 \\
\bottomrule
\end{tabular}
}
\end{table}
\begin{table}[!htbp]
\centering
\caption{Conformalized Variant (c) NIW at 95\% prediction intervals, aggregated across 10 seeds. Methods with suffix `-c' denote conformalized variants obtained using the validation set divided into two parts, one for validation and one for calibration. Values $\geq 100$ or $< 0.01$ are presented in scientific notation with 1 decimal place. \textbf{Bold} values (desirable) are the minimum for that dataset and metric, while the \underline{underlined} values indicate the second-best result. \textcolor{red}{Red} values are more than 33\% worse than the best result. 
$+\infty^{a/10}_{v}$ denotes that UACQR-P produced infinitely wide intervals for $a$ out of the 10 seeds, and the mean NIW of the remaining $10-a$ seeds was $v$.}
\label{tab:combined_niw_95_final_conformalized_variant_c}
\small
\resizebox{\columnwidth}{!}{%
\begin{tabular}{lcccccc}
\toprule
Dataset & CLEAR-c & PCS-EPISTEMIC-c & ALEATORIC-R-c & CLEAR & UACQR-P & UACQR-S \\
\midrule
ailerons & 0.207 $\pm$ 0.017 & \textcolor{red}{0.239 $\pm$ 0.017} & \underline{0.202 $\pm$ 0.019} & 0.208 $\pm$ 0.017 & \textbf{0.175 $\pm$ 0.014} & 0.207 $\pm$ 0.018 \\
airfoil & \underline{0.210 $\pm$ 0.021} & 0.212 $\pm$ 0.020 & 0.230 $\pm$ 0.040 & \textbf{0.199 $\pm$ 0.013} & \textcolor{red}{0.367 $\pm$ 0.022} & \textcolor{red}{0.395 $\pm$ 0.030} \\
allstate & 0.282 $\pm$ 0.052 & 0.292 $\pm$ 0.061 & \textbf{0.279 $\pm$ 0.037} & \underline{0.279 $\pm$ 0.052} & 0.284 $\pm$ 0.050 & 0.306 $\pm$ 0.053 \\
ca\textunderscore housing & \underline{0.317 $\pm$ 0.009} & 0.330 $\pm$ 0.013 & 0.344 $\pm$ 0.010 & \textbf{0.315 $\pm$ 0.010} & 0.365 $\pm$ 0.006 & 0.405 $\pm$ 0.006 \\
computer & \textbf{0.079 $\pm$ 0.003} & 0.083 $\pm$ 0.003 & 0.088 $\pm$ 0.004 & \underline{0.080 $\pm$ 0.004} & 0.083 $\pm$ 0.003 & 0.102 $\pm$ 0.001 \\
concrete & 0.290 $\pm$ 0.037 & \textbf{0.273 $\pm$ 0.032} & 0.287 $\pm$ 0.037 & \underline{0.278 $\pm$ 0.029} & \textcolor{red}{0.378 $\pm$ 0.020} & \textcolor{red}{0.405 $\pm$ 0.022} \\
elevator & \textbf{0.126 $\pm$ 0.008} & 0.153 $\pm$ 0.008 & 0.136 $\pm$ 0.008 & \underline{0.127 $\pm$ 0.008} & 0.158 $\pm$ 0.011 & \textcolor{red}{0.199 $\pm$ 0.010} \\
energy\textunderscore efficiency & 0.052 $\pm$ 0.007 & \underline{0.043 $\pm$ 0.008} & 0.053 $\pm$ 0.007 & \textbf{0.043 $\pm$ 0.007} & $+\infty^{\text{\tiny 1/10}}_{\text{\tiny 0.154}}$ & \textcolor{red}{0.157 $\pm$ 0.006} \\
insurance & \textcolor{red}{0.414 $\pm$ 0.094} & 0.376 $\pm$ 0.134 & \textcolor{red}{0.434 $\pm$ 0.086} & 0.397 $\pm$ 0.091 & \textbf{0.303 $\pm$ 0.042} & \underline{0.312 $\pm$ 0.041} \\
kin8nm & 0.328 $\pm$ 0.015 & \underline{0.327 $\pm$ 0.012} & 0.327 $\pm$ 0.015 & \textbf{0.325 $\pm$ 0.012} & \textcolor{red}{0.444 $\pm$ 0.017} & \textcolor{red}{0.452 $\pm$ 0.015} \\
miami\textunderscore housing & \underline{0.088 $\pm$ 0.003} & 0.096 $\pm$ 0.005 & 0.112 $\pm$ 0.019 & \textbf{0.088 $\pm$ 0.001} & 0.104 $\pm$ 0.003 & 0.114 $\pm$ 0.003 \\
naval\textunderscore propulsion & \underline{1.6e-03 $\pm$ 3.6e-05} & 1.7e-03 $\pm$ 6.5e-05 & 1.6e-03 $\pm$ 4.9e-05 & \textbf{1.6e-03 $\pm$ 3.2e-05} & 1.7e-03 $\pm$ 4.4e-05 & \textcolor{red}{3.5e-03 $\pm$ 8.1e-05} \\
parkinsons & \textbf{0.250 $\pm$ 0.015} & 0.262 $\pm$ 0.009 & 0.281 $\pm$ 0.015 & \underline{0.250 $\pm$ 0.007} & 0.271 $\pm$ 0.020 & 0.314 $\pm$ 0.019 \\
powerplant & \underline{0.158 $\pm$ 0.011} & 0.170 $\pm$ 0.007 & 0.162 $\pm$ 0.008 & \textbf{0.157 $\pm$ 0.007} & 0.193 $\pm$ 0.008 & 0.203 $\pm$ 0.009 \\
qsar & 0.354 $\pm$ 0.120 & 0.402 $\pm$ 0.143 & \underline{0.349 $\pm$ 0.115} & \textbf{0.344 $\pm$ 0.117} & 0.409 $\pm$ 0.135 & 0.450 $\pm$ 0.154 \\
sulfur & 0.112 $\pm$ 0.011 & 0.126 $\pm$ 0.012 & \textbf{0.107 $\pm$ 0.016} & \underline{0.111 $\pm$ 0.009} & 0.115 $\pm$ 0.012 & 0.126 $\pm$ 0.013 \\
superconductor & \underline{0.196 $\pm$ 0.022} & 0.234 $\pm$ 0.026 & 0.208 $\pm$ 0.025 & \textbf{0.195 $\pm$ 0.020} & 0.219 $\pm$ 0.022 & 0.234 $\pm$ 0.025 \\
\bottomrule
\end{tabular}
}
\end{table}
\begin{table}[!htbp]
\centering
\caption{Conformalized Variant (c) Quantile Loss at 95\% prediction intervals, aggregated across 10 seeds. Methods with suffix `-c' denote conformalized variants obtained using the validation set divided into two parts, one for validation and one for calibration.Values $\geq 100$ or $< 0.01$ are presented in scientific notation with 1 decimal place. \textbf{Bold} values (desirable) are the minimum for that dataset and metric, while the \underline{underlined} values indicate the second-best result. \textcolor{red}{Red} values are more than 33\% worse than the best result. 
$+\infty^{a/10}_{v}$ denotes that UACQR-P produced infinitely wide intervals for $a$ out of the 10 seeds, and the mean Quantile Loss of the remaining $10-a$ seeds was $v$.}
\label{tab:combined_quantileloss_95_final_conformalized_variant_c}
\small
\resizebox{\columnwidth}{!}{%
\begin{tabular}{lcccccc}
\toprule
Dataset & CLEAR-c & PCS-EPISTEMIC-c & ALEATORIC-R-c & CLEAR & UACQR-P & UACQR-S \\
\midrule
ailerons & \textbf{9.6e-06 $\pm$ 2.9e-07} & 1.1e-05 $\pm$ 2.9e-07 & 9.8e-06 $\pm$ 3.2e-07 & \underline{9.6e-06 $\pm$ 2.9e-07} & 9.9e-06 $\pm$ 3.8e-07 & 9.9e-06 $\pm$ 3.1e-07 \\
airfoil & \underline{0.110 $\pm$ 0.015} & 0.111 $\pm$ 0.013 & 0.131 $\pm$ 0.019 & \textbf{0.107 $\pm$ 0.014} & \textcolor{red}{0.189 $\pm$ 0.012} & \textcolor{red}{0.193 $\pm$ 0.011} \\
allstate & 1.2e+02 $\pm$ 6.810 & 1.4e+02 $\pm$ 11.278 & 1.3e+02 $\pm$ 22.506 & \underline{1.2e+02 $\pm$ 6.874} & \textbf{1.1e+02 $\pm$ 7.290} & 1.2e+02 $\pm$ 6.653 \\
ca\textunderscore housing & \underline{2.8e+03 $\pm$ 91.022} & 2.9e+03 $\pm$ 88.973 & 3.2e+03 $\pm$ 1.1e+02 & \textbf{2.8e+03 $\pm$ 92.560} & 2.9e+03 $\pm$ 53.918 & 3.0e+03 $\pm$ 49.727 \\
computer & \underline{0.138 $\pm$ 0.012} & 0.140 $\pm$ 0.012 & 0.163 $\pm$ 0.013 & \textbf{0.137 $\pm$ 0.011} & 0.147 $\pm$ 0.006 & 0.152 $\pm$ 0.004 \\
concrete & 0.343 $\pm$ 0.033 & \textbf{0.334 $\pm$ 0.031} & 0.381 $\pm$ 0.058 & \underline{0.337 $\pm$ 0.032} & 0.411 $\pm$ 0.029 & 0.426 $\pm$ 0.035 \\
elevator & \underline{1.3e-04 $\pm$ 3.3e-06} & 1.4e-04 $\pm$ 3.9e-06 & 1.5e-04 $\pm$ 3.5e-06 & \textbf{1.3e-04 $\pm$ 3.2e-06} & \textcolor{red}{1.8e-04 $\pm$ 4.2e-06} & \textcolor{red}{1.8e-04 $\pm$ 4.1e-06} \\
energy\textunderscore efficiency & 0.031 $\pm$ 0.006 & \underline{0.030 $\pm$ 0.007} & 0.035 $\pm$ 0.010 & \textbf{0.029 $\pm$ 0.007} & $+\infty^{\text{\tiny 1/10}}_{\text{\tiny 0.078}}$ & \textcolor{red}{0.078 $\pm$ 0.005} \\
insurance & 4.1e+02 $\pm$ 73.015 & 4.3e+02 $\pm$ 61.342 & 3.9e+02 $\pm$ 53.648 & 4.2e+02 $\pm$ 80.269 & \textbf{3.4e+02 $\pm$ 44.920} & \underline{3.5e+02 $\pm$ 49.833} \\
kin8nm & \underline{6.7e-03 $\pm$ 1.7e-04} & 6.9e-03 $\pm$ 2.2e-04 & 6.8e-03 $\pm$ 2.0e-04 & \textbf{6.7e-03 $\pm$ 1.8e-04} & 8.8e-03 $\pm$ 1.7e-04 & \textcolor{red}{9.1e-03 $\pm$ 1.8e-04} \\
miami\textunderscore housing & \underline{3.8e+03 $\pm$ 1.9e+02} & 3.8e+03 $\pm$ 2.2e+02 & \textcolor{red}{6.8e+03 $\pm$ 1.2e+03} & \textbf{3.7e+03 $\pm$ 1.4e+02} & 4.3e+03 $\pm$ 1.3e+02 & 4.6e+03 $\pm$ 1.5e+02 \\
naval\textunderscore propulsion & \underline{4.0e-05 $\pm$ 1.0e-06} & 4.2e-05 $\pm$ 1.3e-06 & 4.9e-05 $\pm$ 1.6e-06 & \textbf{4.0e-05 $\pm$ 1.0e-06} & \textcolor{red}{9.7e-05 $\pm$ 1.0e-05} & \textcolor{red}{8.3e-05 $\pm$ 1.7e-06} \\
parkinsons & 0.190 $\pm$ 0.007 & 0.192 $\pm$ 0.006 & 0.217 $\pm$ 0.008 & \underline{0.189 $\pm$ 0.006} & \textbf{0.181 $\pm$ 0.009} & 0.200 $\pm$ 0.010 \\
powerplant & \underline{0.204 $\pm$ 0.012} & 0.210 $\pm$ 0.013 & 0.209 $\pm$ 0.013 & \textbf{0.203 $\pm$ 0.012} & 0.226 $\pm$ 0.011 & 0.235 $\pm$ 0.011 \\
qsar & \underline{0.049 $\pm$ 0.002} & 0.054 $\pm$ 0.001 & 0.051 $\pm$ 0.003 & \textbf{0.048 $\pm$ 0.002} & 0.051 $\pm$ 0.003 & 0.054 $\pm$ 0.003 \\
sulfur & \underline{1.8e-03 $\pm$ 8.0e-05} & 1.9e-03 $\pm$ 8.7e-05 & \textcolor{red}{2.6e-03 $\pm$ 2.6e-04} & \textbf{1.8e-03 $\pm$ 6.3e-05} & 1.9e-03 $\pm$ 1.2e-04 & 2.0e-03 $\pm$ 1.3e-04 \\
superconductor & \textbf{0.488 $\pm$ 0.020} & 0.573 $\pm$ 0.026 & 0.563 $\pm$ 0.047 & \underline{0.489 $\pm$ 0.020} & 0.508 $\pm$ 0.018 & 0.522 $\pm$ 0.013 \\
\bottomrule
\end{tabular}
}
\end{table}
\begin{table}[!htbp]
\centering
\caption{Conformalized Variant (c) NCIW at 95\% prediction intervals, aggregated across 10 seeds. Methods with suffix `-c' denote conformalized variants obtained using the validation set divided into two parts, one for validation and one for calibration.Values $\geq 100$ or $< 0.01$ are presented in scientific notation with 1 decimal place. \textbf{Bold} values (desirable) are the minimum for that dataset and metric, while the \underline{underlined} values indicate the second-best result. \textcolor{red}{Red} values are more than 33\% worse than the best result. 
$+\infty^{a/10}_{v}$ denotes that UACQR-P produced infinitely wide intervals for $a$ out of the 10 seeds, and the mean NCIW of the remaining $10-a$ seeds was $v$.}
\label{tab:combined_nciw_95_final_conformalized_variant_c}
\small
\resizebox{\columnwidth}{!}{%
\begin{tabular}{lcccccc}
\toprule
Dataset & CLEAR-c & PCS-EPISTEMIC-c & ALEATORIC-R-c & CLEAR & UACQR-P & UACQR-S \\
\midrule
ailerons & 0.209 $\pm$ 0.021 & 0.242 $\pm$ 0.021 & \underline{0.203 $\pm$ 0.021} & 0.210 $\pm$ 0.020 & \textbf{0.192 $\pm$ 0.030} & 0.207 $\pm$ 0.018 \\
airfoil & \underline{0.189 $\pm$ 0.016} & 0.199 $\pm$ 0.013 & 0.204 $\pm$ 0.017 & \textbf{0.188 $\pm$ 0.014} & \textcolor{red}{0.317 $\pm$ 0.025} & \textcolor{red}{0.315 $\pm$ 0.038} \\
allstate & \textbf{0.274 $\pm$ 0.046} & 0.296 $\pm$ 0.058 & 0.285 $\pm$ 0.029 & \underline{0.277 $\pm$ 0.056} & 0.278 $\pm$ 0.047 & 0.297 $\pm$ 0.052 \\
ca\textunderscore housing & \underline{0.312 $\pm$ 0.010} & 0.329 $\pm$ 0.009 & 0.342 $\pm$ 0.011 & \textbf{0.311 $\pm$ 0.010} & 0.366 $\pm$ 0.005 & 0.405 $\pm$ 0.006 \\
computer & \textbf{0.080 $\pm$ 0.003} & 0.083 $\pm$ 0.003 & 0.090 $\pm$ 0.004 & \underline{0.081 $\pm$ 0.004} & 0.087 $\pm$ 0.004 & 0.102 $\pm$ 0.001 \\
concrete & \textbf{0.255 $\pm$ 0.032} & \underline{0.257 $\pm$ 0.031} & 0.263 $\pm$ 0.028 & 0.259 $\pm$ 0.032 & 0.332 $\pm$ 0.051 & 0.329 $\pm$ 0.044 \\
elevator & \textbf{0.126 $\pm$ 0.007} & 0.151 $\pm$ 0.011 & 0.138 $\pm$ 0.008 & \underline{0.126 $\pm$ 0.008} & 0.167 $\pm$ 0.013 & \textcolor{red}{0.199 $\pm$ 0.010} \\
energy\textunderscore efficiency & \underline{0.043 $\pm$ 0.005} & 0.043 $\pm$ 0.005 & 0.048 $\pm$ 0.005 & \textbf{0.041 $\pm$ 0.006} & $+\infty^{\text{\tiny 1/10}}_{\text{\tiny 0.138}}$ & \textcolor{red}{0.135 $\pm$ 0.015} \\
insurance & 0.356 $\pm$ 0.071 & 0.358 $\pm$ 0.078 & \textcolor{red}{0.400 $\pm$ 0.091} & 0.345 $\pm$ 0.078 & \textbf{0.291 $\pm$ 0.044} & \underline{0.299 $\pm$ 0.040} \\
kin8nm & \textbf{0.323 $\pm$ 0.008} & 0.331 $\pm$ 0.010 & 0.327 $\pm$ 0.008 & \underline{0.323 $\pm$ 0.008} & \textcolor{red}{0.436 $\pm$ 0.014} & \textcolor{red}{0.444 $\pm$ 0.011} \\
miami\textunderscore housing & \textbf{0.087 $\pm$ 0.003} & 0.096 $\pm$ 0.004 & 0.107 $\pm$ 0.013 & \underline{0.088 $\pm$ 0.002} & 0.101 $\pm$ 0.004 & 0.109 $\pm$ 0.004 \\
naval\textunderscore propulsion & \textbf{1.6e-03 $\pm$ 5.0e-05} & 1.7e-03 $\pm$ 4.7e-05 & 1.6e-03 $\pm$ 4.1e-05 & \underline{1.6e-03 $\pm$ 5.0e-05} & 1.7e-03 $\pm$ 4.4e-05 & \textcolor{red}{3.5e-03 $\pm$ 8.1e-05} \\
parkinsons & \textbf{0.250 $\pm$ 0.010} & 0.259 $\pm$ 0.009 & 0.279 $\pm$ 0.013 & \underline{0.250 $\pm$ 0.012} & 0.274 $\pm$ 0.019 & 0.314 $\pm$ 0.019 \\
powerplant & \underline{0.162 $\pm$ 0.007} & 0.171 $\pm$ 0.008 & 0.163 $\pm$ 0.008 & \textbf{0.161 $\pm$ 0.007} & 0.190 $\pm$ 0.009 & 0.200 $\pm$ 0.011 \\
qsar & \underline{0.350 $\pm$ 0.121} & 0.412 $\pm$ 0.144 & 0.360 $\pm$ 0.120 & \textbf{0.348 $\pm$ 0.118} & 0.410 $\pm$ 0.135 & 0.444 $\pm$ 0.150 \\
sulfur & 0.111 $\pm$ 0.010 & 0.124 $\pm$ 0.008 & \textbf{0.107 $\pm$ 0.005} & \underline{0.109 $\pm$ 0.008} & 0.115 $\pm$ 0.010 & 0.126 $\pm$ 0.010 \\
superconductor & \underline{0.194 $\pm$ 0.021} & 0.234 $\pm$ 0.025 & 0.204 $\pm$ 0.024 & \textbf{0.192 $\pm$ 0.019} & 0.219 $\pm$ 0.023 & 0.233 $\pm$ 0.025 \\
\bottomrule
\end{tabular}
}
\end{table}
\begin{table}[!htbp]
\centering
\caption{Conformalized Variant (c) Interval Score Loss at 95\% prediction intervals, aggregated across 10 seeds. Methods with suffix `-c' denote conformalized variants obtained using the validation set divided into two parts, one for validation and one for calibration.Values $\geq 100$ or $< 0.01$ are presented in scientific notation with 1 decimal place. \textbf{Bold} values (desirable) are the minimum for that dataset and metric, while the \underline{underlined} values indicate the second-best result. \textcolor{red}{Red} values are more than 33\% worse than the best result. 
$+\infty^{a/10}_{v}$ denotes that UACQR-P produced infinitely wide intervals for $a$ out of the 10 seeds, and the mean AISL of the remaining $10-a$ seeds was $v$.}
\label{tab:combined_intervalscoreloss_95_final_conformalized_variant_c}
\small
\resizebox{\columnwidth}{!}{%
\begin{tabular}{lcccccc}
\toprule
Dataset & CLEAR-c & PCS-EPISTEMIC-c & ALEATORIC-R-c & CLEAR & UACQR-P & UACQR-S \\
\midrule
ailerons & \textbf{7.7e-04 $\pm$ 2.3e-05} & 8.6e-04 $\pm$ 2.3e-05 & 7.9e-04 $\pm$ 2.6e-05 & \underline{7.7e-04 $\pm$ 2.3e-05} & 7.9e-04 $\pm$ 3.0e-05 & 7.9e-04 $\pm$ 2.5e-05 \\
airfoil & \underline{8.780 $\pm$ 1.202} & 8.857 $\pm$ 1.035 & 10.504 $\pm$ 1.503 & \textbf{8.569 $\pm$ 1.099} & \textcolor{red}{15.102 $\pm$ 0.941} & \textcolor{red}{15.451 $\pm$ 0.852} \\
allstate & 9.4e+03 $\pm$ 5.4e+02 & 1.1e+04 $\pm$ 9.0e+02 & 1.0e+04 $\pm$ 1.8e+03 & \underline{9.2e+03 $\pm$ 5.5e+02} & \textbf{8.9e+03 $\pm$ 5.8e+02} & 9.3e+03 $\pm$ 5.3e+02 \\
ca\textunderscore housing & \underline{2.2e+05 $\pm$ 7.3e+03} & 2.3e+05 $\pm$ 7.1e+03 & 2.5e+05 $\pm$ 8.6e+03 & \textbf{2.2e+05 $\pm$ 7.4e+03} & 2.4e+05 $\pm$ 4.3e+03 & 2.4e+05 $\pm$ 4.0e+03 \\
computer & \underline{11.040 $\pm$ 0.933} & 11.164 $\pm$ 0.938 & 13.042 $\pm$ 1.009 & \textbf{10.978 $\pm$ 0.911} & 11.750 $\pm$ 0.449 & 12.130 $\pm$ 0.358 \\
concrete & 27.403 $\pm$ 2.641 & \textbf{26.754 $\pm$ 2.444} & 30.441 $\pm$ 4.624 & \underline{26.928 $\pm$ 2.529} & 32.895 $\pm$ 2.347 & 34.089 $\pm$ 2.796 \\
elevator & \underline{0.011 $\pm$ 0.000} & 0.012 $\pm$ 0.000 & 0.012 $\pm$ 0.000 & \textbf{0.010 $\pm$ 0.000} & \textcolor{red}{0.014 $\pm$ 0.000} & \textcolor{red}{0.015 $\pm$ 0.000} \\
energy\textunderscore efficiency & 2.463 $\pm$ 0.488 & \underline{2.422 $\pm$ 0.596} & 2.796 $\pm$ 0.804 & \textbf{2.348 $\pm$ 0.554} & $+\infty^{\text{\tiny 1/10}}_{\text{\tiny 6.207}}$
& \textcolor{red}{6.238 $\pm$ 0.413} \\
insurance & 3.3e+04 $\pm$ 5.8e+03 & 3.4e+04 $\pm$ 4.9e+03 & 3.1e+04 $\pm$ 4.3e+03 & 3.4e+04 $\pm$ 6.4e+03 & \textbf{2.7e+04 $\pm$ 3.6e+03} & \underline{2.8e+04 $\pm$ 4.0e+03} \\
kin8nm & \underline{0.536 $\pm$ 0.014} & 0.552 $\pm$ 0.018 & 0.545 $\pm$ 0.016 & \textbf{0.534 $\pm$ 0.014} & 0.704 $\pm$ 0.013 & \textcolor{red}{0.729 $\pm$ 0.014} \\
miami\textunderscore housing & \underline{3.0e+05 $\pm$ 1.5e+04} & 3.1e+05 $\pm$ 1.7e+04 & \textcolor{red}{5.5e+05 $\pm$ 9.6e+04} & \textbf{2.9e+05 $\pm$ 1.1e+04} & 3.5e+05 $\pm$ 1.0e+04 & 3.7e+05 $\pm$ 1.2e+04 \\
naval\textunderscore propulsion & \underline{3.2e-03 $\pm$ 8.0e-05} & 3.4e-03 $\pm$ 1.0e-04 & 3.9e-03 $\pm$ 1.3e-04 & \textbf{3.2e-03 $\pm$ 8.4e-05} & \textcolor{red}{7.8e-03 $\pm$ 8.3e-04} & \textcolor{red}{6.6e-03 $\pm$ 1.4e-04} \\
parkinsons & 15.231 $\pm$ 0.540 & 15.327 $\pm$ 0.476 & 17.375 $\pm$ 0.656 & \underline{15.149 $\pm$ 0.518} & \textbf{14.486 $\pm$ 0.724} & 15.967 $\pm$ 0.795 \\
powerplant & \underline{16.287 $\pm$ 0.959} & 16.839 $\pm$ 1.025 & 16.687 $\pm$ 1.010 & \textbf{16.243 $\pm$ 0.985} & 18.062 $\pm$ 0.917 & 18.771 $\pm$ 0.840 \\
qsar & \underline{3.903 $\pm$ 0.190} & 4.360 $\pm$ 0.105 & 4.094 $\pm$ 0.213 & \textbf{3.877 $\pm$ 0.176} & 4.101 $\pm$ 0.232 & 4.324 $\pm$ 0.208 \\
sulfur & \underline{0.145 $\pm$ 0.006} & 0.148 $\pm$ 0.007 & \textcolor{red}{0.209 $\pm$ 0.021} & \textbf{0.143 $\pm$ 0.005} & 0.154 $\pm$ 0.010 & 0.161 $\pm$ 0.011 \\
superconductor & \textbf{39.077 $\pm$ 1.615} & 45.819 $\pm$ 2.070 & 45.024 $\pm$ 3.738 & \underline{39.104 $\pm$ 1.604} & 40.649 $\pm$ 1.469 & 41.797 $\pm$ 1.060 \\
\bottomrule
\end{tabular}
}
\end{table}
\begin{table}[!htbp]
\centering
\caption{Conformalized Variant (c) CLEAR calibration parameters $\lambda$ and $\gamma_1$ for 95\% prediction intervals across 10 seeds. Using all available variables. Showing median [min:max] values.}
\label{tab:combined_gamma_lambda_95_final_conformalized_variant_c}
\small
\begin{tabular}{lccc}
\toprule
Dataset & $\lambda$ & $\gamma_1$ \\
\midrule
ailerons & 1.01 [0.26:1.75] & 1.01 [0.87:1.32] \\
airfoil & 1.04 [0.10:100.00] & 1.45 [0.02:6.72] \\
allstate & 0.72 [0.00:100.00] & 1.10 [0.02:1.43] \\
ca\textunderscore housing & 1.93 [1.11:13.99] & 0.67 [0.14:0.95] \\
computer & 2.85 [1.39:10.47] & 0.60 [0.18:0.95] \\
concrete & 1.18 [0.25:100.00] & 1.79 [0.02:5.83] \\
elevator & 0.73 [0.45:1.29] & 1.13 [0.86:1.43] \\
energy\textunderscore efficiency & 0.22 [0.02:100.00] & 6.87 [0.02:19.92] \\
insurance & 4.77 [0.37:100.00] & 0.55 [0.03:5.83] \\
kin8nm & 2.21 [0.39:3.27] & 0.74 [0.61:1.02] \\
miami\textunderscore housing & 4.61 [1.23:100.00] & 0.36 [0.02:1.09] \\
naval\textunderscore propulsion & 14.44 [10.01:22.37] & 0.69 [0.52:0.85] \\
parkinsons & 1.11 [0.53:15.09] & 1.34 [0.12:2.31] \\
powerplant & 0.96 [0.57:3.30] & 1.12 [0.51:1.41] \\
qsar & 0.59 [0.22:2.58] & 1.45 [0.78:2.51] \\
sulfur & 2.98 [0.90:100.00] & 0.63 [0.02:1.25] \\
superconductor & 0.73 [0.22:1.38] & 1.17 [0.85:1.33] \\
\bottomrule
\end{tabular}
\end{table}

 \begin{figure}[!htbp]
      \centering
      \includegraphics[width=1.0\linewidth]{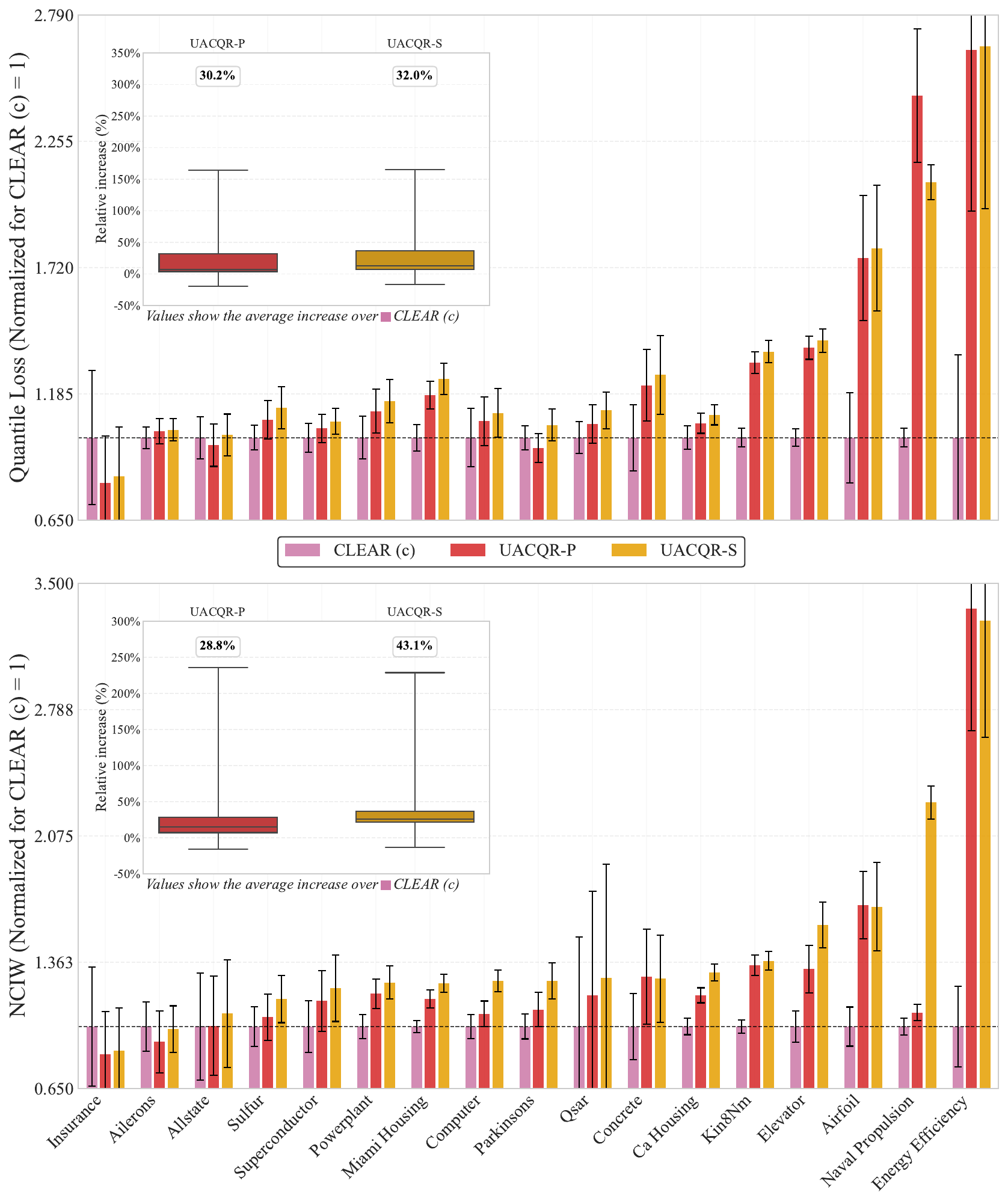}
      \caption{Quantile loss and NCIW performance of UACQR-P and UACQR-S over 10 seeds normalized relative to conformalized CLEAR (c) (baseline = 1.0). Higher values indicate worse performance. The inset boxplot shows the \% increase relative to the CLEAR (c) baseline $\pm 1\sigma$. Values inside each subplot represent the mean increase across all datasets.}
      \label{fig:uacqr-benchmark}
  \end{figure}

\clearpage

\section{Case study: house price prediction with varying number of predictors}
\label{appendix:example-ames-housing}

We illustrate our method using the Ames Housing dataset, which contains data on 2,930 residential properties sold in Ames, Iowa, between 2006 and 2010. The target variable is the sale price of a house, and the full dataset includes around 80 predictor variables describing various aspects such as square footage, neighborhood, and building type. This dataset, originally collected by the Ames City Assessor’s Office and curated by \cite{DeCock2011}, was explored in Chapter 13 of \cite{Yu_2024_Vertidical_Data_Science_Book}.

The PCS framework \citep{Yu_2024_Vertidical_Data_Science_Book} involves quantifying all sources of extended epistemic uncertainty stemming from the entire data-science cycle, such as uncertainty stemming from data processing. A pipeline of PCS uncertainty quantification applied to the Ames Housing data can be found in Chapter 13 in \citep{Yu_2024_Vertidical_Data_Science_Book}. We follow the same steps\footnote{Minor differences arise due to: (a) implementation differences in base models between R and Python, and (b) manual calibration of PCS intervals, which was not part of the original implementation. Additionally, both CQR and CLEAR were trained using only linear quantile regressors as the model selection step of PCS selected a linear model based on the RMSE on the validation dataset.} for estimating $\hat{f}$ and estimating epistemic uncertainty bounds $\hat{q}_{0.05}^{\text{epi}}, \hat{q}_{0.95}^{\text{epi}}$. 

To investigate how the amount of available information affects predictive uncertainty, we vary both the feature set and the sample size. Starting from approximately 80 features, we construct a reduced version using only the top two predictors; this setup naturally increases aleatoric uncertainty (due to limited information) while decreasing epistemic uncertainty (due to reduced model complexity). We also subsample the training data to 50\% and 20\% to isolate the effect of training sample size on epistemic uncertainty. In each case, we applied the CLEAR procedure as described in \autoref{clear_section}. This involved:
(1) data cleaning and preprocessing (excluding irregular sales, imputing missing values, encoding categorical variables), resulting in $N_1=438$ cleaned datasets;
(2) fitting a predictive model $\hat{f}$ and estimating epistemic uncertainty bounds $\hat{q}_{0.05}^{\text{epi}}, \hat{q}_{0.95}^{\text{epi}}$;
(3) estimating aleatoric uncertainty bounds $\hat{q}_{0.05}^{\text{ale}}, \hat{q}_{0.95}^{\text{ale}}$ via quantile regression, as described in \autoref{aleatoric_section};
(4) estimating $\lambda$ and calibrating prediction intervals on the validation set.

\autoref{tab:ames-housing-by-variable-observations} reveals that while PCS performs competitively with all features, CQR is stronger with fewer features but struggles severely with limited training data (20\% subsample). In contrast, CLEAR adapts effectively to all scenarios through its calibrated uncertainty combination (calibration parameters from \autoref{tab:clear-calibration-params}). CLEAR estimates $\lambda = 0.64$ in the 2-variable case (prioritizing aleatoric uncertainty) versus $\lambda = 14.45$ in the full-feature case (emphasizing epistemic uncertainty), with corresponding calibrated epistemic-to-aleatoric ratios of 0.03 and 7.72. As training data is reduced to 50\% and 20\%, CLEAR progressively increases $\lambda$ to 17.97 and finally to 100, driving the epistemic-to-aleatoric ratio to 14.09 and 250.5, respectively. This demonstrates the model's ability to accurately quantify the increased epistemic uncertainty arising from limited training samples. 
Consequently, both in terms of NCIW and Quantile Loss, CLEAR is never outperformed by more than $1\%$, but outperforms PCS by $\sim20\%$ for reduced features, and outperforms CQR by $\sim20\%$ for sub-sampled data.

\begin{table}[!htbp]
\centering
\caption{Ames Housing results for all four scenarios (90\% coverage target).}
\setlength{\tabcolsep}{4pt}
\small
\begin{tabular}{llcccc}
\toprule
Experiment & Method & NCIW & Quantile Loss & Average Width (\$) & Coverage \\
\midrule
\multirow{3}{*}{2 features} 
& PCS   & 0.214 & 3{,}818 & 107{,}880 & 0.87 \\
& CQR   & 0.186 & 3{,}448 & 104{,}741 & \textbf{0.90} \\
& CLEAR & \textbf{0.171} & \textbf{3{,}131} & \textbf{95{,}177} & 0.89 \\
\midrule
\multirow{3}{*}{All features} 
& PCS   & 0.105 & \textbf{1{,}922} & 57{,}594 & \textbf{0.89} \\
& CQR   & 0.117 & 2{,}194 & 62{,}398 & 0.88 \\
& CLEAR & \textbf{0.103} & 1{,}923 & \textbf{55{,}910} & 0.88 \\
\midrule
\multirow{3}{*}{\parbox{1.5cm}{All features\\50\% data}} 
& PCS   & \textbf{0.108} & 1{,}978 & \textbf{56{,}716} & 0.87 \\
& CQR   & 0.124 & 2{,}438 & 68{,}855 & \textbf{0.90} \\
& CLEAR & \textbf{0.108} & \textbf{1{,}975} & 57{,}125 & 0.88 \\
\midrule
\multirow{3}{*}{\parbox{1.5cm}{All features\\20\% data}} 
& PCS   & 0.110 & 2{,}026 & \textbf{58{,}574} & 0.87 \\
& CQR   & 0.130 & 2{,}645 & 72{,}884 & \textbf{0.90} \\
& CLEAR & \textbf{0.110} & \textbf{2{,}024} & 59{,}290 & 0.88 \\
\bottomrule
\end{tabular}
\vskip -0.1in
\label{tab:ames-housing-by-variable-observations}
\end{table}

\clearpage

\section{On the role of relative and absolute uncertainty in coverage guarantees
}\label{appendix:ConditionalVsMarginal}
In predictive inference, a fundamental distinction arises between \textit{conditional} and \textit{marginal} coverage---closely related to what has been termed ``relative vs.\ absolute uncertainty'' \citep{NOMUpmlr-v162-heiss22a} or ``adaptive vs.\ calibrated uncertainty.'' Conditional coverage requires that prediction intervals achieve the target coverage level at \textit{every individual input}, thereby capturing \textit{local} or \textit{relative} uncertainty. Formally, a prediction interval $C(X_{n+1})$ satisfies conditional coverage at level $1 - \alpha$ if
\begin{equation*}
\forall x \in \supp(X): \quad \mathbb{P}\left[Y_{n+1} \in C(X_{n+1}) \mid X_{n+1} = x\right] \geq 1 - \alpha.
\end{equation*}
By contrast, \textit{marginal coverage} only guarantees coverage \textit{on average} over the distribution of inputs:
\begin{equation*}
\mathbb{P}\left[Y_{n+1} \in C(X_{n+1})\right] \geq 1 - \alpha.
\end{equation*}
While conditional coverage implies marginal coverage, the reverse does not hold. This distinction is especially important in settings with \textit{heteroskedasticity}, where the variability of $Y \mid X$ changes across the input space, and under \textit{distribution shift}, where the test distribution of $X$ differs from the training distribution. Distribution shift---such as covariate shift or domain adaptation---can render marginal guarantees unreliable since they depend on the marginal $\mathbb{P}_X$. In contrast, conditional coverage ensures that prediction intervals remain valid even when $\mathbb{P}_X$ changes, provided the conditional distribution $\mathbb{P}(Y \mid X)$ remains stable. In what follows, we explore the implications of these distinctions and how they shape both the evaluation and design of uncertainty quantification methods.

\subsection{Metrics: relevance}

To assess the quality of predictive uncertainty, various metrics capture different aspects of coverage and adaptivity:

\begin{itemize}

\item \textbf{Quantile Loss}, \textbf{CRPS}, and \textbf{AISL} combine both relative and absolute components incentivizing conditional coverage. These metrics penalize both poor ranking and miscalibration, rewarding methods that adapt well to heteroskedasticity.

\item \textbf{NCIW} is invariant to the overall scale of the uncertainty but evaluates the ranking—whether a method assigns wider intervals to more uncertain points. In practice, a low NCIW often correlates with good relative uncertainty. However, a minimal NCIW theoretically encourages suboptimal conditional coverage, as it can lead to under-coverage in high-uncertainty regions and over-coverage in low-uncertainty regions. Consequently, good conditional coverage typically requires a slightly higher NCIW than the minimum. Similar to NCIW, {Minimum Negative Log-Likelihood (NLL\textsubscript{min})} \citep{NOMUpmlr-v162-heiss22a} also assesses relative uncertainty by focusing on whether a method correctly ranks more versus less uncertain inputs, independent of the predicted scale.

\item \textbf{PICP} and \textbf{NIW} each provide only partial information: PICP measures calibration but is blind to adaptivity, while NIW captures the average scale of uncertainty.
\end{itemize}

These metrics reflect different priorities in uncertainty quantification. Choosing among them (or combining them) depends on whether the primary goal is calibration, adaptivity, or both.

\subsection{Applications}

Understanding whether a method captures relative or absolute uncertainty has practical implications across a range of applications. In \textbf{active learning}, the primary objective is to identify inputs $x$ for which the model is most uncertain, guiding efficient data acquisition. Here, only the ranking of uncertainty matters—selecting the point with the highest epistemic uncertainty. Methods that preserve good relative uncertainty, even if miscalibrated, often suffice. In \textbf{Bayesian optimization}, many acquisition functions (such as upper confidence bound or entropy search) depend more on relative than absolute uncertainty \citep{ParetoBO10.1145/3425501,weissteiner2023bayesian}. Using upper bounds of the form $\hat{f}(x) + c$ with constant $c$ across all $x$ does not improve over exploiting $\hat{f}(x)$ alone, highlighting the centrality of uncertainty ranking over calibration in this setting. In \textbf{human-in-the-loop automation}, relative uncertainty can guide prioritization—for instance, flagging uncertain cases for expert review. While calibrated intervals may not always be necessary, correct ordering of confidence can improve decision efficiency and safety.

\subsection{Methods}

Different uncertainty quantification methods prioritize and estimate relative and absolute uncertainty to varying degrees:

\begin{itemize}
\item \textbf{NOMU} \citep{NOMUpmlr-v162-heiss22a} is explicitly designed to estimate only relative uncertainty. It does not attempt to calibrate the absolute scale, making it suitable for applications where ranking matters but calibrated intervals are unnecessary.

\item \textbf{Deep ensembles} \citep{DeepEnsemblesNIPS2017_9ef2ed4b} typically yield strong performance in capturing relative uncertainty, particularly through diversity in predictions across ensemble members. However, they often suffer from miscalibration in the absolute scale of uncertainty unless explicitly corrected.

\item \textbf{CLEAR} separately estimates relative epistemic and aleatoric uncertainty and combines them using a tunable parameter $\lambda$, which also refines the ranking of uncertainty—offering an alternative to standard calibration techniques. The absolute scale is then calibrated using a second parameter, $\gamma_1$, allowing for flexible control over both adaptivity and calibration.
\end{itemize}

These methods highlight the spectrum of approaches to uncertainty quantification, from ranking-only models to fully calibrated systems.

\subsection{Calibration techniques}

The absolute scale of uncertainty is critical in applications where the expected risk or cost over a population matters (e.g., in probabilistic risk management, climate forecasting, or decision-making under uncertainty). However, as emphasized by pathological cases that achieve perfect PICP with no adaptivity, relative uncertainty remains essential for practicality. Several approaches can be taken to adjust the scale of uncertainty estimates while preserving different structures:

\begin{itemize}
\item \textbf{Multiplicative calibration} scales all predictions by a constant factor, preserving the multiplicative structure of uncertainty. This is appropriate when the model’s ranking is reliable.

\item \textbf{Additive calibration} shifts all intervals uniformly, preserving additive differences but potentially distorting proportional uncertainty.

\item \textbf{Isotonic calibration} applies a nonparametric monotonic transformation that preserves the ranking of uncertainty, suitable when only order is trusted. E.g., \citep{kuleshov_accurate_2018} uses isotonic calibration for distributional uncertainty quantification.

\item \textbf{CLEAR} calibrates aleatoric and epistemic uncertainty separately, allowing independent yet coherent control of both components. This facilitates calibrated estimates of total predictive uncertainty while preserving the relative structure.
\end{itemize}

\subsection{Achieving conditional coverage}

Achieving conditional coverage requires addressing both components of predictive uncertainty:

\begin{enumerate}
\item Estimating relative uncertainty accurately—capturing how uncertainty varies across inputs.
\item Calibrating the absolute scale to ensure the desired coverage level holds at each input.
\end{enumerate}

Even when conditional coverage is not required, improving relative uncertainty tends to reduce average interval width and improve marginal calibration under distribution shifts. Thus, adaptivity and calibration are not mutually exclusive but can reinforce one another. \textbf{CLEAR} is explicitly designed to target both forms of uncertainty—modeling and calibrating relative and absolute epistemic, aleatoric, and total predictive uncertainty. As such, it provides a flexible and principled framework for applications demanding both adaptivity and reliability.

\clearpage
\newpage

\end{document}